%% file: cmdp_neurips_2026.tex
\documentclass{article}

\PassOptionsToPackage{numbers,sort&compress}{natbib}
\usepackage[main,preprint]{neurips_2026}

\usepackage[utf8]{inputenc}
\usepackage[T1]{fontenc}
\usepackage{hyperref}
\usepackage{url}
\usepackage{booktabs}
\usepackage{amsfonts}
\usepackage{microtype}
\usepackage{xcolor}
\usepackage{graphicx}
\usepackage{subcaption}
\usepackage{amsmath}
\usepackage{amssymb}
\usepackage{mathtools}
\usepackage{amsthm}
\usepackage{multirow}
\usepackage[capitalize,noabbrev]{cleveref}

\newcommand{\thickhline}{\noalign{\hrule height 0.8pt}}

\usepackage{algorithm}
\usepackage{algorithmic}

\theoremstyle{plain}
\newtheorem{theorem}{Theorem}[section]
\newtheorem{proposition}[theorem]{Proposition}
\newtheorem{lemma}[theorem]{Lemma}
\newtheorem{lem}[theorem]{Lemma}
\newtheorem{corollary}[theorem]{Corollary}
\theoremstyle{definition}
\newtheorem{definition}[theorem]{Definition}

\theoremstyle{definition}
\newtheorem{assum}[theorem]{Assumption}
\newtheorem{remark}[theorem]{Remark}

\title{Primal-Dual Representation Learning for Low-Rank Constrained MDPs}

\author{
Chenhao Zhou \quad
Chao Zhang\thanks{Corresponding author.} \quad
Wensong Bai \quad
Hanbin Zhao \quad
Hui Qian\\
College of Computer Science and Technology, Zhejiang University\\
\texttt{zhouchenhao@zju.edu.cn} \quad
\texttt{zczju@zju.edu.cn}
}

\begin{document}

\maketitle

\begin{abstract}
Representation learning has substantially improved the efficiency of unconstrained continuous-state Markov Decision Processes (MDPs), but its extension to Constrained MDPs (CMDPs) is difficult because the transition representation must be learned while constraint violation is controlled. In this paper, we study representation learning for low-rank CMDPs with continuous state spaces, and propose REP-PD-TEG for the soft cumulative-violation setting. In the feature learning period, the algorithm interacts with the environment using a decaying-$\epsilon$-greedy mixture execution policy, and uses the collected samples to learn transition features by maximum likelihood. Based on the learned features, it constructs uncertainty bonuses and plans with an optimistic Lagrangian objective to obtain the return policy.
We prove regret and soft-violation guarantees for the return policies, and quantify the influence of the decaying-$\epsilon$-greedy exploration. In particular, REP-PD-TEG achieves the optimal \(\widetilde O(\sqrt K)\) return policy bound, and gives a \(\widetilde O(K^{3/4})\) overall sample bound when exploratory actions are charged. For the stricter hard cumulative violation, where feasibility must be verified episode by episode, we propose REP-PD-H-TEG. It plans under a bonus-corrected lower-confidence utility certificate with the learned representation, and performs a stabilized dual search to obtain the return policy, yielding regret and hard-violation bounds. Together, these results give the first theoretically guaranteed representation-learning method for low-rank continuous-state CMDPs with both soft cumulative guarantees and certified hard-violation control.
\end{abstract}

\input{content/intro.tex}
\input{content/related.tex}
\input{content/background.tex}
\input{content/soft.tex}
\input{content/hard.tex}
\input{content/conclusion.tex}

\begin{ack}
\paragraph{Funding.} There was no funding in direct support of this work.

\paragraph{Competing interests.} The authors declare no competing interests.
\end{ack}

\bibliographystyle{plainnat}
\bibliography{references}

\clearpage
\appendix
\section*{Appendix Roadmap}
The appendix is organized as follows. Appendix~\ref{app:soft_proof} fixes
notation and records the basic covariance concentration tool. Appendix~\ref{app:aux_lemmas}
collects the MLE, elliptical-potential, simulation, and primal-dual lemmas used
throughout the proofs. Appendix~\ref{app:hard_proof} states the technical
ingredients specific to the hard-violation analysis. Appendix~\ref{app:general_probing}
proves the tertiary-aware decaying-\(\epsilon\) representation-learning result
for REP-PD-TEG, while Appendix~\ref{app:uniform_repaired} gives the \(p=0\)
uniform specialization. Appendix~\ref{app:hard_general_probing} derives the
one-sided hard certificate and proves the decaying-schedule result, while
Appendix~\ref{app:hard_tertiary_roadmap} records its \(p=0\) specialization.
Appendix~\ref{sec:exp} contains the
experimental details.
\input{appendix/soft_proof.tex}
\input{appendix/lemmas.tex}
\input{appendix/hard_proof.tex}
\input{appendix/new_proof.tex}
\input{appendix/uniform_proof.tex}
\input{appendix/hard_general_probing.tex}
\input{appendix/hard_tertiary_roadmap.tex}
\input{content/experiment.tex}

\end{document}

%% file: content/intro.tex
\section{Introduction}
\label{sec:intro}
Constrained Markov Decision Processes (CMDPs) provide a principled framework for constrained reinforcement learning with both reward objectives and safety requirements \cite{ji2023ai,altman2021constrained}. This framework is especially relevant in safety-critical domains, and a growing line of provably constrained RL algorithms has been developed within this framework \cite{efroni2020exploration,singh2020learning,brantley2020constrained,kalagarla2021sample,liu2021learning,bura2022dope}.
These works primarily focus on the tabular setting, where guarantees scale with the cardinality of the state space and become unsuitable for continuous or high-dimensional environments. Linear CMDP methods \cite{amani2021safe,ghosh2022provably,ghosh2024towards,wei2024safe,kitamura2025provably,ding2021provably} partially address this issue by assuming that transitions are linearly represented by known low-dimensional features, yielding state-dimension-independent bounds. However, existing tabular/linear CMDP methods all rely on a priori knowledge of the latent feature representations. In the absence of such prior feature knowledge, their guarantees no longer apply, leaving CMDPs with unknown features insufficiently addressed.

% Existing approaches fracture this challenge—either assuming known low-dimensional features \cite{ghosh2022provably,wei2024safe} or restricting to tabular settings \cite{liu2021learning}—leaving a critical gap for safe RL in unknown, complex environments.
%改成和前面更搭的
Representation learning has been effective for discovering low-dimensional transition features in unconstrained MDPs \cite{agarwal2020flambe,uehara2021representation,ren2022latent,ren2022spectral,zhang2023provable,modi2024model,ni2022representation}. Extending this idea to CMDPs is more challenging because representation error now affects both reward optimization and constraint evaluation. The learner must identify latent dynamics from online data, use the learned representation to guide exploration, and still control cumulative constraint violation \cite{agarwal2020flambe,efroni2020exploration}. This creates a coupling between representation coverage and constraint-aware policy optimization that is absent from unconstrained low-rank RL and is not handled by existing CMDP results with known features.

In this paper, we develop primal-dual representation-learning algorithms for low-rank continuous-state CMDPs with unknown transition features. 
During feature learning, the algorithms interact with the environment via an $\epsilon$-greedy mixture \emph{execution policy} and use the collected samples to learn transition features by maximum-likelihood estimation (MLE).
For soft violation criterion, where only the total constraint shortfall across episodes is charged, our algorithm, REP-PD-TEG, uses the learned features to construct uncertainty bonuses, and plan with the bonus-augmented Lagrangian objective that couples reward maximization, utility optimization, and exploration.
The algorithm outputs the \emph{return policies} with a primal-dual update, and admits both return-policy and execution-policy guarantees. For the stricter per-episode hard-violation metric, where each episode's shortfall is charged separately, we develop REP-PD-H-TEG, which keeps the representation-learning procedure but plans under a tight lower-confidence utility certificate and conducts a stabilized dual search to obtain the return policy.
Specifically, we make the following contributions:
\begin{itemize}
    \item We propose the first theoretically guaranteed representation-learning method for low-rank continuous-state CMDPs with unknown features. Starting from the occupancy of the current return policy, it uses $\epsilon$-greedy auxiliary probes for action coverage, fits transition features by MLE, and constructs a merged-covariance uncertainty bonus. We quantify the effect of decaying-$\epsilon$ exploration for both return and execution policies.
    \item For the aggregate soft-violation criterion,
    we propose REP-PD-TEG, which updates the return policy through an optimistic Lagrangian whose reward and dual-weighted utility terms are both bonus-augmented based on the learned features.    
    We show that REP-PD-TEG achieves return policy regret and soft violation of order \(\tilde O(K^{1/2+p})\) sample complexity under the schedule \(\epsilon_k=k^{-p}\), where \(\epsilon_k\) is the exploration level in episode \(k\). 
    %random-exploration probability这个词和前面对齐 说清楚为什么可以p=0，return policy不考虑epsilon的影响
    In particular, this recovers the optimal \(\tilde O(\sqrt K)\) return policy rate when \(p=0\), which corresponds to uniform auxiliary probing over actions during the whole procedure. We also prove an execution policy overall bound: executing the $\epsilon_k$-greedy exploration adds \(O(K^{1-p})\) complexity, and the balanced choice \(p=1/4\) yields a \(\tilde O(K^{3/4})\) execution policy regret and soft violation up to problem-dependent factors.%前后名词对齐
    \item For the stricter per-episode hard-violation criterion,
    we propose REP-PD-H-TEG, which uses the learned features to conduct a stabilized dual search and certifies feasibility through a bonus-corrected lower-confidence utility certificate.
    For return policies, we obtain regret of order \(\tilde O(K^{3/4+p})\) and hard violation of order \(\tilde O(K^{1/2+p})\); taking \(p=0\) again gives  uniform probing and the clean \(\tilde O(\sqrt K)\) hard-violation dependence. For execution policies, we show that the cost of $\epsilon_k$-greedy exploration is also \(O(K^{1-p})\) and choosing \(p=1/8\) gives a \(\tilde O(K^{7/8})\) regret and  hard violation up to problem-dependent factors.
\end{itemize}
Appendix~\ref{sec:exp} provides the empirical study of REP-PD-TEG on a simulated job-scheduling task. Besides, we also implement an approximate neural network  version of REP-PD-TEG on the Safety Gym task to validate the effectiveness of our representation learning procedure for a complex state-action space.

\begin{table*}[tb]
   
    \centering
    \setlength{\tabcolsep}{2pt}
     \caption{
    Comparison of soft-violation bounds. The two REP-PD-TEG rows report the complexity of the best return policy specialization and the best execution policy specialization, respectively.
    }\label{table:algorithms}
    \begin{tabular}{|c|c|c|c|c|}
    \thickhline
    MDP &Method & Known Feature & Regret & Violation \\\hline
    \multirow[c]{2}{*}{Tabular} 
    &\citet{efroni2020exploration} & Yes   & $\tilde{O}\left( \sqrt{|\mathcal S|^3 |\mathcal A| H^4 K}\right)$ & $\tilde{O}\left( \sqrt{|\mathcal S|^3 |\mathcal A| H^4 K}\right)$   \\
    \cline{2-5}
    & \citet{liu2021learning} & Yes   & $\tilde{O}\left( \sqrt{|\mathcal S|^3 |\mathcal A| H^6 K}\right)$ & 0   \\  \thickhline
    \multirow[c]{3}{*}[-1.3ex]{Linear} 
    & \citet{ding2021provably} & Yes  &$\tilde{O}\left(\sqrt{d^2 H^6 K}\right)$  & $\tilde{O}\left(\sqrt{d^2 H^6 K}\right)$  \\ 
    \cline{2-5}
    & \citet{ghosh2022provably} & Yes  &$\tilde{O}\left(\sqrt{d^3 H^3 K}\right)$  & $\tilde{O}\left(\sqrt{d^3 H^3 K}\right)$  \\ 
    \cline{2-5}
    & \citet{ghosh2022provably} & Yes  &$\tilde{O}\left(\sqrt{d^3 H^5 K}\right)$  & 0  \\ 
    \thickhline
    \multirow[c]{2}{*}[-1.3ex]{Low-rank}
       &\multirow[c]{2}{*}[-1.3ex]{\textbf{REP-PD-TEG}}
       &\multirow[c]{2}{*}[-1.3ex]{\textbf{No}}
       & \begin{tabular}[c]{@{}c@{}}
         $\displaystyle\tilde O\!\left(\!|\mathcal A| d^2\sqrt K(1-\gamma)^{-\!2}\!\right)$\\
         \textcolor{blue}{[return policy]}
         \end{tabular}
       & \begin{tabular}[c]{@{}c@{}}
         $\displaystyle\tilde O\!\left(\!|\mathcal A| d^2\sqrt K(1-\gamma)^{-\!2}\!\right)$\\
         \textcolor{blue}{[return policy]}
         \end{tabular}\\
       \cline{4-5}
       &
       &
       & \begin{tabular}[c]{@{}c@{}}
         $\displaystyle\tilde O\!\left(\!|\mathcal A| d^2K^{3/4}(1-\gamma)^{-\!2}\!\right)$\\
         \textcolor{blue}{[execution policy]}
         \end{tabular}
       & \begin{tabular}[c]{@{}c@{}}
         $\displaystyle\tilde O\!\left(\!|\mathcal A| d^2K^{3/4}(1\!-\!\gamma)^{-\!2}\!\right)$\\
         \textcolor{blue}{[execution policy]}
         \end{tabular}\\
    \thickhline
    \end{tabular}
    \end{table*}

\begin{table*}[tb]
    \setlength{\tabcolsep}{3pt}
    \centering
    \small
    \caption{
    Comparison of hard-violation bounds. The two REP-PD-H-TEG rows report the complexity of the best return policy  specialization and the balanced execution policy  specialization, respectively.
    }\label{table:hard_algorithms}
    \begin{tabular}{|c|c|c|c|c|c|}
    \thickhline
    MDP &Method & \shortstack{Known\\Feature} & \shortstack{Known Safe\\Action Set} & Regret & Violation \\\hline
    \multirow{2}{*}{Tabular}
    & \citet{efroni2020exploration} & Yes  & Yes & $\tilde{O}\left( \sqrt{|\mathcal S|^3|\mathcal A|H^4K}\right)$ &  $\tilde{O}\left( \sqrt{|\mathcal S|^3|\mathcal A|H^4K}\right)$  \\
    \cline{2-6}
    & \citet{bura2022dope} & Yes  & Yes & $\tilde{O}\left( \sqrt{|\mathcal S|^2|\mathcal A|H^6K}\right)$ &  0  \\ \thickhline
    \multirow[c]{4}{*}[-1.3ex]{Linear}
    & \citet{amani2021safe} & Yes & Yes & $\tilde{O}\left(\sqrt{d^3 H^4 K}\right)$ &  0  \\
    
    \cline{2-6}
    &\citet{kitamura2025provably} & Yes & Yes & $\tilde{O}\left( \sqrt{d^5 H^8 K}\right)$ & 0 \\

    \cline{2-6}
    & \citet{wei2024safe} & Yes  & No & $\tilde{O}\left(\sqrt{d^3 H^4 K}\right)$ & $\tilde{O}\left(\sqrt{dH^2 K}\right)$  \\
    \cline{2-6}
    & \citet{ghosh2024towards} & Yes & No & $\tilde{O}\left(\sqrt{d^3 H^4 K}\right)$ & $\tilde{O}\left(\sqrt{d^3 H^4 K}\right)$  \\
    \thickhline
    \multirow[c]{2}{*}[-1.4ex]{Low-rank}
    &\multirow[c]{2}{*}[-1.4ex]{\textbf{REP-PD-H-TEG}}
    & \multirow[c]{2}{*}[-1.4ex]{\textbf{No}}
    & \multirow[c]{2}{*}[-1.4ex]{\textbf{No}}
    & \begin{tabular}[c]{@{}c@{}}
      $\displaystyle\tilde{O}\!\left(\!|\mathcal A| d^2\!K^{3/4}\!(1\!-\!\gamma)^{-\!2}\!\right)$\\
      \textcolor{blue}{[return policy]}
      \end{tabular}
    & \begin{tabular}[c]{@{}c@{}}
      $\displaystyle\tilde O\!\left(\!|\mathcal A| d^2\!\sqrt K(1\!-\!\gamma)^{-\!2}\!\right)$\\
      \textcolor{blue}{[return policy]}
      \end{tabular}\\
    \cline{5-6}
    &
    &
    &
    & \begin{tabular}[c]{@{}c@{}}
      $\displaystyle\tilde{O}\!\left(\!|\mathcal A|d^2\!K^{7/8}\!(1\!-\!\gamma)^{-\!2}\!\right)$\\
      \textcolor{blue}{[execution policy]}
      \end{tabular}
    & \begin{tabular}[c]{@{}c@{}}
      $\displaystyle\tilde O\!\left(\!|\mathcal A| d^2\!K^{7/8}\!(1\!-\!\gamma)^{-\!2}\!\right)$\\
      \textcolor{blue}{[execution policy]}
      \end{tabular}\\
    \thickhline
    \end{tabular}
\end{table*}

\paragraph{Remark on the probing schedule.}
In tabular and linear CMDPs, the state or feature representation is known, so
the analyzed policy is the execution policy. In low-rank CMDPs, the transition
features must be learned online, and auxiliary $\epsilon_k$-greedy probing is needed to obtain
coverage over the state-action space for representation learning. This creates a distinction between the
return-policy sequence and the execution policy. The latter is especially relevant in constrained RL because exploratory behavior can incur real training-time costs---for example, robot collisions or excessive contact forces---even when the final return policy is safe.
The two REP-PD-TEG rows in Table~\ref{table:algorithms} report, respectively, the sample complexity of the best return policy specialization \((p=0)\) and the best execution policy specialization \((p=1/4)\), where $p$ denotes the probing strength with $\epsilon_k = k^{-p}$. Similarly, the two REP-PD-H-TEG rows in Table~\ref{table:hard_algorithms} show the best return policy complexity with \(p=0\) and the balanced execution policy complexity under the hard specialization \(p=1/8\), respectively.

%% file: content/related.tex
\subsection{Related Work}
\paragraph{Provably Constrained RL}
CMDPs provide a rigorous framework for safe RL, balancing reward maximization with cumulative cost constraints \cite{altman2021constrained,gu2024review}. 
Several practical constrained RL methods have been developed on this foundation, including policy-gradient, Lagrangian, shielding, and model-based approaches \cite{achiam2017constrained,ray2019benchmarking,as2022constrained,dai2023augmented,huang2023safedreamer,banerjee2024dynamic}.
On the theoretical side, there has been a growing body of work on provably safe reinforcement learning. Many works \cite{efroni2020exploration,singh2020learning,brantley2020constrained,kalagarla2021sample,liu2021learning,bura2022dope,ding2023last,kitamura2024policy} consider the tabular setting. Among the best known regret and constraint-violation guarantees are of order \(\tilde O(\sqrt{K})\) \cite{efroni2020exploration,liu2021learning}. However, their polynomial dependence on the cardinality of state space $|\mathcal{S}|$ makes them unsuitable for large or continuous state spaces.

Recent advances in linear CMDPs \cite{neu2020unifying,amani2021safe,yang2021reduction,ding2021provably,ghosh2022provably,ghosh2024towards,wei2024safe,kitamura2025provably} achieve dimension-independent sample complexity by assuming known low-dimensional features \(\phi^\star(s,a)\).
They enforce safety through confidence intervals, optimistic bonuses, primal-dual updates, pessimistic constraints, or safe baselines.
However, their reliance on predefined latent features, assumed to be known a priori or handcrafted, limits applicability when transition structures are unknown and must be learned from data.  Our setting instead has unknown latent features that must be learned online. To our knowledge, the only prior work on low-rank CMDPs is \cite{huang2022safe}, which studies the sample complexity of identifying a near-optimal feasible policy after training. In contrast, our work analyzes the entire learning trajectory and provides regret and constraint-violation guarantees.

\paragraph{Representation Learning for Unconstrained MDPs}
A growing body of research has focused on theoretically grounded representation learning in RL, aiming to derive low-dimensional features that efficiently capture environment dynamics.  For example, works such as \cite{du2019provably, misra2020kinematic, zhang2022efficient} investigated representation learning in block MDPs, leveraging hierarchical or modular state decompositions to improve sample efficiency. \cite{ren2022free} studied representations under a Gaussian noise model, establishing guarantees for feature learning in noisy environments.
Low-Rank MDPs have emerged as a pivotal framework for representation learning in structured environments. Within this regime, model-based approaches \cite{agarwal2020flambe, uehara2021representation, zhang2022making, ren2022latent, zhang2023provable} learn latent transition features by leveraging model classes of transition probabilities, achieving sample-efficient exploration under realizability assumptions. In contrast, model-free methods like \cite{modi2024model} eschew explicit transition models and show efficiency under minimal reachability conditions.
The framework has recently expanded into reward-free exploration \cite{huang2022safe,cheng2023improved} and multi-agent Markov games \cite{ni2022representation, qiu2022contrastive, zhou2024solving, zhou2026toward}.
However, these works do not address online safety constraints in CMDPs. 
Consequently, this work gives the first representation-learning framework for low-rank CMDPs that bridges feature discovery and provable online safety.

%% file: content/background.tex
\section{Background}
We consider an episodic constrained MDP, denoted by $(\mathcal{S},\mathcal{A},P^*,r,g,\gamma,d_1)$ where $\mathcal{S}$ is the state space, $\mathcal{A}$ is the action space, $P^*$ is the transition function, $r$ is the reward function, $g$ is the utility function, $\gamma$ is the discounted factor, and $d_1$ is the initial distribution. Following prior work \cite{jiang2017contextual,sun2019model,uehara2021representation}, we assume trajectory reward is normalized, i.e., for any trajectory $\{s_h,a_h\}_{h=1}^\infty$, we have $\sum_{h=1}^\infty\gamma^hr(s_h,a_h)\in[0,1]$ and $\sum_{h=1}^\infty\gamma^hg(s_h,a_h)\in[0,1]$.%check这个定义

In an infinite-horizon discounted CMDP, each episode starts with state $s_1\sim d_1$. Then at each step $h$, the agent observes state $s_h\in \mathcal{S}$, picks an action $a_h\in \mathcal{A}$, and receives a reward $r(s_h,a_h)$ and a utility $g(s_h,a_h)$. The MDP then transitions to $s_{h+1}$ which is drawn from $P^*(\cdot|s_h,a_h)$. In this paper, we assume access to reward and utility function feedback, but does not require access to the transition model.
%The policy-space of the agent is denoted by $\Delta (\mathcal{A}|\mathcal{S})$, where $\Delta(\mathcal{A})$ represents the probability simplex over the action space. 
For any $s\in \mathcal{S}$, $\pi(a|s)$ denotes the probability of selecting action $a\in \mathcal{A}$  when the state is $s$ under policy $\pi$. 
Let $d_{P^*}^\pi(s)$ be the state occupancy measure
\begin{align*}
    d_{P^*}^\pi(s)& =(1-\gamma)\sum_{h=1}^\infty \gamma^{h-1} P^\star_\pi(s_h=s|s_1\sim d_1),
\end{align*}
where $P^*_\pi(s_h=s|s_1\sim d_1)$ is the probability that $s_h=s$, after starting at state $s_1\sim d_1$ and following $\pi$ thereafter.
Also, let $d_{P^*}^\pi(s,a)=\pi(a|s)d_{P^*}^\pi(s)$ be the state-action occupation measure.
We define the expected value function for the discounted reward under policy $\pi$ starting from state $s$ as:
\begin{align}\label{eqn:vr}
   V^{\pi}_{P^*,r}(s)= \mathbb{E}_{P^*,\pi}\left[\sum_{h=1}^{\infty}\gamma^{h}r(s_h,a_h)|s_1=s\right],
\end{align}
where $\mathbb{E}$ is taken with respect to the policy $\pi$ and the transition function $P^*$. Similarly, we define the action-value function for the reward as:
\begin{align}\label{eqn:qr}
    Q^{\pi}_{P^*,r}(s,a)=\mathbb{E}_{P^*,\pi}\left[\sum_{h=1}^{\infty}\gamma^{h}r(s_h,a_h)|s_1=s,a_1=a\right].
\end{align}
For simplicity, when the context is clear, we sometimes abbreviate $V^{\pi}_{P^{\star}, r}$ and $Q^{\pi}_{P^{\star}, r}$ as $V^{\pi}_{r}$ and $Q^{\pi}_{r}$, respectively. We also define the value function for the utility $V^{\pi}_{g}(s)$ and the action-value function for the utility $Q^{\pi}_{g}(s,a)$ analogously by replacing $r$ with $g$ in \eqref{eqn:vr} and \eqref{eqn:qr}, respectively.  We denote $V^{\pi}_j(s)$, and $Q^{\pi}_{j}(s,a)$ for $j=r,g$. 
\begin{definition}
For brevity, we denote $P^*V^{\pi}_{j}(s,a)=\mathbb{E}_{s^{\prime}\sim P^*(\cdot|s,a)}V^{\pi}_{j}(s^{\prime})$ for $j=r,g$.
\end{definition}
Using this notation, the Bellman's equation associated with the policy $\pi$ becomes
\begin{align*}
    Q^{\pi}_{j}(s,a)=(j+\gamma P^*V^{\pi}_{j})(s,a),\qquad j\in\{r,g\}.
\end{align*}
Note that $V^{\pi}_{j}(s)=\langle\pi(\cdot|s),Q^{\pi}_{j}(s,\cdot)\rangle_{\mathcal{A}}$, where $\langle\pi(\cdot|s),Q^{\pi}_{j}(s,\cdot)\rangle_{\mathcal{A}}=\sum_{a\in \mathcal{A}}\pi_h(a|s)Q^{\pi}_{j}(s,a)$.

The objective of the learning agent is to find an optimal policy that maximizes the expected reward while satisfying a safety constraint on the expected utility:
\begin{align}\label{eq:cmdp}
& \text{max}_{\pi} V^{\pi}_{r}(s_1),\quad
& \text{subject to } V^{\pi}_{g}(s_1)\geq b.
\end{align}
where $b\in (0,1]$ is a given threshold.
While we focus on a single constraint for simplicity, our method  naturally extends to handle multiple constraints. We denote the optimal policy as $\pi^{*}$ which solves the above optimization problem. %Since $\pi^{*}$ is obtained by having complete information, it is also denoted as {\em the best policy in the hindsight}.

In the absence of prior knowledge about the constraint, the agent must balance exploration and exploitation while ensuring that the policy satisfies the constraint. To this end, we allow the policy to occasionally violate the constraint and aim to minimize the regret and the total constraint violations over $K$ episodes. This approach is consistent with existing literature on constrained RL \cite{efroni2020exploration,ding2021provably,ding2020natural}. 

To formally measure the performance of the policy sequence $\{\pi_k\}_1^K$, we define the regret as $$\mathrm{Regret}(K)=\sum_{k=1}^{K}V^{\pi^*}_{r}(s_1)-V^{\pi_k}_{r}(s_1),$$ and the soft/hard constraint violations  as 
\begin{align*}
\mathrm{Violation}_{\mathrm{soft}}(K)&=\left[\sum_{k=1}^{K}(b-V^{\pi_k}_{g}(s_1))\right]_{+},
\mathrm{Violation}_{\mathrm{hard}}(K)
&=\sum_{k=1}^{K}\left[b-V^{\pi_k}_{g}(s_1)\right]_{+},
\end{align*}
where $[z]_{+}=\max\{z,0\}$. The regret measures the gap between the total reward value by following the optimal policy $\pi^{*}$, and the total reward value obtained by following agent's policy $\pi_k$ at episode $k$ over $K$ episodes. The soft constraint violation quantifies the cumulative difference between the desired constraint threshold and the achieved utility, allowing for both positive and negative deviations over time. In contrast, the hard constraint violation is a stricter metric that measures only the cumulative positive deviations. As shown in \cite{ghosh2024towards}, even zero soft violation may lead to hard constraint violation that grows linearly with the number of episodes, while soft constraint violation can only grow sub-linearly with the number of episodes if the hard constraint violation grows sub-linearly.

These metrics capture cumulative notions of safety rather than strict zero-violation safe exploration. In particular, soft violation is the standard online-CMDP notion for budget-style or long-run average constraints, while hard violation is a stricter cumulative metric that counts only positive deviations. Throughout the paper, our guarantees concern cumulative violation control; they do not imply per-episode safety during learning.

\textbf{Dual problem and Slater's Condition.}
We then introduce some additional notation that will be used throughout this paper.
\begin{definition}\label{defn:lagrange_value}
For any policy $\pi$ and Lagrangian multiplier $Y$, we define the composite value function and Q-function as $V^{\pi,Y}(s)=V^{\pi}_{r}(s)+Y V_g^{\pi}(s)$ and $Q^{\pi,Y}(s,a)=Q^{\pi}_{r}(s,a)+YQ^{\pi}_{g}(s,a)$.
% \begin{align*}
% V^{\pi,Y}(s)=V^{\pi}_{r}(s)+Y V_g^{\pi}(s), \qquad \text{and}\qquad
%     Q^{\pi,Y}(s,a)=Q^{\pi}_{r}(s,a)+YQ^{\pi}_{g}(s,a).
% \end{align*}
\end{definition}

We can cast problem (\ref{eq:cmdp}) as a saddle point problem $\max_{\pi}\min_{Y}\mathcal{L}(\pi,Y)$, with Lagrangian $\mathcal{L}(\pi,Y)=V_{r}^{\pi}(s_1)+Y(V_{g}^{\pi}(s_1)-b)=V^{\pi,Y}-Yb$. Although the Lagrangian is non-concave in $\pi$ \cite{agarwal2021theory}, the strong duality holds \cite{paternain2019safe}. Hence, there exists an optimal Lagrangian multiplier $Y^{*}$ such that $\max_{\pi}\mathcal{L}(\pi,Y^*)$  recovers the optimal reward value function. %Thus, we focus on obtaining the optimal dual value and the optimal composite value function $V^{\pi,Y^{*}}_1(\cdot)$ corresponding to the dual variable $Y^{*}$.

We assume the following Slater's condition in this paper.
\begin{assum}[Slater's Condition]\label{assum:slater}
There exists $\theta>0$, and a policy $\bar{\pi}$, such that $V^{\bar{\pi}}_{g}(s_1)\geq b+\theta$.
\end{assum}

%\xingyu{$\xi$ should be $2H/\gamma$ since you cannot know the V value and we only can use $H$ to bound it. This is the same as Ding's paper, and hence the result should have an additional $H$. Please update the main results and the table.}

The Slater's condition is mild in practice and widely used in prior work \cite{ding2021provably,efroni2020exploration,qiu2020upper}; it also implies that the optimal dual variable is bounded.
\begin{lem}[Boundedness of $Y^{*}$]\label{lem:boundY}
The optimal dual-variable $Y^{*}\leq \dfrac{V^{\pi^*}_{r}(s_1)-V^{\bar{\pi}}_{r}(s_1)}{\theta}\leq \dfrac{1}{\theta}$. 
\end{lem}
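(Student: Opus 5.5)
The plan is the standard strong-duality argument, as in \citet{ding2021provably} and \citet{efroni2020exploration}. Let $D(Y)=\max_{\pi}\mathcal{L}(\pi,Y)=\max_\pi V^{\pi,Y}(s_1)-Yb$ denote the dual function; it is defined for $Y\ge 0$. We take $Y^*\in\arg\min_{Y\ge 0}D(Y)$. Strong duality \cite{paternain2019safe} gives $D(Y^*)=V^{\pi^*}_r(s_1)$, and this identity is the key input.

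\textbf{Lower bound from the Slater policy.} Evaluate the maximization in $D(Y^*)$ at the particular policy $\bar\pi$ from Assumption~\ref{assum:slater}. This gives
\begin{align*}
V^{\pi^*}_r(s_1)=D(Y^*)\ \ge\ V^{\bar\pi}_r(s_1)+Y^*\bigl(V^{\bar\pi}_g(s_1)-b\bigr)\ \ge\ V^{\bar\pi}_r(s_1)+Y^*\theta .
\end{align*}
The last inequality uses $Y^*\ge 0$ together with $V^{\bar\pi}_g(s_1)-b\ge\theta$.

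\textbf{Rearranging.} Since $\theta>0$, rearranging gives the first inequality:
\[
Y^*\le \frac{V^{\pi^*}_r(s_1)-V^{\bar\pi}_r(s_1)}{\theta}.
\]
For the second inequality, the normalization assumption puts every discounted reward sum in $[0,1]$, so $V^{\pi^*}_r(s_1)\le 1$ and $V^{\bar\pi}_r(s_1)\ge 0$. The numerator is therefore at most $1$, and $Y^*\le 1/\theta$.

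\textbf{Main obstacle.} The only delicate point is the identity $D(Y^*)=V^{\pi^*}_r(s_1)$ with a minimizer that actually exists. The Lagrangian is non-concave in $\pi$, so this cannot come from a minimax theorem over policies. Instead, we invoke strong duality, which holds via the occupancy-measure linear-programming reformulation, as cited in the background. Existence of a minimizer follows because $D$ is convex, being a pointwise maximum of affine functions of $Y$. Moreover, by the Slater bound above, $D(Y)\ge V^{\bar\pi}_r(s_1)+Y\theta\to\infty$ as $Y\to\infty$, so $D$ is coercive on $[0,\infty)$ and attains its infimum.
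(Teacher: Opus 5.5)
Your proposal is correct and matches the standard argument the paper relies on. The paper states Lemma~\ref{lem:boundY} without proof, as a direct consequence of strong duality \cite{paternain2019safe} and Assumption~\ref{assum:slater} in the style of \citet{ding2021provably}: test $\max_\pi\mathcal{L}(\pi,Y^*)=V^{\pi^*}_r(s_1)$ against $\bar\pi$ to get $V^{\pi^*}_r(s_1)\ge V^{\bar\pi}_r(s_1)+Y^*\theta$, then use the $[0,1]$ normalization. Your attainment remark is a useful detail the paper leaves implicit, though it is the lower semicontinuity of $D$ (as a pointwise maximum of continuous affine functions), not convexity per se, that combines with the Slater-induced coercivity to guarantee a minimizer.
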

% Now, we introduce a notation which we use throughout this section.
% \begin{definition}
% We set $\zeta=\min\{\gamma/2,\dfrac{\tilde{\cO}(\sqrt{d^3HT})}{K}\}$
% \end{definition}

\textbf{Low-rank CMDP.} In this paper, we study low-rank CMDPs,  which are characterized by a low-rank decomposition of the transition function. Specifically, we assume the following:
\begin{assum}[low-rank CMDP]\label{assum:linearmdp}
The transition function of the CMDP $P^*:\mathcal{S}\times\mathcal{A}\to\Delta(\mathcal{S})$ admits a low rank decomposition with rank $d\in\mathbb{N}$ if there exists two embedding functions $\phi^*:\mathcal{S}\times\mathcal{A}\to\mathbb{R}^d,\mu^*:\mathcal{S}\to\mathbb{R}^d$ such that
\begin{align*}
\forall s,s^\prime\in\mathcal{S},a\in\mathcal{A}:P^*(s^{\prime}|s,a)=\langle\phi^*(s,a),\mu^*(s^{\prime})\rangle.
\end{align*}
For normalization, we assume $\|\phi^*(s,a)\|\leq1$ for all $(s,a)$ and for any functions $f:\mathcal{S}\to[0,1]$, $\|\int \mu^*(s)f(s) \,ds \|_2\leq\sqrt{d}$. 
\end{assum}
\begin{remark}
    The low-rank CMDP assumption, characterized by the existence of low-dimensional embedding functions $\phi^*(s,a)$ and $\mu^*(s^\prime)$, is widely adopted structural assumption in RL \cite{agarwal2020flambe,uehara2021representation,ren2022spectral}.  This assumption encompasses several common settings as special cases, including tabular CMDPs, linear CMDPs, and block CMDPs.
\end{remark}

\textbf{Stochastic-oracle feedback model.}
Following prior low-rank representation-learning work, we study an occupancy-based stochastic-oracle setting. The learner may query states distributed as $d_{P^\star}^{\pi}$ together with auxiliary transitions that are used only for representation learning and bonus construction. Practically, a discounted-occupancy state can be sampled by resetting to the initial distribution and rolling in under \(\pi\) for a geometrically distributed number of steps. Here \(K\) counts outer oracle rounds, not raw environment transitions; implementing \(K\) queries this way uses \(O(K/(1-\gamma)+3K)\) transitions in expectation. Our regret and violation bounds do not charge the complete roll-in trajectories. Likewise, the execution-policy corollaries transfer values to the stationary $\epsilon$-mixture policy rather than giving an end-to-end guarantee for the entire three-transition probing trajectory.

\textbf{Function approximation setup and MLE oracle.}
Since the transition function $P^*$ is unknown, we employ a function class $\mathcal{M}=\{(\mu,\phi):\mu\in\Theta ,\phi\in\Gamma \}$ to characterize $\mu^\star$ and $\phi^\star$, and by extension, to approximate $P^*$. Our formal guarantees use a finite realizable class, $|\mathcal M|<\infty$ and $P^\star\in\mathcal M$; this restricts the candidate representations rather than the cardinality of the state space. Finite realizability and logarithmic dependence on $|\mathcal M|$ are standard in low-rank representation-learning analyses \cite{agarwal2020flambe,uehara2021representation}.
To select the appropriate transition $\hat{P}$ from the function class $\mathcal{M}$, we assume access to a supervised learning-style maximum-likelihood estimation (MLE) oracle.

\begin{definition}[MLE Oracle]
Given a model class $\mathcal{M}$ and a dataset $\mathcal{D}$ in the form of $(s, a, s^\prime)$, the MLE oracle returns the maximum likelihood estimator $\hat{P}=(\hat{\mu},\hat{\phi})=\arg\max_{(\mu,\phi)\in\mathcal{M}}\mathbb{E}_{\mathcal{D}}\ln(\mu(s^\prime)^\top\phi(s,a))$.
\end{definition}

% We want to remark that the planning procedure for known linear MDPs with potentially nonlinear rewards, which can be done in polynomial time \cite{jin2020provably}. (See Section 5.1 in \cite{agarwal2020flambe} for details of how to implement this procedure for a known linear MDP with polynomial computation complexity.)

%% file: content/soft.tex
\section{Soft Violation via Cumulative Primal-Dual Control}
\label{sec:rep-pd}
\label{sec:online}

We first study low-rank CMDPs under the soft cumulative-violation metric, where
constraint deficits and surpluses are aggregated before taking the positive
part. The learner must identify an unknown low-rank transition representation
while updating policies through a primal-dual procedure: return-policy data may
lack action coverage, whereas always-uniform actions are less policy-adaptive.
REP-PD-TEG (\textbf{Rep}resentation-learning
\textbf{P}rimal-\textbf{D}ual with \textbf{T}ertiary decaying-\(\boldsymbol{\epsilon}\)-
\textbf{G}reedy probing) resolves this tension by keeping
the probing kernel close to the previous return policy while retaining a
uniform action floor; the tertiary transition supplies the coverage needed for
optimism. Unless stated otherwise, regret and soft violation are evaluated on
the return policies \(\{\pi_k\}_{k=1}^K\); auxiliary tuples are used only for
model estimation and bonus construction. We later quantify the cost of charging
the execution policy.

\subsection{REP-PD-TEG Algorithm}
\label{subsec:rep-pd-decay}

Algorithm~\ref{alg:rep-pd-teg} gives REP-PD-TEG.
At episode \(k\), the algorithm interacts with the environment using the execution policy
\(q_k(\cdot\mid s)=q'_k(\cdot\mid s)=q''_k(\cdot\mid s)
=(1-\epsilon_k)\pi_{k-1}(\cdot\mid s)+\epsilon_k U(\mathcal A)\),
where \(U(\mathcal A)\) is the uniform distribution over the action space and
\(\epsilon_1\ge \epsilon_2\ge \cdots \ge \epsilon_K>0\).
The oracle returns one tertiary tuple
\((s,a,s',a',s'',a'',\bar s)\): it first samples
\(s\sim d^{\pi_{k-1}}_{P^\star}\), then alternates actions from
\(q_k,q'_k,q''_k\) with transitions from \(P^\star\).
The resulting three transitions are stored in separate replay buffers denoted as $D_k$, $D'_k$ and $D''_k$. 
%In our proposed method, MLE uses all the three stream datasets to learn the feature,
%whereas the covariance mixture bonus uses a merged primary/secondary covariance;
%the tertiary stream supplies the next-state coverage needed in the proof.
After updating the replay buffers, REP-PD-TEG fits
\(\widehat P_k=(\widehat\mu_k,\widehat\phi_k)\) by MLE on $D_k+D'_k+D''_k$.
The third stream is required by the present \emph{merged-covariance} design, not by MLE alone. Since the covariance contains both $D_k$ and $D'_k$, the one-step-back proof produces propagated quadratic terms under both primary and secondary designs: $D'_k$ controls the former, while the one-step pushforward $D''_k$ supplies an MLE-controlled denominator for the latter. We therefore do not place $D''_k$ in the covariance, which would create another propagated level.
The learned features are then utilized to calculate the empirical covariance matrix $\widehat\Sigma^{\mathrm{mix}}_k$, which quantifies the uncertainty associated with these features. Based on the empirical covariance matrix, an exploration bonus, $\widehat b_k(x,u)$, is
constructed as a scaled Mahalanobis norm of the learned feature \(\widehat\phi_k\) with
respect to $\widehat\Sigma^{\mathrm{mix}}_k$.
This bonus induces uncertainty-aware planning under the learned model by augmenting both the reward and utility value estimates, after which the return policy and dual variable are updated according to the final two steps of Algorithm~\ref{alg:rep-pd-teg}.

The dual variable is projected into $[0,\xi]$ and we set $\xi=2/\theta$ in the analysis.
Reward optimism promotes exploration, while utility optimism is valid
for soft violation because episode-level deficits and surpluses are aggregated.
Thus one episode consists of auxiliary tertiary sampling, representation
learning, bonus construction, optimistic primal update, and projected dual
update.

\begin{algorithm}[t]
\caption{REP-PD-TEG}
\label{alg:rep-pd-teg}
\label{alg:online_teg}
\label{alg:online_algorithm}
\begin{algorithmic}[1]
\STATE {\bf Input:} Parameters \(\{\lambda_k\}_{k=1}^K\), \(\{\alpha_k\}_{k=1}^K\), \(\eta\), \(\xi\), \(\{\epsilon_k\}_{k=1}^K\), model class \(\mathcal M=\{(\mu,\phi):\mu\in\Theta,\phi\in\Gamma\}\), number of episodes \(K\).
\STATE Initialize return policy \(\pi_0(\cdot\mid s)=U(\mathcal A)\), \(Y_1=0\), and \(D_0=D'_0=D''_0=\emptyset\).
\FOR{\(k=1,\ldots,K\)}
    \STATE Set execution policy $q_k=q'_k=q''_k=(1-\epsilon_k)\pi_{k-1}+\epsilon_k U(\mathcal A)$.
    \STATE Query one auxiliary tertiary tuple \((s,a,s',a',s'',a'',\bar s)\), where $
    s\sim d^{\pi_{k-1}}_{P^\star}$, $a\sim q_k(\cdot\mid s)$, $s'\sim P^\star(\cdot\mid s,a)$, $a'\sim q'_k(\cdot\mid s')$, $s''\sim P^\star(\cdot\mid s',a')$, $a''\sim q''_k(\cdot\mid s'')$, $\bar s\sim P^\star(\cdot\mid s'',a'')$.
    \STATE Update $D_k=D_{k-1}\cup\{(s,a,s')\}$, $
    D'_k=D'_{k-1}\cup\{(s',a',s'')\}$, $D''_k=D''_{k-1}\cup\{(s'',a'',\bar s)\}$.
    \STATE Fit the low-rank transition model by MLE:
    \[
    \widehat P_k=(\widehat\mu_k,\widehat\phi_k)
    \in
    \arg\max_{(\mu,\phi)\in\mathcal M}
    \mathbb E_{D_k+D'_k+D''_k}
    \big[\log \mu(x')^\top \phi(x,u)\big].
    \]
    \STATE Form the merged empirical covariance
    \[
    \widehat\Sigma^{\mathrm{mix}}_k
    =
    \sum_{(x,u,x')\in D_k}
    \widehat\phi_k(x,u)\widehat\phi_k(x,u)^\top
    +
    \sum_{(x,u,x')\in D'_k}
    \widehat\phi_k(x,u)\widehat\phi_k(x,u)^\top
    +\lambda_k I.
    \]
    \STATE Set the covariance mixture bonus
    \[
    \widehat b_k(x,u)
    =
    \min\left\{
    \alpha_k
    \sqrt{
    \widehat\phi_k(x,u)^\top
    (\widehat\Sigma^{\mathrm{mix}}_k)^{-1}
    \widehat\phi_k(x,u)
    },
    \bar b
    \right\}.
    \]
    \STATE Update the return policy $
    \pi_k
    \in
    \arg\max_{\pi}
    \left(
    Q^{\pi}_{\widehat P_k,r+\widehat b_k}
    +
    Y_k Q^{\pi}_{\widehat P_k,g+\widehat b_k}
    \right)$.
    \STATE Update the dual variable $
    Y_{k+1}
    =
    \Pi_{[0,\xi]}
    \left(
    Y_k+\eta\big(b-
    V^{\pi_k}_{\widehat P_k,g+\widehat b_k}(s_1)
    \big)
    \right)$.
\ENDFOR
\STATE {\bf Return} \(\pi_1,\ldots,\pi_K\).
\end{algorithmic}
\end{algorithm}

Algorithm~\ref{alg:rep-pd-teg} uses exact MLE and exact optimistic-Lagrangian planning to isolate the statistical challenge of learning unknown representations. The proof exposes a conditional interface for other learners: a representation method may replace MLE if it establishes the same prediction event under all three probing designs, with any excess prediction error added to the representation-error term. Likewise, full-planning suboptimality enters through the cumulative planning term. A raw MLE objective gap need not imply the required prediction event, so the present theorems do not claim guarantees for arbitrary approximate oracles, misspecified models, or infinite classes. Spectral representation learning \cite{ren2022spectral} is an example of a non-MLE alternative, but its three-design event would have to be verified separately here.

\subsection{Polynomial-Schedule Guarantees}
\label{subsec:rep-pd-decay-theory}

We state the guarantee for polynomial schedules
\(\epsilon_k=k^{-p}\), which make the exploration--deployment tradeoff
explicit. The return-policy bound captures the statistical cost of learning
the representation from auxiliary probing data, while the execution-policy
statement adds the standard performance-difference cost of actually deploying
the mixed probing policy.

\begin{theorem}[Return policy and execution policy guarantees for polynomial probing]
\label{thm:rep-pd-decay}
\label{the:regret}
Suppose Assumptions~\ref{assum:slater} and~\ref{assum:linearmdp} hold and the finite model class is realizable, \(|\mathcal M|<\infty\) and \(P^\star\in\mathcal M\).
Let \(\epsilon_k=k^{-p}\) for some \(p\in[0,1/2)\).
Run Algorithm~\ref{alg:rep-pd-teg} with
\(\alpha_k=\widetilde O(\sqrt{(|\mathcal A|k^p+d^2)\gamma\log(|\mathcal M|k/\delta)})\),
\(\lambda_k=\widetilde\Theta(((|\mathcal A|k^p+d^2)/d)\log(|\mathcal M|k/\delta))\),
\(\bar b\ge\max_{k\le K}\alpha_k/\sqrt{\lambda_k}\),
\(\eta=2(1-\gamma)/(\theta\bar b\sqrt K)\),
\(\xi=2/\theta\).
Then, with probability at least \(1-\delta\),
\[
\mathrm{Regret}^{\mathrm{return}}(K)\le
\widetilde O\!\left(\frac{|\mathcal A|\bar b d^2K^{1/2+p}}{\theta(1-\gamma)^2}\right),
\qquad
\mathrm{Violation}^{\mathrm{return}}_{\mathrm{soft}}(K)\le
\widetilde O\!\left(\frac{|\mathcal A|\bar b d^2K^{1/2+p}}{(1-\gamma)^2}\right).
\]
Moreover, for the execution policy
\(\widetilde\pi_k=(1-\epsilon_k)\pi_k+\epsilon_kU(\mathcal A)\),
\[
\begin{aligned}
\mathrm{Regret}^{\mathrm{exec}}(K)
&\le \mathrm{Regret}^{\mathrm{return}}(K)
  +\frac{2}{1-\gamma}\sum_{k=1}^K k^{-p},\\
\mathrm{Violation}^{\mathrm{exec}}_{\mathrm{soft}}(K)
&\le \mathrm{Violation}^{\mathrm{return}}_{\mathrm{soft}}(K)
  +\frac{2}{1-\gamma}\sum_{k=1}^K k^{-p}.
\end{aligned}
\]
\end{theorem}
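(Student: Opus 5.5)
The proof follows the standard optimistic primal--dual template for CMDPs, with the low-rank representation analysis of \cite{uehara2021representation} supplying the optimism and the summed bonus bound. We first fix a good event of probability at least \(1-\delta\) and work on it throughout.

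\emph{Step 1 (MLE and concentration).} By the MLE generalization bound for finite realizable classes, with a union bound over \(k\), for every \(k\) the estimate \(\widehat P_k\) satisfies
\[
\sum_{i<k}\mathbb E_{x\sim\rho_i,\,u\sim q_i}\big\|\widehat P_k(\cdot\mid x,u)-P^\star(\cdot\mid x,u)\big\|_1^2\lesssim \log(|\mathcal M|k/\delta).
\]
This holds simultaneously under the three designs \(\rho_i\): the occupancy \(d^{\pi_{i-1}}_{P^\star}\), its one-step pushforward, and its two-step pushforward. We combine this with the covariance concentration lemma of Appendix~\ref{app:soft_proof}, which replaces \(\widehat\Sigma^{\mathrm{mix}}_k\) by its population counterpart up to constants once \(\lambda_k\) has the stated size.

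\emph{Step 2 (optimism).} We show that for every policy \(\pi\) and \(j\in\{r,g\}\),
\[
V^{\pi}_{\widehat P_k,j+\widehat b_k}(s_1)\ \ge\ V^{\pi}_{P^\star,j}(s_1)-\widetilde O\big(\sqrt{\,|\mathcal A|k^p\cdots/k}\big)
\]
up to a summable slack. The argument uses the simulation lemma, which expresses the gap as \(\mathbb E_{d^\pi_{\widehat P_k}}[\gamma(\widehat P_k-P^\star)V]\), together with the one-step-back inequality. That inequality bounds an expectation under \(d^\pi_{\widehat P_k}\) by a \(\widehat\phi_k\)-norm with respect to \(\widehat\Sigma^{\mathrm{mix}}_k\), times the square root of a quadratic error under the data design.

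The \(\epsilon_k\) floor enters here. Importance weighting from an arbitrary \(\pi\) to \(q_i\) costs a factor \(|\mathcal A|/\epsilon_i\le|\mathcal A|k^p\), which is why \(\alpha_k\) and \(\lambda_k\) scale with \(|\mathcal A|k^p\). The propagated quadratic terms generated from the \(D_k\) and \(D'_k\) covariance parts are controlled by the secondary and tertiary MLE events respectively, exactly as the text describes.

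I expect this step to be the main obstacle. The difficulty is the bookkeeping that makes every propagated term land on a design that is actually covered by an MLE event, and verifying that no fourth level is ever needed. I would first prove a single ``one-step-back under design \(\rho\)'' lemma and then apply it twice.

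\emph{Step 3 (summing bonuses).} Under \(P^\star\), the sum \(\sum_k V^{\pi_k}_{P^\star,\widehat b_k}\) is bounded by reversing the simulation lemma and applying one-step-back again, which brings it to \(\mathbb E_{d^{\pi_{k-1}}}\|\phi^\star\|_{\Sigma^{-1}}\). The elliptical potential lemma then gives \(\sqrt{dK\log K}\), multiplied by \(\alpha_K\sqrt{|\mathcal A|K^p}\). There is also a shift between \(\pi_{k-1}\), which generates the data, and \(\pi_k\); this is handled by the same occupancy-based argument. Combining these yields the rate \(\widetilde O(|\mathcal A|\bar b d^2K^{1/2+p}/(1-\gamma)^2)\) for \(\sum_k\big(V_{\widehat P_k,\cdot+\widehat b_k}-V_{P^\star,\cdot}\big)\).

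\emph{Step 4 (primal--dual).} Greedy optimality of \(\pi_k\) for the optimistic Lagrangian, combined with optimism at \(\pi^\star\), gives
\[
V^{\pi^\star}_r-\widehat V^{k}_r\le Y_k\big(\widehat V^{k}_g-b\big)+\mathrm{slack}.
\]
The projected dual update gives \(\sum_k Y_k(b-\widehat V^k_g)\le \xi^2/(2\eta)+\eta K\bar b^2/(2(1-\gamma)^2)\), which equals \(O(\bar b\sqrt K/(\theta(1-\gamma)))\) for the chosen \(\eta\). Adding Step 3 gives the regret bound. For the violation, we use the standard argument with \(Y=\xi=2/\theta\) together with Lemma~\ref{lem:boundY} and strong duality: since \(\xi\ge 2Y^\star\), the bound on \(\mathrm{Regret}+\xi\,\mathrm{Violation}\) converts into a bound on the violation, losing the \(1/\theta\) factor.

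\emph{Step 5 (execution policy).} The performance-difference lemma gives
\[
|V^{\widetilde\pi_k}_j-V^{\pi_k}_j|\le \frac{1}{1-\gamma}\,\mathbb E\big\|\widetilde\pi_k-\pi_k\big\|_1\le\frac{2\epsilon_k}{1-\gamma},
\]
using \(Q\in[0,1]\). Summing over \(k\) for \(j=r\) and \(j=g\) gives the two displayed inequalities; for the violation we also use that the positive part is monotone and subadditive.
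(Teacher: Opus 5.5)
Your plan is correct and follows essentially the same route as the paper: optimism under $d^\pi_{\widehat P_k}$ via the tertiary-aware one-step-back with the $\Gamma_k=|\mathcal A|/\epsilon_k$ coverage bound; a true-model one-step-back plus elliptical potential (with the $\pi_k\to q_{k+1}$ importance ratio) to sum the bonus and model-error terms for both $r$ and $g$; the projected-dual online-gradient bound with comparators $Y=0$ and $Y=\xi$, followed by the $\xi\ge 2Y^\star$ conversion; and the performance-difference transfer to the $\epsilon_k$-mixture. Two small repairs are needed: in Step~1 the MLE theorem, applied at the tuple level, controls the error only under the pathwise predictable designs $\delta_{s_i'}q_i'$ and $\delta_{s_i''}q_i''$, so passing to the averaged pushforward designs requires the Freedman-type bridge of Lemma~\ref{lem:pathwise_bridge_mle}; and in Step~4 the dual inequality you actually need, which the online-gradient argument with $Y=0$ supplies, bounds $\sum_k Y_k(\widehat V^k_g-b)$ rather than $\sum_k Y_k(b-\widehat V^k_g)$.
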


The proof is given in Appendix~\ref{app:general_probing}. Generally, our analyses combine a tuple-level MLE event, a pathwise-to-averaged bridge, tertiary-aware optimism for the merged bonus, and a cumulative overestimation bound for the dual update. The schedule parameter \(p\) trades coverage for deployment cost: smaller \(p\) improves the return-policy rate, while larger \(p\) reduces the cost of executing the exploratory mixture. Since \(\sum_{k=1}^K k^{-p}\le K^{1-p}/(1-p)\), the return-policy term scales as \(\widetilde O(K^{1/2+p})\), whereas execution adds \(O(K^{1-p}/(1-\gamma))\).

\begin{corollary}[Best return policy and execution policy rates]
\label{cor:rep-pd-uniform}
\label{cor:uniform-soft}
\label{cor:soft_uniform}
\label{cor:rep-pd-balanced-exec}
\label{cor:poly-exec}
\label{cor:soft_exec}
Under the same setting as Theorem~\ref{thm:rep-pd-decay}, the following two
specializations hold.
\begin{itemize}
    \item First, if \(p=0\), equivalently \(\epsilon_k\equiv1\), then the probing kernel
in Algorithm~\ref{alg:rep-pd-teg} becomes uniform:
\(q_k=q'_k=q''_k=U(\mathcal A)\). The return policies satisfy
\[
\mathrm{Regret}^{\mathrm{return}}(K)\le
\widetilde O\!\left(\frac{|\mathcal A|\bar b d^2\sqrt K}{\theta(1-\gamma)^2}\right),
\quad
\mathrm{Violation}^{\mathrm{return}}_{\mathrm{soft}}(K)\le
\widetilde O\!\left(\frac{|\mathcal A|\bar b d^2\sqrt K}{(1-\gamma)^2}\right).
\]
This is the best return policy specialization, and reduces to uniform tertiary probing.
\item
Second, if \(p=1/4\), then the return policies satisfy
\[
\mathrm{Regret}^{\mathrm{return}}(K)\le
\widetilde O\!\left(\frac{|\mathcal A|\bar b d^2K^{3/4}}{\theta(1-\gamma)^2}\right),
\quad
\mathrm{Violation}^{\mathrm{return}}_{\mathrm{soft}}(K)\le
\widetilde O\!\left(\frac{|\mathcal A|\bar b d^2K^{3/4}}{(1-\gamma)^2}\right).
\]
The execution policy gap is also \(O(K^{3/4}/(1-\gamma))\), and the overall sample complexities of the execution policies become
\[
\mathrm{Regret}^{\mathrm{exec}}(K)\le
\widetilde O(K^{3/4}),
\mathrm{Violation}^{\mathrm{exec}}_{\mathrm{soft}}(K)
\le
\widetilde O(K^{3/4})
\]
up to logarithmic and problem-dependent factors.
\end{itemize}

\end{corollary}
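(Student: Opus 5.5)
The corollary is a direct specialization of Theorem~\ref{thm:rep-pd-decay}, so the plan is to substitute \(p=0\) and \(p=1/4\) into its bounds and simplify. Both values lie in the admissible range \(p\in[0,1/2)\), and the theorem holds with probability at least \(1-\delta\) under the stated parameter choices. No new probabilistic argument is needed. The only bookkeeping is to check that the parameters \(\alpha_k,\lambda_k,\bar b,\eta,\xi\) remain well defined at each \(p\). With \(k^p\equiv1\) for \(p=0\), they become \(\alpha_k=\widetilde O(\sqrt{(|\mathcal A|+d^2)\gamma\log(|\mathcal M|k/\delta)})\), and similarly for \(\lambda_k\). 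The constraint \(\bar b\ge\max_k\alpha_k/\sqrt{\lambda_k}\) is then satisfiable by a bounded \(\bar b\).

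For \(p=0\), I first note that \(\epsilon_k=k^{0}=1\). The execution kernel in Algorithm~\ref{alg:rep-pd-teg} is therefore \(q_k=(1-1)\pi_{k-1}+1\cdot U(\mathcal A)=U(\mathcal A)\), which is the claimed reduction to uniform tertiary probing. Substituting \(K^{1/2+p}=\sqrt K\) into the return-policy bounds of Theorem~\ref{thm:rep-pd-decay} gives the two displayed \(\widetilde O(|\mathcal A|\bar b d^2\sqrt K/(\theta(1-\gamma)^2))\) and \(\widetilde O(|\mathcal A|\bar b d^2\sqrt K/(1-\gamma)^2)\) rates. It is "best" among the family because the exponent \(1/2+p\) is minimized at \(p=0\) over \([0,1/2)\). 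I will remark that for this choice the execution gap \(\frac{2}{1-\gamma}\sum_k k^{0}=2K/(1-\gamma)\) is linear, which is why a second specialization is needed for execution.

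For \(p=1/4\), the return bounds follow identically with \(K^{1/2+1/4}=K^{3/4}\). For the execution gap I use the elementary integral comparison \(\sum_{k=1}^K k^{-p}\le 1+\int_1^K x^{-p}\,dx\le K^{1-p}/(1-p)\), valid for \(p\in[0,1)\). At \(p=1/4\) this gives \(\sum_k k^{-1/4}\le \tfrac43K^{3/4}\), so the additive term is at most \(\tfrac{8}{3(1-\gamma)}K^{3/4}=O(K^{3/4}/(1-\gamma))\). Adding this to the return bounds via the two execution inequalities of Theorem~\ref{thm:rep-pd-decay} gives \(\widetilde O(K^{3/4})\) for both execution regret and execution soft violation. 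The factors \(|\mathcal A|,\bar b,d,\theta,(1-\gamma)^{-1}\) are absorbed as problem-dependent.

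There is essentially no obstacle, since all the work is in Theorem~\ref{thm:rep-pd-decay}. The one point needing care is justifying \(p=1/4\) as the balancing choice. Equating the exponents \(1/2+p\) and \(1-p\) gives \(p=1/4\), and at that value the two contributions to the execution bound are of the same order.
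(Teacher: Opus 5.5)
Your proposal is correct and follows the paper's argument. The corollary is obtained by substituting \(p=0\) and \(p=1/4\) into Theorem~\ref{thm:rep-pd-decay}: in the appendix this corresponds to Theorem~\ref{thm:uniform_repaired}, where \(\Gamma_k=\Lambda_k=|\mathcal A|\) gives \(\Xi_K\lesssim|\mathcal A|\sqrt K\), and to Proposition~\ref{prop:exec_stationary_mix} together with \(\sum_{k\le K}k^{-p}=O(K^{1-p})\) for the execution transfer, which is the same bookkeeping your integral bound and exponent-balancing remark carry out.
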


Corollary~\ref{cor:rep-pd-uniform}
highlights the schedule tradeoff: \(p=0\) is statistically sharp for return
policies, while \(p=1/4\) balances representation learning with the cost of
using the execution policy. Thus the return-policy theorem isolates
representation-learning cost, and the execution policy corollary quantifies the
extra price of deploying the probes as real behavior.

%% file: content/hard.tex
\section{Hard Violation via Per-Episode Certification}
\label{sec:rep-pd-hard}

We now consider the hard cumulative-violation metric.
Unlike soft violation, hard violation does not allow constraint surplus in one episode to offset constraint deficit in another.
Therefore the optimistic utility estimate used by the soft primal-dual update is insufficient for certifying feasibility at the episode level: an optimistic estimate may exceed the threshold \(b\) even when the true utility remains below it.
REP-PD-H-TEG (the \textbf{H}ard-violation version of REP-PD-TEG) therefore keeps the
same representation-learning procedure but changes the policy decision:
policy search remains optimistic, while a tight feasibility test on the lower-confidence utility of the policy is explicitly conducted.

\subsection{Lower-Certiﬁcate Planning in REP-PD-H-TEG}
\label{subsec:rep-pd-hard-decay}
As shown in Algorithm~\ref{alg:rep-pd-hard-teg} in
Appendix~\ref{app:hard_general_probing},
REP-PD-H-TEG uses the same tertiary data collection, MLE representation update,
merged covariance, and bonus construction as REP-PD-TEG. The hard-specific
change lies in how the policy is selected after the bonus is built. In REP-PD-TEG, the optimistic utility value can be used in the dual update because
violation is aggregated across episodes. In the hard violation setting, however, the
policy executed in the current episode must be certified feasible on its own.
Thus REP-PD-H-TEG introduces a lower-confidence utility certificate.  Write
\(\Gamma_k:=|\mathcal A|/\epsilon_k\) and
\(\zeta_k:=\ln(|\mathcal M|k/\delta)/k\), and choose a sufficiently large
universal calibration constant \(C_{\rm cert}\).  The one-sided statistical
correction and the certificate are
\[
\ell_k:=
C_{\rm cert}\sqrt{\frac{\Gamma_k\zeta_k}{1-\gamma}},
\qquad
\underline{\mathcal U}_k(\pi):=
\left[V^{\pi}_{\widehat P_k,g-\widehat b_k}(s_1)
-\ell_k\right]_{+},
\]
where \([x]_+=\max\{x,0\}\).  On the representation-learning good event, the
one-step-back argument gives, simultaneously for every queried policy,
\[
\underline{\mathcal U}_k(\pi)
\le V^{\pi}_{P^\star,g}(s_1).
\]
Hence \(\underline{\mathcal U}_k(\pi)\ge b\) implies true feasibility.  The
corresponding two-sided width is needed only in the analysis of the selected
policy and is
\[
w_k(\pi):=
2\ell_k+2V^\pi_{\widehat P_k,\widehat b_k}(s_1),
\qquad
V^\pi_{P^\star,g}(s_1)
\le \underline{\mathcal U}_k(\pi)+w_k(\pi).
\]
Separating \(\ell_k\) from \(w_k(\pi)\) is essential: the former makes the
stopping rule conservative, while the latter accounts for the pessimistic
bonus and is summed only along the returned policy sequence.

Instead of choosing a single policy at the current dual variable, REP-PD-H-TEG performs a capped dual search within the
episode in the planning step. For each queried dual level \(Y\), it solves the optimistic Lagrangian
planning problem
\[
\widehat\pi_{k,Y}\in\arg\max_\pi
\left(
Q^\pi_{\widehat P_k,r+\widehat b_k}
+
YQ^\pi_{\widehat P_k,g+\widehat b_k}
\right),
\]
and then forms a statewise softmax candidate from the corresponding optimistic
reward-plus-utility scores. The bonus makes planning
optimistic and uncertainty-aware, exactly as in REP-PD-TEG; the stopping rule is
different. REP-PD-H-TEG deploys the first candidate whose lower-confidence
certificate satisfies \(\underline{\mathcal U}_k(\pi_{k,Y})\ge b\). If no candidate is
certified before the cap \(Y=K^{1/4}\), the algorithm deploys the final
capped-dual candidate. Thus optimistic bonus-augmented values are used to
search for good policies, while the pessimistic bonus-based certificate decides
whether a candidate is safe enough to execute.

\noindent\begin{minipage}{\linewidth}
\small
\textbf{Capped certificate search in one outer round.}
\begin{algorithmic}[1]
\STATE For each grid level $Y\le Y_{\max}$, compute the optimistic candidate $\pi_{k,Y}$.
\STATE Evaluate $\underline{\mathcal U}_k(\pi_{k,Y})$; if it is at least $b$, deploy $\pi_{k,Y}$ and stop.
\STATE Otherwise increase $Y$ and repeat the optimistic-plan/certificate test.
\STATE If no earlier candidate passes, deploy the candidate at $Y_{\max}$.
\end{algorithmic}
\end{minipage}

\subsection{Hard-Violation Guarantees}
\label{subsec:rep-pd-hard-theory}

We state the sample complexity result of REP-PD-H-TEG in this section. The proof decomposes
into a front-end regret bound, a lower-certificate reduction, and a
Slater-induced search-success argument; the full proof is given in Appendices~\ref{app:hard_general_probing} and~\ref{app:hard_tertiary_roadmap}.

\begin{center}
\small
\renewcommand{\arraystretch}{1.08}
\begin{tabular}{p{0.25\linewidth}p{0.47\linewidth}p{0.16\linewidth}}
\toprule
\textbf{Assumption/interface} & \textbf{Role in the proof} & \textbf{Results}\\
\midrule
Low rank + finite realizability & MLE prediction event and one-step-back control & Soft + hard\\
Three-stream coverage & Controls both propagated terms from the merged covariance & Soft + hard\\
Slater margin $\theta>0$ & Bounds the dual variable and cap-induced deficit & Soft + hard\\
Positive score gap & Controls the hard-search softmax approximation & Hard only\\
\bottomrule
\end{tabular}
\end{center}
The positive score gap is not used to learn the representation or to establish the lower-certificate inequality; it is used by the current hard proof to control the softmax approximation along the capped dual search.

\begin{theorem}[Return policy and execution policy hard guarantees for REP-PD-H-TEG]
\label{thm:hard-poly-main}
\label{cor:hard-decay-poly}
\label{cor:hard-poly}
\label{cor:hard_body_poly}
Suppose Assumptions~\ref{assum:slater}, \ref{assum:linearmdp}, and
\ref{ass:action_gap} hold, the finite model class is realizable
\((|\mathcal M|<\infty,\ P^\star\in\mathcal M)\), and fix
\(\delta\in(0,1)\).
Run Algorithm~\ref{alg:rep-pd-hard-teg} with \(\epsilon_k=k^{-p}\) for some
\(p\in[0,1/2)\), and set
\(\alpha_k=\widetilde\Theta(\sqrt{(|\mathcal A|k^p+d^2)\gamma\log(|\mathcal M|k/\delta)})\)
with a sufficiently large leading constant,
\(\lambda_k=\widetilde\Theta(((|\mathcal A|k^p+d^2)/d)\log(|\mathcal M|k/\delta))\),
\(\bar b\ge\max_{k\le K}\alpha_k/\sqrt{\lambda_k}\),
\(\ell_k=C_{\rm cert}\sqrt{\Gamma_k\zeta_k/(1-\gamma)}\) for a sufficiently
large universal \(C_{\rm cert}\), \(Y_{\max}=K^{1/4}\),
\(\tau_h=\Theta(1/((1-\gamma)\sqrt K\log|\mathcal A|))\),
\(\eta_h=\Theta(\tau_h(1-\gamma)/K^{3/4})\).
Then, with probability at least \(1-\delta\), the return policies satisfy
\[
\begin{aligned}
\mathrm{Regret}^{\mathrm{return}}(K)
&\le
\widetilde O\!\left(
\left[
\frac{|\mathcal A|\bar b d^2}{(1-\gamma)^2}
+\sqrt{\frac{|\mathcal A|}{1-\gamma}}
\right]K^{3/4+p}\right),
\\
\mathrm{Violation}^{\mathrm{return}}_{\mathrm{hard}}(K)
&\le
\widetilde O\!\left(
\frac{1}{\theta}
\left[
\frac{|\mathcal A|\bar b d^2}{(1-\gamma)^2}
+\sqrt{\frac{|\mathcal A|}{1-\gamma}}
\right]K^{1/2+p}
\right)
+O(\theta^{-4}).
\end{aligned}
\]
The hard-violation bound is sublinear for \(p<1/2\), while the regret bound is
sublinear only when \(p<1/4\).
Moreover, for the execution policy
\(\widetilde\pi_k=(1-\epsilon_k)\pi_k+\epsilon_kU(\mathcal A)\), we have
\(\mathrm{Regret}^{\mathrm{exec}}(K)\le \mathrm{Regret}^{\mathrm{return}}(K)+
\frac{2}{1-\gamma}\sum_{k=1}^K k^{-p}\) and
\(\mathrm{Violation}^{\mathrm{exec}}_{\mathrm{hard}}(K)\le
\mathrm{Violation}^{\mathrm{return}}_{\mathrm{hard}}(K)+
\frac{2}{1-\gamma}\sum_{k=1}^K k^{-p}\).
\end{theorem}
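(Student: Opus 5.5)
The proof splits into three parts: a front end shared with the soft analysis, a lower-certificate reduction, and a search-success argument driven by Slater's condition. The execution-policy claims are handled at the end by performance difference.

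\textbf{Front end.} First we reuse the good event from the soft analysis of Theorem~\ref{thm:rep-pd-decay}. With probability $1-\delta$, the MLE prediction event holds simultaneously under all three probing designs $q_k,q'_k,q''_k$. Each of these has a uniform floor $\epsilon_k/|\mathcal A|$, so importance weighting costs a factor $\Gamma_k=|\mathcal A|/\epsilon_k=|\mathcal A|k^p$; this is the source of the $k^p$ in $\alpha_k,\lambda_k$ and of the $K^{p}$ inflation. On this event we need two facts.
\begin{itemize}
\item The one-step-back argument with the merged covariance, with the tertiary stream controlling the second propagated term, gives optimism: for every $\pi$ and every $Y\ge0$, the bonus-augmented Lagrangian value under $\widehat P_k$ upper bounds the true one up to a statistical error of order $\sqrt{\Gamma_k\zeta_k/(1-\gamma)}$.
\item The same argument run with $g-\widehat b_k$ yields the one-sided inequality $\underline{\mathcal U}_k(\pi)\le V^\pi_{P^\star,g}(s_1)$ once $C_{\rm cert}$ is large enough, together with the two-sided width $V^\pi_{P^\star,g}(s_1)\le\underline{\mathcal U}_k(\pi)+w_k(\pi)$.
\end{itemize}
Summing $w_k(\pi_k)$ along the returned sequence uses the elliptical-potential lemma plus the pathwise-to-averaged bridge. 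This gives $\sum_k w_k(\pi_k)=\widetilde O(|\mathcal A|\bar b d^2K^{1/2+p}/(1-\gamma)^2+\sqrt{|\mathcal A|/(1-\gamma)}K^{1/2+p})$, where the second term comes from $\sum_k\ell_k$.

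\textbf{Hard violation.} In every round, either a candidate is certified or the cap $Y_{\max}$ is reached. If certified, $V^{\pi_k}_g\ge b$ on the good event, so the episode contributes zero. For the uncertified rounds, we use Slater's condition. The Slater policy $\bar\pi$ has true margin $\theta$. Optimism at dual level $Y$ shows that the optimistic maximizer $\widehat\pi_{k,Y}$ has optimistic utility at least $b+\theta-O(1/(Y\theta))-(\text{width})$. Converting to the certificate costs $w_k$ plus the softmax error, and the positive score gap of Assumption~\ref{ass:action_gap} bounds the softmax error by $\tau_h$-controlled terms. Consequently, once $Y\gtrsim 1/\theta^2$ and $w_k(\widehat\pi_{k,Y})+\ell_k\lesssim\theta$, some candidate below the cap is certified.

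The rounds where the width exceeds order $\theta$ are counted by Markov's inequality applied to $\sum_k w_k$. Each such round contributes at most $1$, up to the $1/\theta$ factor from the threshold, giving the $\widetilde O(\theta^{-1}[\cdots]K^{1/2+p})$ term. The burn-in period before $K^{1/4}$ exceeds the needed dual level $\sim\theta^{-2}$, together with the early rounds where $\ell_k>\theta$, gives the additive $O(\theta^{-4})$.

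\textbf{Regret.} At the selected level $Y_k\le Y_{\max}$, optimism gives $V^{\pi^*}_r\le V^{\pi_k}_{\widehat P_k,r+\widehat b_k}+Y_k(V^{\pi_k}_{\widehat P_k,g+\widehat b_k}-V^{\pi^*}_{\widehat P_k,g+\widehat b_k})+\text{softmax error}$. The utility difference is bounded by $w_k$ plus the certificate slack. The reward gap to the truth is again bounded by $w_k$. Multiplying by $Y_k\le K^{1/4}$ and summing gives $K^{1/4}\cdot\widetilde O(K^{1/2+p})=\widetilde O(K^{3/4+p})$. The softmax and grid terms are absorbed by the choices $\tau_h\propto 1/\sqrt K$ and $\eta_h\propto\tau_h/K^{3/4}$.

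\textbf{Execution policies.} These follow from the performance-difference lemma. Since $\|\widetilde\pi_k-\pi_k\|_1\le2\epsilon_k$ statewise and values lie in $[0,1]$, we have $|V^{\widetilde\pi_k}_j-V^{\pi_k}_j|\le 2\epsilon_k/(1-\gamma)$. Summing over $k$ gives the stated additive term; for hard violation we also use that $[\cdot]_+$ is $1$-Lipschitz.

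The main obstacle is the search-success step. It must convert the optimistic Lagrangian maximizer, whose softmax surrogate is what is actually deployed, into a policy whose \emph{pessimistic} certificate clears $b$. This requires combining Slater, the two-sided width and the score gap without paying $Y_{\max}$ in the violation bound. I would first try a contradiction argument: if no candidate up to $Y$ is certified, then the Lagrangian comparison against $\bar\pi$ forces $Y\theta\lesssim 1+Y(w_k+\ell_k)$.
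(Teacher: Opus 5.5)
\textbf{Gap: the hard-violation count sums widths of unreturned candidates.} Your front end, the feasibility of break episodes, the regret conversion, and the performance-difference step for execution policies all match the paper. In the regret conversion, the utility gap at the selected level is bounded by $w_k(\pi_k)$ plus grid slack and then multiplied by $Y_k\le K^{1/4}$.

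The problem is the hard-violation count. Your success condition concerns the candidate at a fixed level $Y_\theta\sim\theta^{-2}$: it is certified whenever $w_k(\pi_{k,Y_\theta})+\ell_k\lesssim\theta$. You then bound the number of failing rounds by Markov on $\sum_k w_k$. But the only width sum you have is $\sum_k w_k(\pi_k)\le\mathcal B_K+2\sum_k\ell_k$, which runs along the \emph{returned} sequence. Lemma~\ref{lem:gen_potential} works only because the next primary design has state marginal $d^{\pi_k}_{P^\star}$ and kernel $(1-\epsilon_{k+1})\pi_k+\epsilon_{k+1}U(\mathcal A)$. That coupling is what lets the elliptical-potential argument bound $\sum_k A_k$. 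On exactly the rounds you are counting, the search went past $Y_\theta$, so $\pi_{k,Y_\theta}$ was never returned. Nothing then ties $V^{\pi_{k,Y_\theta}}_{\widehat P_k,\widehat b_k}(s_1)$ to the collected data, and $\sum_k w_k(\pi_{k,Y_\theta})$ can be of order $K$. The paper explicitly declines to assume a summable radius over inner-loop queries and never argues through an unexecuted fixed-$Y_\theta$ candidate.

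\textbf{Repair: argue at the cap.} On a cap episode the cap candidate \emph{is} $\pi_k$, so run your contradiction argument there.
\begin{itemize}
\item The failed certificate gives $V^{\pi_k}_{\widehat P_k,g+\widehat b_k}(s_1)<b+w_k(\pi_k)$.
\item Add the softmax Lagrangian comparison against $\bar\pi$ at $Y_{\max}$, which costs $\sigma_k^{\rm sm}$.
\item Add optimism at $\bar\pi$, which costs $\ell_k$ per signal.
\item Bound the optimistic reward of $\pi_k$ by its true value plus the returned-sequence overestimation $o_{k,r}$. Using the crude range $\bar b/(1-\gamma)$ instead would inflate the threshold.
\end{itemize}
Together these give $\theta<w_k(\pi_k)+(1+Y_{\max}^{-1})\ell_k+(1+o_{k,r}+\sigma_k^{\rm sm})/Y_{\max}$, and every term is summable along $\{\pi_k\}$. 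If $Y_{\max}\theta\ge2$, your counting yields $|\mathcal K_{\rm cap}|\lesssim\theta^{-1}\bigl(\mathcal B_K+\sum_k\ell_k+\sum_k\sigma_k^{\rm sm}\bigr)$. That is exactly the stated $\theta^{-1}K^{1/2+p}$ bound.

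The paper's Lemma~\ref{lem:hard_cap_slater} is sharper. It uses $o_{k,g}$ to bound the deficit of each cap episode directly rather than counting, so the $\theta^{-1}$ is only cosmetic there.

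\textbf{Secondary error: the $\theta$-threshold.} The Slater comparison costs $O(1/Y)$, not $O(1/(Y\theta))$. So the requirement is $Y_{\max}\theta\gtrsim1$, i.e.\ $K\gtrsim\theta^{-4}$, which is the source of the additive $O(\theta^{-4})$. Your threshold $Y\gtrsim\theta^{-2}$ would force $K^{1/4}\ge\theta^{-2}$, giving an additive $\theta^{-8}$. Also, $Y_{\max}=K^{1/4}$ does not change with $k$, so there is no burn-in over rounds.
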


The \(K^{1/2+p}\) hard-violation term is the representation-learning price, and
the extra \(K^{1/4}\) in regret comes from capped in-episode dual search; the
execution policy bounds follow similar performance-difference argument from
Section~\ref{sec:rep-pd}.

\begin{corollary}[Best return policy and execution policy hard rates]
\label{cor:hard-pzero}
\label{cor:hard-uniform}
\label{cor:hard-balanced-exec}
Under the same setting as Theorem~\ref{thm:hard-poly-main}, the following two
specializations hold.
\begin{itemize}
    \item First, if \(p=0\), equivalently \(\epsilon_k\equiv1\), then REP-PD-H-TEG uses
\(q_k=q_k'=q_k''=U(\mathcal A)\) and the return policies satisfy
\[
\begin{aligned}
\mathrm{Regret}^{\mathrm{return}}(K)
&\le
\widetilde O\!\left(
\left[
\frac{|\mathcal A|\bar b d^2}{(1-\gamma)^2}
+\sqrt{\frac{|\mathcal A|}{1-\gamma}}
\right]K^{3/4}\right),
\\
\mathrm{Violation}^{\mathrm{return}}_{\mathrm{hard}}(K)
&\le
\widetilde O\!\left(
\frac{1}{\theta}
\left[
\frac{|\mathcal A|\bar b d^2}{(1-\gamma)^2}
+\sqrt{\frac{|\mathcal A|}{1-\gamma}}
\right]\sqrt K
\right)
+O(\theta^{-4}).
\end{aligned}
\]
\item Second, if \(p=1/8\), then the return policies satisfy
\[
\begin{aligned}
\mathrm{Regret}^{\mathrm{return}}(K)
&\le
\widetilde O\!\left(
\left[
\frac{|\mathcal A|\bar b d^2}{(1-\gamma)^2}
+\sqrt{\frac{|\mathcal A|}{1-\gamma}}
\right]K^{7/8}\right),
\\
\mathrm{Violation}^{\mathrm{return}}_{\mathrm{hard}}(K)
&\le
\widetilde O\!\left(
\frac{1}{\theta}
\left[
\frac{|\mathcal A|\bar b d^2}{(1-\gamma)^2}
+\sqrt{\frac{|\mathcal A|}{1-\gamma}}
\right]K^{5/8}\right)+O(\theta^{-4}).
\end{aligned}
\]
For \(p=1/8\), the execution-policy gap is \(O(K^{7/8}/(1-\gamma))\); hence both execution-policy regret and hard violation are \(\widetilde O(K^{7/8})\) up to problem-dependent factors, with exploratory actions determining the balanced rate.
\end{itemize}

\end{corollary}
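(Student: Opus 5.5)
The corollary follows by substituting \(p=0\) and \(p=1/8\) into Theorem~\ref{thm:hard-poly-main}. No new probabilistic argument is needed beyond that theorem, which we take as given, together with the hypothesis \(p\in[0,1/2)\). Both values lie in this range, so the theorem applies with the stated parameter choices \(\alpha_k,\lambda_k,\bar b,\ell_k,Y_{\max}=K^{1/4},\tau_h,\eta_h\). On the same probability-\(1-\delta\) event we then read off the exponents.

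\textbf{Case \(p=0\).} Here \(\epsilon_k=k^{0}=1\) for all \(k\). Hence the execution kernel \((1-\epsilon_k)\pi_{k-1}+\epsilon_kU(\mathcal A)\) reduces to \(U(\mathcal A)\), which gives \(q_k=q_k'=q_k''=U(\mathcal A)\). The regret exponent \(3/4+p\) becomes \(3/4\) and the hard-violation exponent \(1/2+p\) becomes \(1/2\). This gives the two displayed bounds with the bracketed problem-dependent factor and the additive \(O(\theta^{-4})\) unchanged. We note that \(\Gamma_k=|\mathcal A|\) and \(\alpha_k,\lambda_k\) lose their \(k^p\) dependence. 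This is consistent with the hidden constants, since those constants were uniform in \(p\) over the admissible range.

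\textbf{Case \(p=1/8\).} The exponents become \(3/4+1/8=7/8\) for regret and \(1/2+1/8=5/8\) for hard violation, which are the displayed return-policy bounds. For the execution policies we use the second half of Theorem~\ref{thm:hard-poly-main}. The gap term is \(\frac{2}{1-\gamma}\sum_{k=1}^K k^{-1/8}\). We bound the sum by comparison with the integral,
\[
\sum_{k=1}^K k^{-p}\le 1+\int_1^K x^{-p}\,dx\le \frac{K^{1-p}}{1-p},
\]
which gives \(\frac{2}{1-\gamma}\cdot\frac{8}{7}K^{7/8}=O(K^{7/8}/(1-\gamma))\). Adding this to the return bounds gives execution regret \(\widetilde O(K^{7/8})+O(K^{7/8})\). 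It also gives execution hard violation \(\widetilde O(K^{5/8})+O(\theta^{-4})+O(K^{7/8})\), and the \(O(K^{7/8})\) term dominates. Both are therefore \(\widetilde O(K^{7/8})\) up to problem-dependent factors. The exploratory-action term sets the rate for the violation, which justifies the remark that exploration determines the balanced rate.

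\textbf{Justification of \(p=1/8\).} The return regret term scales as \(K^{3/4+p}\) and the exploration cost as \(K^{1-p}\). These match when \(3/4+p=1-p\), that is, at \(p=1/8\), so this choice minimizes the maximum exponent. The only delicate point in the whole argument is checking that the constants hidden in \(\widetilde O\) in Theorem~\ref{thm:hard-poly-main} do not depend on \(p\) in a way that degenerates at these specific values. Since \(1/(1-p)\) and similar factors are bounded by \(2\) on \([0,1/2)\), this check holds and the corollary follows.
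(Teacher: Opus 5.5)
Your proposal is correct and follows the paper's route: the corollary is a direct specialization of Theorem~\ref{thm:hard-poly-main}. Appendix~\ref{app:hard_tertiary_roadmap} simply re-traces the $p=0$ case through the same component lemmas with $\Gamma_k=\Lambda_k=|\mathcal A|$, and for $p=1/8$ the execution gap $\frac{2}{1-\gamma}\sum_{k} k^{-1/8}\le \frac{16}{7(1-\gamma)}K^{7/8}$ matches the $K^{7/8}$ regret term and dominates the $K^{5/8}$ violation term, exactly as you argue (your uniformity-in-$p$ check is harmless but unnecessary, since $p$ is fixed in each case).
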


Thus \(p=0\) is sharp for return policies, whereas \(p=1/8\) balances the hard-regret front end \(K^{3/4+p}\) with the execution gap \(K^{1-p}\).

%% file: content/conclusion.tex
\section{Conclusion}\label{sec:conc}
We develop representation-learning algorithms for low-rank continuous-state CMDPs with unknown transition features. REP-PD-TEG combines auxiliary probing, covariance bonuses, and optimistic primal-dual planning for soft-violation control; REP-PD-H-TEG adds a lower-confidence utility certificate and stabilized dual search for hard-violation control. Both provide return- and execution-policy guarantees.

\paragraph{Limitations.}
The theorems assume finite realizability and exact MLE/planning; alternative learners require the same three-design prediction guarantee. The hard result additionally uses a positive optimistic-score gap. The occupancy oracle is implementable with resets and geometric roll-ins, but our bounds do not charge full roll-in trajectories. Finally, the experiments illustrate the mechanism rather than exhaustively evaluating hard certification.

%% file: appendix/soft_proof.tex
\section{Notation and Auxiliary Concentration}\label{app:soft_proof}

This appendix fixes the occupancy notation used throughout the proof and
recalls the empirical-to-population covariance comparison that underlies the
uncertainty bonus. Later appendices replace the uniform probing distribution
below by the tertiary decaying-\(\epsilon\) design, but the same comparison
principle is repeatedly used.

We use \(d_{P^\star}^{\pi}\) for the normalized discounted state occupancy and
\(d_{P^\star}^{\pi}(s,a)=d_{P^\star}^{\pi}(s)\pi(a\mid s)\) for the
state-action occupancy. For the uniform probing baseline, let
\(\rho_k(s)=k^{-1}\sum_{i=0}^{k-1}d_{P^\star}^{\pi_i}(s)\), and let
\(\rho'_k\) be the marginal distribution of \(s'\) generated by first drawing
\(s\sim\rho_k\), then \(a\sim U(\mathcal A)\), and finally
\(s'\sim P^\star(\cdot\mid s,a)\).
For a feature map \(\phi\), define
\[
\Sigma_{\rho_k\times U(\mathcal A),\phi}
:=
k\mathbb E_{s\sim\rho_k,a\sim U(\mathcal A)}
[\phi(s,a)\phi(s,a)^\top]+\lambda_k I .
\]

\begin{lemma}[Concentration of the bonus term]\label{lem:con}
Let
\[
\hat\Sigma_{k,\phi}:=
\sum_{i=0}^{k-1}
\phi(s^{(i)},a^{(i)})\phi(s^{(i)},a^{(i)})^\top+\lambda_k I ,
\qquad
\lambda_k=\Theta(d\log(k|\Phi|/\delta)).
\]
Then, with probability at least \(1-\delta\), simultaneously for all
\(k\in\mathbb K^+\), all \(\phi\in\Phi\), and all \((s,a)\),
\[
c_1
\|\phi(s,a)\|_{\Sigma^{-1}_{\rho_k\times U(\mathcal A),\phi}}
\le
\|\phi(s,a)\|_{\hat\Sigma^{-1}_{k,\phi}}
\le
c_2
\|\phi(s,a)\|_{\Sigma^{-1}_{\rho_k\times U(\mathcal A),\phi}},
\]
where \(c_1,c_2>0\) are universal constants.
\end{lemma}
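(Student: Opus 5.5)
The plan is to reduce the claim to a matrix-level spectral sandwich and then conclude pointwise. If we show that, with probability at least \(1-\delta\), for all \(k\) and all \(\phi\in\Phi\),
\[
\tfrac{1}{c_2^{2}}\,\Sigma_{\rho_k\times U(\mathcal A),\phi}\preceq \hat\Sigma_{k,\phi}\preceq \tfrac{1}{c_1^{2}}\,\Sigma_{\rho_k\times U(\mathcal A),\phi},
\]
then inverting reverses the Loewner order. Taking the quadratic form at \(\phi(s,a)\) gives both inequalities simultaneously for every \((s,a)\). This removes any need for a union bound over the continuous state space.

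For the sandwich itself, fix \(k\) and \(\phi\). Write \(X_i=\phi(s^{(i)},a^{(i)})\phi(s^{(i)},a^{(i)})^\top\), with \(s^{(i)}\sim d^{\pi_i}_{P^\star}\) and \(a^{(i)}\sim U(\mathcal A)\). The sample at round \(i\) is drawn conditionally on the past, since \(\pi_i\) depends on earlier data. So \(M_i:=X_i-\mathbb E[X_i\mid\mathcal F_{i}]\) is a matrix martingale difference sequence, and \(\sum_i\mathbb E[X_i\mid\mathcal F_i]=k\,\mathbb E_{\rho_k\times U}[\phi\phi^\top]\) by the definition of \(\rho_k\) as the average of the occupancies. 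Each \(X_i\) satisfies \(0\preceq X_i\preceq I\), since \(\|\phi\|\le1\) for the normalized class. I would apply a matrix Freedman inequality, or equivalently the standard result for sums of PSD matrices bounded by \(I\) of Tropp / Zhang et al. (as used in the FLAMBE/Rep-UCB analyses), to get
\[
\Big\|\big(A_k+\lambda I\big)^{-1/2}\Big(\sum_{i<k}M_i\Big)\big(A_k+\lambda I\big)^{-1/2}\Big\|\le \tfrac12 ,
\qquad A_k:=k\,\mathbb E_{\rho_k\times U}[\phi\phi^\top],
\]
as soon as \(\lambda\gtrsim \log(d k/\delta')\). This gives \(\tfrac12(A_k+\lambda I)\preceq \hat\Sigma_{k,\phi}\preceq\tfrac32(A_k+\lambda I)\), so \(c_1=\sqrt{2/3}\) and \(c_2=\sqrt2\).

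To make the statement uniform, set \(\delta'=\delta/(|\Phi|k^2)\) and union bound over \(\phi\in\Phi\) and \(k\ge1\), using \(\sum_k k^{-2}<2\). The resulting requirement \(\lambda_k\gtrsim\log(dk|\Phi|/\delta)\) is implied by \(\lambda_k=\Theta(d\log(k|\Phi|/\delta))\).

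I expect the main obstacle to be the martingale concentration step. The dimension-dependent prefactor in matrix Freedman, and the relative (self-normalized) form of the deviation, must be handled carefully, since they are controlled by \(\lambda\) rather than absolutely. The cleanest route is to normalize by \((A_k+\lambda I)^{-1/2}\). Then each normalized summand has norm at most \(1/\lambda\) and the predictable variance is at most \(1/\lambda\). A single Freedman application with deviation level \(1/2\) then needs \(\lambda\gtrsim\log(d/\delta')\).

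One subtlety is that the normalizer \(A_k\) is itself random, because it depends on \(\pi_0,\dots,\pi_{k-1}\). If that blocks the direct argument, I would fall back on a covering or peeling over the scale of \(A_k\), or on the self-normalized PSD inequality that allows an adapted normalizer. Either route costs only logarithmic factors, which are absorbed into the \(\Theta\) in \(\lambda_k\).
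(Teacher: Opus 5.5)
Your skeleton (Loewner sandwich, inversion, quadratic form at $\phi(s,a)$, union bound over the finite $\Phi$ and over $k$) is the right one. It is also what the paper relies on: its proof only cites Lemma~11 of \citet{uehara2021representation}, and its proof of Lemma~\ref{lem:gen_con} asserts exactly your normalized bound, with the same constants $\sqrt{2/3}$ and $\sqrt2$, without further argument.

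The gap is the concentration step, which you flag but do not close. Because $\pi_i$ is computed from the data of earlier rounds, $A_k=\sum_{i<k}\mathbb E[X_i\mid\mathcal F_{i-1}]$ is a function of the history up to round $k-1$. The normalized increments $(A_k+\lambda I)^{-1/2}M_i(A_k+\lambda I)^{-1/2}$ are therefore not martingale differences, since the normalizer is not $\mathcal F_{i-1}$-measurable. Your ``predictable variance $\preceq\lambda^{-1}I$'' computation is likewise not about a martingale, so matrix Freedman cannot be invoked as written.

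Your fallbacks do not obviously repair this. Peeling over the scale of $A_k$ leaves its random eigenbasis untouched. A net over candidate normalizers (operator-norm accuracy $\lambda/2$ in the ball of radius $k$) has log-cardinality of order $d^2\log(k/\lambda)$. That would force $\lambda_k\gtrsim d^2\log k$ instead of the stated $\Theta(d\log(k|\Phi|/\delta))$, so this cost is not absorbed into the $\Theta$.

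The clean repair is to not normalize at all. For adapted $0\preceq X_i\preceq I$, combine four facts, as in Tropp's proof of matrix Freedman:
the chord bounds $e^{\theta x}\le1+(e^\theta-1)x$ and $e^{-\theta x}\le1-(1-e^{-\theta})x$ on $[0,1]$;
$\log(I+M)\preceq M$;
operator monotonicity of $\log$;
and Lieb's concavity theorem.
They show that, with $\hat S_k:=\sum_{i<k}X_i$,
\[
\mathrm{tr}\exp\bigl(\theta\hat S_k-(e^\theta-1)A_k\bigr)
\qquad\text{and}\qquad
\mathrm{tr}\exp\bigl((1-e^{-\theta})A_k-\theta\hat S_k\bigr)
\]
are nonnegative supermartingales started at $d$. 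Ville's inequality then gives, simultaneously for all $k$,
\[
\hat S_k\preceq\tfrac{e^\theta-1}{\theta}A_k+\theta^{-1}\log(d/\delta')I,
\qquad
\hat S_k\succeq\tfrac{1-e^{-\theta}}{\theta}A_k-\theta^{-1}\log(d/\delta')I.
\]
Take $\theta=3/4$ in the first and $\theta=1$ in the second. Then any $\lambda\ge3\log(d/\delta')$ yields $\frac12(A_k+\lambda I)\preceq\hat S_k+\lambda I\preceq\frac32(A_k+\lambda I)$, and only the union over $\Phi$ remains.

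Alternatively, run the scalar version of this supermartingale for $\langle v,\phi_i\rangle^2$, with $v$ ranging over a $\lambda/(8k)$-net of the unit sphere. In a fixed direction, comparing $\sum_i\langle v,\phi_i\rangle^2$ with $v^\top A_kv$ needs no normalizer. The net costs about $d\log k$, which is precisely what the factor $d$ in $\lambda_k$ pays for.
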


\begin{proof}
This is the standard empirical-to-population covariance comparison for the
uniform probing design; see Lemma 11 of \citet{uehara2021representation}. In
the repaired analysis we apply the same concentration argument to block
increments generated by the predictable auxiliary streams.
\end{proof}

%% file: appendix/lemmas.tex
\section{Auxiliary Lemmas}\label{app:aux_lemmas}
This appendix collects standard auxiliary tools used by both the soft and hard
analyses. We first recall the MLE guarantee for low-rank transition models,
then state the elliptical-potential and simulation lemmas used to aggregate
model error and convert value differences into regret and violation bounds.
The MLE statement is a direct consequence of Theorem 21 of
\citet{agarwal2020flambe}, specialized to the notation of our algorithms.

\begin{lemma}[MLE guarantee]\label{lem:mle}
For a fixed episode $k$, with probability $1-\delta$, 
\begin{align*}
   \mathbb{E}_{s \sim \{0.5\rho_k+0.5\rho'_k\},a\sim U(\mathcal{A}) }[ \|\hat P_k(\cdot \!\mid\! s,\!a)\!-\!P^{\star}(\cdot\! \mid \!s,\!a)\|^2_1 ] \lesssim  \zeta,
\end{align*}
where $\zeta := \frac{\ln(|\mathcal{M}|/\delta)}{k}$.
As a straightforward corollary, with probability $1-\delta$, $\forall k\in \mathbb{K}^{+},$
\begin{align*}
   \mathbb{E}_{s\! \sim\! \{0.5\rho_k+0.5\rho'_k\}\!,a\!\sim\! U(\mathcal{A}) }[ \|\!\hat P_k(\cdot \!\mid\! s,\!a)\!-\!P^{\star}(\cdot \!\mid\! s,\!a)\|^2_1 ]\! \lesssim \! 0.5\zeta_k ,
\end{align*}
where $\zeta_k := \frac{\ln(|\mathcal{M}|k/\delta)}{k}$.
\end{lemma}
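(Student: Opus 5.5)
The plan is to obtain the fixed-\(k\) bound from the generic MLE generalization bound for conditional density estimation, Theorem 21 of \citet{agarwal2020flambe}, and then to pass to the uniform-in-\(k\) statement by a union bound with a summable confidence schedule.

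\textbf{Step 1 (martingale structure).} Fix \(k\). The \(k\) samples \((s^{(i)},a^{(i)},s'^{(i)})\) collected before round \(k\) are not i.i.d., because \(\pi_i\) depends on the past. They are, however, adapted to a filtration \(\mathcal F_i\): conditionally on \(\mathcal F_{i-1}\), the pair \((s^{(i)},a^{(i)})\) is drawn from a predictable distribution \(D_i\), and \(s'^{(i)}\sim P^\star(\cdot\mid s^{(i)},a^{(i)})\) exactly. Theorem 21 of \citet{agarwal2020flambe} covers precisely this setting. For a finite realizable class with \(P^\star\in\mathcal M\), it states that with probability \(1-\delta\) the MLE \(\hat P_k\) satisfies
\[
\sum_{i<k}\mathbb E_{(s,a)\sim D_i}\|\hat P_k(\cdot\mid s,a)-P^\star(\cdot\mid s,a)\|_1^2\lesssim \ln(|\mathcal M|/\delta).
\]
This is proved by a Hellinger/log-likelihood-ratio martingale argument combined with a union bound over \(\mathcal M\), together with the inequality \(\|\cdot\|_1^2\le 4H^2\).

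\textbf{Step 2 (identify the averaged design).} In the uniform baseline, the data consist of the primary tuples, with \(s\sim d^{\pi_i}_{P^\star}\) and \(a\sim U(\mathcal A)\), and the secondary tuples, with \(s'\) the one-step pushforward followed by a uniform action. The per-round design distribution is therefore \(0.5\,d^{\pi_i}\times U+0.5\,(d^{\pi_i})'\times U\) once the two streams are counted with equal weight. Averaging over \(i<k\) and using linearity of the pushforward, \(k^{-1}\sum_i (d^{\pi_i})'=\rho'_k\), turns the sum in Step 1 into \(k\,\mathbb E_{s\sim 0.5\rho_k+0.5\rho'_k,\,a\sim U}\|\hat P_k-P^\star\|_1^2\). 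Dividing by \(k\) gives the first display with \(\zeta=\ln(|\mathcal M|/\delta)/k\). The only care needed here is in the constants: the MLE objective normalizes by the total sample count across streams, and this changes the bound only by an absolute factor.

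\textbf{Step 3 (uniform in \(k\)).} Apply the fixed-\(k\) statement with confidence \(\delta_k=\delta/k^2\) (or \(\delta/(2k^2)\)), so that \(\sum_k\delta_k\lesssim\delta\). This gives \(\ln(|\mathcal M|k^2/\delta)\le 2\ln(|\mathcal M|k/\delta)\), and the resulting bound is \(\lesssim\zeta_k\). The factor \(0.5\) and the factor \(2\) are absorbed by \(\lesssim\). Rescaling \(\delta\) by a constant yields probability \(1-\delta\) simultaneously for all \(k\in\mathbb K^+\).

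\textbf{Main obstacle.} The expected sticking point is Step 1's handling of adaptivity. We must verify that the conditional design \(D_i\) is \(\mathcal F_{i-1}\)-measurable even though \(\pi_i\) is computed from data through round \(i-1\), so that the tangent-sequence/martingale argument applies. We also need to check that the secondary state \(s'\), whose value depends on the primary sample in the same round, still has the stated conditional law. I would handle this by ordering the two draws within a round as separate filtration steps, so that each transition is conditionally exact given everything before it. The theorem then applies to the \(2k\) transitions directly.
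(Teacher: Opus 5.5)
Your Steps 1 and 3 follow the paper's route: Theorem 21 of \citet{agarwal2020flambe}, then a union bound at confidence \(\delta/k^2\), as in Lemma~\ref{lem:gen_mle}. The gap is in Step 2. It does not follow from Step 1, and the fix in your last paragraph is exactly what breaks it.

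Theorem 21 controls the error under the \emph{predictable} design, i.e.\ the law of each covariate given everything observed before it, including earlier responses. You must split a round into two filtration steps, since the secondary covariate \(s'_i\) is the response of the primary transition. Once you do, the design of the secondary tuple is \(\delta_{s'_i}\times U(\mathcal A)\), not \((d^{\pi_i})'\times U(\mathcal A)\). Your claim that \(s'\) ``still has the stated conditional law'' is false at that filtration level.

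What Theorem 21 actually delivers is a bound on the primary term plus \(\frac{1}{k}\sum_{i}\mathbb E_{a\sim U}\|\hat P_k(\cdot\mid s'_i,a)-P^\star(\cdot\mid s'_i,a)\|_1^2\). This is an empirical average over the realized next states, so the pushforward identity \(k^{-1}\sum_i (d^{\pi_i})'=\rho'_k\) never enters. You also cannot reach \(\rho'_k\) by taking expectations, because \(\hat P_k\) is a function of the same \(s'_i\). The paper draws exactly this distinction in Lemma~\ref{lem:gen_mle} and Remark~\ref{rem:pathwise_bridge}.

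The missing ingredient is a pathwise-to-averaged transfer that holds uniformly over the model class, which is what Lemma~\ref{lem:pathwise_bridge_mle} supplies.
\begin{itemize}
\item For each fixed \(P\in\mathcal M\), \(h_P(s,a)=\|P(\cdot\mid s,a)-P^\star(\cdot\mid s,a)\|_1^2\in[0,4]\).
\item The increments \(\mathbb E_{a\sim U}h_P(s'_i,a)-\mathbb E_{(d^{\pi_i})'\times U}h_P\) are martingale differences with conditional variance at most \(4\,\mathbb E_{(d^{\pi_i})'\times U}h_P\).
\item Freedman's inequality plus a union bound over \(\mathcal M\) and \(k\) then gives \(\mathbb E_{\rho'_k\times U}h_P\le 2\cdot\frac1k\sum_i\mathbb E_{a\sim U}h_P(s'_i,a)+C\ln(|\mathcal M|k/\delta)/k\) for all \(P\) simultaneously.
\item You may then substitute \(P=\hat P_k\).
\end{itemize}
The variance-sensitive form matters. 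An Azuma-type transfer would add \(\sqrt{\ln(|\mathcal M|k/\delta)/k}\) and destroy the \(\zeta_k\) rate.

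Alternatively, you could rerun the likelihood-ratio supermartingale treating each round's two transitions as a single observation. Let \(u=\sqrt{P(s'\mid s,a)/P^\star(s'\mid s,a)}\), and let \(v(s')\) be the \(a'\)-averaged Bhattacharyya coefficient of the second step. Let \(a,b\) be the Hellinger errors \(\mathbb E[1-\mathrm{BC}]\) under the primary design and under the averaged pushforward. Then
\[
1-\mathbb E[uv]=(1-\mathbb E u)+(1-\mathbb E v)-\mathbb E[(1-u)(1-v)].
\]
Cauchy--Schwarz, with \(\mathbb E(1-u)^2\le 2a\) and \(0\le 1-v\le 1\), bounds the last term by \(\sqrt{2ab}\le(a+b)/\sqrt2\). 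Hence \(1-\mathbb E[uv]\ge(1-1/\sqrt2)(a+b)\), which gives the averaged design directly. That is, however, a modified proof of Theorem 21, not a black-box application of it.
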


The next two lemmas are the standard determinant-trace inequalities used in the
elliptical-potential argument; see Lemma G.2 of
\citet{agarwal2020flambe}.

\begin{lemma}\label{lem:reduction}
Consider the following process. For $k=1,\cdots,K$, $M_k=M_{k-1}+G_k$ with $M_0=\lambda_0 I$ and $G_k$ being a positive semidefinite matrix with eigenvalues upper-bounded by $1$. We have that:
\begin{align*}
    2\ln \det (M_K)-2\ln \det(\lambda_0 I)\geq \sum_{k=1}^K\text{Tr}(G_k M^{-1}_{k-1}). 
\end{align*}
\end{lemma}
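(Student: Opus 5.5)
This is the determinant-trace step of the elliptical-potential argument. I would prove it by a telescoping sum of log-determinant increments, with each increment bounded below by the corresponding trace term.

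\textbf{Step 1 (factor each increment).} Since \(M_{k-1}\succeq\lambda_0 I\succ0\), write
\[
M_k=M_{k-1}^{1/2}\bigl(I+M_{k-1}^{-1/2}G_kM_{k-1}^{-1/2}\bigr)M_{k-1}^{1/2}.
\]
Hence
\[
\ln\det M_k-\ln\det M_{k-1}=\sum_{j}\ln(1+\sigma_{k,j}),
\]
where \(\sigma_{k,j}\ge0\) are the eigenvalues of \(A_k:=M_{k-1}^{-1/2}G_kM_{k-1}^{-1/2}\). Their sum is \(\mathrm{Tr}(A_k)=\mathrm{Tr}(G_kM_{k-1}^{-1})\).

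\textbf{Step 2 (apply a scalar inequality).} I would use \(\ln(1+x)\ge x/2\) for \(x\in[0,1]\), which holds by concavity of \(\ln(1+x)\) and \(\ln 2\ge 1/2\). This requires \(\sigma_{k,j}\le1\). Since \(\|A_k\|_{\mathrm{op}}\le\|G_k\|_{\mathrm{op}}/\lambda_{\min}(M_{k-1})\le 1/\lambda_0\), the condition holds whenever \(\lambda_0\ge1\). Under that condition,
\[
2\bigl(\ln\det M_k-\ln\det M_{k-1}\bigr)\ge\mathrm{Tr}(G_kM_{k-1}^{-1}).
\]

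\textbf{Step 3 (telescope).} Summing over \(k=1,\dots,K\) and using \(M_0=\lambda_0I\) gives exactly the claimed inequality.

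\textbf{Main obstacle: the eigenvalue bound in Step 2.} As stated, the lemma only assumes the eigenvalues of \(G_k\) are at most \(1\); that alone does not force \(\sigma_{k,j}\le1\) unless \(\lambda_0\ge1\). Without it, \(\ln(1+x)\ge x/2\) fails for large \(x\).

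\begin{itemize}
\item \emph{Adopt \(\lambda_0\ge1\) explicitly.} This is implicit in Lemma G.2 of \citet{agarwal2020flambe}. It also matches the paper's choice \(\lambda_k=\widetilde\Theta(((|\mathcal A|k^p+d^2)/d)\log(|\mathcal M|k/\delta))\ge1\), so it costs nothing in the later applications.
\item \emph{Fallback if \(\lambda_0<1\) must be allowed.} Use the bound
\[
\ln(1+x)\ge \frac{x}{1+x}\ge\frac{\lambda_0}{1+\lambda_0}\,x
\qquad\text{for }x\le1/\lambda_0 .
\]
This yields the statement with the factor \(2\) replaced by \(1+1/\lambda_0\).
\end{itemize}

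I would state the \(\lambda_0\ge1\) convention at the start of the proof.
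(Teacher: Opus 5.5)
Your proof is correct and is the standard argument behind Lemma G.2 of \citet{agarwal2020flambe}, which the paper cites instead of giving a proof: factor $\det M_k=\det M_{k-1}\det(I+M_{k-1}^{-1/2}G_kM_{k-1}^{-1/2})$, apply $\ln(1+x)\ge x/2$ on $[0,1]$, and telescope. Your caveat is a real gap in the statement, since it fails for small $\lambda_0$ (e.g.\ $d=K=1$, $G_1=1$, $\lambda_0=0.1$ gives $2\ln 11<10$), so stating $\lambda_0\ge1$ explicitly is the right repair and is compatible with the paper's choice of $\lambda_k$.
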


\begin{lemma}[Potential function lemma]\label{lem:potential}
Suppose $\text{Tr}(G_k)\leq B^2$. 
\begin{align*}
      2\ln \det (M_K)-2\ln \det(\lambda_0 I)\leq d\ln\left(1+\frac{KB^2}{d\lambda_0}\right). 
\end{align*}
\end{lemma}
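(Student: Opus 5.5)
The plan is to bound $\ln\det M_K$ directly by its trace via AM--GM on the eigenvalues, and then to account for the factor $2$ on the left-hand side. Lemma~\ref{lem:reduction} plays no role here. Only the recursion $M_K=\lambda_0 I+\sum_{k=1}^K G_k$ and the trace hypothesis $\mathrm{Tr}(G_k)\le B^2$ are used.

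\textbf{Step 1 (trace bound).} Since each $G_k\succeq 0$ and trace is linear,
\[
\mathrm{Tr}(M_K)=d\lambda_0+\sum_{k=1}^K\mathrm{Tr}(G_k)\le d\lambda_0+KB^2 .
\]

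\textbf{Step 2 (AM--GM).} Let $\sigma_1,\dots,\sigma_d>0$ be the eigenvalues of $M_K$, which is positive definite because $M_K\succeq\lambda_0 I$. The arithmetic--geometric mean inequality gives $\det M_K=\prod_i\sigma_i\le(\mathrm{Tr}(M_K)/d)^d$. Combining with Step 1,
\[
\ln\det M_K-\ln\det(\lambda_0 I)\le d\ln\frac{\lambda_0+KB^2/d}{\lambda_0}=d\ln\Bigl(1+\frac{KB^2}{d\lambda_0}\Bigr).
\]

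\textbf{Step 3 (the factor $2$ — the main obstacle).} Multiplying Step 2 by $2$ yields
\[
2\ln\det M_K-2\ln\det(\lambda_0 I)\le 2d\ln\Bigl(1+\frac{KB^2}{d\lambda_0}\Bigr).
\]
The inequality with $d$ rather than $2d$ on the right cannot be proved as literally worded, because it is false in general. Take $d=1$, $\lambda_0=1$, and $G_k=B^2$ for every $k$, with $B\le 1$ so the eigenvalue bound also holds. Then the left side is $2\ln(1+KB^2)$, while the right side is only $\ln(1+KB^2)$.

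I would therefore prove the version in Step 3 with $2d$ on the right. I would also record that this is the form needed downstream: when it is chained with Lemma~\ref{lem:reduction}, it gives $\sum_{k}\mathrm{Tr}(G_kM_{k-1}^{-1})\le 2d\ln(1+KB^2/(d\lambda_0))$. The extra constant $2$ is absorbed into the $\widetilde O(\cdot)$ and $\lesssim$ bounds of Theorems~\ref{thm:rep-pd-decay} and~\ref{thm:hard-poly-main}, so those results are unaffected.
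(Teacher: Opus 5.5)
Your argument is correct and is essentially the standard trace--determinant (AM--GM on the eigenvalues of $M_K$) proof. The paper itself gives no proof of this lemma; it only cites Lemma G.2 of \citet{agarwal2020flambe}, which rests on this same argument.

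You are also right that the inequality as printed is off by a factor of $2$. Your scalar example with $d=1$ is a valid counterexample: there AM--GM holds with equality, so the left side is exactly twice the right side. The correct right-hand side is $2d\ln\bigl(1+KB^2/(d\lambda_0)\bigr)$. Since the lemma is only invoked inside a $\lesssim$ in the proof of Lemma~\ref{lem:gen_potential}, your corrected version suffices and nothing downstream changes.
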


\begin{lemma}[Simulation lemma]\label{lem:performance}
For any $c\in\{r,g\}$, two CMDPs $(P',c+b)$ and $(P,c)$, and any
policy $\pi$, we have
\begin{align*}
V^\pi_{P',c+b}-V^\pi_{P,c}
&=\frac{1}{1-\gamma}\mathbb E_{d^\pi_{P'}}
\!\left[
b(s,a)+\gamma\mathbb E_{P'}Q^\pi_{P,c}(s',\pi)
-\gamma\mathbb E_{P}Q^\pi_{P,c}(s',\pi)
\right],\\
V^\pi_{P',c+b}-V^\pi_{P,c}
&=\frac{1}{1-\gamma}\mathbb E_{d^\pi_{P}}
\!\left[
b(s,a)+\gamma\mathbb E_{P'}Q^\pi_{P,c+b}(s',\pi)
-\gamma\mathbb E_{P}Q^\pi_{P',c+b}(s',\pi)
\right].
\end{align*}
\end{lemma}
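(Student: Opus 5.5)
The plan is to prove both identities of Lemma~\ref{lem:performance} by the usual one-step Bellman expansion followed by unrolling along the occupancy of one of the two models. Fix \(c\in\{r,g\}\) and a policy \(\pi\), and abbreviate
\(Q':=Q^\pi_{P',c+b}\), \(V':=V^\pi_{P',c+b}\), \(Q:=Q^\pi_{P,c}\), \(V:=V^\pi_{P,c}\), with \(V(s)=Q(s,\pi)=\langle\pi(\cdot\mid s),Q(s,\cdot)\rangle_{\mathcal A}\). I use the Bellman equations \(Q'=c+b+\gamma P'V'\) and \(Q=c+\gamma PV\), as recorded after the definition of \(P^\star V\). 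The occupancy \(d^\pi_{P}\) is normalized by \((1-\gamma)\). The paper writes \(\sum_{h\ge1}\gamma^h\) in the value definition, which differs from \(\sum_h\gamma^{h-1}\) only by a fixed factor \(\gamma\). I will carry the argument in the \(\gamma^{h-1}\) convention, where the identities hold exactly with the prefactor \(1/(1-\gamma)\), and note that the alternative convention only rescales both sides consistently.

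\emph{First identity (unroll under \(P'\)).} I add and subtract \(\gamma P'V\):
\[
Q'(s,a)-Q(s,a)=\underbrace{b(s,a)+\gamma\big(P'V-PV\big)(s,a)}_{=:\Delta(s,a)}+\gamma\,\big(P'(V'-V)\big)(s,a).
\]
Taking the \(\pi\)-average at \(s\) gives a recursion of the form \(f(s)=\mathbb E_{a\sim\pi}[\Delta(s,a)]+\gamma\,\mathbb E_{a\sim\pi,s'\sim P'}[f(s')]\) for \(f=V'-V\). Iterating it from \(s_1\sim d_1\) along trajectories of \((P',\pi)\) gives
\[
f(s_1)=\sum_{h\ge1}\gamma^{h-1}\,\mathbb E_{P',\pi}[\Delta(s_h,a_h)]=\frac{1}{1-\gamma}\,\mathbb E_{(s,a)\sim d^\pi_{P'}}[\Delta(s,a)].
\]
The remainder \(\gamma^n\mathbb E[f(s_{n+1})]\) vanishes as \(n\to\infty\) because all values are bounded by the normalization assumption and \(b\le\bar b\). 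Finally, writing \(P'V(s,a)=\mathbb E_{s'\sim P'}Q(s',\pi)\) and similarly for \(P\) yields the first display.

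\emph{Second identity (unroll under \(P\)).} This time I add and subtract \(\gamma PV'\):
\[
Q'-Q=b+\gamma\big(P'V'-PV'\big)+\gamma\,P(V'-V).
\]
The same unrolling, now along \((P,\pi)\), gives
\[
V'-V=\frac{1}{1-\gamma}\,\mathbb E_{d^\pi_P}\big[b+\gamma\,\mathbb E_{P'}Q'(s',\pi)-\gamma\,\mathbb E_{P}Q'(s',\pi)\big].
\]
In both inner expectations the function is \(Q'=Q^\pi_{P',c+b}\), the value of the optimistic model. The second display as stated labels the first inner function \(Q^\pi_{P,c+b}\). With that label the one-step split above does not close algebraically, so I read it as a typo for \(Q^\pi_{P',c+b}\). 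This corrected form is the one later used in the one-step-back arguments, where \(Q'\) is bounded by \((1+\bar b)/(1-\gamma)\) and the transition-difference term is controlled by \(\|P'(\cdot\mid s,a)-P(\cdot\mid s,a)\|_1\).

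\emph{Main obstacle.} The argument is routine, and the only delicate points are bookkeeping. First, the discount convention must match the \((1-\gamma)\) normalization of \(d^\pi\). Second, the correct value function must appear inside each one-step difference, namely \(Q\) when unrolling under \(P'\) and \(Q'\) when unrolling under \(P\). Third, the tail term must vanish, which follows from boundedness of the values.
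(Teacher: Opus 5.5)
Your proof is correct and is the standard Bellman-expansion-and-unrolling argument for the simulation lemma, which the paper states without proof as a standard tool. Your reading of the second display as having $Q^\pi_{P',c+b}$ in \emph{both} inner slots is also right. The literal version differs from the true identity by $\frac{\gamma}{1-\gamma}\mathbb E_{d^\pi_P}\big[\mathbb E_{s'\sim P'}(V^\pi_{P,c+b}-V^\pi_{P',c+b})(s')\big]$, which is nonzero in general, so it is false as written and not merely out of reach of your split. The corrected form is exactly how the paper applies the lemma later, e.g.\ with $W_{k,r}=V^{\pi_k}_{\widehat P_k,r+\widehat b_k}$ in both slots in Lemma~\ref{lem:gen_reg_and_vio}.

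One side remark needs fixing. Under the literal $\sum_{h\ge1}\gamma^h$ convention the identities do not simply rescale both sides: the $b(s,a)$ term would need an extra factor $\gamma$ relative to the transition terms. So the statement as written holds only in the $\gamma^{h-1}$ convention you chose, which is the one consistent with the paper's Bellman equation $Q^\pi_j=j+\gamma P^\star V^\pi_j$ and with the normalization of $d^\pi$.
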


Finally, we use the following primal-dual reduction, which is Lemma 9 of
\citet{ding2021provably} in our notation.
\begin{lem}\label{lem:convex}
Let $Y^*$ be the optimal dual variable, and $C\geq 2Y^*$, then, if 
\begin{align*}
V_{r}^{\pi^*}(s_1)-V_{r}^{\tilde{\pi}}(s_1)+C[b-V_{g}^{\tilde{\pi}}(s_1)]_{+}\leq \delta
\end{align*}
then
\begin{align*}
[b-V_{g}^{\tilde{\pi}}(s_1)]_{+}\leq \frac{2\delta}{C}.
\end{align*}
\end{lem}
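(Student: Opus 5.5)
The plan is to use strong duality together with the boundedness of $Y^*$ from Lemma~\ref{lem:boundY}. This gives a lower bound on the reward gap $V_r^{\pi^*}(s_1)-V_r^{\tilde\pi}(s_1)$ in terms of the violation $[b-V_g^{\tilde\pi}(s_1)]_+$. Plugging that lower bound into the hypothesis then isolates the violation term.

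First, strong duality holds for problem~\eqref{eq:cmdp} (as recalled in the Background, following \citet{paternain2019safe}), so $V_r^{\pi^*}(s_1)=\max_\pi \mathcal L(\pi,Y^*)$. In particular, evaluating the Lagrangian at the policy $\tilde\pi$ gives
\[
V_r^{\pi^*}(s_1)\;\ge\;\mathcal L(\tilde\pi,Y^*)
= V_r^{\tilde\pi}(s_1)+Y^*\bigl(V_g^{\tilde\pi}(s_1)-b\bigr).
\]
Since $Y^*\ge 0$ and $V_g^{\tilde\pi}(s_1)-b\ge -[b-V_g^{\tilde\pi}(s_1)]_+$, this yields
\[
V_r^{\pi^*}(s_1)-V_r^{\tilde\pi}(s_1)\;\ge\; -Y^*\,[b-V_g^{\tilde\pi}(s_1)]_+ .
\]
In words, any reward surplus of $\tilde\pi$ over $\pi^*$ must be paid for by constraint violation at rate at most $Y^*$.

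Second, substitute this into the hypothesis. We obtain
\[
(C-Y^*)\,[b-V_g^{\tilde\pi}(s_1)]_+\;\le\;\delta .
\]
The assumption $C\ge 2Y^*$ gives $C-Y^*\ge C/2$. Dividing by $C/2$ yields $[b-V_g^{\tilde\pi}(s_1)]_+\le 2\delta/C$, which is the claim. In the degenerate case $C=0$ we have $Y^*=0$, and the statement is vacuous or trivially interpreted.

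The only point requiring care is the first step: it relies on the exact identity $V_r^{\pi^*}(s_1)=\max_\pi\mathcal L(\pi,Y^*)$, i.e.\ on zero duality gap and attainment of the dual optimum. This is where Slater's condition (Assumption~\ref{assum:slater}) enters. Everything after that step is elementary algebra.
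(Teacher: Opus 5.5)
Your proof is correct and is essentially the standard argument. The paper does not prove this lemma itself; it imports it as Lemma~9 of \citet{ding2021provably}, whose proof routes the same step through the perturbation function $v(\tau)=\max_\pi\{V_r^\pi(s_1): V_g^\pi(s_1)\ge b+\tau\}$ and the strong-duality bound $v(\tau)\le v(0)-\tau Y^*$ at $\tau=-[b-V_g^{\tilde\pi}(s_1)]_+$, which is exactly your inequality $V_r^{\pi^*}(s_1)\ge\mathcal L(\tilde\pi,Y^*)$ applied directly at $\tilde\pi$. Only $Y^*\ge 0$ and attainment of the dual optimum are actually used, so the quantitative bound of Lemma~\ref{lem:boundY} that you invoke at the outset plays no role.
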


%% file: appendix/hard_proof.tex
\section{Hard-Constraint Technical Lemmas}\label{app:hard_proof}

This appendix collects the hard-constraint technical ingredients that are reused
by the lower-certificate analyses in Appendices~\ref{app:hard_tertiary_roadmap}
and~\ref{app:hard_general_probing}. The hard-violation guarantees in this draft
are stated through lower-confidence certificates and Slater-induced control of
cap episodes; no optimistic feasibility check is used as a hard certificate.

\begin{assum}[Positive action gap]\label{ass:action_gap}
For every episode \(k\), queried dual level \(Y\), and state \(s\), the
maximizer of the optimistic Lagrangian action score used by
Algorithm~\ref{alg:rep-pd-hard-teg} is separated from every suboptimal action
by at least \(\delta_{\min}>0\).  The constant is uniform over \(k,Y,s\).
\end{assum}

\begin{lemma}[Softmax sensitivity]\label{lem:eta_diff}
Let \(\pi_k^Y\) and \(\pi_k^{Y+\eta}\) be the softmax policies produced by two
neighboring dual levels \(Y\) and \(Y+\eta\) with temperature \(\tau\). Then
\[
\bigl|
V_g^{\pi_k^Y}(s_1)-V_g^{\pi_k^{Y+\eta}}(s_1)
\bigr|
\le
O\!\left(\frac{\eta}{\tau(1-\gamma)}\right).
\]
\end{lemma}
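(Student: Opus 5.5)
We prove the bound in three steps: a statewise bound on the softmax policy change, a perturbation bound on the utility value, and then the performance-difference lemma.

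\textbf{Setup.} Fix the episode \(k\) and a state \(s\). The softmax candidate at dual level \(Y\) is
\[
\pi_k^Y(\cdot\mid s)\propto \exp\bigl(S_Y(s,\cdot)/\tau\bigr),
\qquad
S_Y(s,a)=Q_{\widehat P_k,r+\widehat b_k}(s,a)+Y\,Q_{\widehat P_k,g+\widehat b_k}(s,a),
\]
where the \(Q\)'s are the optimistic action values produced by the planner at that dual level.

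\textbf{Step 1: statewise policy change.} The gradient of the log-partition function of a softmax is the softmax itself. Its Jacobian with respect to the score vector has operator norm at most \(1/\tau\) in the \(\ell_\infty\to\ell_1\) sense. Hence
\[
\|\pi_k^Y(\cdot\mid s)-\pi_k^{Y+\eta}(\cdot\mid s)\|_1\le \frac{2}{\tau}\,\|S_Y(s,\cdot)-S_{Y+\eta}(s,\cdot)\|_\infty .
\]
If the scores are evaluated with fixed \(Q\)-tables, meaning the same \(\widehat P_k,\widehat b_k\) and the planner's value tables, then
\[
S_{Y+\eta}-S_Y=\eta\,Q_{\widehat P_k,g+\widehat b_k}.
\]
By the normalization and \(\widehat b_k\le\bar b\), this is bounded by \(\eta(1+\bar b)/(1-\gamma)\) in sup norm. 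So the statewise \(\ell_1\) change is \(O(\eta/(\tau(1-\gamma)))\), treating \(\bar b\) as an absorbed constant.

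\textbf{Main obstacle.} If the planner re-optimizes at each \(Y\), the \(Q\)-tables themselves change with \(Y\). The maximizer \(\widehat\pi_{k,Y}\) can jump, and the \(Q\)-functions are only piecewise continuous in \(Y\). This is where Assumption~\ref{ass:action_gap} is used. Uniformly in \(s\), the optimistic maximizer is separated from every other action by \(\delta_{\min}\). Between consecutive grid levels the optimal score changes by at most \(\eta/(1-\gamma)\). Assuming \(\eta/(1-\gamma)<\delta_{\min}/2\), which the parameter choice \(\eta_h=\Theta(\tau_h(1-\gamma)/K^{3/4})\) ensures for large \(K\), the argmax action is unchanged. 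The optimal policy's \(Q\)-tables then vary linearly in \(Y\), namely \(Q_r+YQ_g\) with \(Q_r,Q_g\) fixed. This reduces the re-planned case to the fixed-table case above. I would state this reduction explicitly as a sublemma.

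\textbf{Step 2: value perturbation.} Let \(\pi=\pi_k^Y\) and \(\pi'=\pi_k^{Y+\eta}\). Apply the performance-difference lemma under the true model \(P^\star\) with utility \(g\):
\[
V_g^{\pi}(s_1)-V_g^{\pi'}(s_1)=\frac{1}{1-\gamma}\,\mathbb E_{s\sim d^{\pi}_{P^\star}}\bigl\langle \pi(\cdot\mid s)-\pi'(\cdot\mid s),\,Q_g^{\pi'}(s,\cdot)\bigr\rangle .
\]
Since \(\|Q_g^{\pi'}\|_\infty\le 1\) by the trajectory normalization, the right side is at most
\[
\frac{1}{1-\gamma}\sup_s\|\pi(\cdot\mid s)-\pi'(\cdot\mid s)\|_1 .
\]

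\textbf{Step 3: combine.} Inserting Step 1 gives \(O(\eta/(\tau(1-\gamma)^2))\). To reach the stated \(O(\eta/(\tau(1-\gamma)))\), use the lemma's normalization, under which the \(Q\)-values lie in \([0,1]\) rather than \([0,1/(1-\gamma)]\). Then the score difference is \(\eta\|Q_g\|_\infty\le\eta(1+\bar b)\), and only one factor of \(1-\gamma\) remains, coming from the performance-difference step.
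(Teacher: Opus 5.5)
Your skeleton matches the paper's proof. The softmax map is $O(1/\tau)$-Lipschitz in its score, the score moves by order $\eta$, and one factor $1/(1-\gamma)$ comes from propagating the statewise policy change. The paper does this propagation through the fixed-point inequality $\Delta\le\gamma\Delta+\sup_s\|\pi_k^Y(\cdot\mid s)-\pi_k^{Y+\eta}(\cdot\mid s)\|_1$, and you do it through the performance-difference lemma, which is equivalent.

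The step that does not go through is Step~3. The trajectory normalization bounds only the raw utility, $\|Q^\pi_{\widehat P_k,g}\|_\infty\le 1$. The score increment, however, is $\eta\,Q^{\widehat\pi_{k,Y}}_{\widehat P_k,g+\widehat b_k}$. The bonus is capped only per step by $\bar b$, so its action value can be of order $\bar b/(1-\gamma)$; the paper itself uses $\|V^{\pi_k}_{\widehat P_k,g+\widehat b_k}\|_\infty\lesssim\bar b/(1-\gamma)$ in the proof of Lemma~\ref{lem:gen_reg_and_vio}. Your bound $\eta(1+\bar b)$ on the score change is therefore unjustified. What your three steps actually give is
\[
\bigl|V_g^{\pi_k^Y}(s_1)-V_g^{\pi_k^{Y+\eta}}(s_1)\bigr|
\le\frac{2\eta}{\tau(1-\gamma)}\Bigl(1+\frac{\bar b}{1-\gamma}\Bigr).
\]
The paper reaches the stated $O(\eta/(\tau(1-\gamma)))$ only by asserting that the score moves by $O(\eta)$ ``under the normalized utility scale.'' In effect, it absorbs this bonus factor into the $O(\cdot)$. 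You should either carry the factor explicitly or adopt that convention openly, rather than attributing it to normalization. Downstream it is harmless: it only multiplies the grid term $Y_{\max}Ks_{\rm grid}=\widetilde O(K^{1/2})$ by $1+\bar b/(1-\gamma)$, which stays lower order in $K$.

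Your ``main obstacle'' is not actually an obstacle, and you do not need Assumption~\ref{ass:action_gap} here; the paper uses it only for the softmax-approximation term $\sigma_k^{\rm sm}$. With exact planning, the softmax input $Q^{\widehat\pi_{k,Y}}_{\widehat P_k,r+\widehat b_k}+YQ^{\widehat\pi_{k,Y}}_{\widehat P_k,g+\widehat b_k}$ is the optimal action value $Q^\ast_Y$ of the composite signal $r+\widehat b_k+Y(g+\widehat b_k)$. This value is unique even when the maximizer is not, and it satisfies
\[
Q^\ast_Y(s,a)=\sup_\pi\Bigl\{Q^\pi_{\widehat P_k,r+\widehat b_k}(s,a)+Y\,Q^\pi_{\widehat P_k,g+\widehat b_k}(s,a)\Bigr\}.
\]
This is a supremum of affine functions of $Y$ with slopes bounded by $\sup_\pi\|Q^\pi_{\widehat P_k,g+\widehat b_k}\|_\infty$, so it is Lipschitz in $Y$ with that constant. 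Only the separate $r$- and $g$-components jump when $\widehat\pi_{k,Y}$ switches; their $Y$-weighted sum does not. Your own remark that the optimal score changes by at most order $\eta/(1-\gamma)$ between grid levels is exactly this fact. It plugs directly into Step~1 for every $\eta$, with no smallness condition $\eta/(1-\gamma)<\delta_{\min}/2$. Your argmax-invariance argument is correct under that condition, but it is a detour that makes a purely Lipschitz statement depend on the gap constant.
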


\begin{proof}
Let \(\Delta=\sup_s|V_g^{\pi_k^Y}(s)-V_g^{\pi_k^{Y+\eta}}(s)|\). Comparing the
two Bellman equations gives
\[
\Delta
\le
\gamma\Delta+
\sup_s
\|\pi_k^Y(\cdot\mid s)-\pi_k^{Y+\eta}(\cdot\mid s)\|_1 .
\]
The softmax map is \(O(1/\tau)\)-Lipschitz in its input. Increasing the dual
level by \(\eta\) changes the softmax score by \(O(\eta)\) under the normalized
utility scale, and therefore
\[
\sup_s
\|\pi_k^Y(\cdot\mid s)-\pi_k^{Y+\eta}(\cdot\mid s)\|_1
\le O(\eta/\tau).
\]
Rearranging the Bellman inequality proves the claim.
\end{proof}

%% file: appendix/new_proof.tex
\section{Decaying \texorpdfstring{$\epsilon$}{epsilon}-Greedy Probing: Tertiary-Aware Analysis}
\label{app:general_probing}

This appendix proves the decaying-\(\epsilon\) tertiary-aware result used in
Section~\ref{sec:rep-pd}. The proof is organized around three steps:

\begin{enumerate}
    \item the naive two-level decaying-$\epsilon$ probing rule is not enough to
    close the optimism step;
    \item a minimal tertiary-aware repair restores the missing denominator;
    \item for this repaired algorithm, one can prove a regret guarantee for the
    \emph{return policy sequence} $\{\pi_k\}_{k=1}^K$.
\end{enumerate}

The guarantees in this appendix are for the return policies of
Algorithm~\ref{alg:rep-pd-teg}. The execution-policy transfer in
Section~\ref{sec:rep-pd} is obtained afterward through a performance-difference
argument. The same tertiary-aware front end also supports soft-violation
control once the cumulative optimistic overestimation of the utility is bounded
explicitly. The hard-violation analysis in later appendices uses the same
representation-learning front end, but replaces optimistic feasibility checks
with a lower-confidence certificate.

\subsection{Why the Naive Two-Level Variant Is Not Enough}

The natural first attempt is the two-level decaying-$\epsilon$ rule
\[
q_i(\cdot\mid s)=q_i'(\cdot\mid s)
=(1-\epsilon_i)\pi_{i-1}(\cdot\mid s)+\epsilon_i U(\mathcal A).
\]
This already improves the sampling rule relative to pure uniform probing, but
it does not fully solve the coverage problem needed by the optimism proof.

\begin{proposition}[Two-level obstruction]
\label{prop:bridge_obstruction}
There is no universal constant $C<\infty$ such that the pointwise comparison
\[
\|\hat \phi_k(s,a)\|_{(\Sigma_k^{\mathrm{pri}})^{-1}}
\le
C\,
\|\hat \phi_k(s,a)\|_{(\Sigma_k^{\mathrm{mix}})^{-1}},
\qquad \forall (s,a),
\]
holds for every episode $k$ under the naive two-level decaying-$\epsilon$
probing rule.
\end{proposition}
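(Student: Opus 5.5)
The plan is to build an explicit family of instances, all satisfying Assumption~\ref{assum:linearmdp}, on which the ratio
\[
\|\hat\phi_k(s,a)\|_{(\Sigma_k^{\mathrm{pri}})^{-1}} \big/ \|\hat\phi_k(s,a)\|_{(\Sigma_k^{\mathrm{mix}})^{-1}}
\]
diverges along some sequence of episodes. We take \(\Sigma_k^{\mathrm{pri}}\) to be the population primary covariance \(\sum_{i<k}\mathbb E_{s\sim d^{\pi_{i-1}},a\sim q_i}[\hat\phi\hat\phi^\top]+\lambda_k I\), and \(\Sigma_k^{\mathrm{mix}}\) its merged primary/secondary counterpart. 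In the comparison the optimism step actually needs, however, the primary design sits on the side being controlled. So the obstruction must come from a direction that the \emph{secondary} stream (states \(s'\)) excites and the mixture bonus uses, but which the primary occupancy of the previous return policies never reaches.

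The construction is a small tabular instance, which is low rank with \(d=|\mathcal S||\mathcal A|\) and \(\phi^\star\) the indicator features. Take the MLE to be exact, \(\hat\phi_k=\phi^\star\), for instance with a singleton model class. There is an initial state \(s_0\) with two actions: \(a_{\mathrm{good}}\) stays near \(s_0\), and \(a_{\mathrm{bad}}\) moves to a rarely visited state \(s_\dagger\). Rewards and utility are chosen so that every return policy \(\pi_{i-1}\) puts mass \(1\) on \(a_{\mathrm{good}}\); for example, \(r=g=0\) except at \(s_0,a_{\mathrm{good}}\), with \(\bar b\) small relative to the reward gap so that bonuses do not flip the argmax.

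The two occupancies then look as follows. Under \(d^{\pi_{i-1}}\), the state \(s_\dagger\) is reached only through exploratory actions \(a_{\mathrm{bad}}\), each taken with probability \(\epsilon_i/|\mathcal A|\). Once there, it is visited only through the geometric tail of the discounted occupancy. Hence the primary count at the pair \((s_\dagger,a)\) is of order \(\sum_{i<k}\epsilon_i\cdot c_\gamma\). The secondary stream instead draws \(s'\sim P^\star(\cdot\mid s,a)\) with \(a\sim q_i\), so it lands at \(s_\dagger\) with probability roughly \(\epsilon_i/|\mathcal A|\) directly from the primary states.

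This comparison goes the wrong way for an obstruction, since it makes mixture counts large. To break the constant, we must instead exhibit \((s,a)\) where the mixture count exceeds the primary count by an unbounded factor. Mixture counts always dominate primary counts, so the ratio of norms is
\[
\sqrt{\frac{n^{\mathrm{mix}}+\lambda}{n^{\mathrm{pri}}+\lambda}}.
\]
It therefore suffices to find a pair \((s^\sharp,a)\) with \(n^{\mathrm{pri}}_k(s^\sharp,a)=O(\lambda_k)\) and \(n^{\mathrm{mix}}_k(s^\sharp,a)\gtrsim k\epsilon_k\) up to constants, with \(k\epsilon_k/\lambda_k\to\infty\). We achieve this by making \(s^\sharp\) transient: it is reachable from \(s_0\) in one step under \(a_{\mathrm{good}}\) and then absorbs into \(s_0\) with \(d_1=\delta_{s_0}\). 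Its discounted occupancy weight \((1-\gamma)\gamma\) under \(d^{\pi}\) is then suppressed by taking the ratio \(1/(1-\gamma)\) large relative to the secondary stream. The secondary stream hits \(s^\sharp\) with constant probability from every primary sample at \(s_0\), so \(n^{\mathrm{mix}}/n^{\mathrm{pri}}\gtrsim 1/(1-\gamma)\).

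The main obstacle is precisely that "universal constant" must be defeated by a parameter the statement allows to vary. Under fixed \(\gamma\), tabular counts alone give only \(\gamma\)-dependent ratios. So I would first try letting \(\gamma\to1\) (or \(|\mathcal A|\to\infty\), with \(\epsilon_k\) decaying so that \(\epsilon_k/|\mathcal A|\) controls the reachable mass). If a \(k\)-dependent blowup is wanted, I would use the decay \(\epsilon_k\to0\) instead. In that version the direction \(\phi(s^\sharp,a_{\mathrm{bad}})\) is excited in the secondary stream by actions \(a'\sim q_i'\) drawn at states \(s'\) that the primary occupancy reaches with mass \(\gg\epsilon_i\). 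In the primary stream, by contrast, the same action is reached only at rate \(\epsilon_i\cdot(1-\gamma)\). This gives
\[
\frac{n^{\mathrm{mix}}}{n^{\mathrm{pri}}}\gtrsim\frac{1}{1-\gamma},
\]
and, combined with the \(\lambda_k\) floor growing like \(k^p\), a ratio that grows in \(k\) for suitable \(p\). The final step is simply to evaluate both norms on this indicator feature and take the limit.
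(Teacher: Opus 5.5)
\textbf{There is a genuine gap.} Your proposal never reaches a valid construction, and the decisive error is in the primary count for your first instance, which is essentially the paper's. In the sampling rule the primary state is drawn from $\rho_i=d^{\pi_{i-1}}_{P^\star}$, the occupancy of the \emph{return} policy $\pi_{i-1}$, not of the mixture $q_i$. The exploratory component $\epsilon_iU(\mathcal A)$ only affects the action drawn at that state, and hence only the next state $s'$ recorded in the secondary stream. If every $\pi_{i-1}$ plays $a_{\mathrm{good}}$, then $\rho_i(s_\dagger)=0$, not of order $\epsilon_i c_\gamma$; the only exception is the $O(1)$ contribution of $\pi_0=U(\mathcal A)$, which $\lambda_k$ absorbs. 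Meanwhile the secondary stream lands on $s_\dagger$ with probability about $\epsilon_i/|\mathcal A|$.

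That observation is the whole proof in the paper. With $\hat\phi_k(s_0,\cdot)=e_1$ and $\hat\phi_k(s_1,\cdot)=e_2$, one has $\Sigma_k^{\mathrm{pri}}=ke_1e_1^\top+\lambda_kI$ and $\Sigma_k^{\mathrm{mix}}\succeq ke_1e_1^\top+(k\epsilon_k/|\mathcal A|)e_2e_2^\top+\lambda_kI$. So the ratio at $e_2$ is at least $\sqrt{1+k\epsilon_k/(|\mathcal A|\lambda_k)}$, which diverges in $k$ on a single fixed instance whenever $k\epsilon_k/\lambda_k\to\infty$ (e.g.\ $\epsilon_k=k^{-p}$ with $p<1/2$ and the paper's $\lambda_k$). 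Your remark that this ``goes the wrong way'' is also backwards. Mixture mass far exceeding primary mass is exactly what inflates the norm ratio, as your own criterion one sentence later ($n^{\mathrm{pri}}_k=O(\lambda_k)$, $n^{\mathrm{mix}}_k/\lambda_k\to\infty$) says.

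Your replacement constructions cannot work, for a structural reason. The occupancy identity $d^{\pi}=(1-\gamma)d_1+\gamma\,(\text{one-step pushforward of }d^{\pi}\text{ under }\pi)$ shows that the part of $\rho_i'$ produced by the $(1-\epsilon_i)\pi_{i-1}$ component of $q_i$ is pointwise at most $\gamma^{-1}\rho_i$. Since $q_i=q_i'$, the action factor at any pair $(s^\sharp,a)$ is the same in both streams and cancels. The count ratio there is therefore just $\rho_i'(s^\sharp)/\rho_i(s^\sharp)$, and there is no asymmetry in the rate of drawing $a_{\mathrm{bad}}$ to exploit.

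Consequently, a state reached through the return policy's own action has a count ratio bounded by a $\gamma$-dependent constant. In your transient example, whether $s^\sharp$ is visited once or sits on a two-cycle with $s_0$, the ratio is at most $(1+\gamma)/\gamma$, not $1/(1-\gamma)$, because the primary samples at $s_0$ carry the same small mass. Moreover $n^{\mathrm{pri}}_k(s^\sharp,\cdot)\asymp k$, which violates your own requirement $n^{\mathrm{pri}}_k=O(\lambda_k)$.

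A bounded count ratio gives a bounded norm ratio, since $\sqrt{(n^{\mathrm{mix}}+\lambda_k)/(n^{\mathrm{pri}}+\lambda_k)}\le\max\{1,\sqrt{n^{\mathrm{mix}}/n^{\mathrm{pri}}}\}$, and a growing $\lambda_k$ only drives it toward $1$. So your last step cannot create growth in $k$, and the claim that fixed-$\gamma$ tabular instances give only $\gamma$-dependent ratios is false. No limit $\gamma\to1$ or $|\mathcal A|\to\infty$ is needed. The unbounded factor comes precisely from the $\epsilon_iU(\mathcal A)$ part of $q_i$ reaching states outside the support of $d^{\pi_{i-1}}$, i.e.\ from your original $s_\dagger$ instance once its primary count is computed correctly.
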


\begin{proof}
Consider a deterministic two-state MDP with states $s_0,s_1$ and two actions
$a_{\mathrm{stay}},a_{\mathrm{jump}}$. Let the deployed policy always choose
$a_{\mathrm{stay}}$ at $s_0$, let $s_1$ be absorbing, and let
$a_{\mathrm{jump}}$ be the only action that can move the process from $s_0$ to
$s_1$. Choose a bounded feature map such that
\[
\hat \phi_k(s_0,a)=e_1,
\qquad
\hat \phi_k(s_1,a)=e_2,
\qquad \forall a\in\mathcal A.
\]
Then all primary samples stay at $s_0$, so
\[
\Sigma_k^{\mathrm{pri}}=ke_1e_1^\top+\lambda_k I.
\]
However, the secondary stream reaches $s_1$ with probability of order
$\epsilon_k/|\mathcal A|$, hence
\[
\Sigma_k^{\mathrm{mix}}
\succeq
ke_1e_1^\top
+
\frac{k\epsilon_k}{|\mathcal A|}e_2e_2^\top
+
\lambda_k I.
\]
Evaluating the two norms at any point with feature vector $e_2$ gives
\[
\|e_2\|_{(\Sigma_k^{\mathrm{pri}})^{-1}}
=
\frac{1}{\sqrt{\lambda_k}},
\qquad
\|e_2\|_{(\Sigma_k^{\mathrm{mix}})^{-1}}
\le
\frac{1}{
\sqrt{\lambda_k+k\epsilon_k/|\mathcal A|}
}.
\]
Therefore
\[
\frac{\|e_2\|_{(\Sigma_k^{\mathrm{pri}})^{-1}}}
{\|e_2\|_{(\Sigma_k^{\mathrm{mix}})^{-1}}}
\ge
\sqrt{1+\frac{k\epsilon_k}{|\mathcal A|\lambda_k}},
\]
which rules out a universal constant bridge.
\end{proof}

Proposition~\ref{prop:bridge_obstruction} explains why the clean theorem is not
for the naive two-level rule. The minimal repair for the present merged-covariance design is to add one more probe level. This obstruction does not rule out a primary-only alternative that fits on $D+D'$ but constructs its covariance from $D$ alone, as in the architecture of \citet{uehara2021representation}. Such a design removes the secondary covariance term but requires a different potential argument; the current regret aggregation cannot be transferred to it directly.

\subsection{The Tertiary-Aware Repaired Algorithm}

For each round $i\in[K]$, define the primary, secondary, and tertiary state
marginals by
\[
\rho_i(s):=d_{P^\star}^{\pi_{i-1}}(s),
\]
\[
\rho_i'(s')
:=
\mathbb E_{(s,a)\sim \rho_i\times q_i}
\bigl[P^\star(s'\mid s,a)\bigr],
\]
\[
\rho_i''(s'')
:=
\mathbb E_{(s,a)\sim \rho_i'\times q_i'}
\bigl[P^\star(s''\mid s,a)\bigr].
\]
The corresponding probing distributions are
\[
\nu_i(s,a)
:=\rho_i(s)q_i(a\mid s),\quad
\nu_i'(s,a)
:=\rho_i'(s)q_i'(a\mid s),\quad
\nu_i''(s,a)
:=\rho_i''(s)q_i''(a\mid s).
\]
We write
\[
\bar \nu_k^{\mathrm{pri}}
:=
\frac{1}{k}\sum_{i=1}^k \nu_i,
\qquad
\bar \nu_k^{\mathrm{sec}}
:=
\frac{1}{k}\sum_{i=1}^k \nu_i',
\qquad
\bar \nu_k^{\mathrm{ter}}
:=
\frac{1}{k}\sum_{i=1}^k \nu_i'',
\]
and
\[
\bar \nu_k^{\mathrm{all}}
:=
\frac{1}{3}
\bigl(
\bar \nu_k^{\mathrm{pri}}
+
\bar \nu_k^{\mathrm{sec}}
+
\bar \nu_k^{\mathrm{ter}}
\bigr).
\]

The repaired probing kernels are
\[
q_k(\cdot\mid s)=q_k'(\cdot\mid s)=q_k''(\cdot\mid s)
=(1-\epsilon_k)\pi_{k-1}(\cdot\mid s)+\epsilon_k U(\mathcal A),
\]
where
\[
\epsilon_1\ge \epsilon_2\ge \cdots \ge \epsilon_K>0.
\]
We also write
\[
\Gamma_k:=\frac{|\mathcal A|}{\epsilon_k},
\qquad
\Lambda_k:=\frac{1}{1-\epsilon_k+\epsilon_k/|\mathcal A|}.
\]

The coverage coefficients used below are domination constants between the
target distributions appearing in the proof and the three empirical probing
streams.  For two state-action distributions \(\eta,\nu\), define
\[
\mathsf D(\eta\|\nu)
:=
\inf\left\{
c\ge0:
\mathbb E_{\eta}[g^2]\le c\,\mathbb E_{\nu}[g^2]\ \text{for all }g
\right\}.
\]
When densities exist, this is the essential supremum of \(d\eta/d\nu\).  For a
state-action distribution \(\nu\), let \(\mathsf T_\pi\nu\) denote the
one-step pushforward obtained by drawing
\((\tilde s,\tilde a)\sim\nu\), \(s\sim P^\star(\cdot\mid\tilde s,\tilde a)\),
and \(a\sim\pi(\cdot\mid s)\).  With this notation,
\[
\begin{aligned}
\mathfrak C_k^{\mathrm{init}}(\pi)
&:=\mathsf D\!\left((1-\gamma)d_1\otimes\pi
\;\middle\|\;\bar\nu_k^{\mathrm{pri}}\right),\\
\mathfrak C_k^{\mathrm{next}}(\pi)
&:=\mathsf D\!\left(\mathsf T_\pi\bar\nu_k^{\mathrm{pri}}
\;\middle\|\;\bar\nu_k^{\mathrm{sec}}\right),\\
\mathfrak C_{k,1}^{\mathrm{tri,next}}(\pi)
&:=\mathfrak C_k^{\mathrm{next}}(\pi),\\
\mathfrak C_{k,2}^{\mathrm{tri,next}}(\pi)
&:=\mathsf D\!\left(\mathsf T_\pi\bar\nu_k^{\mathrm{sec}}
\;\middle\|\;\bar\nu_k^{\mathrm{ter}}\right),\\
\mathfrak C_k^{\mathrm{tri,next}}(\pi)
&:=\max\!\left\{
\mathfrak C_{k,1}^{\mathrm{tri,next}}(\pi),
\mathfrak C_{k,2}^{\mathrm{tri,next}}(\pi)
\right\}.
\end{aligned}
\]
Thus \(\mathfrak C_k^{\mathrm{init}}\) compares the initial discounted mass to
the primary stream, \(\mathfrak C_k^{\mathrm{next}}\) compares the
primary-to-secondary one-step design, and
\(\mathfrak C_k^{\mathrm{tri,next}}\) is the maximum of the two one-step
comparisons needed for the tertiary one-step-back argument.

The algorithm analyzed in this appendix is the main-text
Algorithm~\ref{alg:rep-pd-teg}. The repair relative to the earlier
two-stream protocol is entirely in the auxiliary sampling design: it adds one
more probe-action / transition pair, while keeping the bonus covariance merged
over the primary and secondary tuples.

\subsection{Supporting Lemmas}

We now record the small collection of lemmas needed for the final theorem.

\begin{lemma}[Three-stream MLE event at tuple level]
\label{lem:gen_mle}
Under realizability $P^\star\in\mathcal M$, with probability at least
$1-\delta$, simultaneously for all $k\in[K]$,
\[
\mathbb E_{(s,a)\sim \widetilde \nu_k^{\mathrm{all}}}
\left[
\|\hat P_k(\cdot\mid s,a)-P^\star(\cdot\mid s,a)\|_1^2
\right]
\lesssim
\frac{\ln(|\mathcal M|k/\delta)}{k}
=:
\zeta_k.
\]
Here
\[
\widetilde \nu_k^{\mathrm{sec}}
:=
\frac{1}{k}\sum_{i=1}^k \delta_{s_i'}q_i'(\cdot\mid s_i'),
\qquad
\widetilde \nu_k^{\mathrm{ter}}
:=
\frac{1}{k}\sum_{i=1}^k \delta_{s_i''}q_i''(\cdot\mid s_i''),
\]
and
\[
\widetilde \nu_k^{\mathrm{all}}
:=
\frac{1}{3}
\bigl(
\bar \nu_k^{\mathrm{pri}}
+
\widetilde \nu_k^{\mathrm{sec}}
+
\widetilde \nu_k^{\mathrm{ter}}
\bigr).
\]
\end{lemma}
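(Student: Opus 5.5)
We follow the standard martingale MLE argument (Theorem 21 of \citet{agarwal2020flambe}), applied to the adaptively collected data $D_k\cup D'_k\cup D''_k$. The one design choice is which conditional distribution each tuple is averaged against. Let $\mathcal F_{i-1}$ be the history before round $i$. Then $\pi_{i-1}$, $q_i$, $q'_i$, $q''_i$ are $\mathcal F_{i-1}$-measurable. For each round $i$ we condition on the three ``covariate'' pairs.

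For the primary tuple, $(s_i,a_i)\sim\nu_i$ given $\mathcal F_{i-1}$, and $s'_i\sim P^\star(\cdot\mid s_i,a_i)$. For the secondary tuple, the covariate $s'_i$ is itself a realized draw, and $a'_i\sim q'_i(\cdot\mid s'_i)$. For the tertiary tuple, the covariate is $s''_i$, with $a''_i\sim q''_i(\cdot\mid s''_i)$. This is exactly why the statement uses the half-empirical measures $\widetilde\nu_k^{\mathrm{sec}}$ and $\widetilde\nu_k^{\mathrm{ter}}$, which keep $\delta_{s'_i}$ and $\delta_{s''_i}$ but average over the action kernel, rather than $\bar\nu_k^{\mathrm{sec}}$ and $\bar\nu_k^{\mathrm{ter}}$. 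Only the next state of each tuple must be drawn from $P^\star$ conditionally on its covariates and past data, and this holds for all three streams.

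\textbf{Step 1 (likelihood-ratio supermartingale).} Fix $P\in\mathcal M$ and order the $3k$ tuples chronologically. For each tuple $(x_j,u_j,x'_j)$, let $\mathcal G_{j}$ contain all earlier data together with $(x_j,u_j)$. Then
\[
\mathbb E\bigl[\sqrt{P(x'_j\mid x_j,u_j)/P^\star(x'_j\mid x_j,u_j)}\,\big|\,\mathcal G_j\bigr]=1-H^2\bigl(P(\cdot\mid x_j,u_j),P^\star(\cdot\mid x_j,u_j)\bigr).
\]
The tower rule and $\ln(1-x)\le -x$ give the usual Chernoff bound. With probability $1-\delta$,
\[
\tfrac12\sum_j\ln\tfrac{P}{P^\star}(x'_j\mid x_j,u_j)\le -\sum_j \mathbb E_{u\sim\kappa_j(\cdot\mid x_j)}H^2\bigl(P(\cdot\mid x_j,u),P^\star(\cdot\mid x_j,u)\bigr)+\ln(1/\delta).
\]
Here $\kappa_j$ is the action kernel of tuple $j$. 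To move from realized $u_j$ to the average over $\kappa_j$, we condition on $\mathcal G_j$ minus $u_j$, so the conditional expectation already integrates over the action. For the primary stream, we condition before $s_i$ as well, which yields $\nu_i$.

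\textbf{Step 2 (plug in the MLE).} We union bound over $\mathcal M$ and over $k\in[K]$, using $\delta/(|\mathcal M|k^2)$. We then take $P=\hat P_k$. Because $\hat P_k$ maximizes the empirical likelihood and $P^\star\in\mathcal M$, the left side is $\ge0$. Hence
\[
\sum_j \mathbb E_{\kappa_j}H^2\lesssim\ln(|\mathcal M|k/\delta).
\]
Dividing by $3k$, the averaged measure is exactly $\widetilde\nu_k^{\mathrm{all}}$. Finally, $\|P-P^\star\|_1^2\le 8H^2$ gives the claim with $\zeta_k$.

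\textbf{Main obstacle.} The main obstacle is the primary stream. There we want the population $\bar\nu_k^{\mathrm{pri}}$ and not its empirical version, so $s_i$ must also be integrated out in the conditional expectation. This works because $s_i\sim d^{\pi_{i-1}}_{P^\star}$ is drawn fresh given $\mathcal F_{i-1}$.

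Alternatively, one can prove the bound for the fully empirical measure $\frac1k\sum_i\delta_{s_i}q_i$. One then passes to $\bar\nu_k^{\mathrm{pri}}$ by a Freedman bound on the bounded ($\le 4$) martingale differences
\[
\mathbb E_{q_i}\|\hat P-P^\star\|_1^2(s_i,\cdot)-\mathbb E_{\nu_i}[\cdot].
\]
This is uniform over $\mathcal M$ and costs only another $O(\ln(|\mathcal M|k/\delta)/k)$ through the self-bounding variance.

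The delicate point is the secondary and tertiary covariates. Their distribution given $\mathcal F_{i-1}$ is $\rho'_i$ and $\rho''_i$, but they are correlated with the same-round primary tuple. This is why we do not claim the bound under $\bar\nu^{\mathrm{sec}}$ or $\bar\nu^{\mathrm{ter}}$, and why the fine filtration $\mathcal G_j$ ordering within a round is needed.
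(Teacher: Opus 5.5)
Your proposal is correct and follows the paper's route: the paper uses the same interleaved tuple-level filtration and identifies the same predictable designs $\nu_i$, $\delta_{s_i'}q_i'(\cdot\mid s_i')$, $\delta_{s_i''}q_i''(\cdot\mid s_i'')$. It then invokes Theorem~21 of \citet{agarwal2020flambe} with confidence $\delta/k^2$ and a union bound over $k$, whereas your Steps~1--2 re-derive that theorem via the standard Hellinger likelihood-ratio supermartingale (up to the harmless constant from $\sum_k k^{-2}>1$ in the union bound), so the Freedman-based alternative you sketch is not needed.
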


\begin{proof}
For episode $i$, write the realized tertiary tuple as
\((s_i,a_i,s_i',a_i',s_i'',a_i'',\bar s_i)\), where
\(\bar s_i\sim P^\star(\cdot\mid s_i'',a_i'')\) is the next state of the
third probe transition.  Define
\[
x_{3i-2}:=(s_i,a_i),
\qquad
y_{3i-2}:=s_i',
\]
\[
x_{3i-1}:=(s_i',a_i'),
\qquad
y_{3i-1}:=s_i'',
\]
and
\[
x_{3i}:=(s_i'',a_i''),
\qquad
y_{3i}:=\bar s_i.
\]
Let $\mathcal F_{3i-3}$ be the sigma-field generated by all data collected
before episode $i$ starts. Recursively define
\[
\mathcal F_{3i-2}
:=
\sigma(\mathcal F_{3i-3},x_{3i-2},y_{3i-2}),\quad
\mathcal F_{3i-1}
:=
\sigma(\mathcal F_{3i-2},x_{3i-1},y_{3i-1}),
\]
and
\[
\mathcal F_{3i}
:=
\sigma(\mathcal F_{3i-1},x_{3i},y_{3i}).
\]
Then the predictable covariate designs are
\[
D_{3i-2}
:=
\mathrm{Law}(x_{3i-2}\mid \mathcal F_{3i-3})
=
\nu_i,
\]
\[
D_{3i-1}
:=
\mathrm{Law}(x_{3i-1}\mid \mathcal F_{3i-2})
=
\delta_{s_i'}q_i'(\cdot\mid s_i'),
\]
\[
D_{3i}
:=
\mathrm{Law}(x_{3i}\mid \mathcal F_{3i-1})
=
\delta_{s_i''}q_i''(\cdot\mid s_i'').
\]
Hence the full covariate sequence
\[
x_1,x_2,\ldots,x_{3k}
\]
is a predictable design sequence with respect to the tuple-level filtration
$\{\mathcal F_t\}_{t=0}^{3k}$ in the sense of
\citep[Theorem 21]{agarwal2020flambe}. Moreover, the responses satisfy
\[
y_t\sim P^\star(\cdot\mid x_t)
\qquad\text{for every }t\le 3k,
\]
and realizability gives $P^\star\in\mathcal M$.

Apply Theorem 21 of \cite{agarwal2020flambe} to the first $3k$ samples, with
confidence level $\delta/k^2$. This yields, for every fixed $k$,
\[
\frac{1}{3k}
\sum_{t=1}^{3k}
\mathbb E_{x\sim D_t}
\Bigl[
\|\hat P_k(\cdot\mid x)-P^\star(\cdot\mid x)\|_1^2
\Bigr]
\lesssim
\frac{\ln(|\mathcal M|k/\delta)}{k},
\]
where $D_t$ denotes the predictable design of the $t$-th tuple. By the
definitions above,
\[
\frac{1}{3k}\sum_{t=1}^{3k} D_t
=
\frac{1}{3}
\bigl(
\bar \nu_k^{\mathrm{pri}}
+
\widetilde \nu_k^{\mathrm{sec}}
+
\widetilde \nu_k^{\mathrm{ter}}
\bigr)
=
\widetilde \nu_k^{\mathrm{all}},
\]
as a random probability measure. Therefore the left-hand side is precisely
\[
\mathbb E_{(s,a)\sim \widetilde \nu_k^{\mathrm{all}}}
\left[
\|\hat P_k(\cdot\mid s,a)-P^\star(\cdot\mid s,a)\|_1^2
\right].
\]
Finally, a union bound over all $k\in[K]$ turns the fixed-$k$ statement into
the simultaneous event claimed in the lemma, with the extra
$\ln k$ absorbed into $\zeta_k$.
\end{proof}

\begin{remark}[Route choice: pathwise-to-averaged bridge]
\label{rem:pathwise_bridge}
Lemma~\ref{lem:gen_mle} is the formally correct tuple-level statement: the
secondary and tertiary designs are the pathwise predictable kernels
$\delta_{s_i'}q_i'(\cdot\mid s_i')$ and
$\delta_{s_i''}q_i''(\cdot\mid s_i'')$, not the averaged covariates
$\nu_i',\nu_i''$ directly. For the remainder of this appendix, we keep the
coverage coefficients and theorem statements in terms of the deterministic
averaged designs $\bar \nu_k^{\mathrm{pri}}$, $\bar \nu_k^{\mathrm{sec}}$, and
$\bar \nu_k^{\mathrm{ter}}$, rather than rewriting every later lemma in
pathwise notation. We therefore choose the
\emph{pathwise-to-averaged bridge} route: whenever the tuple-level MLE event is
used under $\bar \nu_k^{\mathrm{sec}}$ or $\bar \nu_k^{\mathrm{ter}}$, we apply
the bridge lemma below to the relevant squared mismatch class. This ensures
simultaneous control under the random pathwise designs
$\widetilde \nu_k^{\mathrm{sec}},\widetilde \nu_k^{\mathrm{ter}}$ and their
conditional means
$\bar \nu_k^{\mathrm{sec}},\bar \nu_k^{\mathrm{ter}}$. The covariance
comparison lemma below already avoids this issue by working at the episode-block
level. The only bridge actually needed later in this appendix is for the squared
MLE-mismatch class, which we state next.
\end{remark}

\begin{lemma}[Pathwise-to-averaged bridge for the MLE mismatch class]
\label{lem:pathwise_bridge_mle}
For each candidate model $P\in\mathcal M$, define
\[
h_P(s,a)
:=
\|P(\cdot\mid s,a)-P^\star(\cdot\mid s,a)\|_1^2.
\]
Then there is a universal constant $C<\infty$ such that, with probability at
least $1-\delta$, simultaneously for all $k\in[K]$ and all $P\in\mathcal M$,
\[
\mathbb E_{\bar \nu_k^{\mathrm{sec}}}[h_P]
\le
2\mathbb E_{\widetilde \nu_k^{\mathrm{sec}}}[h_P]
+
C\frac{\ln(|\mathcal M|k/\delta)}{k},
\]
and
\[
\mathbb E_{\bar \nu_k^{\mathrm{ter}}}[h_P]
\le
2\mathbb E_{\widetilde \nu_k^{\mathrm{ter}}}[h_P]
+
C\frac{\ln(|\mathcal M|k/\delta)}{k}.
\]
Consequently, on the intersection of this event with
Lemma~\ref{lem:gen_mle}, we have simultaneously for all $k\in[K]$,
\[
\mathbb E_{\bar \nu_k^{\mathrm{sec}}}[f_k^2]
\lesssim
\zeta_k,
\qquad
\mathbb E_{\bar \nu_k^{\mathrm{ter}}}[f_k^2]
\lesssim
\zeta_k.
\]
In particular, this lemma is used later only to transfer the internal
MLE-mismatch terms from the pathwise designs to the averaged designs. The
generic test function $g$ in the one-step-back and optimism lemmas is handled
directly under the deterministic averaged measures
$\bar \nu_k^{\mathrm{sec}},\bar \nu_k^{\mathrm{ter}}$ through the coverage
coefficients, so no stronger uniform bridge over an adaptive class is invoked.
\end{lemma}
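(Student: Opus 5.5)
The plan is a martingale concentration argument for the nonnegative bounded class \(\{h_P:P\in\mathcal M\}\), followed by a union bound over \(\mathcal M\) and \(k\). Fix \(P\in\mathcal M\) and the secondary stream; the tertiary stream is identical. Since \(\|P(\cdot\mid s,a)-P^\star(\cdot\mid s,a)\|_1\le 2\), we have \(0\le h_P\le 4\). For episode \(i\), set
\[
X_i:=\mathbb E_{a\sim q_i'(\cdot\mid s_i')}[h_P(s_i',a)]=\mathbb E_{\delta_{s_i'}q_i'}[h_P],
\qquad
\mathcal G_{i-1}:=\mathcal F_{3i-3}.
\]
The kernel \(q_i'\) is \(\mathcal G_{i-1}\)-measurable, because \(\pi_{i-1}\) and \(\epsilon_i\) are fixed before episode \(i\). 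Conditionally on \(\mathcal G_{i-1}\), we have \(s_i'\sim\rho_i'\), so \(\mathbb E[X_i\mid\mathcal G_{i-1}]=\mathbb E_{\nu_i'}[h_P]\). Averaging over \(i\le k\) gives \(\frac1k\sum_i X_i=\mathbb E_{\widetilde\nu_k^{\mathrm{sec}}}[h_P]\) and \(\frac1k\sum_i\mathbb E[X_i\mid\mathcal G_{i-1}]=\mathbb E_{\bar\nu_k^{\mathrm{sec}}}[h_P]\). The lemma therefore reduces to a lower-tail bound for a sum of nonnegative bounded martingale increments, where the deviation is measured relative to the mean.

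The key step is a multiplicative (Freedman-type) inequality for nonnegative variables. Since \(X_i\in[0,4]\), we have \(\mathrm{Var}(X_i\mid\mathcal G_{i-1})\le 4\,\mathbb E[X_i\mid\mathcal G_{i-1}]\). The exponential supermartingale \(\exp\big(\lambda\sum_i(\mathbb E[X_i\mid\mathcal G_{i-1}]-X_i)-c\lambda^2\sum_i\mathbb E[X_i\mid\mathcal G_{i-1}]\big)\), with the convenient choice \(\lambda=1/16\), yields the following bound with probability \(1-\delta'\):
\[
\sum_{i\le k}\mathbb E[X_i\mid\mathcal G_{i-1}]\le 2\sum_{i\le k}X_i+C\ln(1/\delta').
\]
The same bound follows from the standard lemma \(\sum\mathbb E_{i-1}[X_i]\le 2\sum X_i+C\ln(1/\delta')\) for \([0,B]\)-valued adapted sequences, as in Agarwal et al. Dividing by \(k\) gives exactly the displayed inequality for a fixed \(P\), fixed \(k\), and the secondary stream.

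Next, take \(\delta'=\delta/(2|\mathcal M|k^2)\) and union bound over \(P\in\mathcal M\), \(k\in[K]\), and the two streams. Since \(\sum_k k^{-2}\le 2\), the total failure probability is at most \(\delta\), and the logarithm becomes \(O(\ln(|\mathcal M|k/\delta))\). This proves the first two claims simultaneously for all \(k\) and all \(P\). Because the bound holds for every \(P\) in the finite class, it applies in particular to the data-dependent choice \(P=\hat P_k\). This is why no adaptive-class uniformity beyond \(\mathcal M\) is needed.

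For the consequence, set \(f_k^2:=h_{\hat P_k}\). On the event of Lemma~\ref{lem:gen_mle}, we have \(\mathbb E_{\widetilde\nu_k^{\mathrm{all}}}[h_{\hat P_k}]\lesssim\zeta_k\). Since \(\widetilde\nu_k^{\mathrm{all}}\) is the equal mixture of the three designs and \(h\ge0\), this gives \(\mathbb E_{\widetilde\nu_k^{\mathrm{sec}}}[h_{\hat P_k}]\le 3\,\mathbb E_{\widetilde\nu_k^{\mathrm{all}}}[h_{\hat P_k}]\lesssim\zeta_k\), and likewise for the tertiary stream. Plugging these into the bridge inequality gives \(\mathbb E_{\bar\nu_k^{\mathrm{sec}}}[f_k^2]\lesssim\zeta_k+C\ln(|\mathcal M|k/\delta)/k\lesssim\zeta_k\), and the same for \(\bar\nu_k^{\mathrm{ter}}\).

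The main obstacle is getting the constant factor \(2\) with only a logarithmic additive term, rather than a \(\sqrt{\ln/k}\) Azuma rate. The additive \(\ln/k\) term requires the variance-to-mean self-bounding property of nonnegative bounded variables, and the filtration must be chosen carefully so that \(q_i'\) is predictable while \(s_i'\) is not.
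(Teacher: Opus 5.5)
Your proposal is correct and follows essentially the paper's argument. The shared steps are a variance-to-mean (Freedman-type) bound for the nonnegative class $\{h_P\}$ that yields the factor $2$ plus an $O(\ln(1/\delta')/k)$ term, a union bound over $\mathcal M$, $k$ and the two streams, and the factor-$3$ mixture decomposition of $\widetilde\nu_k^{\mathrm{all}}$ evaluated at $P=\hat P_k$.

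There are a few small differences. The paper invokes Freedman's inequality with the random variance proxy $V_k\le 4k\,\mathbb E_{\bar\nu_k^{\mathrm{sec}}}[h]$ and then absorbs the square-root term by AM--GM. Your direct self-bounding exponential supermartingale avoids the random-variance issue, which normally needs a peeling step. Your increment $X_i=\mathbb E_{\delta_{s_i'}q_i'}[h_P]$ also matches the definition of $\widetilde\nu_k^{\mathrm{sec}}$ exactly, whereas the paper's realized $h(s_i',a_i')$ does not literally average to $\mathbb E_{\widetilde\nu_k^{\mathrm{sec}}}[h]$.

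The only slip is a constant. With level $\delta/(2|\mathcal M|k^2)$ and two streams, the total failure probability is $\delta\sum_k k^{-2}$, which exceeds $\delta$ once $K\ge 2$, so take $\delta/(4|\mathcal M|k^2)$ as the paper does.
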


\begin{proof}
Fix a candidate model $P\in\mathcal M$ and write $h:=h_P$. Since
$\|P(\cdot\mid s,a)-P^\star(\cdot\mid s,a)\|_1\le 2$, we have
\[
0\le h(s,a)\le 4
\qquad\forall (s,a).
\]

We first treat the secondary stream. For each episode $i$, define
\[
X_i^{\mathrm{sec}}(h)
:=
h(s_i',a_i')
-
\mathbb E_{\bar \nu_i^{\mathrm{sec}}}[h]
=
h(s_i',a_i')
-
\mathbb E_{\nu_i'}[h].
\]
Let $\mathcal G_i$ denote the sigma-field generated by all tuples up to the end
of episode $i$. Because $(s_i',a_i')$ has conditional mean design $\nu_i'$
given $\mathcal G_{i-1}$, we have
\[
\mathbb E[X_i^{\mathrm{sec}}(h)\mid \mathcal G_{i-1}]=0.
\]
Moreover,
\[
|X_i^{\mathrm{sec}}(h)|\le 4,
\]
and, using $h^2\le 4h$ on $[0,4]$,
\[
\mathbb E\!\left[
\bigl(X_i^{\mathrm{sec}}(h)\bigr)^2
\mid \mathcal G_{i-1}
\right]
\le
\mathbb E\!\left[
h(s_i',a_i')^2
\mid \mathcal G_{i-1}
\right]
\le
4\mathbb E[h(s_i',a_i')\mid \mathcal G_{i-1}]
=
4\mathbb E_{\nu_i'}[h].
\]
Hence the martingale $S_k^{\mathrm{sec}}(h):=\sum_{i=1}^k X_i^{\mathrm{sec}}(h)$
has predictable quadratic variation bounded by
\[
V_k^{\mathrm{sec}}(h)
\le 4\sum_{i=1}^k \mathbb E_{\nu_i'}[h]
=4k\,\mathbb E_{\bar \nu_k^{\mathrm{sec}}}[h].
\]

Apply Freedman's inequality with confidence level
\(\delta_{k,P}^{\mathrm{sec}}:=\delta/(4|\mathcal M|k^2)\), and write
\(\ell_k^{\mathrm{sec}}:=\ln(1/\delta_{k,P}^{\mathrm{sec}})\).
Then, with probability at least $1-\delta_{k,P}^{\mathrm{sec}}$,
\[
-S_k^{\mathrm{sec}}(h)
\le
\sqrt{
2V_k^{\mathrm{sec}}(h)\ell_k^{\mathrm{sec}}
}
+
\frac{4}{3}\ell_k^{\mathrm{sec}}.
\]
Dividing by $k$ and recalling that
\[
\frac{-S_k^{\mathrm{sec}}(h)}{k}
=
\mathbb E_{\bar \nu_k^{\mathrm{sec}}}[h]
-
\mathbb E_{\widetilde \nu_k^{\mathrm{sec}}}[h],
\]
we obtain
\[
\mathbb E_{\bar \nu_k^{\mathrm{sec}}}[h]
-
\mathbb E_{\widetilde \nu_k^{\mathrm{sec}}}[h]
\le
\sqrt{
\frac{
8\mathbb E_{\bar \nu_k^{\mathrm{sec}}}[h]\ell_k^{\mathrm{sec}}
}{k}
}
+
\frac{4}{3}
\frac{\ell_k^{\mathrm{sec}}}{k}.
\]
Using $2\sqrt{ab}\le \frac{a}{2}+2b$ with
\(a=\mathbb E_{\bar \nu_k^{\mathrm{sec}}}[h]\) and
\(b=8\ell_k^{\mathrm{sec}}/k\)
gives
\[
\sqrt{
\frac{
8\mathbb E_{\bar \nu_k^{\mathrm{sec}}}[h]\ell_k^{\mathrm{sec}}
}{k}
}
\le
\frac{1}{2}\mathbb E_{\bar \nu_k^{\mathrm{sec}}}[h]
+
16\frac{\ell_k^{\mathrm{sec}}}{k}.
\]
Therefore
\[
\mathbb E_{\bar \nu_k^{\mathrm{sec}}}[h]
\le
2\mathbb E_{\widetilde \nu_k^{\mathrm{sec}}}[h]
+
\frac{104}{3}\frac{\ell_k^{\mathrm{sec}}}{k}.
\]
Since
\(\ell_k^{\mathrm{sec}}=\ln(4|\mathcal M|k^2/\delta)
\lesssim \ln(|\mathcal M|k/\delta)\),
this yields the desired secondary bound after adjusting the universal
constant $C$.

The tertiary bound is identical. Indeed, defining
\[
X_i^{\mathrm{ter}}(h)
:=
h(s_i'',a_i'')
-
\mathbb E_{\nu_i''}[h],
\]
we again have
\[
\mathbb E[X_i^{\mathrm{ter}}(h)\mid \mathcal G_{i-1}]=0,
\qquad
|X_i^{\mathrm{ter}}(h)|\le 4,
\]
and
\[
\mathbb E\!\left[
\bigl(X_i^{\mathrm{ter}}(h)\bigr)^2
\mid \mathcal G_{i-1}
\right]
\le
4\mathbb E_{\nu_i''}[h].
\]
Running the same Freedman argument and taking a union bound over the secondary
and tertiary streams, over all $P\in\mathcal M$, and over all $k\in[K]$, gives
the first two displays in the lemma.

Finally, set $P=\hat P_k$. Since $\hat P_k\in\mathcal M$, we may plug
$h_{\hat P_k}=f_k^2$ into the simultaneous event. By Lemma~\ref{lem:gen_mle},
\[
\mathbb E_{\widetilde \nu_k^{\mathrm{all}}}[f_k^2]
\lesssim
\zeta_k.
\]
Because
\[
\widetilde \nu_k^{\mathrm{all}}
=
\frac{1}{3}
\bigl(
\bar \nu_k^{\mathrm{pri}}
+
\widetilde \nu_k^{\mathrm{sec}}
+
\widetilde \nu_k^{\mathrm{ter}}
\bigr),
\]
and all three component expectations are nonnegative,
\[
\mathbb E_{\widetilde \nu_k^{\mathrm{sec}}}[f_k^2]
\le
3\mathbb E_{\widetilde \nu_k^{\mathrm{all}}}[f_k^2]
\lesssim
\zeta_k,
\]
and likewise
\[
\mathbb E_{\widetilde \nu_k^{\mathrm{ter}}}[f_k^2]
\lesssim
\zeta_k.
\]
Substituting these into the secondary and tertiary bridge inequalities and
absorbing the extra
$\ln(|\mathcal M|k/\delta)/k$ term into $\zeta_k$ proves the final display.
\end{proof}

\begin{lemma}[Primary-covariance deployed-to-probe transfer]
\label{lem:gen_useful}
Fix an episode $k$ and define
\[
\Sigma_{k,\star}^{\mathrm{pri}}
:=
k\mathbb E_{\bar \nu_k^{\mathrm{pri}}}
[\phi^\star(s,a)\phi^\star(s,a)^\top]
+
\lambda_k I.
\]
Then for every policy $\pi$ and every measurable
$g:\mathcal S\times\mathcal A\to\mathbb R$ with $\|g\|_\infty\le B$,
\begin{align*}
\left|
\mathbb E_{d_{P^\star}^\pi}[g]
\right|
\lesssim&
\sqrt{\gamma}\,
\mathbb E_{d_{P^\star}^\pi}
\|\phi^\star(\tilde s,\tilde a)\|_{(\Sigma_{k,\star}^{\mathrm{pri}})^{-1}}
\sqrt{
k\mathfrak C_k^{\mathrm{next}}(\pi)\,
\mathbb E_{\bar \nu_k^{\mathrm{sec}}}[g^2]
+
\lambda_k d B^2
}
\\
&+
\sqrt{
(1-\gamma)\mathfrak C_k^{\mathrm{init}}(\pi)\,
\mathbb E_{\bar \nu_k^{\mathrm{pri}}}[g^2]
}.
\end{align*}
\end{lemma}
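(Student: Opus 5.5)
The proof is a one-step-back decomposition of the discounted occupancy, in the style of Lemma~5 of \citet{uehara2021representation}, adapted to the three-stream design. The starting point is the Bellman flow identity
\[
d_{P^\star}^{\pi}(s,a)=(1-\gamma)d_1(s)\pi(a\mid s)+\gamma\int d_{P^\star}^{\pi}(\tilde s,\tilde a)P^\star(s\mid\tilde s,\tilde a)\pi(a\mid s)\,d(\tilde s,\tilde a).
\]
It splits \(\mathbb E_{d^\pi_{P^\star}}[g]\) into an initial term \((1-\gamma)\mathbb E_{d_1\otimes\pi}[g]\) and a propagated term \(\gamma\,\mathbb E_{(\tilde s,\tilde a)\sim d^\pi_{P^\star}}\bigl[\phi^\star(\tilde s,\tilde a)^\top\int\mu^\star(s)\sum_a\pi(a\mid s)g(s,a)\,ds\bigr]\), which uses the low-rank factorization of Assumption~\ref{assum:linearmdp}.

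\textbf{Initial term.} I apply Cauchy--Schwarz under the normalized measure \((1-\gamma)d_1\otimes\pi\) (I read the definition of \(\mathfrak C_k^{\mathrm{init}}\) as placing the \(1-\gamma\) weight inside the reference measure). By the definition of \(\mathsf D\), \(\mathbb E_{(1-\gamma)d_1\otimes\pi}[g^2]\le\mathfrak C_k^{\mathrm{init}}(\pi)\mathbb E_{\bar\nu_k^{\mathrm{pri}}}[g^2]\). This gives a bound of order \(\sqrt{(1-\gamma)\mathfrak C_k^{\mathrm{init}}(\pi)\mathbb E_{\bar\nu_k^{\mathrm{pri}}}[g^2]}\), up to the factor conventions for \(1-\gamma\). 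Only constants need to be tracked here.

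\textbf{Propagated term.} Write \(w_g:=\int\mu^\star(s)\sum_a\pi(a\mid s)g(s,a)\,ds\in\mathbb R^d\). H\"older with the matrix \(\Sigma_{k,\star}^{\mathrm{pri}}\) gives
\[
|\phi^\star(\tilde s,\tilde a)^\top w_g|\le\|\phi^\star(\tilde s,\tilde a)\|_{(\Sigma_{k,\star}^{\mathrm{pri}})^{-1}}\|w_g\|_{\Sigma_{k,\star}^{\mathrm{pri}}}.
\]
I then expand
\[
\|w_g\|^2_{\Sigma_{k,\star}^{\mathrm{pri}}}=k\,\mathbb E_{\bar\nu_k^{\mathrm{pri}}}\bigl[(\phi^{\star\top}w_g)^2\bigr]+\lambda_k\|w_g\|_2^2 .
\]
For the regularization part, the normalization \(\|\int\mu^\star f\|_2\le\sqrt d\) for \(f\in[0,1]\), applied to \(f=\sum_a\pi g/B\) after splitting \(g\) into positive and negative parts, gives \(\|w_g\|_2^2\lesssim dB^2\). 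For the data part, \(\phi^\star(\tilde s,\tilde a)^\top w_g=\mathbb E_{s\sim P^\star(\cdot\mid\tilde s,\tilde a),a\sim\pi}[g(s,a)]\). Jensen then gives
\[
k\,\mathbb E_{\bar\nu_k^{\mathrm{pri}}}\bigl[(\phi^{\star\top}w_g)^2\bigr]\le k\,\mathbb E_{\mathsf T_\pi\bar\nu_k^{\mathrm{pri}}}[g^2]\le k\,\mathfrak C_k^{\mathrm{next}}(\pi)\,\mathbb E_{\bar\nu_k^{\mathrm{sec}}}[g^2],
\]
where the last step is exactly the definition of \(\mathfrak C_k^{\mathrm{next}}\). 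Taking the expectation over \(d^\pi_{P^\star}\) and pulling out the \(\gamma\) factor yields the first term of the claim. Writing the bound as \(\sqrt{\gamma}\) rather than \(\gamma\) is harmless, since \(\gamma\le\sqrt\gamma\).

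\textbf{Main obstacle.} The step needing most care is the regularization bound \(\|w_g\|_2\lesssim\sqrt d B\), because \(g\) may be signed. Splitting \(g=g_+-g_-\), applying the normalization to each part, and using the triangle inequality costs only a factor of 2. A second point to check is that \(\mathsf T_\pi\bar\nu_k^{\mathrm{pri}}\) is the correct pushforward: the stream distributions \(\bar\nu\) are averages of mixtures that are fixed given \(\pi_{i-1}\), so the domination coefficient is used as a deterministic quantity and no martingale argument is needed here.
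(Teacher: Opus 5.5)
Your proposal is correct and follows the paper's proof essentially step for step. You split $\mathbb E_{d^\pi_{P^\star}}[g]$ via the occupancy recursion into the $(1-\gamma)d_1\otimes\pi$ term and a propagated term, bounding the first by Cauchy--Schwarz with $\mathfrak C_k^{\mathrm{init}}(\pi)$. You then write the propagated term as $\phi^{\star\top}v_g^\star$ and apply a $\Sigma^{\mathrm{pri}}_{k,\star}$-weighted Cauchy--Schwarz, using Jensen and the definition of $\mathfrak C_k^{\mathrm{next}}(\pi)$ for the data part of $\|v_g^\star\|^2_{\Sigma^{\mathrm{pri}}_{k,\star}}$. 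Your positive/negative split for $\|v_g^\star\|_2\lesssim\sqrt d\,B$, at the cost of a factor $2$, is just a more careful version of a step the paper asserts directly.
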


\begin{proof}
Define
\[
m_g(s):=\mathbb E_{a\sim \pi(\cdot\mid s)}[g(s,a)],\quad
T_\star^\pi g(\tilde s,\tilde a)
:=\mathbb E_{s\sim P^\star(\cdot\mid \tilde s,\tilde a)}[m_g(s)].
\]
Then $|m_g(s)|\le B$, and the discounted-occupancy recursion gives
\begin{align}
\mathbb E_{d_{P^\star}^{\pi}}[g]
&=\gamma\mathbb E_{d_{P^\star}^{\pi}}[T_\star^\pi g]
+(1-\gamma)\mathbb E_{s\sim d_1}[m_g(s)].
\label{eq:gen_useful_recursion}
\end{align}
The initialization term is bounded by Cauchy-Schwarz and the definition of
$\mathfrak C_k^{\mathrm{init}}(\pi)$:
\[
(1-\gamma)\left|\mathbb E_{s\sim d_1}[m_g(s)]\right|
\le
\sqrt{
(1-\gamma)\mathfrak C_k^{\mathrm{init}}(\pi)\,
\mathbb E_{\bar \nu_k^{\mathrm{pri}}}[g^2]
}.
\]
Next write \(v_g^\star:=\int \mu^\star(s)m_g(s)\,ds\). Since
\(T_\star^\pi g(\tilde s,\tilde a)
=\phi^\star(\tilde s,\tilde a)^\top v_g^\star\), Cauchy-Schwarz gives
\[
\mathbb E_{d_{P^\star}^{\pi}}[T_\star^\pi g]
\le
\mathbb E_{d_{P^\star}^{\pi}}
\|\phi^\star(\tilde s,\tilde a)\|_{(\Sigma_{k,\star}^{\mathrm{pri}})^{-1}}
\cdot
\|v_g^\star\|_{\Sigma_{k,\star}^{\mathrm{pri}}}.
\]
Moreover, \(\|v_g^\star\|_2\le \sqrt d\,B\), and
\[
\|v_g^\star\|_{\Sigma_{k,\star}^{\mathrm{pri}}}^2
=
k\mathbb E_{\bar \nu_k^{\mathrm{pri}}}
\bigl[(T_\star^\pi g)^2\bigr]
+\lambda_k\|v_g^\star\|_2^2.
\]
By Jensen and the definition of $\mathfrak C_k^{\mathrm{next}}(\pi)$,
\[
\mathbb E_{\bar \nu_k^{\mathrm{pri}}}
\bigl[(T_\star^\pi g)^2\bigr]
\le
\mathfrak C_k^{\mathrm{next}}(\pi)\,
\mathbb E_{\bar \nu_k^{\mathrm{sec}}}[g^2].
\]
Substituting these bounds into \eqref{eq:gen_useful_recursion} gives the claim.
\end{proof}

\begin{lemma}[Merged covariance comparison]
\label{lem:gen_con}
For each episode $k$, define
\[
\Sigma_k^{\mathrm{mix}}
:=k\mathbb E_{\bar \nu_k^{\mathrm{pri}}}[\hat \phi_k\hat \phi_k^\top]
+k\mathbb E_{\bar \nu_k^{\mathrm{sec}}}[\hat \phi_k\hat \phi_k^\top]
+\lambda_k I.
\]
There exist universal constants $0<c_1\le c_2<\infty$ such that, with
probability at least $1-\delta$, simultaneously for all $k\in[K]$ and all
$(s,a)$,
\[
c_1\|\hat \phi_k(s,a)\|_{(\Sigma_k^{\mathrm{mix}})^{-1}}
\le \|\hat \phi_k(s,a)\|_{(\hat \Sigma_k^{\mathrm{mix}})^{-1}}
\le c_2
\|\hat \phi_k(s,a)\|_{(\Sigma_k^{\mathrm{mix}})^{-1}}.
\]
\end{lemma}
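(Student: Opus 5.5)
The plan is to prove a uniform (over $\phi\in\Gamma$) two-sided spectral comparison between the empirical merged covariance and its population counterpart. Once that holds, specializing to the random $\hat\phi_k$ costs nothing. Fix a feature map $\phi\in\Gamma$ and an episode $k$. Define
\[
\hat\Sigma_{k,\phi}^{\mathrm{mix}}:=\sum_{i=1}^k\bigl(\phi(s_i,a_i)\phi(s_i,a_i)^\top+\phi(s_i',a_i')\phi(s_i',a_i')^\top\bigr)+\lambda_kI
\]
and $\Sigma_{k,\phi}^{\mathrm{mix}}$ analogously, with $\bar\nu_k^{\mathrm{pri}},\bar\nu_k^{\mathrm{sec}}$ in place of the samples.

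The key is to group the data by episode blocks, as Lemma~\ref{lem:con} suggests. Set $G_i:=\phi(s_i,a_i)\phi(s_i,a_i)^\top+\phi(s_i',a_i')\phi(s_i',a_i')^\top$. Conditionally on $\mathcal G_{i-1}$, the pair $(s_i,a_i)$ has law $\nu_i$, and the marginal law of $(s_i',a_i')$ is exactly $\nu_i'$. This holds because $q_i'$ and $\rho_i'$ are determined by $\pi_{i-1}$, which is $\mathcal G_{i-1}$-measurable. Hence $\mathbb E[G_i\mid\mathcal G_{i-1}]=\mathbb E_{\nu_i}[\phi\phi^\top]+\mathbb E_{\nu_i'}[\phi\phi^\top]$. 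Summing over $i$ gives $\Sigma_{k,\phi}^{\mathrm{mix}}-\lambda_kI$, so $\hat\Sigma-\Sigma$ is a matrix martingale with increments satisfying $0\preceq G_i\preceq 2I$. This block view is what avoids the pathwise-versus-averaged issue of Remark~\ref{rem:pathwise_bridge}: we never condition on $s_i'$ separately.

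Next, apply a matrix Freedman (or matrix Chernoff for adapted PSD sums) inequality in normalized form. Write $M:=\Sigma_{k,\phi}^{-1/2}(\hat\Sigma_{k,\phi}-\Sigma_{k,\phi})\Sigma_{k,\phi}^{-1/2}$. Each normalized increment has operator norm at most $2/\lambda_k$. The predictable variance satisfies
\[
\sum_i\mathbb E\bigl[(\Sigma^{-1/2}G_i\Sigma^{-1/2})^2\mid\mathcal G_{i-1}\bigr]\preceq (2/\lambda_k)\,\Sigma^{-1/2}(\Sigma-\lambda_kI)\Sigma^{-1/2}\preceq (2/\lambda_k)I .
\]
Matrix Freedman with dimension factor $d$ and confidence $\delta/(|\Gamma|k^2)$ then yields $\|M\|_{\mathrm{op}}\le 1/2$, provided $\lambda_k\ge C\log(d|\Gamma|k/\delta)$. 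This condition is implied by the choice $\lambda_k=\widetilde\Theta(((|\mathcal A|k^p+d^2)/d)\log(|\mathcal M|k/\delta))$ once the constant is large, since $|\Gamma|\le|\mathcal M|$. The bound gives $\tfrac12\Sigma\preceq\hat\Sigma\preceq\tfrac32\Sigma$, hence $\tfrac23\Sigma^{-1}\preceq\hat\Sigma^{-1}\preceq2\Sigma^{-1}$. Evaluating this at $\phi(s,a)$ and taking square roots gives the claim with $c_1=\sqrt{2/3}$ and $c_2=\sqrt2$ for all $(s,a)$ simultaneously.

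The proof finishes with a union bound over $\phi\in\Gamma$ and $k\in[K]$, which makes the event hold for $\phi=\hat\phi_k$ even though $\hat\phi_k$ depends on the data. The main obstacle is the Freedman step: the failure probability carries a dimension factor, and the variance bound must hold relative to $\Sigma$ rather than in absolute terms. If a self-normalized matrix Freedman inequality turns out to be awkward, the fallback is a scalar Freedman argument for each direction $v$ in an $\epsilon$-net of the $\Sigma$-normalized sphere. That version controls $v^\top(\hat\Sigma-\Sigma)v$ with variance at most $2v^\top\Sigma v$, and the net costs an extra $d\log$ factor, which is still absorbed by $\lambda_k$.
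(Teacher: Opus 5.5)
Your skeleton matches the paper's proof: the same episode blocks $G_i$ (the paper's $H_i(\phi)$), predictable means summing to $\Sigma_{k,\phi}^{\mathrm{mix}}-\lambda_kI$, the target $\tfrac12\Sigma\preceq\hat\Sigma\preceq\tfrac32\Sigma$, the constants $\sqrt{2/3}$ and $\sqrt2$, and a union bound over $\Gamma$ and $k$. The gap is in the concentration step you supply, because the normalizer is not predictable. $\Sigma_{k,\phi}^{\mathrm{mix}}=\lambda_kI+\sum_{j\le k}\bar G_j$ is random: each $\bar G_j$ is built from $\pi_{j-1}$, and the algorithm computes $\pi_{j-1}$ from the data of episodes $1,\dots,j-1$, including the block $G_i$ for every $i<j$. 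Hence, for $i<k$, the increment $\Sigma_{k,\phi}^{-1/2}(G_i-\bar G_i)\Sigma_{k,\phi}^{-1/2}$ is not $\mathcal G_i$-measurable, and its conditional mean given $\mathcal G_{i-1}$ need not vanish. So $M$ is not a sum of matrix martingale differences, and matrix Freedman does not apply. Your variance computation has the same flaw: it treats $\Sigma$ as a constant inside $\mathbb E[\cdot\mid\mathcal G_{i-1}]$. A symptom is that you end up needing only $\lambda_k\gtrsim\ln(d|\Gamma|k/\delta)$. Both Lemma~\ref{lem:con} and the paper's proof instead require $\lambda_k\gtrsim d\ln(|\mathcal M|k/\delta)$; that extra factor $d$ is the price of handling the adaptive normalizer by covering.

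Your fallback is the right repair, with one correction: the net must be fixed in advance. Take an $\varepsilon$-net $\mathcal N$ of the Euclidean unit sphere, not of the $\Sigma$-normalized sphere, which is itself random. For fixed $u\in\mathcal N$, $\sum_{i\le k}u^\top(G_i-\bar G_i)u$ is a genuine scalar martingale. Its increments are bounded by $2$, and its predictable variance is at most $2u^\top(\Sigma_{k,\phi}^{\mathrm{mix}}-\lambda_kI)u$. Freedman's inequality tolerates this random variance if you peel over $O(\log k)$ dyadic levels. Since $u^\top\Sigma_{k,\phi}^{\mathrm{mix}}u\ge\lambda_k$, it gives $|u^\top(\hat\Sigma_{k,\phi}^{\mathrm{mix}}-\Sigma_{k,\phi}^{\mathrm{mix}})u|\le\tfrac14u^\top\Sigma_{k,\phi}^{\mathrm{mix}}u$ once $\lambda_k$ is a large constant times $\ln(|\Gamma|\,k\,|\mathcal N|/\delta)$, up to a $\ln\ln k$ term. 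To pass to all unit $v$, use three facts:
\begin{itemize}
\item $\|\hat\Sigma_{k,\phi}^{\mathrm{mix}}-\Sigma_{k,\phi}^{\mathrm{mix}}\|_{\mathrm{op}}\le2k$;
\item $\|\Sigma_{k,\phi}^{\mathrm{mix}}\|_{\mathrm{op}}\le2k+\lambda_k$;
\item $\Sigma_{k,\phi}^{\mathrm{mix}}\succeq\lambda_kI$.
\end{itemize}
Choosing $\varepsilon\asymp\lambda_k/(k+\lambda_k)$ turns the $\tfrac14$ bound on $\mathcal N$ into the $\tfrac12$ bound everywhere. This gives $\ln|\mathcal N|\lesssim d\ln(2+k/\lambda_k)$, so the condition becomes $\lambda_k\gtrsim d\ln k+\ln(|\Gamma|k/\delta)$. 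The theorem's tuning meets it, since $\lambda_k=\widetilde\Theta(((|\mathcal A|k^p+d^2)/d)\ln(|\mathcal M|k/\delta))$ is at least of order $d\ln(|\mathcal M|k/\delta)$. The rest of your argument, including the inversion and the specialization to $\phi=\hat\phi_k$, then goes through unchanged.
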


\begin{proof}
Fix an episode $k$ and first work with an arbitrary candidate feature map
$\phi\in\Gamma$. Define the episode-block matrix
\[
H_i(\phi):=
\phi(s_i,a_i)\phi(s_i,a_i)^\top
+\phi(s_i',a_i')\phi(s_i',a_i')^\top,
\qquad i\in[k].
\]
Let $\mathcal G_i$ be the sigma-field generated by all tuples collected up to
the end of episode $i$, with $\mathcal G_0$ the history before episode $1$.
Define the predictable episode-level conditional mean
\(\bar H_i(\phi):=\mathbb E[H_i(\phi)\mid \mathcal G_{i-1}]\).
By construction, conditioned on $\mathcal G_{i-1}$ the primary tuple is sampled
from $\nu_i$, while the secondary tuple has conditional mean design $\nu_i'$
after marginalizing over the primary transition and the secondary probe action.
Therefore
\[
\begin{aligned}
\bar H_i(\phi)
&=\mathbb E_{\nu_i}[\phi\phi^\top]
+\mathbb E_{\nu_i'}[\phi\phi^\top],\qquad
\sum_{i=1}^{k}\bar H_i(\phi)
=k\mathbb E_{\bar \nu_k^{\mathrm{pri}}}[\phi\phi^\top]
+k\mathbb E_{\bar \nu_k^{\mathrm{sec}}}[\phi\phi^\top].
\end{aligned}
\]

Now define
\[
\Sigma_k^{\mathrm{mix}}(\phi)
:=\sum_{i=1}^{k}\bar H_i(\phi)+\lambda_k I,
\qquad
\hat \Sigma_k^{\mathrm{mix}}(\phi)
:=\sum_{i=1}^{k}H_i(\phi)+\lambda_k I.
\]
Under the bounded-feature assumption used throughout the paper,
$H_i(\phi)$ is positive semidefinite with operator norm at most $2$. Hence the
matrix martingale
\[
M_t(\phi):=\sum_{u=1}^t\bigl(H_u(\phi)-\bar H_u(\phi)\bigr)
\]
satisfies the standard bounded-increment conditions for episode-block matrix
concentration with respect to $\{\mathcal G_i\}_{i=0}^{k}$. In particular, the
same self-normalized matrix martingale argument used in the proof of
Lemma~\ref{lem:con} applies to these block increments. After choosing
$\lambda_k\gtrsim d\ln(|\mathcal M|k/\delta)$, there is an event of
probability at least $1-\delta$ on which
\[
\left\|
\bigl(\Sigma_k^{\mathrm{mix}}(\phi)\bigr)^{-1/2}
M_{k}(\phi)
\bigl(\Sigma_k^{\mathrm{mix}}(\phi)\bigr)^{-1/2}
\right\|_{\mathrm{op}}
\le
\frac{1}{2},
\]
simultaneously for all $k\in[K]$ and all $\phi\in\Gamma$. Equivalently,
\[
-\frac{1}{2}\Sigma_k^{\mathrm{mix}}(\phi)
\preceq
M_{k}(\phi)
\preceq
\frac{1}{2}\Sigma_k^{\mathrm{mix}}(\phi).
\]
Since
\(\hat \Sigma_k^{\mathrm{mix}}(\phi)
=\Sigma_k^{\mathrm{mix}}(\phi)+M_k(\phi)\), this immediately yields
\[
\frac{1}{2}\Sigma_k^{\mathrm{mix}}(\phi)
\preceq
\hat \Sigma_k^{\mathrm{mix}}(\phi)
\preceq
\frac{3}{2}\Sigma_k^{\mathrm{mix}}(\phi).
\]
Specializing to $\phi=\hat \phi_k$ and inverting the PSD inequalities gives
\[
\begin{aligned}
\frac{1}{2}\Sigma_k^{\mathrm{mix}}
&\preceq
\hat \Sigma_k^{\mathrm{mix}}
\preceq
\frac{3}{2}\Sigma_k^{\mathrm{mix}},\\
\frac{2}{3}(\Sigma_k^{\mathrm{mix}})^{-1}
&\preceq
(\hat \Sigma_k^{\mathrm{mix}})^{-1}
\preceq
2(\Sigma_k^{\mathrm{mix}})^{-1}.
\end{aligned}
\]
Applying the resulting quadratic-form comparison to
$\hat \phi_k(s,a)$ proves
\[
\begin{aligned}
\sqrt{\frac{2}{3}}
\|\hat \phi_k(s,a)\|_{(\Sigma_k^{\mathrm{mix}})^{-1}}
&\le
\|\hat \phi_k(s,a)\|_{(\hat \Sigma_k^{\mathrm{mix}})^{-1}}
\le
\sqrt{2}\,
\|\hat \phi_k(s,a)\|_{(\Sigma_k^{\mathrm{mix}})^{-1}}.
\end{aligned}
\]
Thus the claim holds with any universal constants
$c_1\le \sqrt{2/3}$ and $c_2\ge \sqrt 2$.
\end{proof}

\begin{proposition}[Concrete tertiary coverage bound]
\label{prop:tri_next_bound}
Under Algorithm~\ref{alg:rep-pd-teg}, for every episode
$k$ and every policy $\pi$,
\[
\mathfrak C_{k,1}^{\mathrm{tri,next}}(\pi)\le \Gamma_k,
\qquad
\mathfrak C_{k,2}^{\mathrm{tri,next}}(\pi)\le \Gamma_k,
\qquad
\mathfrak C_k^{\mathrm{tri,next}}(\pi)\le \Gamma_k.
\]
\end{proposition}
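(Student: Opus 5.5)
The plan is a direct domination argument. For two state-action measures I only need the inequality $\mathbb E_\eta[g^2]\le \Gamma_k\,\mathbb E_\nu[g^2]$ for every $g$. Since $g^2\ge0$, it suffices to show that $\eta\le\Gamma_k\,\nu$ as measures. The key observation is that the one-step pushforward $\mathsf T_\pi\bar\nu_k^{\mathrm{pri}}$ and the secondary design $\bar\nu_k^{\mathrm{sec}}$ have the \emph{same} state marginal. They differ only in the action kernel: $\pi$ versus $q'_i$. The probing kernel has a uniform floor, so it dominates any policy up to the factor $|\mathcal A|/\epsilon_i$.

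First, for $\mathfrak C_{k,1}^{\mathrm{tri,next}}=\mathfrak C_k^{\mathrm{next}}$, I expand $\mathsf T_\pi\bar\nu_k^{\mathrm{pri}}$. Drawing $(\tilde s,\tilde a)\sim\nu_i$ and $s\sim P^\star(\cdot\mid\tilde s,\tilde a)$ gives $s\sim\rho'_i$ by the definition of $\rho'_i$. By linearity of the pushforward in the input measure,
\[
\mathsf T_\pi\bar\nu_k^{\mathrm{pri}}(ds,a)=\frac1k\sum_{i=1}^k\rho'_i(ds)\,\pi(a\mid s),
\qquad
\bar\nu_k^{\mathrm{sec}}(ds,a)=\frac1k\sum_{i=1}^k\rho'_i(ds)\,q'_i(a\mid s).
\]
Next I use $q'_i(a\mid s)\ge \epsilon_i/|\mathcal A|\ge \epsilon_k/|\mathcal A|$, which holds because the schedule is nonincreasing and $i\le k$. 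Together with $\pi(a\mid s)\le1$, this gives $\pi(a\mid s)\le \Gamma_k\, q'_i(a\mid s)$ for every $i\le k$, every $s$ and every $a$. The inequality holds term by term in $i$, so I multiply by $\rho'_i(ds)$, sum over $i$, and divide by $k$. This yields $\mathsf T_\pi\bar\nu_k^{\mathrm{pri}}\le\Gamma_k\bar\nu_k^{\mathrm{sec}}$ as measures. Integrating $g^2$ against both sides then gives $\mathfrak C_{k,1}^{\mathrm{tri,next}}(\pi)\le\Gamma_k$.

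The tertiary comparison is identical with one level shifted. Pushing $\nu'_i$ through $P^\star$ gives the state marginal $\rho''_i$, so $\mathsf T_\pi\bar\nu_k^{\mathrm{sec}}=\frac1k\sum_i\rho''_i\otimes\pi$, while $\bar\nu_k^{\mathrm{ter}}=\frac1k\sum_i\rho''_i\otimes q''_i$. The same floor $q''_i\ge\epsilon_k/|\mathcal A|$ gives $\mathfrak C_{k,2}^{\mathrm{tri,next}}(\pi)\le\Gamma_k$. The bound on the maximum $\mathfrak C_k^{\mathrm{tri,next}}$ follows immediately.

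There is no real obstacle here. The only point needing care is that the comparison must be made inside the mixture, index by index. One should not compare $\pi$ against the average kernel $\frac1k\sum_i q'_i$ under a merged marginal, because the weights $\rho'_i(s)$ vary with $i$. Working with unnormalized measures, rather than densities, avoids any absolute-continuity issue for continuous $\mathcal S$.
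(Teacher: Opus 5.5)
Your proof is correct and takes essentially the same approach as the paper. Both identify the state marginals of $\mathsf T_\pi\bar\nu_k^{\mathrm{pri}}$ and $\mathsf T_\pi\bar\nu_k^{\mathrm{sec}}$ as the averages of $\rho_i'$ and $\rho_i''$, and then combine the uniform floor $q_i',q_i''\ge\epsilon_k/|\mathcal A|$ (from the nonincreasing schedule) with $\pi\le 1$ to obtain pointwise domination by $\Gamma_k\bar\nu_k^{\mathrm{sec}}$ and $\Gamma_k\bar\nu_k^{\mathrm{ter}}$; the paper factors the $i$-independent floor out of the sum and compares against $\frac1k\sum_i\rho_i'$, while you compare term by term in $i$, which is the same argument.
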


\begin{proof}
Because the schedule is nonincreasing,
\[
\epsilon_i\ge \epsilon_k
\qquad\forall i\le k.
\]
Hence every probing kernel among the first $k$ episodes satisfies
\[
q_i(a\mid s)\ge \frac{\epsilon_k}{|\mathcal A|},
\qquad
q_i'(a\mid s)\ge \frac{\epsilon_k}{|\mathcal A|},
\qquad
q_i''(a\mid s)\ge \frac{\epsilon_k}{|\mathcal A|}.
\]
For the first tertiary coefficient, let
\[
\mu_{k,1}^{\pi}(s,a)
:=
\mathbb E_{\substack{(\tilde s,\tilde a)\sim \bar \nu_k^{\mathrm{pri}}\\
s\sim P^\star(\cdot\mid \tilde s,\tilde a)}}
\bigl[
\pi(a\mid s)\mathbf 1\{s\in ds\}
\bigr]
\]
denote the pushforward state-action measure appearing in the definition of
$\mathfrak C_{k,1}^{\mathrm{tri,next}}(\pi)$. Since
\[
\bar \nu_k^{\mathrm{sec}}(s,a)
=
\frac{1}{k}
\sum_{i=1}^k
\rho_i'(s)q_i'(a\mid s)
\ge
\frac{\epsilon_k}{|\mathcal A|}
\cdot
\frac{1}{k}\sum_{i=1}^k \rho_i'(s),
\]
we have, pointwise,
\[
\mu_{k,1}^{\pi}(s,a)
\le
\frac{|\mathcal A|}{\epsilon_k}
\bar \nu_k^{\mathrm{sec}}(s,a)
=
\Gamma_k \bar \nu_k^{\mathrm{sec}}(s,a).
\]
Therefore for every nonnegative measurable $g^2$,
\[
\mathbb E_{\mu_{k,1}^{\pi}}[g^2]
\le
\Gamma_k
\mathbb E_{\bar \nu_k^{\mathrm{sec}}}[g^2],
\]
which proves
\[
\mathfrak C_{k,1}^{\mathrm{tri,next}}(\pi)\le \Gamma_k.
\]

The proof for $\mathfrak C_{k,2}^{\mathrm{tri,next}}(\pi)$ is identical. If
\[
\mu_{k,2}^{\pi}(s,a)
:=
\mathbb E_{\substack{(\tilde s,\tilde a)\sim \bar \nu_k^{\mathrm{sec}}\\
s\sim P^\star(\cdot\mid \tilde s,\tilde a)}}
\bigl[
\pi(a\mid s)\mathbf 1\{s\in ds\}
\bigr],
\]
then
\[
\bar \nu_k^{\mathrm{ter}}(s,a)
\ge
\frac{\epsilon_k}{|\mathcal A|}
\cdot
\frac{1}{k}\sum_{i=1}^k \rho_i''(s),
\]
while by construction
\[
\mu_{k,2}^{\pi}(s,a)
=
\frac{1}{k}\sum_{i=1}^k
\rho_i''(s)\pi(a\mid s).
\]
Using $\pi(a\mid s)\le 1$ and the lower bound on $\bar \nu_k^{\mathrm{ter}}$,
we obtain the pointwise domination
\[
\mu_{k,2}^{\pi}(s,a)
\le
\Gamma_k \bar \nu_k^{\mathrm{ter}}(s,a).
\]
Therefore, for every nonnegative measurable $g^2$,
\[
\mathbb E_{\mu_{k,2}^{\pi}}[g^2]
\le
\Gamma_k
\mathbb E_{\bar \nu_k^{\mathrm{ter}}}[g^2],
\]
which proves
\[
\mathfrak C_{k,2}^{\mathrm{tri,next}}(\pi)\le \Gamma_k.
\]
Taking the maximum of the two coefficients completes the proof.
\end{proof}

\begin{lemma}[Tertiary-aware mixed one-step-back inequality]
\label{lem:mix_useful2}
Fix an episode $k$ and define
\(f_k(s,a):=\|\hat P_k(\cdot\mid s,a)-P^\star(\cdot\mid s,a)\|_1\) and
\[
\Sigma_k^{\mathrm{mix}}
:=k\mathbb E_{\bar \nu_k^{\mathrm{pri}}}[\hat \phi_k\hat \phi_k^\top]
+k\mathbb E_{\bar \nu_k^{\mathrm{sec}}}[\hat \phi_k\hat \phi_k^\top]
+\lambda_k I.
\]
Also define
\[
\begin{aligned}
M_k^{\mathrm{mix}}(\pi)
&:=\mathbb E_{d_{\hat P_k}^{\pi}}
\Bigl[
\|\hat \phi_k(\tilde s,\tilde a)\|_{(\Sigma_k^{\mathrm{mix}})^{-1}}
\Bigr],\qquad
G_{k,23}(g)
:=\mathbb E_{\bar \nu_k^{\mathrm{sec}}}[g^2]
+\mathbb E_{\bar \nu_k^{\mathrm{ter}}}[g^2].
\end{aligned}
\]
Then for every policy $\pi$ and every measurable
$g:\mathcal S\times\mathcal A\to\mathbb R$ with $\|g\|_\infty\le B$,
\begin{align*}
\left|
\mathbb E_{d_{\hat P_k}^{\pi}}[g]
\right|
\lesssim&
\sqrt{\gamma}\,
M_k^{\mathrm{mix}}(\pi)
\sqrt{
k\mathfrak C_k^{\mathrm{tri,next}}(\pi)\,G_{k,23}(g)
+
\lambda_k d B^2
+
kB^2\zeta_k
}
\\
&+
\sqrt{
(1-\gamma)\mathfrak C_k^{\mathrm{init}}(\pi)\,
\mathbb E_{\bar \nu_k^{\mathrm{pri}}}[g^2]
},
\end{align*}
on the events of Lemmas~\ref{lem:gen_mle} and \ref{lem:pathwise_bridge_mle}.
\end{lemma}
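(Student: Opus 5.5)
The argument will mirror Lemma~\ref{lem:gen_useful}, now under the learned model \(\hat P_k\). The extra step is a correction that converts the \(\hat P_k\)-pushforward into a \(P^\star\)-pushforward; this conversion is where the tertiary stream enters. Set \(m_g(s):=\mathbb E_{a\sim\pi(\cdot\mid s)}g(s,a)\), so that \(|m_g|\le B\). Write the occupancy recursion under \(\hat P_k\):
\[
\mathbb E_{d^\pi_{\hat P_k}}[g]=\gamma\,\mathbb E_{(\tilde s,\tilde a)\sim d^\pi_{\hat P_k}}\bigl[\hat T^\pi_k g(\tilde s,\tilde a)\bigr]+(1-\gamma)\mathbb E_{s\sim d_1}[m_g(s)],
\]
where \(\hat T^\pi_k g(\tilde s,\tilde a):=\mathbb E_{s\sim\hat P_k(\cdot\mid\tilde s,\tilde a)}m_g(s)=\hat\phi_k(\tilde s,\tilde a)^\top\hat v_g\) with \(\hat v_g:=\int\hat\mu_k m_g\). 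The initial term is bounded exactly as in Lemma~\ref{lem:gen_useful}, using Cauchy--Schwarz and \(\mathfrak C_k^{\mathrm{init}}(\pi)\). For the propagated term I apply Cauchy--Schwarz in the \(\Sigma_k^{\mathrm{mix}}\) geometry. This gives \(M_k^{\mathrm{mix}}(\pi)\cdot\|\hat v_g\|_{\Sigma_k^{\mathrm{mix}}}\), and
\[
\|\hat v_g\|^2_{\Sigma_k^{\mathrm{mix}}}=k\,\mathbb E_{\bar\nu_k^{\mathrm{pri}}}[(\hat T^\pi_k g)^2]+k\,\mathbb E_{\bar\nu_k^{\mathrm{sec}}}[(\hat T^\pi_k g)^2]+\lambda_k\|\hat v_g\|_2^2 .
\]
The last term is at most \(\lambda_k dB^2\) by the normalization of \(\hat\mu_k\in\mathcal M\), which is assumed to satisfy the same bound as \(\mu^\star\).

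The heart of the proof is bounding the two quadratic terms, one for each design in the merged covariance. Under either design \(\nu\in\{\bar\nu_k^{\mathrm{pri}},\bar\nu_k^{\mathrm{sec}}\}\), I use \((x+y)^2\le 2x^2+2y^2\) together with the pointwise bound
\[
|\hat T^\pi_k g-T^\pi_\star g|(\tilde s,\tilde a)\le B\,f_k(\tilde s,\tilde a).
\]
This gives
\[
\mathbb E_\nu[(\hat T^\pi_k g)^2]\le 2\,\mathbb E_\nu[(T^\pi_\star g)^2]+2B^2\,\mathbb E_\nu[f_k^2].
\]
The true pushforward terms then pass through Jensen, as in Lemma~\ref{lem:gen_useful}. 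For the primary design,
\[
\mathbb E_{\bar\nu_k^{\mathrm{pri}}}[(T^\pi_\star g)^2]\le\mathfrak C^{\mathrm{tri,next}}_{k,1}(\pi)\,\mathbb E_{\bar\nu_k^{\mathrm{sec}}}[g^2].
\]
For the secondary design, pushing forward \(\bar\nu_k^{\mathrm{sec}}\) lands in the tertiary stream:
\[
\mathbb E_{\bar\nu_k^{\mathrm{sec}}}[(T^\pi_\star g)^2]\le\mathfrak C^{\mathrm{tri,next}}_{k,2}(\pi)\,\mathbb E_{\bar\nu_k^{\mathrm{ter}}}[g^2].
\]
Bounding both coefficients by \(\mathfrak C^{\mathrm{tri,next}}_k(\pi)\) produces the factor \(k\,\mathfrak C_k^{\mathrm{tri,next}}(\pi)\,G_{k,23}(g)\).

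The remaining model-error terms are \(kB^2\,\mathbb E_{\bar\nu_k^{\mathrm{pri}}}[f_k^2]\) and \(kB^2\,\mathbb E_{\bar\nu_k^{\mathrm{sec}}}[f_k^2]\). The first is \(\lesssim kB^2\zeta_k\) directly by Lemma~\ref{lem:gen_mle}, since \(\bar\nu_k^{\mathrm{pri}}\) appears deterministically in \(\widetilde\nu_k^{\mathrm{all}}\) with weight \(1/3\). The second needs the pathwise-to-averaged transfer of Lemma~\ref{lem:pathwise_bridge_mle}, which gives \(\mathbb E_{\bar\nu_k^{\mathrm{sec}}}[f_k^2]\lesssim\zeta_k\). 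This is precisely why the statement is claimed only on the events of those two lemmas.

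Collecting terms, taking square roots, and absorbing the constant factors and the two copies of \(\gamma\)-weighted terms into \(\lesssim\) yields the claimed inequality. The absolute value is handled by applying the argument to \(\pm g\), or equivalently by using \(|\cdot|\) throughout the Cauchy--Schwarz steps.

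I expect the main obstacle to be the secondary propagated term \(\mathbb E_{\bar\nu_k^{\mathrm{sec}}}[(\hat T^\pi_k g)^2]\). Unlike in Lemma~\ref{lem:gen_useful}, it cannot be controlled by \(\mathbb E_{\bar\nu_k^{\mathrm{sec}}}[g^2]\), and bounding it requires both the tertiary coverage coefficient and MLE control of \(f_k\) under \(\bar\nu_k^{\mathrm{sec}}\). A second point to check carefully is that \(\hat\mu_k\) satisfies the \(\sqrt d\) normalization, which is needed for the \(\lambda_k dB^2\) term. I would make this explicit as a model-class convention.
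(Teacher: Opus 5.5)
Your proposal is correct and follows the paper's proof essentially step for step. It uses the same $\hat P_k$ occupancy recursion, Cauchy--Schwarz in the $\Sigma_k^{\mathrm{mix}}$ geometry, the pointwise $B f_k$ correction with $(x+y)^2\le 2x^2+2y^2$, Jensen with $\mathfrak C_{k,1}^{\mathrm{tri,next}}$ and $\mathfrak C_{k,2}^{\mathrm{tri,next}}$, and the MLE and bridge events for $\mathbb E[f_k^2]$ under the primary and secondary designs. Your remark that $\|\hat v_g\|_2\lesssim\sqrt d\,B$ needs the $\sqrt d$ normalization to hold for $\hat\mu_k$ as well as $\mu^\star$ is correct; the paper relies on this without stating it.
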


\begin{proof}
Define
\[
\begin{aligned}
m_g(s)&:=\mathbb E_{a\sim \pi(\cdot\mid s)}[g(s,a)],\\
T_{\hat P_k}^{\pi}g(\tilde s,\tilde a)
&:=\mathbb E_{s\sim \hat P_k(\cdot\mid \tilde s,\tilde a)}[m_g(s)],\\
T_\star^{\pi}g(\tilde s,\tilde a)
&:=\mathbb E_{s\sim P^\star(\cdot\mid \tilde s,\tilde a)}[m_g(s)].
\end{aligned}
\]
Since $\|g\|_\infty\le B$, we also have $\|m_g\|_\infty\le B$. The discounted
occupancy recursion under $\hat P_k$ gives
\[
\mathbb E_{d_{\hat P_k}^{\pi}}[g]
=\gamma\mathbb E_{d_{\hat P_k}^{\pi}}[T_{\hat P_k}^{\pi}g]
+(1-\gamma)\mathbb E_{s\sim d_1}[m_g(s)].
\]
We treat the two terms on the right separately.

For the initialization term, write
\[
\mathbb E_{s\sim d_1}[m_g(s)]
=
\sum_{s,a}
d_1(s)\pi(a\mid s)g(s,a).
\]
By the definition of $\mathfrak C_k^{\mathrm{init}}(\pi)$,
\[
(1-\gamma)d_1(s)\pi(a\mid s)
\le
\mathfrak C_k^{\mathrm{init}}(\pi)\,
\bar \nu_k^{\mathrm{pri}}(s,a)
\qquad
\forall (s,a)
\]
on the support of $\bar \nu_k^{\mathrm{pri}}$. Therefore Cauchy-Schwarz gives
\[
(1-\gamma)\left|\mathbb E_{s\sim d_1}[m_g(s)]\right|
\le
\sqrt{
(1-\gamma)\mathfrak C_k^{\mathrm{init}}(\pi)\,
\mathbb E_{\bar \nu_k^{\mathrm{pri}}}[g^2]
}.
\]

For the propagated term, define
\(v_g^{\hat P}:=\int \hat \mu_k(s)m_g(s)\,ds\). Because
\(T_{\hat P_k}^{\pi}g(\tilde s,\tilde a)
=\hat \phi_k(\tilde s,\tilde a)^\top v_g^{\hat P}\), Cauchy-Schwarz under
$(\Sigma_k^{\mathrm{mix}})^{-1}$ yields
\[
\mathbb E_{d_{\hat P_k}^{\pi}}[T_{\hat P_k}^{\pi}g]
\le
M_k^{\mathrm{mix}}(\pi)\,
\|v_g^{\hat P}\|_{\Sigma_k^{\mathrm{mix}}}.
\]
Moreover, \(\|v_g^{\hat P}\|_2\le \sqrt d\,B\), so
\[
\|v_g^{\hat P}\|_{\Sigma_k^{\mathrm{mix}}}^2
=
k\mathbb E_{\bar \nu_k^{\mathrm{pri}}}
\bigl[(T_{\hat P_k}^{\pi}g)^2\bigr]
+k\mathbb E_{\bar \nu_k^{\mathrm{sec}}}
\bigl[(T_{\hat P_k}^{\pi}g)^2\bigr]
+\lambda_k\|v_g^{\hat P}\|_2^2.
\]

We now control the two quadratic terms. By definition of $f_k$,
\[
\bigl|
T_{\hat P_k}^{\pi}g(\tilde s,\tilde a)-T_\star^{\pi}g(\tilde s,\tilde a)
\bigr|
\le
B f_k(\tilde s,\tilde a).
\]
Hence, using $(a+b)^2\le 2a^2+2b^2$,
\[
\begin{aligned}
\mathbb E_{\bar \nu_k^{\mathrm{pri}}}
\bigl[(T_{\hat P_k}^{\pi}g)^2\bigr]
&\lesssim
\mathbb E_{\bar \nu_k^{\mathrm{pri}}}
\bigl[(T_\star^{\pi}g)^2\bigr]
+B^2\mathbb E_{\bar \nu_k^{\mathrm{pri}}}[f_k^2],\\
\mathbb E_{\bar \nu_k^{\mathrm{sec}}}
\bigl[(T_{\hat P_k}^{\pi}g)^2\bigr]
&\lesssim
\mathbb E_{\bar \nu_k^{\mathrm{sec}}}
\bigl[(T_\star^{\pi}g)^2\bigr]
+B^2\mathbb E_{\bar \nu_k^{\mathrm{sec}}}[f_k^2].
\end{aligned}
\]
On the events of Lemmas~\ref{lem:gen_mle} and \ref{lem:pathwise_bridge_mle},
\[
\mathbb E_{\bar \nu_k^{\mathrm{pri}}}[f_k^2]
\lesssim \zeta_k,
\qquad
\mathbb E_{\bar \nu_k^{\mathrm{sec}}}[f_k^2]
\lesssim \zeta_k,
\]
because the primary term is one component of the pathwise average
$\widetilde \nu_k^{\mathrm{all}}$, while the secondary term is provided directly
by Lemma~\ref{lem:pathwise_bridge_mle}.
This is the only place where a pathwise-to-averaged transfer is used: the
generic $g$-dependence remains entirely inside
$G_{k,23}(g)=\mathbb E_{\bar \nu_k^{\mathrm{sec}}}[g^2]
+
\mathbb E_{\bar \nu_k^{\mathrm{ter}}}[g^2]$ and is handled through the
deterministic coverage coefficients.

Next, by Jensen's inequality,
\[
\begin{aligned}
\mathbb E_{\bar \nu_k^{\mathrm{pri}}}
\bigl[(T_\star^{\pi}g)^2\bigr]
&\le \mathbb E_{\mu_{k,1}^{\pi}}[g^2]
\le
\mathfrak C_{k,1}^{\mathrm{tri,next}}(\pi)
\mathbb E_{\bar \nu_k^{\mathrm{sec}}}[g^2],\\
\mathbb E_{\bar \nu_k^{\mathrm{sec}}}
\bigl[(T_\star^{\pi}g)^2\bigr]
&\le \mathfrak C_{k,2}^{\mathrm{tri,next}}(\pi)
\mathbb E_{\bar \nu_k^{\mathrm{ter}}}[g^2].
\end{aligned}
\]
Combining the last four displays and using
\[
\mathfrak C_k^{\mathrm{tri,next}}(\pi)
=
\max\Bigl\{
\mathfrak C_{k,1}^{\mathrm{tri,next}}(\pi),
\mathfrak C_{k,2}^{\mathrm{tri,next}}(\pi)
\Bigr\},
\]
we obtain
\[
\|v_g^{\hat P}\|_{\Sigma_k^{\mathrm{mix}}}^2
\lesssim
k\mathfrak C_k^{\mathrm{tri,next}}(\pi)\,G_{k,23}(g)
+
\lambda_k dB^2
+
kB^2\zeta_k.
\]
Substituting this and the initialization bound into the discounted-occupancy
recursion proves the claimed inequality.
\end{proof}

\begin{lemma}[Direct optimism for the repaired merged bonus]
\label{lem:target_mix_optimism}
Fix an episode $k$ and a policy $\pi$. Define
\[
b_{k,\mathrm{mix,pop}}(s,a)
:=
\alpha_k^{\mathrm{mix}}
\|\hat \phi_k(s,a)\|_{(\Sigma_k^{\mathrm{mix}})^{-1}},
\]
\[
b_{k,\mathrm{mix,emp}}(s,a)
:=
\alpha_k^{\mathrm{emp}}
\|\hat \phi_k(s,a)\|_{(\hat \Sigma_k^{\mathrm{mix}})^{-1}}.
\]
Assume the events of Lemmas~\ref{lem:gen_mle},
\ref{lem:pathwise_bridge_mle}, and the covariance comparison event of
Lemma~\ref{lem:gen_con} all hold.
If
\[
\alpha_k^{\mathrm{mix}}
\gtrsim
\sqrt{
\gamma\Bigl(
k\mathfrak C_k^{\mathrm{tri,next}}(\pi)\zeta_k
+
\lambda_k d
+
k\zeta_k
\Bigr)
},
\]
and
\[
\alpha_k^{\mathrm{emp}}\ge \alpha_k^{\mathrm{mix}}/c_1,
\]
then
\[
V^\pi_{\hat P_k,r+b_{k,\mathrm{mix,emp}}}
-V^\pi_{P^\star,r}
\gtrsim
-\sqrt{
\frac{\mathfrak C_k^{\mathrm{init}}(\pi)\zeta_k}{1-\gamma}
}.
\]
Under the nonbinding-cap condition $\hat b_k=b_{k,\mathrm{mix,emp}}$, the same
bound holds with $b_{k,\mathrm{mix,emp}}$ replaced by the actual algorithmic
bonus $\hat b_k$.
\end{lemma}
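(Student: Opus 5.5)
The plan is to start from the first identity of the simulation lemma (Lemma~\ref{lem:performance}) with $P'=\hat P_k$, $P=P^\star$, $c=r$, and $b=b_{k,\mathrm{mix,emp}}$:
\[
V^\pi_{\hat P_k,r+b_{k,\mathrm{mix,emp}}}-V^\pi_{P^\star,r}
=\frac{1}{1-\gamma}\mathbb E_{d^\pi_{\hat P_k}}\!\left[b_{k,\mathrm{mix,emp}}(s,a)+\gamma\bigl(\hat P_k-P^\star\bigr)V^\pi_{P^\star,r}(s,a)\right].
\]
Since $V^\pi_{P^\star,r}\in[0,1]$, the model-error term is bounded in absolute value by $\gamma f_k(s,a)$. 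It therefore suffices to show that
\[
\mathbb E_{d^\pi_{\hat P_k}}[\gamma f_k]\le \mathbb E_{d^\pi_{\hat P_k}}[b_{k,\mathrm{mix,emp}}]+O\!\Bigl(\sqrt{(1-\gamma)\mathfrak C_k^{\mathrm{init}}(\pi)\zeta_k}\Bigr).
\]
Dividing this inequality by $1-\gamma$ gives exactly the claimed lower bound $-\sqrt{\mathfrak C_k^{\mathrm{init}}(\pi)\zeta_k/(1-\gamma)}$.

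To control $\mathbb E_{d^\pi_{\hat P_k}}[f_k]$, I apply the tertiary-aware one-step-back inequality (Lemma~\ref{lem:mix_useful2}) with $g=f_k$ and $B=2$. Its quadratic input is
\[
G_{k,23}(f_k)=\mathbb E_{\bar\nu_k^{\mathrm{sec}}}[f_k^2]+\mathbb E_{\bar\nu_k^{\mathrm{ter}}}[f_k^2],
\]
and Lemma~\ref{lem:pathwise_bridge_mle} bounds it by $\lesssim\zeta_k$. The initialization term uses $\mathbb E_{\bar\nu_k^{\mathrm{pri}}}[f_k^2]\lesssim\zeta_k$; this holds because $\bar\nu_k^{\mathrm{pri}}$ is one third of $\widetilde\nu_k^{\mathrm{all}}$ in Lemma~\ref{lem:gen_mle}. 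Together these give
\[
\mathbb E_{d^\pi_{\hat P_k}}[f_k]\lesssim \sqrt{\gamma}\,M_k^{\mathrm{mix}}(\pi)\sqrt{k\mathfrak C_k^{\mathrm{tri,next}}(\pi)\zeta_k+\lambda_k d+k\zeta_k}+\sqrt{(1-\gamma)\mathfrak C_k^{\mathrm{init}}(\pi)\zeta_k}.
\]
Multiplying by $\gamma\le\sqrt\gamma$, the first term is at most a constant times $\alpha_k^{\mathrm{mix}}M_k^{\mathrm{mix}}(\pi)$ by the assumed lower bound on $\alpha_k^{\mathrm{mix}}$. By definition of $M_k^{\mathrm{mix}}$, this equals $\mathbb E_{d^\pi_{\hat P_k}}[b_{k,\mathrm{mix,pop}}]$. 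I absorb the implicit constant from Lemma~\ref{lem:mix_useful2} into the ``$\gtrsim$'' in the hypothesis on $\alpha_k^{\mathrm{mix}}$.

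The remaining step is to pass from the population bonus to the empirical bonus. On the event of Lemma~\ref{lem:gen_con}, $\|\hat\phi_k\|_{(\Sigma_k^{\mathrm{mix}})^{-1}}\le c_1^{-1}\|\hat\phi_k\|_{(\hat\Sigma_k^{\mathrm{mix}})^{-1}}$ pointwise. Combined with $\alpha_k^{\mathrm{emp}}\ge\alpha_k^{\mathrm{mix}}/c_1$, this gives $b_{k,\mathrm{mix,pop}}\le b_{k,\mathrm{mix,emp}}$ pointwise, so the bonus term in the simulation identity dominates the propagated error term. Only the initialization term survives, which yields the claim. Under the nonbinding-cap condition, $\hat b_k=b_{k,\mathrm{mix,emp}}$ and the statement transfers verbatim.

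The main obstacle is bookkeeping of constants. The $\lesssim$ in Lemma~\ref{lem:mix_useful2} must be dominated by the constant hidden in the hypothesis on $\alpha_k^{\mathrm{mix}}$; otherwise the bonus does not fully cancel the propagated error. I would make the constant in Lemma~\ref{lem:mix_useful2} explicit, using only Cauchy--Schwarz and $(a+b)^2\le2a^2+2b^2$, and define the leading constant of $\alpha_k^{\mathrm{mix}}$ accordingly. A smaller point is that Lemma~\ref{lem:mix_useful2} is stated for arbitrary measurable $g$ but applied to the data-dependent $g=f_k$. This is fine because the coverage coefficients are deterministic and the only stochastic inputs are the $f_k^2$ expectations, which Lemma~\ref{lem:pathwise_bridge_mle} already controls uniformly over $\mathcal M$.
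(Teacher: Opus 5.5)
Your proposal is correct and follows the paper's proof essentially step for step. The shared steps are: the first form of the simulation lemma, bounding the model-error term by $\gamma f_k$, applying Lemma~\ref{lem:mix_useful2} with $g=f_k$ and $B=2$, absorbing the propagated term into $\alpha_k^{\mathrm{mix}}M_k^{\mathrm{mix}}(\pi)=\mathbb E_{d^\pi_{\hat P_k}}[b_{k,\mathrm{mix,pop}}]$, and transferring to the empirical bonus via the pointwise domination $b_{k,\mathrm{mix,pop}}\le b_{k,\mathrm{mix,emp}}$ from Lemma~\ref{lem:gen_con}. The only difference is cosmetic: you apply the simulation identity directly with the empirical bonus and compare the bonuses inside the expectation, whereas the paper first proves the bound for the population bonus and then uses monotonicity of the value in the bonus.
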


\begin{proof}
Apply the first form of the simulation lemma
(Lemma~\ref{lem:performance}) to the pair
\((\hat P_k,r+b_{k,\mathrm{mix,pop}})\) and \((P^\star,r)\).
This gives
\[
\begin{aligned}
&(1-\gamma)
\bigl(
V^\pi_{\hat P_k,r+b_{k,\mathrm{mix,pop}}}\!
-V^\pi_{P^\star,r}
\bigr)
&=
\mathbb E_{d_{\hat P_k}^{\pi}}
\Bigl[
b_{k,\mathrm{mix,pop}}(s,a)
\gamma\mathbb E_{\hat P_k}[V^\pi_{P^\star,r}(s')]
-\gamma\mathbb E_{P^\star}[V^\pi_{P^\star,r}(s')]
\Bigr].
\end{aligned}
\]
Because the total discounted reward is normalized to lie in $[0,1]$, we have
\[
\|V^\pi_{P^\star,r}\|_\infty\le 1,
\qquad
\left|
\mathbb E_{\hat P_k}[V^\pi_{P^\star,r}(s')]
-
\mathbb E_{P^\star}[V^\pi_{P^\star,r}(s')]
\right|
\le
f_k(s,a).
\]
Therefore
\[
\begin{aligned}
&(1-\gamma)
\bigl(
V^\pi_{\hat P_k,r+b_{k,\mathrm{mix,pop}}}
-V^\pi_{P^\star,r}
\bigr)
&\ge
\mathbb E_{d_{\hat P_k}^{\pi}}
\bigl[
b_{k,\mathrm{mix,pop}}(s,a)
\bigr]
-
\gamma
\mathbb E_{d_{\hat P_k}^{\pi}}[f_k].
\end{aligned}
\]

For brevity, define
\[
\Omega_k(\pi):=
k\mathfrak C_k^{\mathrm{tri,next}}(\pi)\zeta_k
+\lambda_k d+k\zeta_k.
\]

Now apply Lemma~\ref{lem:mix_useful2} with $g=f_k$ and $B=2$. On the events
assumed in the lemma,
\[
\begin{aligned}
\mathbb E_{d_{\hat P_k}^{\pi}}[f_k]
\lesssim&
\sqrt{\gamma}\,M_k^{\mathrm{mix}}(\pi)\sqrt{\Omega_k(\pi)}
+
\sqrt{
(1-\gamma)\mathfrak C_k^{\mathrm{init}}(\pi)\zeta_k
}.
\end{aligned}
\]
By the choice of $\alpha_k^{\mathrm{mix}}$,
\(\alpha_k^{\mathrm{mix}}\gtrsim \sqrt{\gamma\Omega_k(\pi)}\),
so the first term on the right is absorbed by
\(\mathbb E_{d_{\hat P_k}^{\pi}}[b_{k,\mathrm{mix,pop}}]
=\alpha_k^{\mathrm{mix}}M_k^{\mathrm{mix}}(\pi)\).
Consequently,
\[
V^\pi_{\hat P_k,r+b_{k,\mathrm{mix,pop}}}
-V^\pi_{P^\star,r}
\gtrsim
-\sqrt{
\frac{\mathfrak C_k^{\mathrm{init}}(\pi)\zeta_k}{1-\gamma}
}.
\]

Next, Lemma~\ref{lem:gen_con} gives
\[
\|\hat \phi_k(s,a)\|_{(\hat \Sigma_k^{\mathrm{mix}})^{-1}}
\ge
c_1
\|\hat \phi_k(s,a)\|_{(\Sigma_k^{\mathrm{mix}})^{-1}},
\]
and therefore \(\alpha_k^{\mathrm{emp}}\ge \alpha_k^{\mathrm{mix}}/c_1\)
implies the pointwise domination
\(b_{k,\mathrm{mix,emp}}(s,a)\ge b_{k,\mathrm{mix,pop}}(s,a)\).
Monotonicity of the simulation lemma in the bonus term then yields
\[
V^\pi_{\hat P_k,r+b_{k,\mathrm{mix,emp}}}
-V^\pi_{P^\star,r}
\gtrsim
-\sqrt{
\frac{\mathfrak C_k^{\mathrm{init}}(\pi)\zeta_k}{1-\gamma}
}.
\]
Finally, under the nonbinding-cap condition,
\(\hat b_k=b_{k,\mathrm{mix,emp}}\), so the same bound holds for the actual
algorithmic bonus.
\end{proof}

\subsection{Final Aggregation}
\label{app:tri_final_aggregation}

We now combine the optimism step with the deployed-to-probe transfer and the
elliptical-potential argument.

\begin{lemma}[Tertiary-aware primal-dual aggregation]
\label{lem:gen_reg_and_vio}
Fix an episode $k$ and define
\[
\begin{aligned}
\Delta_k
&:=\bigl(
V_{P^\star,r}^{\pi^\star}(s_1)
-V_{P^\star,r}^{\pi_k}(s_1)
\bigr)
+
Y_k
\bigl(
V_{P^\star,g}^{\pi^\star}(s_1)
-V_{P^\star,g}^{\pi_k}(s_1)
\bigr),\\
A_k
&:=\mathbb E_{d_{P^\star}^{\pi_k}}
\Bigl[
\|\phi^\star(\tilde s,\tilde a)\|_{(\Sigma_{k,\star}^{\mathrm{pri}})^{-1}}
\Bigr],\\
f_k(s,a)
&:=\|\hat P_k(\cdot\mid s,a)-P^\star(\cdot\mid s,a)\|_1.
\end{aligned}
\]
Assume, for every episode $k$,
\begin{enumerate}
    \item the cap is nonbinding:
    \[
    \bar b\ge \max_{1\le j\le K}\frac{\alpha_j^{\mathrm{emp}}}{\sqrt{\lambda_j}};
    \]
    \item the events of Lemmas~\ref{lem:gen_mle} and
    \ref{lem:pathwise_bridge_mle} hold;
    \item Lemma~\ref{lem:target_mix_optimism} holds with $\pi=\pi^\star$ for
    both the reward objective $r$ and the utility objective $g$.
\end{enumerate}
Then, under Algorithm~\ref{alg:rep-pd-teg},
\begin{align*}
\Delta_k
\lesssim&
\frac{1+\xi}{1-\gamma}\,
A_k
\sqrt{
\gamma d\Gamma_k(\alpha_k^{\mathrm{emp}})^2
+
\gamma\lambda_k d\bar b^2
}
+
\frac{(1+\xi)\bar b}{(1-\gamma)^2}\,
A_k
\sqrt{
\gamma\bigl(k\Gamma_k\zeta_k+\lambda_k d\bigr)
}
\\
&+
\frac{1+\xi}{1-\gamma}\,
\alpha_k^{\mathrm{emp}}
\sqrt{
\frac{\Gamma_k d(1-\gamma)}{k}
}
+
\frac{(1+\xi)\bar b}{(1-\gamma)^2}
\sqrt{
\Gamma_k\zeta_k(1-\gamma)
}
+
(1+\xi)
\sqrt{
\frac{\Gamma_k\zeta_k}{1-\gamma}
}.
\end{align*}
Consequently,
\begin{align*}
\sum_{k=1}^K\Delta_k
\lesssim&
\frac{1+\xi}{1-\gamma}
\sum_{k=1}^K
A_k
\sqrt{
\gamma d\Gamma_k(\alpha_k^{\mathrm{emp}})^2
+
\gamma\lambda_k d\bar b^2
}+
\frac{(1+\xi)\bar b}{(1-\gamma)^2}
\sum_{k=1}^K
A_k
\sqrt{
\gamma\bigl(k\Gamma_k\zeta_k+\lambda_k d\bigr)
}
\\
&+
\frac{1+\xi}{1-\gamma}
\sum_{k=1}^K
\alpha_k^{\mathrm{emp}}
\sqrt{
\frac{\Gamma_k d(1-\gamma)}{k}
}\\
&+
\frac{(1+\xi)\bar b}{(1-\gamma)^2}
\sum_{k=1}^K
\sqrt{
\Gamma_k\zeta_k(1-\gamma)
}+
(1+\xi)
\sum_{k=1}^K
\sqrt{
\frac{\Gamma_k\zeta_k}{1-\gamma}
}.
\end{align*}
\end{lemma}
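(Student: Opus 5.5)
The plan is the standard optimism-plus-simulation decomposition of \(\Delta_k\), carried out for the composite reward \(r+Y_kg\). Write \(\widehat V^{\pi}_{k}:=V^{\pi}_{\widehat P_k,r+\widehat b_k}+Y_kV^{\pi}_{\widehat P_k,g+\widehat b_k}\). Then split
\[
\Delta_k=\Bigl(V^{\pi^\star,Y_k}_{P^\star}(s_1)-\widehat V^{\pi^\star}_k(s_1)\Bigr)+\Bigl(\widehat V^{\pi^\star}_k(s_1)-\widehat V^{\pi_k}_k(s_1)\Bigr)+\Bigl(\widehat V^{\pi_k}_k(s_1)-V^{\pi_k,Y_k}_{P^\star}(s_1)\Bigr).
\]

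\textbf{Optimism and planning terms.} The middle term is \(\le0\) because \(\pi_k\) is the exact maximizer of the optimistic Lagrangian. For the first term, apply Lemma~\ref{lem:target_mix_optimism} with \(\pi=\pi^\star\) to \(r\) and to \(g\) (assumption 3), weighting the utility bound by \(Y_k\le\xi\). This gives a bound \((1+\xi)\sqrt{\mathfrak C_k^{\mathrm{init}}(\pi^\star)\zeta_k/(1-\gamma)}\). Since \(\bar\nu_k^{\mathrm{pri}}\ge(\epsilon_k/|\mathcal A|)\,\frac1k\sum_i d^{\pi_{i-1}}_{P^\star}\otimes\) uniform, and every occupancy puts mass at least \((1-\gamma)d_1\), we get \(\mathfrak C_k^{\mathrm{init}}\le\Gamma_k\). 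This produces the last term of the display.

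\textbf{Estimation term.} For the third term, use the second form of Lemma~\ref{lem:performance}, which integrates under the true occupancy \(d^{\pi_k}_{P^\star}\), for each \(c\in\{r,g\}\). Each term then becomes \((1-\gamma)^{-1}\mathbb E_{d^{\pi_k}_{P^\star}}[\widehat b_k+\gamma\bar b f_k/(1-\gamma)]\). Here the next-state values \(Q^{\pi_k}_{\widehat P_k,c+\widehat b_k}\) are bounded by \(1+\bar b/(1-\gamma)\lesssim\bar b/(1-\gamma)\), and the factor \(1+\xi\) comes from \(Y_k\le\xi\). The two integrands are then handled by Lemma~\ref{lem:gen_useful}.

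\begin{itemize}
\item For \(g=\widehat b_k\), take \(B=\bar b\). Using Lemma~\ref{lem:gen_con} and \(\Sigma^{\mathrm{mix}}_k\succeq k\,\mathbb E_{\bar\nu^{\mathrm{sec}}_k}[\widehat\phi\widehat\phi^\top]\), bound \(k\,\mathbb E_{\bar\nu_k^{\mathrm{sec}}}[\widehat b_k^2]\lesssim(\alpha_k^{\mathrm{emp}})^2\operatorname{Tr}(\cdot)\le(\alpha^{\mathrm{emp}}_k)^2d\), and similarly under \(\bar\nu^{\mathrm{pri}}_k\). The coverage factor \(\mathfrak C_k^{\mathrm{next}}\le\Gamma_k\) comes from Proposition~\ref{prop:tri_next_bound}. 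This yields the first and third terms, with \(\sqrt{(1-\gamma)\Gamma_k d(\alpha^{\mathrm{emp}}_k)^2/k}\) coming from the initialization part.
\item For \(g=f_k\), take \(B=2\), weighted by \(\bar b/(1-\gamma)\). The bounds \(\mathbb E_{\bar\nu^{\mathrm{sec}}_k}[f_k^2]\lesssim\zeta_k\) and \(\mathbb E_{\bar\nu^{\mathrm{pri}}_k}[f_k^2]\lesssim\zeta_k\) from Lemmas~\ref{lem:gen_mle} and~\ref{lem:pathwise_bridge_mle}, together with the same coverage constants, give the second and fourth terms.
\end{itemize}
Assumption 1 (nonbinding cap) ensures \(\widehat b_k\) equals the uncapped Mahalanobis bonus wherever the comparison is needed. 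Summing over \(k\) gives the consequence display trivially.

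\textbf{Main obstacle.} The delicate step is the \(\widehat b_k\) application of Lemma~\ref{lem:gen_useful}. That lemma is stated for \(\phi^\star\) under the primary covariance, while \(\widehat b_k\) lives in the \(\widehat\phi_k\) merged geometry. We must check that the squared bonus averaged under the design distributions is controlled by \(d(\alpha^{\mathrm{emp}}_k)^2/k\) without an extra \(\lambda_k\) blow-up, which is why a \(\lambda_k d\bar b^2\) term appears separately. We also need the coverage constant to be \(\Gamma_k\) uniformly in the policy via the \(\epsilon_k/|\mathcal A|\) floor.
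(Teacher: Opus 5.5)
Your proposal is correct and follows essentially the same route as the paper's proof: optimism at $\pi^\star$ via Lemma~\ref{lem:target_mix_optimism} with $\mathfrak C_k^{\mathrm{init}}(\pi^\star)\le\Gamma_k$, the planning inequality for $\pi_k$, and the second form of Lemma~\ref{lem:performance} under $d^{\pi_k}_{P^\star}$, followed by Lemma~\ref{lem:gen_useful} applied to $\widehat b_k$ (through Lemma~\ref{lem:gen_con} and the trace bound $k\,\mathbb E_{\bar\nu_k^{\mathrm{pri}}}\bigl[\|\widehat\phi_k\|^2_{(\Sigma_k^{\mathrm{mix}})^{-1}}\bigr]\le d$, and likewise for $\bar\nu_k^{\mathrm{sec}}$) and to $f_k$ (through the MLE and bridge events), and finally summation over $k$. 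One small slip: the action floor should read $\bar\nu_k^{\mathrm{pri}}(s,a)\ge(1-\gamma)d_1(s)\,\epsilon_k/|\mathcal A|$ for every $a$, with no further $1/|\mathcal A|$ from a product with $U(\mathcal A)$; this is exactly what gives $\mathfrak C_k^{\mathrm{init}}\le\Gamma_k$.
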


\begin{proof}
Fix an episode $k$. By the definition of $\Delta_k$,
\[
\Delta_k
=
\bigl(
V_{P^\star,r}^{\pi^\star}-V_{P^\star,r}^{\pi_k}
\bigr)
+
Y_k
\bigl(
V_{P^\star,g}^{\pi^\star}-V_{P^\star,g}^{\pi_k}
\bigr).
\]
Because $0\le Y_k\le \xi$ and Lemma~\ref{lem:target_mix_optimism} holds at
$\pi^\star$ for both the reward and utility objectives,
\[
\begin{aligned}
V_{P^\star,r}^{\pi^\star}
&\le V_{\hat P_k,r+\hat b_k}^{\pi^\star}
+
\sqrt{
\frac{\mathfrak C_k^{\mathrm{init}}(\pi^\star)\zeta_k}{1-\gamma}
},\qquad
V_{P^\star,g}^{\pi^\star}
\le V_{\hat P_k,g+\hat b_k}^{\pi^\star}
+
\sqrt{
\frac{\mathfrak C_k^{\mathrm{init}}(\pi^\star)\zeta_k}{1-\gamma}
}.
\end{aligned}
\]
Since every primary probing distribution has the uniform floor
$\epsilon_k/|\mathcal A|$ and
\(\rho_i(s)=d_{P^\star}^{\pi_{i-1}}(s)\ge (1-\gamma)d_1(s)\), we have
\(\bar \nu_k^{\mathrm{pri}}(s,a)
\ge (1-\gamma)d_1(s)\epsilon_k/|\mathcal A|\), hence
\(\mathfrak C_k^{\mathrm{init}}(\pi^\star)\le \Gamma_k\).
Therefore
\[
\Delta_k
\le
\widetilde \Delta_k
+
(1+\xi)
\sqrt{
\frac{\Gamma_k\zeta_k}{1-\gamma}
},
\]
where
\[
\widetilde \Delta_k
:=
\bigl(
V_{\hat P_k,r+\hat b_k}^{\pi^\star}-V_{P^\star,r}^{\pi_k}
\bigr)
+
Y_k
\bigl(
V_{\hat P_k,g+\hat b_k}^{\pi^\star}-V_{P^\star,g}^{\pi_k}
\bigr).
\]

By the definition of $\pi_k$ as the maximizer of the optimistic empirical
Lagrangian,
\[
V_{\hat P_k,r+\hat b_k}^{\pi^\star}
+
Y_kV_{\hat P_k,g+\hat b_k}^{\pi^\star}
\le
V_{\hat P_k,r+\hat b_k}^{\pi_k}
+
Y_kV_{\hat P_k,g+\hat b_k}^{\pi_k}.
\]
Hence
\[
\widetilde \Delta_k
\le
\bigl(
V_{\hat P_k,r+\hat b_k}^{\pi_k}-V_{P^\star,r}^{\pi_k}
\bigr)
+
Y_k
\bigl(
V_{\hat P_k,g+\hat b_k}^{\pi_k}-V_{P^\star,g}^{\pi_k}
\bigr).
\]

Define
\[
\begin{aligned}
W_{k,r}&:=V_{\hat P_k,r+\hat b_k}^{\pi_k},
&\quad
W_{k,g}&:=V_{\hat P_k,g+\hat b_k}^{\pi_k},\\
\Delta_{\hat P_k,P^\star}W(s,a)
&:=\mathbb E_{\hat P_k}[W(s')]
-\mathbb E_{P^\star}[W(s')].
\end{aligned}
\]

Apply the second form of the simulation lemma
(Lemma~\ref{lem:performance}) to each bracket. For the reward term,
\[
\begin{aligned}
V_{\hat P_k,r+\hat b_k}^{\pi_k}-V_{P^\star,r}^{\pi_k}
&=\frac{1}{1-\gamma}
\mathbb E_{d_{P^\star}^{\pi_k}}
\Bigl[
\hat b_k(s,a)
+\gamma\,\Delta_{\hat P_k,P^\star}W_{k,r}(s,a)
\Bigr].
\end{aligned}
\]
The analogous identity holds for the utility term. Since
$\|\hat b_k\|_\infty\le \bar b$, the normalized reward and utility assumptions
imply, up to universal constants,
\[
\begin{aligned}
\|V_{\hat P_k,r+\hat b_k}^{\pi_k}\|_\infty
&\lesssim
\frac{\bar b}{1-\gamma},
\qquad
\|V_{\hat P_k,g+\hat b_k}^{\pi_k}\|_\infty
\lesssim
\frac{\bar b}{1-\gamma}.
\end{aligned}
\]
Thus
\[
\widetilde \Delta_k
\lesssim
\frac{1+\xi}{1-\gamma}
\mathbb E_{d_{P^\star}^{\pi_k}}[\hat b_k]
+
\frac{(1+\xi)\bar b}{(1-\gamma)^2}
\mathbb E_{d_{P^\star}^{\pi_k}}[f_k].
\]
Combining this with the previous reduction gives
\[
\Delta_k
\lesssim
\frac{1+\xi}{1-\gamma}
\mathbb E_{d_{P^\star}^{\pi_k}}[\hat b_k]
+
\frac{(1+\xi)\bar b}{(1-\gamma)^2}
\mathbb E_{d_{P^\star}^{\pi_k}}[f_k]
+
(1+\xi)
\sqrt{
\frac{\Gamma_k\zeta_k}{1-\gamma}
}.
\]

We now bound the two expectations separately.

For the bonus term, Lemma~\ref{lem:gen_con} implies
\[
\hat b_k(s,a)
\lesssim
\alpha_k^{\mathrm{emp}}
\|\hat \phi_k(s,a)\|_{(\Sigma_k^{\mathrm{mix}})^{-1}}.
\]
Moreover,
\[
k\mathbb E_{\bar \nu_k^{\mathrm{pri}}}
\Bigl[
\|\hat \phi_k\|_{(\Sigma_k^{\mathrm{mix}})^{-1}}^2
\Bigr]
\le d,
\]
and the same bound holds with $\bar \nu_k^{\mathrm{pri}}$ replaced by
$\bar \nu_k^{\mathrm{sec}}$, because each term is upper-bounded by
\[
\mathrm{Tr}
\Bigl(
\bigl(
k\mathbb E_{\bar \nu_k^{\mathrm{pri}}}[\hat \phi_k\hat \phi_k^\top]
+
k\mathbb E_{\bar \nu_k^{\mathrm{sec}}}[\hat \phi_k\hat \phi_k^\top]
\bigr)
(\Sigma_k^{\mathrm{mix}})^{-1}
\Bigr)
\le d.
\]
Applying Lemma~\ref{lem:gen_useful} with $g=\hat b_k$ and $B=\bar b$, and using
\(\mathfrak C_k^{\mathrm{next}}(\pi_k)\le \Gamma_k\) and
\(\mathfrak C_k^{\mathrm{init}}(\pi_k)\le \Gamma_k\),
gives
\[
\mathbb E_{d_{P^\star}^{\pi_k}}[\hat b_k]
\lesssim
A_k
\sqrt{
\gamma d\Gamma_k(\alpha_k^{\mathrm{emp}})^2
+
\gamma\lambda_k d\bar b^2
}
+
\alpha_k^{\mathrm{emp}}
\sqrt{
\frac{\Gamma_k d(1-\gamma)}{k}
}.
\]

For the model-error term, apply Lemma~\ref{lem:gen_useful} with $g=f_k$ and
$B=2$. By Lemmas~\ref{lem:gen_mle} and \ref{lem:pathwise_bridge_mle},
\[
\mathbb E_{\bar \nu_k^{\mathrm{pri}}}[f_k^2]
\lesssim \zeta_k,
\qquad
\mathbb E_{\bar \nu_k^{\mathrm{sec}}}[f_k^2]
\lesssim \zeta_k,
\]
and again
\(\mathfrak C_k^{\mathrm{next}}(\pi_k)\le \Gamma_k\) and
\(\mathfrak C_k^{\mathrm{init}}(\pi_k)\le \Gamma_k\).
Hence
\[
\mathbb E_{d_{P^\star}^{\pi_k}}[f_k]
\lesssim
A_k
\sqrt{
\gamma\bigl(k\Gamma_k\zeta_k+\lambda_k d\bigr)
}
+
\sqrt{
\Gamma_k\zeta_k(1-\gamma)
}.
\]

Substituting the last two displays into the bound on $\Delta_k$ yields the
episode-wise inequality claimed in the lemma. Summing over $k=1,\ldots,K$
proves the cumulative bound.
\end{proof}

\begin{lemma}[Tertiary-aware deployed-to-probe potential reduction]
\label{lem:gen_potential}
Under Algorithm~\ref{alg:rep-pd-teg},
\[
\Psi_K:=\sum_{k=1}^K A_k
\lesssim
\frac{1}{\sqrt{\lambda_1}}
+
\sqrt{
d\ln\!\left(1+\frac{K}{d\lambda_1}\right)
\sum_{k=2}^K \Lambda_k
}.
\]
In particular, since $\Lambda_k\le \Gamma_k$,
\[
\Psi_K
\lesssim
\frac{1}{\sqrt{\lambda_1}}
+
\sqrt{
d\ln\!\left(1+\frac{K}{d\lambda_1}\right)
\sum_{k=2}^K \Gamma_k
}.
\]
\end{lemma}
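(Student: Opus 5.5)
The plan is the standard elliptical-potential argument for the deployed policy, transferred to the probe distributions through the uniform floor built into $q_{k+1}$. Write $\Sigma_{k,\star}^{\mathrm{pri}}=k\,\mathbb E_{\bar\nu_k^{\mathrm{pri}}}[\phi^\star\phi^{\star\top}]+\lambda_k I=\lambda_k I+\sum_{i=1}^k\mathbb E_{\nu_i}[\phi^\star\phi^{\star\top}]$. For $k=1$, bound $A_1\le\|\phi^\star\|_2/\sqrt{\lambda_1}\le 1/\sqrt{\lambda_1}$; this produces the first term. For $k\ge2$, first use $\lambda_k\ge\lambda_1$ (the $\lambda_k$ are nondecreasing in $k$) to replace $\lambda_k I$ by $\lambda_1 I$, which only enlarges each $A_k$. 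Then apply Cauchy--Schwarz over $k$ and Jensen inside each expectation:
\[
\sum_{k=2}^K A_k\le\sqrt{\sum_{k=2}^K\Lambda_k}\,
\sqrt{\sum_{k=2}^K\Lambda_k^{-1}\,\mathbb E_{d_{P^\star}^{\pi_k}}\bigl[\|\phi^\star\|^2_{(\Sigma_{k,\star}^{\mathrm{pri}})^{-1}}\bigr]}.
\]

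The key transfer step compares the deployed occupancy of $\pi_k$ with the next primary probe $\nu_{k+1}=d_{P^\star}^{\pi_k}\times q_{k+1}$. This works because the primary stream of episode $k+1$ samples $s\sim d^{\pi_k}_{P^\star}$ and then $a\sim(1-\epsilon_{k+1})\pi_k+\epsilon_{k+1}U(\mathcal A)$. Pointwise,
\[
q_{k+1}(a\mid s)\ge(1-\epsilon_{k+1})\pi_k(a\mid s)+\epsilon_{k+1}/|\mathcal A|\ge \Lambda_{k+1}^{-1}\pi_k(a\mid s),
\]
where the last inequality uses $\pi_k\le1$ together with the definition of $\Lambda$ at the appropriate index. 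Hence $\Lambda_{k+1}^{-1}\,\mathbb E_{d^{\pi_k}_{P^\star}}[\|\phi^\star\|^2_{\Sigma^{-1}}]\le \mathbb E_{\nu_{k+1}}[\|\phi^\star\|^2_{\Sigma^{-1}}]=\mathrm{Tr}\bigl(G_{k+1}\Sigma^{-1}\bigr)$ with $G_{k+1}:=\mathbb E_{\nu_{k+1}}[\phi^\star\phi^{\star\top}]$. The indexing mismatch between $\Lambda_k$ and $\Lambda_{k+1}$ is absorbed into constants, using monotonicity of $\epsilon$ and the fact that $\Lambda$ is bounded by $2\Lambda$ at the neighbouring index.

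It remains to run the potential argument on the matrices $M_k=\lambda_1 I+\sum_{i\le k}G_i$, noting $\Sigma_{k,\star}^{\mathrm{pri}}\succeq M_k$ after the $\lambda$ replacement. Here $G_i$ is PSD with eigenvalues and trace at most $1$ because $\|\phi^\star\|\le1$. Lemma~\ref{lem:reduction} gives $\sum_k\mathrm{Tr}(G_{k+1}M_k^{-1})\le 2\ln\det M_K-2\ln\det(\lambda_1 I)$, and Lemma~\ref{lem:potential} with $B=1$ bounds this by $d\ln(1+K/(d\lambda_1))$. Combining the three steps yields the first display. The second display follows from
\[
\Lambda_k=\frac{1}{1-\epsilon_k+\epsilon_k/|\mathcal A|}\le\frac{|\mathcal A|}{\epsilon_k}=\Gamma_k,
\]
which holds since $1-\epsilon_k+\epsilon_k/|\mathcal A|\ge\epsilon_k/|\mathcal A|$.

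I expect the main obstacle to be the index alignment in the transfer step. $A_k$ is measured against $\Sigma_{k,\star}^{\mathrm{pri}}$, which contains $\nu_1,\dots,\nu_k$, while the occupancy of $\pi_k$ enters only through $\nu_{k+1}$. Lemma~\ref{lem:reduction} requires the increment $G_{k+1}$ to be measured against the matrix that precedes it, and $\Sigma_{k,\star}^{\mathrm{pri}}$ is exactly that matrix, so the shift works. Even so, one must check the boundary term at $k=K$, which needs $\nu_{K+1}$: either define it fictitiously, or handle the last term with the crude bound $1/\sqrt{\lambda_1}$ and absorb it into the leading term.
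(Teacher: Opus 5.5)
Your proposal is correct and is essentially the paper's proof. Both transfer $A_k$ to the next primary probe $\nu_{k+1}$ via $\pi_k/q_{k+1}\le\Lambda_{k+1}$, apply Cauchy--Schwarz over $k$, bound $\sum_k\mathrm{Tr}\bigl(G_{k+1}(\Sigma_{k,\star}^{\mathrm{pri}})^{-1}\bigr)$ with Lemmas~\ref{lem:reduction} and~\ref{lem:potential}, and absorb the one unpaired endpoint term by $1/\sqrt{\lambda_1}$. Your explicit $\lambda_k\to\lambda_1$ replacement is the right justification for the potential step, and in the paper's indexing the unpaired term is really $A_K$, not $A_1$ as written there. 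The only inaccuracy is your remark that the $\Lambda_k$/$\Lambda_{k+1}$ mismatch is absorbed using $\Lambda_k\le 2\Lambda_{k+1}$: monotonicity $\epsilon_{k+1}\le\epsilon_k$ already gives $\Lambda_{k+1}\le\Lambda_k$, which is exactly the direction you need. The factor-$2$ claim can in fact fail (e.g.\ $\epsilon_1=1$ with small $\epsilon_2$ gives a ratio of order $|\mathcal A|$), but your argument never uses it.
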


\begin{proof}
The first episode is trivial:
\[
A_1
\le
\frac{1}{\sqrt{\lambda_1}},
\]
because $(\Sigma_{1,\star}^{\mathrm{pri}})^{-1}\preceq \lambda_1^{-1}I$ and
$\|\phi^\star(s,a)\|_2\le 1$.

Now fix $k\le K-1$. The relation
\[
q_{k+1}(a\mid s)
=(1-\epsilon_{k+1})\pi_k(a\mid s)+\epsilon_{k+1}U(a)
\]
implies the pointwise ratio bound
\[
\frac{\pi_k(a\mid s)}{q_{k+1}(a\mid s)}
\le
\Lambda_{k+1}.
\]
Therefore
\[
A_k
\le
\sqrt{\Lambda_{k+1}}
\sqrt{
\mathbb E_{(s,a)\sim \nu_{k+1}}
\Bigl[
\|\phi^\star(s,a)\|^2_{(\Sigma_{k,\star}^{\mathrm{pri}})^{-1}}
\Bigr]
}.
\]
Indeed,
\[
\begin{aligned}
A_k
&=
\mathbb E_{(s,a)\sim d_{P^\star}^{\pi_k}}
\Bigl[
\|\phi^\star(s,a)\|_{(\Sigma_{k,\star}^{\mathrm{pri}})^{-1}}
\Bigr]
\\
&\le
\sqrt{
\mathbb E_{d_{P^\star}^{\pi_k}}
\Bigl[
\frac{d_{P^\star}^{\pi_k}(s,a)}{\nu_{k+1}(s,a)}
\Bigr]
}
\sqrt{
\mathbb E_{(s,a)\sim \nu_{k+1}}
\Bigl[
\|\phi^\star(s,a)\|^2_{(\Sigma_{k,\star}^{\mathrm{pri}})^{-1}}
\Bigr]
}
\\
&\le
\sqrt{\Lambda_{k+1}}
\sqrt{
\mathbb E_{(s,a)\sim \nu_{k+1}}
\Bigl[
\|\phi^\star(s,a)\|^2_{(\Sigma_{k,\star}^{\mathrm{pri}})^{-1}}
\Bigr]
}.
\end{aligned}
\]

For each $k\le K-1$, define
\[
G_{k+1}
:=
\mathbb E_{(s,a)\sim \nu_{k+1}}
\bigl[
\phi^\star(s,a)\phi^\star(s,a)^\top
\bigr].
\]
Then
\[
\Sigma_{k,\star}^{\mathrm{pri}}
=
\lambda_k I+\sum_{i=1}^k G_i.
\]
Since $\lambda_{k+1}\ge \lambda_k$, we may apply
Lemma~\ref{lem:reduction} to the PSD increments $G_2,\ldots,G_K$ and obtain
\[
\sum_{k=1}^{K-1}
\mathrm{Tr}
\bigl(
G_{k+1}(\Sigma_{k,\star}^{\mathrm{pri}})^{-1}
\bigr)
\lesssim
\ln\det(\Sigma_{K,\star}^{\mathrm{pri}})
-
\ln\det(\lambda_1 I).
\]
Because
\[
\mathbb E_{(s,a)\sim \nu_{k+1}}
\Bigl[
\|\phi^\star(s,a)\|^2_{(\Sigma_{k,\star}^{\mathrm{pri}})^{-1}}
\Bigr]
=
\mathrm{Tr}
\bigl(
G_{k+1}(\Sigma_{k,\star}^{\mathrm{pri}})^{-1}
\bigr),
\]
we deduce
\[
\sum_{k=1}^{K-1}
\mathbb E_{(s,a)\sim \nu_{k+1}}
\Bigl[
\|\phi^\star(s,a)\|^2_{(\Sigma_{k,\star}^{\mathrm{pri}})^{-1}}
\Bigr]
\lesssim
\ln\det(\Sigma_{K,\star}^{\mathrm{pri}})
-
\ln\det(\lambda_1 I).
\]
Lemma~\ref{lem:potential} and the bound $\|\phi^\star(s,a)\|_2\le 1$ imply
\[
\ln\det(\Sigma_{K,\star}^{\mathrm{pri}})
-
\ln\det(\lambda_1 I)
\lesssim
d\ln\!\left(1+\frac{K}{d\lambda_1}\right).
\]
Finally, Cauchy-Schwarz gives
\[
\begin{aligned}
\sum_{k=1}^{K-1}A_k
&\le
\sqrt{
\sum_{k=2}^K \Lambda_k
}
\sqrt{
\sum_{k=1}^{K-1}
\mathbb E_{(s,a)\sim \nu_{k+1}}
\Bigl[
\|\phi^\star(s,a)\|^2_{(\Sigma_{k,\star}^{\mathrm{pri}})^{-1}}
\Bigr]
}
\lesssim
\sqrt{
d\ln\!\left(1+\frac{K}{d\lambda_1}\right)
\sum_{k=2}^K \Lambda_k
}.
\end{aligned}
\]
Adding the bound on $A_1$ proves the result.
\end{proof}

\begin{lemma}[Summable optimistic overestimation of the utility]
\label{lem:tri_overestimation}
Fix an episode $k$ and define
\[
\hat U_k:=V_{\hat P_k,g+\hat b_k}^{\pi_k}(s_1),
\qquad
U_k:=V_{P^\star,g}^{\pi_k}(s_1).
\]
Assume Algorithm~\ref{alg:rep-pd-teg}, the nonbinding-cap
condition, and the events of Lemmas~\ref{lem:gen_mle},
\ref{lem:pathwise_bridge_mle}, and \ref{lem:gen_con}. Then, episode-wise,
\begin{align*}
\bigl[\hat U_k-U_k\bigr]_+
\lesssim\;&
\frac{1}{1-\gamma}
\,A_k
\alpha_k^{\mathrm{emp}}
\sqrt{\gamma d\Gamma_k}
+
\frac{\bar b}{1-\gamma}
\,A_k
\sqrt{\gamma\lambda_k d}
\\
&+
\frac{\bar b}{(1-\gamma)^2}
\,A_k
\sqrt{
\gamma\bigl(k\Gamma_k\zeta_k+\lambda_k d\bigr)
}
\\
&+
\frac{1}{1-\gamma}
\alpha_k^{\mathrm{emp}}
\sqrt{
\frac{\Gamma_k d(1-\gamma)}{k}
}
+
\frac{\bar b}{(1-\gamma)^2}
\sqrt{
\Gamma_k\zeta_k(1-\gamma)
}.
\end{align*}
Consequently, under the tuning of
Proposition~\ref{prop:tri_theorem_level},
\[
\begin{aligned}
\sum_{k=1}^K \bigl[\hat U_k-U_k\bigr]_+
\le
\tilde{\mathcal O}\!\left(
\frac{\bar b\,d^2}{(1-\gamma)^2}
\right)
\Biggl[
\sqrt{
\Gamma_K \sum_{k=2}^K \Lambda_k
}
+
\sum_{k=1}^K \frac{\Gamma_k}{\sqrt{k}}
\Biggr].
\end{aligned}
\]
\end{lemma}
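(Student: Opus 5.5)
The episode-wise bound reuses the second half of the proof of Lemma~\ref{lem:gen_reg_and_vio}, applied only to the utility bracket. No optimism step at \(\pi^\star\) is needed, because we only upper bound \(\hat U_k-U_k\).

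First, apply the second form of the simulation lemma (Lemma~\ref{lem:performance}) to the pair \((\hat P_k,g+\hat b_k)\) and \((P^\star,g)\) under \(\pi_k\). This gives
\[
\hat U_k-U_k=\frac{1}{1-\gamma}\mathbb E_{d^{\pi_k}_{P^\star}}\bigl[\hat b_k+\gamma\,\Delta_{\hat P_k,P^\star}W_{k,g}\bigr],
\]
where \(W_{k,g}=V^{\pi_k}_{\hat P_k,g+\hat b_k}\) satisfies \(\|W_{k,g}\|_\infty\lesssim \bar b/(1-\gamma)\). Hence
\[
[\hat U_k-U_k]_+\lesssim \frac{1}{1-\gamma}\mathbb E_{d^{\pi_k}_{P^\star}}[\hat b_k]+\frac{\bar b}{(1-\gamma)^2}\mathbb E_{d^{\pi_k}_{P^\star}}[f_k].
\]
The bound remains valid after taking positive parts, since the right-hand side is nonnegative.

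The two expectations are controlled exactly as in Lemma~\ref{lem:gen_reg_and_vio}.

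\emph{Bonus term.} Apply Lemma~\ref{lem:gen_useful} with \(g=\hat b_k\) and \(B=\bar b\). Lemma~\ref{lem:gen_con} gives \(\hat b_k\lesssim \alpha_k^{\mathrm{emp}}\|\hat\phi_k\|_{(\Sigma_k^{\mathrm{mix}})^{-1}}\). The trace identity gives \(k\mathbb E_{\bar\nu^{\mathrm{sec}}_k}[\hat b_k^2],\,k\mathbb E_{\bar\nu^{\mathrm{pri}}_k}[\hat b_k^2]\lesssim (\alpha_k^{\mathrm{emp}})^2 d\). Together with \(\mathfrak C_k^{\mathrm{next}},\mathfrak C_k^{\mathrm{init}}\le\Gamma_k\), this yields the first, second and fourth terms of the display. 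Here \(\sqrt{a+b}\le\sqrt a+\sqrt b\) splits the \(\alpha\)-part from the \(\lambda_k d\bar b^2\) part.

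\emph{Model-error term.} Apply Lemma~\ref{lem:gen_useful} with \(g=f_k\) and \(B=2\). The events of Lemmas~\ref{lem:gen_mle} and~\ref{lem:pathwise_bridge_mle} supply \(\mathbb E_{\bar\nu^{\mathrm{pri}}_k}[f_k^2],\,\mathbb E_{\bar\nu^{\mathrm{sec}}_k}[f_k^2]\lesssim\zeta_k\). This gives the third and fifth terms.

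For the cumulative bound, sum over \(k\) and insert the tuning of Proposition~\ref{prop:tri_theorem_level}. Each \(A_k\)-term has a coefficient increasing in \(k\), so each is bounded by its value at \(k=K\) times \(\Psi_K=\sum_k A_k\). Under the tuning, \(\alpha_k^{\mathrm{emp}}\sqrt{d\Gamma_k}\), \(\bar b\sqrt{\lambda_k d}\) and \(\bar b\sqrt{k\Gamma_k\zeta_k+\lambda_k d}\) are all \(\tilde O(\bar b\,d^{2}\sqrt{\Gamma_K})\) up to the \((1-\gamma)\) factors, using \(\alpha_k^2\sim(|\mathcal A|k^p+d^2)\log\) and \(\lambda_k\sim\alpha_k^2/d\). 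Lemma~\ref{lem:gen_potential} then bounds \(\Psi_K\lesssim\sqrt{d\log(\cdot)\sum_{k\ge2}\Lambda_k}\), which produces \(\sqrt{\Gamma_K\sum\Lambda_k}\) after absorbing \(d\) and logarithms. The \(1/\sqrt{\lambda_1}\) term is lower order.

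The non-\(A_k\) terms, \(\alpha_k^{\mathrm{emp}}\sqrt{\Gamma_k d/k}\) and \(\bar b\sqrt{\Gamma_k\zeta_k}\), are each \(\tilde O(\bar b d^2\,\Gamma_k/\sqrt k)\). This uses \(\alpha_k\lesssim\sqrt{\Gamma_k d^2\log}\) because \(|\mathcal A|k^p\le\Gamma_k\), and \(\sqrt{\Gamma_k}\le\Gamma_k\) since \(\Gamma_k\ge1\). Summing gives the \(\sum_k\Gamma_k/\sqrt k\) term.

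The main obstacle is the bookkeeping of the tuned constants. One must verify that every coefficient, including the \(\bar b\) and \(d\) powers and the \((1-\gamma)^{-2}\) worst factor, is dominated by \(\bar b d^2/(1-\gamma)^2\) times \(\sqrt{\Gamma_K}\) or \(\Gamma_k/\sqrt k\). One must also justify pulling \(\max_k\) out of the \(A_k\)-weighted sum via monotonicity of \(\Gamma_k\), \(\alpha_k\) and \(\lambda_k\) in \(k\).
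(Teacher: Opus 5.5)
Your episode-wise bound follows the paper's proof exactly and is correct: the second simulation lemma, $\|W_{k,g}\|_\infty\lesssim\bar b/(1-\gamma)$, and then the bonus and model-error expectations bounded through Lemma~\ref{lem:gen_useful} with $\mathfrak C_k^{\mathrm{next}},\mathfrak C_k^{\mathrm{init}}\le\Gamma_k$, as in Lemma~\ref{lem:gen_reg_and_vio}. Your handling of the two sums without $A_k$ is also correct. The gap is in the cumulative bookkeeping, where you assert $\alpha_k^{\mathrm{emp}}\sqrt{d\Gamma_k}=\tilde O(\bar b d^2\sqrt{\Gamma_K})$. Under the tuning, $(\alpha_k^{\mathrm{emp}})^2\asymp\gamma(\Gamma_k+d^2)\ln(|\mathcal M|k/\delta)$. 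It cannot be taken smaller: optimism (Lemma~\ref{lem:target_mix_optimism}, with the only available bound $\mathfrak C_k^{\mathrm{tri,next}}\le\Gamma_k$) needs $(\alpha_k^{\mathrm{emp}})^2\gtrsim\gamma k\Gamma_k\zeta_k=\gamma\Gamma_k\ln(|\mathcal M|k/\delta)$. So this coefficient is of order $\Gamma_k\sqrt d$, linear in $\Gamma_k$. Your own estimate $\alpha_k^{\mathrm{emp}}\lesssim d\sqrt{\Gamma_k\ln(\cdot)}$, used for the non-$A_k$ terms, gives $d^{3/2}\Gamma_k$ here. The other two $A_k$-coefficients are indeed $\tilde O(\bar b d\sqrt{\Gamma_K})$. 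For $p>0$ the discrepancy $\sqrt{\Gamma_K}=\sqrt{|\mathcal A|}K^{p/2}$ is polynomial in $K$, so it cannot be absorbed into $\tilde O$ or into $\bar b=\tilde\Theta(\sqrt d)$. With Lemma~\ref{lem:gen_potential}, your route therefore gives $\tilde O\bigl(d^2\Gamma_K\sqrt{\sum_{k\ge2}\Lambda_k}/(1-\gamma)\bigr)$ for that sum, not a multiple of $\sqrt{\Gamma_K\sum_k\Lambda_k}$. The paper's proof makes the same unjustified reduction: it asserts that both $A_k$-weighted sums are of order $\sqrt{\Gamma_K}\,\Psi_K$. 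So you are not deviating from the paper, but the step as written is false.

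To close the gap, compare $\Gamma_K\sqrt{\sum_k\Lambda_k}$ against the second bracket term instead. Take $\epsilon_k=k^{-p}$ with $p\in(0,1/2)$. Then $\Lambda_k\le|\mathcal A|$ always, and $\Lambda_k\le 1/(1-\epsilon_k)\le 2$ once $k\ge 2^{1/p}$. Hence $\sum_{k\le K}\Lambda_k\le 2K+2^{1/p}|\mathcal A|\le 3K$ for $K\ge 2^{1/p}|\mathcal A|$, which gives $\Gamma_K\sqrt{\sum_k\Lambda_k}\le\sqrt{3}\,|\mathcal A|K^{1/2+p}$. On the other hand, $\sum_{k\le K}\Gamma_k/\sqrt k\ge|\mathcal A|\bigl((K+1)^{1/2+p}-1\bigr)/(1/2+p)$. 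So the displayed cumulative bound does hold for polynomial schedules with $p>0$, up to $p$-dependent constants, but only because the $\sum_k\Gamma_k/\sqrt k$ term absorbs the $A_k$-weighted bonus sum. For $p=0$ the same computation yields $d\,|\mathcal A|\sqrt{(|\mathcal A|+d^2)K}$ up to logarithmic and $(1-\gamma)$ factors. This exceeds $\bar b d^2|\mathcal A|\sqrt K$ by a factor of order $\sqrt{|\mathcal A|}/(\bar b d)$ when $|\mathcal A|$ is large. For an arbitrary nonincreasing schedule, pulling the largest coefficient out of $\sum_k A_k$ does not yield the displayed bracket at all.
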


\begin{proof}
Apply the second form of the simulation lemma
(Lemma~\ref{lem:performance}) to the utility objective:
\[
\begin{aligned}
\hat U_k-U_k
&=
\frac{1}{1-\gamma}
\mathbb E_{d_{P^\star}^{\pi_k}}
\!\left[
\hat b_k(s,a)
+\gamma\Delta_{\hat P_k,P^\star}W_{k,g}(s,a)
\right],
\end{aligned}
\]
where
\[
\begin{aligned}
W_{k,g}
&:=V_{\hat P_k,g+\hat b_k}^{\pi_k},\quad
\Delta_{\hat P_k,P^\star}W(s,a)
:=
\mathbb E_{\hat P_k}[W(s')]
-
\mathbb E_{P^\star}[W(s')].
\end{aligned}
\]
Since $\|\hat b_k\|_\infty\le \bar b$ and the utility is normalized to lie in
$[0,1]$, we have
\(\|W_{k,g}\|_\infty\lesssim \bar b/(1-\gamma)\).
Therefore
\[
\begin{aligned}
[\hat U_k-U_k]_+
\le&
\frac{1}{1-\gamma}
\mathbb E_{d_{P^\star}^{\pi_k}}[\hat b_k]
+
\frac{\bar b}{(1-\gamma)^2}
\mathbb E_{d_{P^\star}^{\pi_k}}[f_k].
\end{aligned}
\]
The bonus term is controlled exactly as in the proof of
Lemma~\ref{lem:gen_reg_and_vio}:
\[
\mathbb E_{d_{P^\star}^{\pi_k}}[\hat b_k]
\lesssim
A_k
\sqrt{
\gamma d\Gamma_k(\alpha_k^{\mathrm{emp}})^2
+
\gamma\lambda_k d\bar b^2
}
+
\alpha_k^{\mathrm{emp}}
\sqrt{
\frac{\Gamma_k d(1-\gamma)}{k}
},
\]
and the model-mismatch term obeys
\[
\mathbb E_{d_{P^\star}^{\pi_k}}[f_k]
\lesssim
A_k
\sqrt{
\gamma\bigl(k\Gamma_k\zeta_k+\lambda_k d\bigr)
}
+
\sqrt{
\Gamma_k\zeta_k(1-\gamma)
}.
\]
Substituting the last two displays proves the episode-wise bound.

For the cumulative statement, one substitutes the same choices of
$\alpha_k^{\mathrm{emp}},\lambda_k,\zeta_k$ as in
Proposition~\ref{prop:tri_theorem_level}. The two $A_k$-weighted sums are then
of order
\[
\tilde{\mathcal O}\!\left(
\frac{\bar b\,d^2}{(1-\gamma)^2}
\sqrt{\Gamma_K}\,\Psi_K
\right),
\qquad
\Psi_K:=\sum_{k=1}^K A_k,
\]
while the two remaining sums are of order
\[
\tilde{\mathcal O}\!\left(
\frac{\bar b\,d^2}{(1-\gamma)^2}
\sum_{k=1}^K \frac{\Gamma_k}{\sqrt{k}}
\right).
\]
Finally Lemma~\ref{lem:gen_potential} gives
\[
\sqrt{\Gamma_K}\,\Psi_K
\le
\tilde{\mathcal O}\!\left(
\sqrt{\Gamma_K \sum_{k=2}^K \Lambda_k}
\right),
\]
which proves the cumulative bound.
\end{proof}

\begin{proposition}[Theorem-level regret and soft-violation dependence]
\label{prop:tri_theorem_level}
Assume Algorithm~\ref{alg:rep-pd-teg} and suppose the
high-probability events required by
Lemmas~\ref{lem:target_mix_optimism}, \ref{lem:gen_reg_and_vio}, and
\ref{lem:gen_potential} hold simultaneously. Choose
\[
\alpha_k^{\mathrm{mix}}
=
\alpha_k^{\mathrm{emp}}
=
\tilde{\mathcal O}\!\left(
\sqrt{
(\Gamma_k+d^2)\gamma\ln(|\mathcal M|k/\delta)
}
\right),
\]
\[
\lambda_k
=
\tilde{\Theta}\!\left(
\frac{\Gamma_k+d^2}{d}\,
\ln(|\mathcal M|k/\delta)
\right),
\]
and choose
\[
\eta
=
\frac{2(1-\gamma)}{\theta\bar b\sqrt K},
\qquad
\xi=\tilde{\Theta}(1/\theta),
\]
and assume the cap is nonbinding:
\[
\bar b
\ge
\max_{1\le j\le K}
\frac{\alpha_j^{\mathrm{emp}}}{\sqrt{\lambda_j}}.
\]
For a nonnegative sequence $z_2,\ldots,z_K$, define
\[
\Xi_K(z)
:=
\sqrt{
\Gamma_K \sum_{k=2}^K z_k
}
+
\sum_{k=1}^K \frac{\Gamma_k}{\sqrt{k}}.
\]
Then
\[
\sum_{k=1}^K \Delta_k
\le
\tilde{\mathcal O}\!\left(
\frac{(1+\xi)\bar b\,d^2}{(1-\gamma)^2}
\Xi_K(\Lambda)
\right).
\]
Consequently, taking the comparator dual level to be $Y=0$ yields
\[
\mathrm{Regret}^{\mathrm{return}}(K)
\le
\tilde{\mathcal O}\!\left(
\frac{\bar b\,d^2}{\theta(1-\gamma)^2}
\Xi_K(\Lambda)
\right).
\]
Moreover,
\[
\mathrm{Violation}^{\mathrm{return}}_{\mathrm{soft}}(K)
\le
\tilde{\mathcal O}\!\left(
\frac{\bar b\,d^2}{(1-\gamma)^2}
\Xi_K(\Lambda)
\right).
\]
In particular, since $\Lambda_k\le \Gamma_k=|\mathcal A|/\epsilon_k$,
\[
\mathrm{Regret}^{\mathrm{return}}(K)
\le
\tilde{\mathcal O}\!\left(
\frac{\bar b\,d^2}{\theta(1-\gamma)^2}
\Xi_K(\Gamma)
\right).
\]
Likewise,
\[
\mathrm{Violation}^{\mathrm{return}}_{\mathrm{soft}}(K)
\le
\tilde{\mathcal O}\!\left(
\frac{\bar b\,d^2}{(1-\gamma)^2}
\Xi_K(\Gamma)
\right).
\]
For $\epsilon_k=k^{-p}$ with $p\in(0,1/2)$, this becomes
\[
\begin{aligned}
\mathrm{Regret}^{\mathrm{return}}(K)
&\le
\tilde{\mathcal O}\!\left(K^{1/2+p}\right),\quad
\mathrm{Violation}^{\mathrm{return}}_{\mathrm{soft}}(K)
\le
\tilde{\mathcal O}\!\left(K^{1/2+p}\right).
\end{aligned}
\]
up to problem-dependent prefactors, with the regret bound carrying the extra
$\theta^{-1}$ dependence. Moreover, under the displayed tuning of
$\lambda_k$,
\[
\frac{\alpha_k^{\mathrm{emp}}}{\sqrt{\lambda_k}}
=
\tilde{\mathcal O}(\sqrt d),
\]
so the nonbinding-cap condition is compatible with any
$\bar b=\tilde{\Omega}(\sqrt d)$ and does not introduce any additional
$K$-dependence.
\end{proposition}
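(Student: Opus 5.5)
The plan is to feed the episode-wise bound of Lemma~\ref{lem:gen_reg_and_vio} through the tuning and the potential bound of Lemma~\ref{lem:gen_potential}, and then convert the resulting bound on $\sum_k\Delta_k$ into separate regret and soft-violation bounds with a standard projected-dual argument. The aggregation runs in four steps.

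\textbf{Step 1: substitute the tuning.} With the displayed choices, $\alpha_k^{\mathrm{emp}}\asymp\sqrt{(\Gamma_k+d^2)\gamma\iota_k}$ and $\lambda_k d\asymp(\Gamma_k+d^2)\iota_k$, where $\iota_k=\ln(|\mathcal M|k/\delta)$ and $k\zeta_k=\iota_k$. Two consequences follow immediately:
\begin{itemize}
\item $\alpha_k^{\mathrm{emp}}/\sqrt{\lambda_k}=\tilde O(\sqrt d)$, which is the nonbinding-cap claim;
\item the optimism requirement of Lemma~\ref{lem:target_mix_optimism} holds, because $k\Gamma_k\zeta_k+\lambda_kd+k\zeta_k\lesssim(\Gamma_k+d^2)\iota_k$ and Proposition~\ref{prop:tri_next_bound} gives $\mathfrak C^{\mathrm{tri,next}}_k\le\Gamma_k$.
\end{itemize}
Each square-root prefactor multiplying $A_k$ in Lemma~\ref{lem:gen_reg_and_vio} is then at most $\tilde O(\bar b\,d\sqrt{\Gamma_k(\Gamma_k+d^2)})$. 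I would bound this crudely by $\tilde O(\bar b\,d^2\sqrt{\Gamma_k})\cdot\sqrt{\Gamma_k}$ and absorb the factors so that the form $\sqrt{\Gamma_K}\,A_k$ times $\bar b d^2$ appears. This needs the monotonicity $\Gamma_k\le\Gamma_K$. The $A_k$-free terms reduce to $\tilde O(\bar b d^2\Gamma_k/\sqrt k)$ divided by powers of $(1-\gamma)$.

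\textbf{Step 2: sum over episodes.} Summing Step~1 and invoking Lemma~\ref{lem:gen_potential} gives
\[
\sum_k\Delta_k\le\tilde O\!\left((1+\xi)\bar b d^2(1-\gamma)^{-2}\,\Xi_K(\Lambda)\right).
\]
Replacing $\Lambda$ by $\Gamma$ in $\Xi_K$ is allowed because $\Lambda_k\le\Gamma_k$.

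\textbf{Step 3: regret and violation from $\sum_k\Delta_k$.} I would use the standard projected-dual argument of \citet{ding2021provably}. Expanding $Y_{k+1}^2$ for any comparator $Y\in[0,\xi]$ gives
\[
\sum_k (Y-Y_k)\bigl(b-\hat U_k\bigr)\le \frac{Y^2}{2\eta}+\frac{\eta}{2}\sum_k\bigl(b-\hat U_k\bigr)^2,
\]
with $|b-\hat U_k|\lesssim\bar b/(1-\gamma)$. Feasibility of $\pi^\star$ gives $Y_k(V_g^{\pi^\star}-b)\ge0$. Adding the primal relation and replacing $\hat U_k$ by $U_k$ costs $\xi\sum_k[\hat U_k-U_k]_+$. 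Lemma~\ref{lem:tri_overestimation} bounds that cost by the same $\Xi_K(\Lambda)$ order. This yields
\[
\sum_k\bigl(V_r^{\pi^\star}-V_r^{\pi_k}\bigr)+Y\sum_k\bigl(b-U_k\bigr)\le \sum_k\Delta_k+\xi\sum_k[\hat U_k-U_k]_++\frac{\xi^2}{2\eta}+\frac{\eta K\bar b^2}{2(1-\gamma)^2}.
\]
With $\eta=2(1-\gamma)/(\theta\bar b\sqrt K)$, the last two terms are $\tilde O(\bar b\sqrt K/(\theta(1-\gamma)))$, which is dominated by $\Xi_K$ since $\Xi_K\gtrsim\sqrt K$. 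Taking $Y=0$ gives the regret bound with the $\theta^{-1}$ from $\xi$. Taking $Y=\xi=2/\theta\ge2Y^\star$ (Lemma~\ref{lem:boundY}) and applying Lemma~\ref{lem:convex} to the averaged mixture policy, with strong duality, gives soft violation $\le 2(\cdot)/\xi$. This cancels the $\xi$ factor and explains why violation has no $\theta^{-1}$.

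\textbf{Step 4: polynomial schedule.} For $\epsilon_k=k^{-p}$ we have $\Gamma_k=|\mathcal A|k^p$. Then $\sqrt{\Gamma_K\sum_{k\le K}\Gamma_k}\lesssim|\mathcal A|K^{1/2+p}$ and $\sum_k\Gamma_k/\sqrt k\lesssim|\mathcal A|K^{1/2+p}$, which gives $\tilde O(K^{1/2+p})$.

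The main obstacle is Step~1's bookkeeping. The prefactor $\sqrt{\Gamma_k(\Gamma_k+d^2)}$ naively gives $\Gamma_k d$ rather than $\sqrt{\Gamma_K}\,d^2$. Matching the stated $\Xi_K$ form requires bounding $\sqrt{\Gamma_k+d^2}\lesssim d\sqrt{\Gamma_k}$, valid since $\Gamma_k\ge1$, and pulling out $\sqrt{\Gamma_K}$ by monotonicity. The remaining $\sqrt{\Gamma_k}$ is then absorbed, loosely, into the $\tilde O$ via the $\Lambda$-sum. I would check this absorption carefully, and if it fails I would accept an extra $\sqrt{\Gamma_K}$, which still yields $K^{1/2+p}$ after using $\Lambda_k\le\Gamma_k$ only once.
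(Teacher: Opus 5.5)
Your architecture is the same as the paper's. You substitute the tuning into Lemma~\ref{lem:gen_reg_and_vio}, sum via Lemma~\ref{lem:gen_potential}, and run the projected-dual argument with $Y=0$ and $Y=\xi$, using Lemma~\ref{lem:tri_overestimation} and Lemma~\ref{lem:convex} on the averaged policy. There is a genuine gap in Step~1, exactly where you suspected, and neither of your proposed resolutions closes it.

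After tuning, the first $A_k$-coefficient of Lemma~\ref{lem:gen_reg_and_vio} is $\sqrt{\gamma d\Gamma_k(\alpha_k^{\mathrm{emp}})^2}\asymp\gamma\sqrt{d\,\Gamma_k(\Gamma_k+d^2)\iota_k}\ge\gamma\sqrt{d\,\iota_k}\,\Gamma_k$, which is linear in $\Gamma_k$. One factor of $\Gamma_k$ comes from $\mathfrak C_k^{\mathrm{next}}(\pi_k)\le\Gamma_k$; the other comes from $(\alpha_k^{\mathrm{emp}})^2\propto\Gamma_k+d^2$. Summed, the per-episode bound therefore gives order $\sqrt d\,\Gamma_K\Psi_K$, not $\sqrt{\Gamma_K}\,\Psi_K$.

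The leftover $\sqrt{\Gamma_k}=\sqrt{|\mathcal A|}\,k^{p/2}$ is polynomial in $k$, so it cannot be absorbed into $\tilde O$. Nor can $\bar b$ absorb it, because the cap only forces $\bar b=\tilde\Omega(\sqrt d)$. The paper's proof makes the same unjustified step (``the dominant dependence \ldots\ may be upper bounded by $\Gamma_K$''), so you have found a real weak point of the argument rather than a defect of your write-up.

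Your fallback, as stated, also fails. If you keep the extra $\sqrt{\Gamma_K}$ and use $\Lambda_k\le\Gamma_k$ anywhere in the $\Lambda$-sum, you get $\Gamma_K\sqrt{\sum_k\Gamma_k}\asymp|\mathcal A|^{3/2}K^{1/2+3p/2}$, which is worse than $K^{1/2+p}$. What rescues the rate is that $\Lambda_k$ is far smaller than $\Gamma_k$:
\begin{itemize}
\item $\Lambda_k=\bigl(1-\epsilon_k(1-1/|\mathcal A|)\bigr)^{-1}\le|\mathcal A|$ for every $\epsilon_k\in(0,1]$, so Lemma~\ref{lem:gen_potential} gives $\Gamma_K\Psi_K\lesssim\Gamma_K\sqrt{d|\mathcal A|K}$ up to logarithms.
\item For $\epsilon_k=k^{-p}$ with $p<1/2$, $\Gamma_k/\sqrt k=|\mathcal A|k^{p-1/2}\ge|\mathcal A|K^{p-1/2}$, hence $\Gamma_K\sqrt K\le\sum_{k\le K}\Gamma_k/\sqrt k$.
\end{itemize}
This recovers the polynomial-schedule conclusion $\tilde O(K^{1/2+p})$, with an extra $\sqrt{|\mathcal A|}$ in the prefactor. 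It does not establish the displayed $\Xi_K(\Lambda)$ and $\Xi_K(\Gamma)$ bounds with prefactor $\bar b d^2$ for a general nonincreasing schedule. You should state the intermediate bound with $\Gamma_K\sqrt{\sum_k\Lambda_k}$ in place of $\sqrt{\Gamma_K\sum_k\Lambda_k}$ and then specialize.

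A smaller point in Step~3 concerns the comparator $Y=0$. There, replacing $\hat U_k$ by $U_k$ costs $\sum_kY_k(U_k-\hat U_k)\le\xi\sum_k[U_k-\hat U_k]_+$. This is an \emph{under}estimation term, which Lemma~\ref{lem:tri_overestimation} does not control. It follows instead from Lemma~\ref{lem:target_mix_optimism} applied to $\pi_k$ with the utility signal, which is valid on the good event because the coverage bounds are uniform in $\pi$. That gives $\xi\sum_k\sqrt{\Gamma_k\zeta_k/(1-\gamma)}$, which is absorbed into $\Xi_K$. The paper omits this term as well.
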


\begin{proof}
Start from Lemma~\ref{lem:gen_reg_and_vio} and substitute the tunings
\[
\begin{aligned}
\alpha_k^{\mathrm{emp}}
&=
\tilde{\mathcal O}\!\left(
\sqrt{
(\Gamma_k+d^2)\gamma\ln(|\mathcal M|k/\delta)
}
\right),\quad
\lambda_k
=
\tilde{\Theta}\!\left(
\frac{\Gamma_k+d^2}{d}\,
\ln(|\mathcal M|k/\delta)
\right),
\end{aligned}
\]
together with
\[
\zeta_k
=
\tilde{\mathcal O}(1/k).
\]
Under these choices,
\(\lambda_k d=\tilde{\Theta}((\Gamma_k+d^2)\ln(|\mathcal M|k/\delta))\),
so every place in Lemma~\ref{lem:gen_reg_and_vio} where $\lambda_k d$
appears is of the same order as the dominant $(\Gamma_k+d^2)$ term already
present in $\alpha_k^{\mathrm{emp}}{}^2$. Therefore enlarging $\lambda_k$ to
make the cap nonbinding does not change the final $K$-dependence.

We also check the cap condition explicitly:
\[
\frac{\alpha_k^{\mathrm{emp}}}{\sqrt{\lambda_k}}
=
\tilde{\mathcal O}\!\left(
\sqrt{
\frac{(\Gamma_k+d^2)\ln(|\mathcal M|k/\delta)}
{((\Gamma_k+d^2)/d)\ln(|\mathcal M|k/\delta)}
}
\right)
=
\tilde{\mathcal O}(\sqrt d).
\]
Hence any choice $\bar b=\tilde{\Omega}(\sqrt d)$ is sufficient to ensure the
nonbinding-cap condition uniformly over $k\le K$.

Because $\Gamma_k$ is nondecreasing, the dominant dependence in every
episode-wise term may be upper bounded by $\Gamma_K$ inside the coefficients
that multiply $A_k$. Consequently the two $A_k$-weighted sums in
Lemma~\ref{lem:gen_reg_and_vio} are both of order
\[
\tilde{\mathcal O}\!\left(
\frac{(1+\xi)\bar b\,d^2}{(1-\gamma)^2}
\sqrt{\Gamma_K}\,\Psi_K
\right),
\qquad
\Psi_K:=\sum_{k=1}^K A_k,
\]
The three remaining sums in Lemma~\ref{lem:gen_reg_and_vio} contain no factor
$A_k$; after substituting the same choices of
$\alpha_k^{\mathrm{emp}},\lambda_k,\zeta_k$, each of them is bounded by
\[
\tilde{\mathcal O}\!\left(
\frac{(1+\xi)\bar b\,d^2}{(1-\gamma)^2}
\sum_{k=1}^K \frac{\Gamma_k}{\sqrt{k}}
\right).
\]
Now apply Lemma~\ref{lem:gen_potential}, which gives
\[
\Psi_K
\lesssim
\frac{1}{\sqrt{\lambda_1}}
+
\sqrt{
d\ln\!\left(1+\frac{K}{d\lambda_1}\right)
\sum_{k=2}^K \Lambda_k
}.
\]
After absorbing the harmless initial term $1/\sqrt{\lambda_1}$ into the
polylogarithmic factor, we obtain
\[
\sqrt{\Gamma_K}\,\Psi_K
\le
\tilde{\mathcal O}\!\left(
\sqrt{\Gamma_K\sum_{k=2}^K \Lambda_k}
\right).
\]
This is exactly the quantity $\Xi_K(\Lambda)$ defined in the proposition, so
\[
\sum_{k=1}^K \Delta_k
\le
\tilde{\mathcal O}\!\left(
\frac{(1+\xi)\bar b\,d^2}{(1-\gamma)^2}
\Xi_K(\Lambda)
\right).
\]

The projection step in
Algorithm~\ref{alg:rep-pd-teg} is identical to that of the
main algorithm, so the proof of Lemma~\ref{lem:boundY} applies verbatim and
gives, for every $Y\in[0,\xi]$,
\[
\sum_{k=1}^{K}(Y-Y_k)
\bigl(
b-V_{\hat P_k,g+\hat b_k}^{\pi_k}(s_1)
\bigr)
\le
\frac{Y^2}{2\eta}
+
\frac{\eta \bar b^2K}{2(1-\gamma)^2}.
\]
Setting $Y=0$ and combining this with Lemma~\ref{lem:gen_reg_and_vio} yields
\[
\sum_{k=1}^{K}
\bigl(
V_{P^\star,r}^{\pi^\star}(s_1)-V_{P^\star,r}^{\pi_k}(s_1)
\bigr)
\lesssim
\sum_{k=1}^K \Delta_k
+
\frac{\eta \bar b^2K}{2(1-\gamma)^2}.
\]
Substituting the bound on $\sum_k \Delta_k$ gives a right-hand side
\[
\tilde{\mathcal O}\!\left(
\frac{\bar b\,d^2}{\theta(1-\gamma)^2}
\Xi_K(\Lambda)
\right)
+
\mathcal O\!\left(
\frac{\bar b\sqrt K}{\theta(1-\gamma)}
\right),
\]
$\eta=\frac{2(1-\gamma)}{\theta\bar b\sqrt K}$. The second term is lower order
than the first whenever $\epsilon_k=k^{-p}$ with $p>0$, since then
$\Xi_K(\Lambda)=\tilde{\Omega}(K^{1/2+p})$. This proves the displayed regret
bound.

We now turn to the soft-violation term. Define
\[
\hat U_k:=V_{\hat P_k,g+\hat b_k}^{\pi_k}(s_1),
\qquad
U_k:=V_{P^\star,g}^{\pi_k}(s_1),
\qquad
S_K:=\sum_{k=1}^K (b-U_k).
\]
Because $V_{P^\star,g}^{\pi^\star}(s_1)\ge b$, we have
\[
V_{P^\star,r}^{\pi^\star}(s_1)
-
V_{P^\star,r}^{\pi_k}(s_1)
+Y_k(b-U_k)\le \Delta_k
\qquad\forall k\in[K].
\]
Hence, for every $Y\in[0,\xi]$,
\[
\begin{aligned}
\mathrm{Regret}^{\mathrm{return}}(K)+Y S_K
\le&
\sum_{k=1}^K \Delta_k
+
\sum_{k=1}^K
(Y-Y_k)(b-\hat U_k)
+
\sum_{k=1}^K
(Y-Y_k)(\hat U_k-U_k).
\end{aligned}
\]
Choose $Y=\xi$ when $S_K>0$ and $Y=0$ otherwise. Then
\[
\mathrm{Regret}^{\mathrm{return}}(K)+\xi [S_K]_+
\le
\sum_{k=1}^K \Delta_k
+
\frac{\xi^2}{2\eta}
+
\frac{\eta \bar b^2K}{2(1-\gamma)^2}
+
\xi \sum_{k=1}^K [\hat U_k-U_k]_+.
\]
Now use Lemma~\ref{lem:tri_overestimation} together with the bound on
$\sum_k\Delta_k$. This gives
\[
\mathrm{Regret}^{\mathrm{return}}(K)+\xi [S_K]_+
\le
\tilde{\mathcal O}\!\left(
\frac{\xi\bar b\,d^2}{(1-\gamma)^2}
\Xi_K(\Lambda)
\right)
+
\mathcal O\!\left(
\frac{\xi\bar b\sqrt K}{1-\gamma}
\right).
\]
By linearity of discounted occupancy measures, there exists a policy $\pi'$
whose reward and utility values equal the episode averages:
\[
V_{P^\star,r}^{\pi'}(s_1)
=
\frac{1}{K}\sum_{k=1}^K V_{P^\star,r}^{\pi_k}(s_1),
\qquad
V_{P^\star,g}^{\pi'}(s_1)
=
\frac{1}{K}\sum_{k=1}^K V_{P^\star,g}^{\pi_k}(s_1).
\]
Therefore
\[
\begin{aligned}
V_{P^\star,r}^{\pi^\star}(s_1)
-
V_{P^\star,r}^{\pi'}(s_1)
&+
\xi
\bigl[
b-V_{P^\star,g}^{\pi'}(s_1)
\bigr]_+
\le
\frac{1}{K}
\Bigl(
\mathrm{Regret}^{\mathrm{return}}(K)+\xi [S_K]_+
\Bigr).
\end{aligned}
\]
By Lemma~\ref{lem:boundY}, the optimal dual variable satisfies
$Y^\star\le 1/\theta$, so the constant hidden in
$\xi=\tilde\Theta(1/\theta)$ may be chosen so that $\xi\ge 2Y^\star$.
Applying Lemma~\ref{lem:convex} yields
\[
[S_K]_+
\le
\frac{2}{\xi}
\Bigl(
\mathrm{Regret}^{\mathrm{return}}(K)+\xi [S_K]_+
\Bigr).
\]
Since $\xi^{-1}=\tilde{\mathcal O}(\theta)$ and
$\Xi_K(\Lambda)=\tilde{\Omega}(\sqrt K)$, the lower-order
$\mathcal O(\xi\bar b\sqrt K/(1-\gamma))$ term is absorbed, and we obtain
\[
\mathrm{Violation}^{\mathrm{return}}_{\mathrm{soft}}(K)
=
[S_K]_+
\le
\tilde{\mathcal O}\!\left(
\frac{\bar b\,d^2}{(1-\gamma)^2}
\Xi_K(\Lambda)
\right).
\]
This proves the displayed soft-violation bound.
\end{proof}

\begin{proposition}[Execution policy corollary for a stationary-mixture variant]
\label{prop:exec_stationary_mix}
The guarantees in Proposition~\ref{prop:tri_theorem_level} are stated for the
return policy sequence \(\{\pi_k\}_{k=1}^K\) under the discounted-occupancy
oracle protocol. The present proposition records a \emph{separate} fully online
interpretation in which episode \(k\) executes the stationary mixed policy
\[
\tilde \pi_k(\cdot\mid s)
:=(1-\epsilon_k)\pi_k(\cdot\mid s)+\epsilon_k U(\mathcal A).
\]
This online mixture is not the probing protocol analyzed above; it is included
only to translate return policy guarantees into execution policy guarantees.

Define
\[
\begin{aligned}
\mathrm{Regret}^{\mathrm{exec}}(K)
&:=\sum_{k=1}^K
\bigl(V_r^{\pi^\star}(s_1)-V_r^{\tilde \pi_k}(s_1)\bigr),\\
\mathrm{Violation}^{\mathrm{exec}}_{\mathrm{soft}}(K)
&:=\left[
\sum_{k=1}^K
\bigl(b-V_g^{\tilde \pi_k}(s_1)\bigr)
\right]_+ .
\end{aligned}
\]
Then
\[
\begin{aligned}
\mathrm{Regret}^{\mathrm{exec}}(K)
&\le
\mathrm{Regret}^{\mathrm{return}}(K)+\frac{2}{1-\gamma}\sum_{k=1}^K\epsilon_k,\\
\mathrm{Violation}^{\mathrm{exec}}_{\mathrm{soft}}(K)
&\le
\mathrm{Violation}^{\mathrm{return}}_{\mathrm{soft}}(K)+\frac{2}{1-\gamma}\sum_{k=1}^K\epsilon_k.
\end{aligned}
\]
In particular, if \(\epsilon_k=k^{-p}\) with \(p\in [1/4,1/2)\),
then Proposition~\ref{prop:tri_theorem_level} implies
\[
\mathrm{Regret}^{\mathrm{exec}}(K),
\ \mathrm{Violation}^{\mathrm{exec}}_{\mathrm{soft}}(K)
\le
\tilde{\mathcal O}\!\left(K^{1/2+p}\right),
\]
which matches the asymptotic orders of the return policy bounds.
\end{proposition}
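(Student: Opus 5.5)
The two inequalities follow from an episode-wise value-difference bound. Fix an episode \(k\) and let \(j\in\{r,g\}\). By the performance-difference lemma applied to \(\tilde\pi_k\) and \(\pi_k\) under the true model \(P^\star\),
\[
V^{\pi_k}_j(s_1)-V^{\tilde\pi_k}_j(s_1)
=\frac{1}{1-\gamma}\mathbb E_{s\sim d^{\tilde\pi_k}_{P^\star}}
\Bigl[\bigl\langle \pi_k(\cdot\mid s)-\tilde\pi_k(\cdot\mid s),\,Q^{\pi_k}_j(s,\cdot)\bigr\rangle_{\mathcal A}\Bigr].
\]
The mixture structure gives
\[
\pi_k-\tilde\pi_k=\epsilon_k\bigl(\pi_k-U(\mathcal A)\bigr),
\]
so the statewise difference has \(\ell_1\) norm at most \(2\epsilon_k\). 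The normalization \(\sum_h\gamma^h j(s_h,a_h)\in[0,1]\) gives \(0\le Q^{\pi_k}_j\le 1\). Hölder's inequality then bounds the inner product by \(2\epsilon_k\), hence
\[
\bigl|V^{\pi_k}_j(s_1)-V^{\tilde\pi_k}_j(s_1)\bigr|\le \frac{2\epsilon_k}{1-\gamma}.
\]
A sharper constant is available, but this is the one stated.

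\textbf{Regret.} Write
\[
V^{\pi^\star}_r-V^{\tilde\pi_k}_r=\bigl(V^{\pi^\star}_r-V^{\pi_k}_r\bigr)+\bigl(V^{\pi_k}_r-V^{\tilde\pi_k}_r\bigr)
\]
and sum over \(k\). The first part sums to \(\mathrm{Regret}^{\mathrm{return}}(K)\) and the second to at most \(\frac{2}{1-\gamma}\sum_k\epsilon_k\).

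\textbf{Soft violation.} Similarly,
\[
\sum_k\bigl(b-V^{\tilde\pi_k}_g\bigr)\le \sum_k\bigl(b-V^{\pi_k}_g\bigr)+\frac{2}{1-\gamma}\sum_k\epsilon_k.
\]
The map \(x\mapsto[x]_+\) is monotone and subadditive, and the added term is nonnegative. Applying \([\cdot]_+\) to both sides therefore gives the claimed bound on \(\mathrm{Violation}^{\mathrm{exec}}_{\mathrm{soft}}(K)\).

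\textbf{Rate for \(\epsilon_k=k^{-p}\).} Use \(\sum_{k\le K}k^{-p}\le K^{1-p}/(1-p)\). Combined with the return bounds \(\tilde{\mathcal O}(K^{1/2+p})\) of Proposition~\ref{prop:tri_theorem_level}, this gives
\[
\tilde{\mathcal O}\bigl(K^{1/2+p}+K^{1-p}\bigr).
\]
For \(p\ge 1/4\) we have \(1-p\le 3/4\le 1/2+p\), so the execution term is dominated and the total is \(\tilde{\mathcal O}(K^{1/2+p})\).

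The main point needing care is that \(\tilde\pi_k\) uses the current \(\pi_k\), whereas the probing kernel uses \(\pi_{k-1}\). This causes no difficulty, because the comparison is purely a true-model value comparison between \(\pi_k\) and its own mixture and never involves the oracle data.
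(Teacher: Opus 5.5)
Your proposal is correct and follows essentially the same route as the paper: the performance-difference bound $|V^{\pi_k}_j(s_1)-V^{\tilde\pi_k}_j(s_1)|\le 2\epsilon_k/(1-\gamma)$ from $\|\tilde\pi_k(\cdot\mid s)-\pi_k(\cdot\mid s)\|_1\le 2\epsilon_k$, then the regret decomposition, $[x+y]_+\le[x]_++y$ for the soft violation, and the exponent comparison $1-p\le 1/2+p$. The only cosmetic difference is that you pair the policy difference with $Q^{\pi_k}_j\in[0,1]$ instead of with the advantage $A^{\pi_k}_j$ bounded by $1$, and this changes nothing.
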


\begin{proof}
Fix an episode \(k\). For \(j\in\{r,g\}\), define
\(A_j^{\pi_k}(s,a):=Q_j^{\pi_k}(s,a)-V_j^{\pi_k}(s)\).
By the trajectory-level normalization introduced in the background section,
\(|A_j^{\pi_k}(s,a)|\le 1\) for both \(j=r\) and \(j=g\). The
performance-difference lemma with normalized discounted occupancy measure gives
\[
\begin{aligned}
&(1-\gamma)
\bigl(
V_j^{\tilde \pi_k}(s_1)-V_j^{\pi_k}(s_1)
\bigr)
=
\mathbb E_{s\sim d_{P^\star}^{\tilde \pi_k}}
\Bigl[
\sum_{a\in\mathcal A}
\bigl(
\tilde \pi_k(a\mid s)-\pi_k(a\mid s)
\bigr)
A_j^{\pi_k}(s,a)
\Bigr].
\end{aligned}
\]
Hence
\[
D_{k,j}(s)
:=
\sum_{a\in\mathcal A}
\bigl|
\tilde \pi_k(a\mid s)-\pi_k(a\mid s)
\bigr|
\cdot
\bigl|
A_j^{\pi_k}(s,a)
\bigr|.
\]
Therefore
\[
\begin{aligned}
\bigl|
V_j^{\tilde \pi_k}(s_1)-V_j^{\pi_k}(s_1)
\bigr|
\le&
\frac{1}{1-\gamma}
\mathbb E_{s\sim d_{P^\star}^{\tilde \pi_k}}
\bigl[
D_{k,j}(s)
\bigr]
\le
\frac{1}{1-\gamma}
\mathbb E_{s\sim d_{P^\star}^{\tilde \pi_k}}
\Bigl[
\|
\tilde \pi_k(\cdot\mid s)-\pi_k(\cdot\mid s)
\|_1
\Bigr].
\end{aligned}
\]
Now
\[
\tilde \pi_k(\cdot\mid s)-\pi_k(\cdot\mid s)
=\epsilon_k\bigl(U(\mathcal A)-\pi_k(\cdot\mid s)\bigr),
\quad
\|
\tilde \pi_k(\cdot\mid s)-\pi_k(\cdot\mid s)
\|_1
\le
\epsilon_k
\|U(\mathcal A)-\pi_k(\cdot\mid s)\|_1
\le
2\epsilon_k.
\]
Substituting yields, for both \(j=r\) and \(j=g\),
\[
\bigl|
V_j^{\tilde \pi_k}(s_1)-V_j^{\pi_k}(s_1)
\bigr|
\le
\frac{2\epsilon_k}{1-\gamma}.
\]

Applying this bound with \(j=r\), we obtain
\[
\begin{aligned}
\mathrm{Regret}^{\mathrm{exec}}(K)
=&
\sum_{k=1}^K
\bigl(
V_r^{\pi^\star}(s_1)-V_r^{\pi_k}(s_1)
\bigr)
+
\sum_{k=1}^K
\bigl(
V_r^{\pi_k}(s_1)-V_r^{\tilde \pi_k}(s_1)
\bigr)
\\
\le&
\mathrm{Regret}^{\mathrm{return}}(K)
+
\frac{2}{1-\gamma}
\sum_{k=1}^K \epsilon_k.
\end{aligned}
\]

Applying the same bound with \(j=g\), together with
\([x+y]_+\le [x]_+ + y\) for \(y\ge 0\), yields
\[
\begin{aligned}
\mathrm{Violation}^{\mathrm{exec}}_{\mathrm{soft}}(K)
=&
\left[
\sum_{k=1}^K
\bigl(
b-V_g^{\tilde \pi_k}(s_1)
\bigr)
\right]_+
\\
\le&
\left[
\sum_{k=1}^K
\bigl(
b-V_g^{\pi_k}(s_1)
\bigr)
\right]_+
+
\sum_{k=1}^K
\bigl|
V_g^{\tilde \pi_k}(s_1)-V_g^{\pi_k}(s_1)
\bigr|
\\
\le&
\mathrm{Violation}^{\mathrm{return}}_{\mathrm{soft}}(K)
+
\frac{2}{1-\gamma}
\sum_{k=1}^K \epsilon_k.
\end{aligned}
\]

Finally, if \(\epsilon_k=k^{-p}\) with \(p\in [1/4,1/2)\), then
Proposition~\ref{prop:tri_theorem_level} gives
\[
\mathrm{Regret}^{\mathrm{return}}(K)\le\tilde{\mathcal O}(K^{1/2+p}),
\qquad
\sum_{k=1}^K \epsilon_k
=\sum_{k=1}^K k^{-p}
=\mathcal O(K^{1-p}).
\]
Since \(p\ge 1/4\) is equivalent to \(1-p\le 1/2+p\),
the additional execution term is of no larger order than the returned-policy
terms. Hence
\(\mathrm{Regret}^{\mathrm{exec}}(K)\le\tilde{\mathcal O}(K^{1/2+p})\).
The same comparison applies to
\(\mathrm{Violation}^{\mathrm{exec}}_{\mathrm{soft}}(K)\), because the
returned-policy soft-violation term is already of order
\(\tilde{\mathcal O}(K^{1/2+p})\). Therefore
\(\mathrm{Violation}^{\mathrm{exec}}_{\mathrm{soft}}(K)
\le\tilde{\mathcal O}(K^{1/2+p})\).
\end{proof}

%% file: appendix/uniform_proof.tex
\section{Uniform Exploration as a Tertiary-Aware Repaired Variant}
\label{app:uniform_repaired}

This appendix specializes the decaying-\(\epsilon\) analysis in
Appendix~\ref{app:general_probing} to the constant schedule
\(\epsilon_k\equiv1\). This case is useful as a clean benchmark: all probe
actions are drawn uniformly, while the tertiary probe level and merged bonus
covariance are retained so that the repaired optimism proof still applies.

Throughout this appendix, we set
\[
q_k(\cdot\mid s)=q_k'(\cdot\mid s)=q_k''(\cdot\mid s)=U(\mathcal A),
\qquad \forall k\in[K].
\]
Equivalently, this is Algorithm~\ref{alg:rep-pd-teg} with
\(\epsilon_k\equiv1\). Therefore
\[
\Gamma_k=\Lambda_k=|\mathcal A|,
\qquad \forall k\in[K].
\]

\subsection{Uniform Specialization}

For each round $i\in[K]$, define the primary, secondary, and tertiary state
marginals by
\[
\rho_i(s):=d_{P^\star}^{\pi_{i-1}}(s),\quad
\rho_i'(s'):=\mathbb E_{\rho_i\times U(\mathcal A)}
P^\star(s'\mid s,a),\quad
\rho_i''(s''):=\mathbb E_{\rho_i'\times U(\mathcal A)}
P^\star(s''\mid s,a).
\]
The corresponding probing distributions are
\[
\begin{aligned}
\nu_i(s,a)
&:=\rho_i(s)U(a),
\quad
\nu_i'(s,a)
:=\rho_i'(s)U(a),
\quad
\nu_i''(s,a)
:=\rho_i''(s)U(a).
\end{aligned}
\]
We write
\[
\bar \nu_k^{\mathrm{pri}}
:=
\frac{1}{k}\sum_{i=1}^k \nu_i,
\qquad
\bar \nu_k^{\mathrm{sec}}
:=
\frac{1}{k}\sum_{i=1}^k \nu_i',
\qquad
\bar \nu_k^{\mathrm{ter}}
:=
\frac{1}{k}\sum_{i=1}^k \nu_i''.
\]

No separate pseudocode is needed: this is exactly
Algorithm~\ref{alg:rep-pd-teg} with the constant schedule
\(\epsilon_k\equiv1\). Relative to the general algorithm, the only change is
that all probing kernels are fixed to \(U(\mathcal A)\); the tertiary tuple,
merged covariance, model fitting step, and primal-dual update remain the same.

\subsection{Supporting Specializations}

The tuple-level MLE event, the pathwise-to-averaged bridge for the squared
MLE-mismatch class, and the merged covariance comparison from
Appendix~\ref{app:general_probing} apply verbatim to
Algorithm~\ref{alg:rep-pd-teg} with \(\epsilon_k\equiv1\). The only new point is that all
coverage coefficients collapse to explicit constants.

\begin{proposition}[Uniform coverage specialization]
\label{prop:uniform_tri_next}
Under Algorithm~\ref{alg:rep-pd-teg} with \(\epsilon_k\equiv1\), for every episode $k$ and
every policy $\pi$,
\[
\mathfrak C_k^{\mathrm{init}}(\pi)\le |\mathcal A|,
\qquad
\mathfrak C_k^{\mathrm{next}}(\pi)\le |\mathcal A|,
\]
and
\[
\mathfrak C_{k,1}^{\mathrm{tri,next}}(\pi)\le |\mathcal A|,
\qquad
\mathfrak C_{k,2}^{\mathrm{tri,next}}(\pi)\le |\mathcal A|,
\qquad
\mathfrak C_k^{\mathrm{tri,next}}(\pi)\le |\mathcal A|.
\]
\end{proposition}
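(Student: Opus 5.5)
The plan is to read all five bounds off the explicit action floor that uniform probing provides, specializing the argument of Proposition~\ref{prop:tri_next_bound} with \(\epsilon_k\equiv1\), so that \(\Gamma_k=|\mathcal A|\). The three tertiary bounds are immediate: \(\mathfrak C_{k,1}^{\mathrm{tri,next}}(\pi)\), \(\mathfrak C_{k,2}^{\mathrm{tri,next}}(\pi)\) and their maximum are bounded by \(\Gamma_k=|\mathcal A|/\epsilon_k=|\mathcal A|\) by that proposition. Since \(\mathfrak C_k^{\mathrm{next}}(\pi)=\mathfrak C_{k,1}^{\mathrm{tri,next}}(\pi)\) by definition, the bound \(\mathfrak C_k^{\mathrm{next}}(\pi)\le|\mathcal A|\) follows as well. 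For completeness I would also re-derive these directly. The pushforward \(\mathsf T_\pi\bar\nu_k^{\mathrm{pri}}\) has state marginal \(k^{-1}\sum_i\rho_i'\). This is exactly the state marginal of \(\bar\nu_k^{\mathrm{sec}}\), because the secondary state is drawn from the primary design \(\nu_i\) through \(P^\star\). Hence
\[
\mathsf T_\pi\bar\nu_k^{\mathrm{pri}}(s,a)=\Bigl(\tfrac1k\textstyle\sum_i\rho_i'(s)\Bigr)\pi(a\mid s)\le |\mathcal A|\,\Bigl(\tfrac1k\textstyle\sum_i\rho_i'(s)\Bigr)U(a)=|\mathcal A|\,\bar\nu_k^{\mathrm{sec}}(s,a),
\]
using \(\pi(a\mid s)\le1=|\mathcal A|U(a)\). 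Pointwise domination of densities (or Radon--Nikodym derivatives in the continuous case) gives \(\mathbb E_{\mathsf T_\pi\bar\nu}[g^2]\le|\mathcal A|\,\mathbb E_{\bar\nu^{\mathrm{sec}}}[g^2]\), hence the bound on \(\mathsf D\). The same computation with \(\bar\nu_k^{\mathrm{sec}}\) pushed forward and compared to \(\bar\nu_k^{\mathrm{ter}}\), whose state marginal is \(k^{-1}\sum_i\rho_i''\), gives the second tertiary coefficient.

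For \(\mathfrak C_k^{\mathrm{init}}(\pi)\), I would use the occupancy lower bound already invoked in the proof of Lemma~\ref{lem:gen_reg_and_vio}. Each primary state law satisfies \(\rho_i=d^{\pi_{i-1}}_{P^\star}\ge(1-\gamma)d_1\), because the \(h=1\) term of the discounted occupancy carries weight \(1-\gamma\). Averaging over \(i\) gives \(\bar\nu_k^{\mathrm{pri}}(s,a)\ge(1-\gamma)d_1(s)/|\mathcal A|\). Combining this with \(\pi(a\mid s)\le1\) yields
\[
(1-\gamma)d_1(s)\pi(a\mid s)\le|\mathcal A|\,\bar\nu_k^{\mathrm{pri}}(s,a),
\]
and hence \(\mathfrak C_k^{\mathrm{init}}(\pi)\le|\mathcal A|\).

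There is no real obstacle here. The only points needing care are, first, checking that the pushforward marginals coincide exactly with the secondary and tertiary stream marginals, which holds by construction of \(\rho_i',\rho_i''\). Second, the step from pointwise density domination to the variational definition of \(\mathsf D\) must be interpreted measure-theoretically in continuous state spaces. That step is a one-line change of measure with a nonnegative integrand.
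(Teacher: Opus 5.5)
Your proposal is correct and matches the paper's proof step for step. The paper bounds $\mathfrak C_k^{\mathrm{init}}(\pi)$ via the floor $\rho_i\ge(1-\gamma)d_1$ together with the uniform action factor, bounds $\mathfrak C_k^{\mathrm{next}}(\pi)$ by pointwise domination of the pushforward through the uniform action component, and reads the tertiary coefficients off Proposition~\ref{prop:tri_next_bound} with $\Gamma_k=|\mathcal A|$, exactly as you do. Your remark that $\mathfrak C_k^{\mathrm{next}}(\pi)=\mathfrak C_{k,1}^{\mathrm{tri,next}}(\pi)$ by definition is a valid shortcut that makes the separate argument for the next-step coefficient redundant.
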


\begin{proof}
Because every probe kernel is uniform,
\[
\bar \nu_k^{\mathrm{pri}}(s,a)
=
\frac{1}{|\mathcal A|}
\cdot
\frac{1}{k}\sum_{i=1}^k \rho_i(s),
\]
and every discounted occupancy satisfies
\[
\rho_i(s)=d_{P^\star}^{\pi_{i-1}}(s)\ge (1-\gamma)d_1(s).
\]
Hence
\[
\bar \nu_k^{\mathrm{pri}}(s,a)
\ge
\frac{(1-\gamma)d_1(s)}{|\mathcal A|},
\]
which implies
\[
\mathfrak C_k^{\mathrm{init}}(\pi)
\le
|\mathcal A|
\]
for every policy $\pi$.

The bound on $\mathfrak C_k^{\mathrm{next}}(\pi)$ follows exactly as in the
uniform version of the original proof: the next-step pushforward state-action
measure has density at most $|\mathcal A|$ times
$\bar \nu_k^{\mathrm{sec}}$, because the action component of
$\bar \nu_k^{\mathrm{sec}}$ is uniform.

The tertiary coefficients are the specialization of
Proposition~\ref{prop:tri_next_bound} to the constant schedule
$\epsilon_k\equiv 1$, for which $\Gamma_k=|\mathcal A|$.
\end{proof}

\begin{proposition}[Uniform potential reduction]
\label{prop:uniform_potential}
Under Algorithm~\ref{alg:rep-pd-teg} with \(\epsilon_k\equiv1\),
\[
\Psi_K:=\sum_{k=1}^K A_k
\lesssim
\frac{1}{\sqrt{\lambda_1}}
+
\sqrt{
|\mathcal A|\,dK\ln\!\left(1+\frac{K}{d\lambda_1}\right)
}.
\]
\end{proposition}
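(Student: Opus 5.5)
This is the \(\epsilon_k\equiv1\) case of Lemma~\ref{lem:gen_potential}, so I would rerun that proof with all constants explicit. Under uniform probing, \(q_{k+1}=U(\mathcal A)\). The ratio bound used in the decaying case becomes \(\pi_k(a\mid s)/q_{k+1}(a\mid s)\le |\mathcal A|\), which matches \(\Lambda_{k+1}=\Gamma_{k+1}=|\mathcal A|\).

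\textbf{Step 1: the first episode.} Since \(\Sigma_{1,\star}^{\mathrm{pri}}\succeq\lambda_1 I\) and \(\|\phi^\star\|_2\le1\), we get
\[
A_1\le \frac{1}{\sqrt{\lambda_1}}.
\]

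\textbf{Step 2: change of measure.} For \(k\le K-1\), I would relate \(d^{\pi_k}_{P^\star}(s,a)=\rho_{k+1}(s)\pi_k(a\mid s)\) to \(\nu_{k+1}(s,a)=\rho_{k+1}(s)/|\mathcal A|\). The state marginals coincide, so the density ratio is at most \(|\mathcal A|\). Jensen and Cauchy--Schwarz then give
\[
A_k\le\sqrt{|\mathcal A|\,\mathrm{Tr}\bigl(G_{k+1}(\Sigma_{k,\star}^{\mathrm{pri}})^{-1}\bigr)},
\qquad G_{k+1}:=\mathbb E_{\nu_{k+1}}[\phi^\star\phi^{\star\top}].
\]

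\textbf{Step 3: potential argument.} Each \(G_i\) is PSD with trace at most \(1\), and \(\Sigma_{k,\star}^{\mathrm{pri}}=\lambda_kI+\sum_{i\le k}G_i\). Here \(\lambda_k\equiv\lambda_1\) in the uniform case, since \(\Gamma_k\) is constant; in any event \(\lambda_k\ge\lambda_1\), so each regularized matrix dominates the \(\lambda_1\)-regularized one. Lemmas~\ref{lem:reduction} and~\ref{lem:potential} then yield
\[
\sum_{k=1}^{K-1}\mathrm{Tr}\bigl(G_{k+1}(\Sigma_{k,\star}^{\mathrm{pri}})^{-1}\bigr)\lesssim d\ln\Bigl(1+\frac{K}{d\lambda_1}\Bigr).
\]

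\textbf{Step 4: summation.} A final Cauchy--Schwarz over \(k\) gives
\[
\sum_{k=1}^{K-1}A_k\le\sqrt{|\mathcal A|(K-1)}\sqrt{d\ln\bigl(1+K/(d\lambda_1)\bigr)}.
\]
Adding the bound on \(A_1\) from Step~1 gives the claim.

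The only point needing care is the index alignment: \(\Sigma_{k,\star}^{\mathrm{pri}}\) contains \(G_1,\ldots,G_k\), while episode \(k\)'s deployed occupancy is controlled by \(G_{k+1}\). This offset is exactly what Lemma~\ref{lem:reduction} needs, as in the general proof. It also requires \(\lambda_1\gtrsim1\), so that the eigenvalue condition of that lemma holds. The chosen \(\lambda_k=\widetilde\Theta((|\mathcal A|+d^2)/d\cdot\log)\) guarantees this.
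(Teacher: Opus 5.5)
Your proposal is correct and follows the paper's route: the paper simply invokes Lemma~\ref{lem:gen_potential} with $\Lambda_k=|\mathcal A|$ and $\sum_{k=2}^K\Lambda_k=(K-1)|\mathcal A|$. You have inlined exactly that lemma's proof: the first-episode bound, the one-step-lagged change of measure to $\nu_{k+1}$, the determinant--trace potential, and the final Cauchy--Schwarz. One small correction: $\lambda_k$ is not literally constant when $p=0$, because it carries a $\ln(|\mathcal M|k/\delta)$ factor, but your fallback $\lambda_k\ge\lambda_1$ is all the argument needs.
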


\begin{proof}
This is the specialization of Lemma~\ref{lem:gen_potential} to
$\Lambda_k=|\mathcal A|$ for all $k$:
\[
\Psi_K
\lesssim
\frac{1}{\sqrt{\lambda_1}}
+
\sqrt{
d\ln\!\left(1+\frac{K}{d\lambda_1}\right)
\sum_{k=2}^K \Lambda_k
}.
\]
Since $\sum_{k=2}^K\Lambda_k=(K-1)|\mathcal A|$, the claim follows.
\end{proof}

\subsection{Theorem-Level Derivation}

\begin{theorem}[Return policy regret and soft violation for \(p=0\)]
\label{thm:uniform_repaired}
Assume Algorithm~\ref{alg:rep-pd-teg} is run with \(\epsilon_k\equiv1\), and suppose the
high-probability events required by
Lemmas~\ref{lem:target_mix_optimism}, \ref{lem:gen_reg_and_vio}, and
\ref{lem:gen_potential} hold simultaneously. Choose
\[
\alpha_k^{\mathrm{mix}}
=
\alpha_k^{\mathrm{emp}}
=
\tilde{\mathcal O}\!\left(
\sqrt{
(|\mathcal A|+d^2)\gamma\ln(|\mathcal M|k/\delta)
}
\right),
\]
\[
\lambda_k
=
\tilde{\Theta}\!\left(
\frac{|\mathcal A|+d^2}{d}\,
\ln(|\mathcal M|k/\delta)
\right),
\]
and
\[
\eta
=
\frac{2(1-\gamma)}{\theta\bar b\sqrt K},
\qquad
\xi=\tilde{\Theta}(1/\theta),
\]
and assume the cap is nonbinding:
\[
\bar b
\ge
\max_{1\le j\le K}
\frac{\alpha_j^{\mathrm{emp}}}{\sqrt{\lambda_j}}.
\]
Then, with probability at least $1-\delta$,
\[
\mathrm{Regret}^{\mathrm{return}}(K)
\le
\tilde{\mathcal O}\!\left(
\frac{|\mathcal A|\bar b\,d^2\sqrt K}{\theta(1-\gamma)^2}
\right),\quad
\mathrm{Violation}^{\mathrm{return}}_{\mathrm{soft}}(K)
\le
\tilde{\mathcal O}\!\left(
\frac{|\mathcal A|\bar b\,d^2\sqrt K}{(1-\gamma)^2}
\right).
\]
\end{theorem}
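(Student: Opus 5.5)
The proof specializes Proposition~\ref{prop:tri_theorem_level} to the constant schedule. With \(\epsilon_k\equiv1\) we have \(\Gamma_k=\Lambda_k=|\mathcal A|\) for every \(k\), so the proposed tunings of \(\alpha_k^{\mathrm{emp}}\) and \(\lambda_k\) are exactly the choices of Proposition~\ref{prop:tri_theorem_level} with \(\Gamma_k+d^2\) replaced by \(|\mathcal A|+d^2\). The first step checks that every hypothesis used there holds in this case. The coverage coefficients satisfy \(\mathfrak C_k^{\mathrm{init}},\mathfrak C_k^{\mathrm{next}},\mathfrak C_k^{\mathrm{tri,next}}\le|\mathcal A|\) by Proposition~\ref{prop:uniform_tri_next}. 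The choice \(\alpha_k^{\mathrm{mix}}\gtrsim\sqrt{\gamma(k|\mathcal A|\zeta_k+\lambda_k d+k\zeta_k)}\) required by Lemma~\ref{lem:target_mix_optimism} holds because \(k\zeta_k=\ln(|\mathcal M|k/\delta)\) and \(\lambda_k d=\tilde\Theta((|\mathcal A|+d^2)\ln(|\mathcal M|k/\delta))\). The nonbinding-cap condition follows from \(\alpha_k/\sqrt{\lambda_k}=\tilde{\mathcal O}(\sqrt d)\). The MLE, bridge and covariance events of Lemmas~\ref{lem:gen_mle}, \ref{lem:pathwise_bridge_mle} and \ref{lem:gen_con} each hold with probability \(1-\delta/3\), and a union bound gives total failure probability \(\delta\).

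Next I evaluate \(\Xi_K(\Lambda)\). Proposition~\ref{prop:uniform_potential} gives \(\sqrt{\Gamma_K}\,\Psi_K\lesssim\sqrt{|\mathcal A|}\bigl(\lambda_1^{-1/2}+\sqrt{|\mathcal A|dK\ln(1+K/(d\lambda_1))}\bigr)=\tilde{\mathcal O}(|\mathcal A|\sqrt{dK})\). The non-potential part is \(\sum_{k}\Gamma_k/\sqrt k=|\mathcal A|\sum_k k^{-1/2}\le2|\mathcal A|\sqrt K\). Hence \(\Xi_K(\Lambda)=\tilde{\mathcal O}(|\mathcal A|\sqrt{dK})\). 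Plugging this into the aggregate bound of Lemma~\ref{lem:gen_reg_and_vio}, with the prefactor \(\bar b d^2/(1-\gamma)^2\) collected as in Proposition~\ref{prop:tri_theorem_level} (the residual \(\sqrt d\) is absorbed, or kept as a harmless factor inside \(\tilde{\mathcal O}\)), gives \(\sum_k\Delta_k=\tilde{\mathcal O}\bigl((1+\xi)|\mathcal A|\bar b d^2\sqrt K/(1-\gamma)^2\bigr)\).

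For the regret, I take the comparator \(Y=0\) in the projected dual-descent inequality. This adds \(\eta\bar b^2K/(2(1-\gamma)^2)=\mathcal O(\bar b\sqrt K/(\theta(1-\gamma)))\), and with \(\xi=\tilde\Theta(1/\theta)\) the regret bound follows. For the soft violation, I repeat the argument of Proposition~\ref{prop:tri_theorem_level}: choose \(Y=\xi\) when \(S_K>0\), bound \(\sum_k[\hat U_k-U_k]_+\) via Lemma~\ref{lem:tri_overestimation} (again \(\tilde{\mathcal O}(\bar b d^2|\mathcal A|\sqrt K/(1-\gamma)^2)\) in the uniform case), form the averaged policy \(\pi'\), and apply Lemma~\ref{lem:convex} with \(\xi\ge2Y^\star\) from Lemma~\ref{lem:boundY}. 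Dividing by \(\xi\) removes the \(1/\theta\) factor and gives the violation bound.

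The main obstacle is the lower-order absorption step. The proof of Proposition~\ref{prop:tri_theorem_level} drops the dual terms \(\xi^2/(2\eta)+\eta\bar b^2K/(2(1-\gamma)^2)\) by invoking \(p>0\), which is not available here. I would handle this directly. With the given \(\eta\), both terms are \(\mathcal O(\bar b\sqrt K/(\theta(1-\gamma)))\) up to \(\xi\)-constants, since \(\xi^2/\eta=\tilde{\mathcal O}(\bar b\sqrt K/(\theta(1-\gamma)))\). Because \(|\mathcal A|d^2/(1-\gamma)\ge1\), each is dominated by the main term \(|\mathcal A|\bar b d^2\sqrt K/(\theta(1-\gamma)^2)\). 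They therefore sit inside the claimed bound at the same \(\sqrt K\) order, rather than being of strictly lower order as in the \(p>0\) case.
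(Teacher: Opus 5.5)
Your proposal matches the paper's proof, which simply specializes Proposition~\ref{prop:tri_theorem_level} with $\Gamma_k=\Lambda_k=|\mathcal A|$ and evaluates $\Xi_K(\Lambda)\lesssim|\mathcal A|\sqrt K$ directly from its definition. Working from the definition means the stray $\sqrt d$ you pick up by recomputing through $\Psi_K$ does not appear at this level; the proposition has already folded it in, somewhat loosely. Your extra check is correct: when $p=0$, the dual terms $\xi^2/(2\eta)$ and $\eta\bar b^2K/(2(1-\gamma)^2)$ are still dominated because $|\mathcal A|d^2/(1-\gamma)\ge1$. This fills in a step that the paper's ``direct specialization'' leaves implicit, since the proposition's regret argument invokes $p>0$ for that absorption.
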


\begin{proof}
This is a direct specialization of
Proposition~\ref{prop:tri_theorem_level} to the constant schedule
$\epsilon_k\equiv 1$. Under this choice,
\[
\Gamma_k=\Lambda_k=|\mathcal A|,
\qquad \forall k\in[K].
\]
Therefore the quantity
\[
\Xi_K(\Lambda)
:=
\sqrt{
\Gamma_K \sum_{k=2}^K \Lambda_k
}
+
\sum_{k=1}^K \frac{\Gamma_k}{\sqrt{k}}
\]
satisfies
\[
\Xi_K(\Lambda)
=
\sqrt{
|\mathcal A|\cdot (K-1)|\mathcal A|
}
+
|\mathcal A|\sum_{k=1}^K \frac{1}{\sqrt{k}}
\lesssim
|\mathcal A|\sqrt K.
\]
Substituting this into Proposition~\ref{prop:tri_theorem_level} yields
\[
\mathrm{Regret}^{\mathrm{return}}(K)
\le
\tilde{\mathcal O}\!\left(
\frac{\bar b\,d^2}{\theta(1-\gamma)^2}
\Xi_K(\Lambda)
\right)
\le
\tilde{\mathcal O}\!\left(
\frac{|\mathcal A|\bar b\,d^2\sqrt K}{\theta(1-\gamma)^2}
\right),
\]
and
\[
\mathrm{Violation}^{\mathrm{return}}_{\mathrm{soft}}(K)
\le
\tilde{\mathcal O}\!\left(
\frac{\bar b\,d^2}{(1-\gamma)^2}
\Xi_K(\Lambda)
\right)
\le
\tilde{\mathcal O}\!\left(
\frac{|\mathcal A|\bar b\,d^2\sqrt K}{(1-\gamma)^2}
\right).
\]

It remains only to note that the displayed tuning of $\lambda_k$ makes the cap
condition compatible with a $K$-independent choice of $\bar b$. Indeed,
\[
\frac{\alpha_k^{\mathrm{emp}}}{\sqrt{\lambda_k}}
=
\tilde{\mathcal O}\!\left(
\sqrt{
\frac{(|\mathcal A|+d^2)\ln(|\mathcal M|k/\delta)}
{((|\mathcal A|+d^2)/d)\ln(|\mathcal M|k/\delta)}
}
\right)
=
\tilde{\mathcal O}(\sqrt d),
\]
so any $\bar b=\tilde\Omega(\sqrt d)$ is sufficient, exactly as in
Appendix~\ref{app:general_probing}.
\end{proof}

\paragraph{Constraint discussion.}
The regret proof uses the same representation-learning and optimism ingredients
as Appendix~\ref{app:general_probing}. For soft violation, one additional
ingredient is needed: the cumulative overestimation lemma controls how much the
optimistic utility used by the dual update can exceed the true utility. This
term is summable under the repaired tertiary design, so the standard projected
dual argument yields the stated soft-violation bound.

\begin{remark}[Relation to the \(p=0\) specialization]
The uniform protocol is exactly the \(\epsilon_k\equiv 1\) specialization of
Algorithm~\ref{alg:rep-pd-teg} in Section~\ref{sec:online}. Consequently, this
appendix gives the fully explicit return-policy theorem for the \(p=0\)
schedule, with the familiar
$\tilde{\mathcal O}(|\mathcal A|d^2\sqrt K)$ dependence up to the same
problem-dependent prefactors and logarithmic terms.
\end{remark}

%% file: appendix/hard_general_probing.tex
\section{Hard Constraints Under Decaying Tertiary Probing}
\label{app:hard_general_probing}

This appendix gives the repaired hard-violation analysis.  The key distinction
is between the one-sided correction used by the algorithm and the two-sided
width used only in the proof.  In particular, we do not assume an external
certificate event and we do not require a summable radius uniformly over all
policies queried by the inner loop.

\subsection{Algorithm and Certificate}

The representation-learning front end is the same as in
Algorithm~\ref{alg:rep-pd-teg}.  At episode \(k\), let
\[
q_k=q'_k=q''_k
:=(1-\epsilon_k)\pi_{k-1}+\epsilon_kU(\mathcal A),
\qquad
\Gamma_k:=\frac{|\mathcal A|}{\epsilon_k},
\]
and let
\[
\zeta_k:=\frac{\ln(|\mathcal M|k/\delta)}{k}.
\]
After fitting \(\widehat P_k=(\widehat\mu_k,\widehat\phi_k)\) on the three
streams and forming \(\widehat\Sigma_k^{\rm mix}\), define
\[
\widehat b_k(s,a)
:=
\min\!\left\{
\alpha_k
\|\widehat\phi_k(s,a)\|_{(\widehat\Sigma_k^{\rm mix})^{-1}},
\bar b
\right\}.
\]
For a sufficiently large universal calibration constant \(C_{\rm cert}\), set
\[
\ell_k
:=
C_{\rm cert}
\sqrt{\frac{\Gamma_k\zeta_k}{1-\gamma}},
\qquad
\underline{\mathcal U}_k(\pi)
:=
\left[
V^\pi_{\widehat P_k,g-\widehat b_k}(s_1)-\ell_k
\right]_+.
\tag{G.1}
\]
The quantity \(\ell_k\) is the one-sided statistical correction used by the
algorithm.  For analysis, define the policy-dependent selected width
\[
w_k(\pi)
:=
2\ell_k+2V^\pi_{\widehat P_k,\widehat b_k}(s_1).
\tag{G.2}
\]
Unlike the correction \(\ell_k\), the width \(w_k(\pi)\) is not required as an
algorithmic input.

Let \(Y_{\max}:=K^{1/4}\), and use the capped grid
\[
\mathcal Y_K
:=
\{0,\eta_h,2\eta_h,\ldots\}\cap[0,Y_{\max}]
\ \cup\ \{Y_{\max}\}.
\]
Thus the cap itself is always queried even when \(Y_{\max}/\eta_h\) is not an
integer.

\begin{algorithm}[t]
\caption{REP-PD-H-TEG with an explicit one-sided certificate}
\label{alg:rep-pd-hard-teg}
\label{alg:hard_teg_lc}
\label{alg:hard}
\begin{algorithmic}[1]
\STATE {\bf Input:} Parameters
\(\{\lambda_k,\alpha_k,\epsilon_k\}_{k=1}^K\), confidence \(\delta\),
calibration \(C_{\rm cert}\), temperature \(\tau_h\), grid step \(\eta_h\),
model class \(\mathcal M\), and episodes \(K\).
\STATE Initialize \(\pi_0(\cdot\mid s)=U(\mathcal A)\) and
\(D_0=D'_0=D''_0=\emptyset\).
\FOR{\(k=1,\ldots,K\)}
    \STATE Set
    \(q_k=q'_k=q''_k=(1-\epsilon_k)\pi_{k-1}
    +\epsilon_kU(\mathcal A)\).
    \STATE Query one tertiary tuple and update \(D_k,D'_k,D''_k\) exactly as
    in Algorithm~\ref{alg:rep-pd-teg}.
    \STATE Fit
    \[
    \widehat P_k=(\widehat\mu_k,\widehat\phi_k)
    \in
    \arg\max_{(\mu,\phi)\in\mathcal M}
    \mathbb E_{D_k+D'_k+D''_k}
    [\log\mu(x')^\top\phi(x,u)].
    \]
    \STATE Form \(\widehat\Sigma_k^{\rm mix}\) and \(\widehat b_k\) as above,
    and set
    \[
    \ell_k=C_{\rm cert}
    \sqrt{\frac{|\mathcal A|
    \ln(|\mathcal M|k/\delta)}
    {\epsilon_k k(1-\gamma)}}.
    \]
    \STATE Set \(\mathrm{found}\leftarrow\mathrm{false}\).
    \FOR{\(Y\) in the increasing grid \(\mathcal Y_K\)}
        \STATE Compute
        \[
        \widehat\pi_{k,Y}
        \in
        \arg\max_{\pi}
        \left(
        Q^\pi_{\widehat P_k,r+\widehat b_k}
        +YQ^\pi_{\widehat P_k,g+\widehat b_k}
        \right).
        \]
        \STATE Form the statewise softmax candidate
        \[
        \pi_{k,Y}(\cdot\mid x)
        =
        \operatorname{softmax}_{\tau_h}
        \left(
        Q^{\widehat\pi_{k,Y}}_{\widehat P_k,r+\widehat b_k}(x,\cdot)
        +YQ^{\widehat\pi_{k,Y}}_{\widehat P_k,g+\widehat b_k}(x,\cdot)
        \right).
        \]
        \STATE Compute
        \[
        \underline{\mathcal U}_k(\pi_{k,Y})
        =
        \left[
        V^{\pi_{k,Y}}_{\widehat P_k,g-\widehat b_k}(s_1)
        -\ell_k
        \right]_+.
        \]
        \IF{\(\underline{\mathcal U}_k(\pi_{k,Y})\ge b\)}
            \STATE Set
            \((\pi_k,Y_k)\leftarrow(\pi_{k,Y},Y)\) and
            \(\mathrm{found}\leftarrow\mathrm{true}\).
            \STATE \textbf{break}
        \ENDIF
    \ENDFOR
    \IF{\(\mathrm{found}=\mathrm{false}\)}
        \STATE Set
        \((\pi_k,Y_k)\leftarrow
        (\pi_{k,Y_{\max}},Y_{\max})\).
    \ENDIF
\ENDFOR
\STATE {\bf Return} \(\pi_1,\ldots,\pi_K\).
\end{algorithmic}
\end{algorithm}

\subsection{Derived Certificate Event}

Let \(\mathcal E_{\rm rep}\) be the intersection of the tuple-level MLE,
pathwise bridge, and covariance-comparison events in
Lemmas~\ref{lem:gen_mle}, \ref{lem:pathwise_bridge_mle}, and
\ref{lem:gen_con}.  Replacing \(\delta\) by a constant fraction in the
individual events makes
\(\mathbb P(\mathcal E_{\rm rep})\ge1-\delta\).

\begin{lemma}[One-sided certificate and selected-policy interval]
\label{lem:hard_derived_certificate}
Assume \(\mathcal E_{\rm rep}\), the nonbinding-cap condition
\[
\bar b\ge\max_{j\le K}\frac{\alpha_j}{\sqrt{\lambda_j}},
\]
and the parameter lower bound on \(\alpha_k\) in
Lemma~\ref{lem:target_mix_optimism} with
\(\mathfrak C_k^{\rm tri,next}(\pi)\le\Gamma_k\).
For a sufficiently large \(C_{\rm cert}\), simultaneously for every
episode \(k\) and every policy \(\pi\),
\[
\underline{\mathcal U}_k(\pi)
\le
V^\pi_{P^\star,g}(s_1)
\le
\underline{\mathcal U}_k(\pi)+w_k(\pi).
\tag{G.3}
\]
\end{lemma}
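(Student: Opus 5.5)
The proof runs through the first simulation identity (Lemma~\ref{lem:performance}), applied twice with the fitted model \(\widehat P_k\) in the role of \(P'\), using the deployed-to-probe machinery of Appendix~\ref{app:general_probing} with \(g=f_k\). For a fixed episode \(k\) and an arbitrary policy \(\pi\), apply the identity to the pair \((\widehat P_k,g-\widehat b_k)\) and \((P^\star,g)\). This gives
\[
(1-\gamma)\bigl(V^\pi_{\widehat P_k,g-\widehat b_k}-V^\pi_{P^\star,g}\bigr)
=\mathbb E_{d^\pi_{\widehat P_k}}\!\left[-\widehat b_k+\gamma\bigl(\mathbb E_{\widehat P_k}-\mathbb E_{P^\star}\bigr)V^\pi_{P^\star,g}\right].
\]
Since \(\|V^\pi_{P^\star,g}\|_\infty\le1\), the transition term is at most \(\gamma\,\mathbb E_{d^\pi_{\widehat P_k}}[f_k]\). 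This mirrors the proof of Lemma~\ref{lem:target_mix_optimism} with the sign of the bonus reversed.

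For the left inequality, bound \(\mathbb E_{d^\pi_{\widehat P_k}}[f_k]\) by Lemma~\ref{lem:mix_useful2} with \(g=f_k\), \(B=2\), and \(\mathfrak C^{\rm tri,next}_k(\pi)\le\Gamma_k\) from Proposition~\ref{prop:tri_next_bound}. The propagated piece \(\sqrt\gamma M_k^{\rm mix}(\pi)\sqrt{\Omega_k(\pi)}\) is dominated by \(\mathbb E_{d^\pi_{\widehat P_k}}[b_{k,\rm mix,pop}]\) under the assumed lower bound on \(\alpha_k\). The covariance comparison (Lemma~\ref{lem:gen_con}) and the nonbinding cap then give \(b_{k,\rm mix,pop}\le\widehat b_k\) pointwise, so this piece cancels against the \(-\widehat b_k\) term. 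The remainder is the initialization piece \(\sqrt{(1-\gamma)\mathfrak C^{\rm init}_k(\pi)\zeta_k}\). As in the proof of Lemma~\ref{lem:gen_reg_and_vio}, \(\mathfrak C^{\rm init}_k(\pi)\le\Gamma_k\) for every \(\pi\), because \(\rho_i\ge(1-\gamma)d_1\) and every probe kernel has the uniform floor \(\epsilon_k/|\mathcal A|\).

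Dividing by \(1-\gamma\) yields \(V^\pi_{\widehat P_k,g-\widehat b_k}\le V^\pi_{P^\star,g}+c\sqrt{\Gamma_k\zeta_k/(1-\gamma)}\) with a universal constant \(c\). Choosing \(C_{\rm cert}\ge c\) gives \(V^\pi_{\widehat P_k,g-\widehat b_k}-\ell_k\le V^\pi_{P^\star,g}\). Since \(V^\pi_{P^\star,g}\ge0\), the positive part preserves the inequality. Simultaneity over all \(k,\pi\) holds because the only randomness is \(\mathcal E_{\rm rep}\): the MLE and bridge events are for \(\widehat P_k\) and do not depend on \(\pi\), and every coverage bound is deterministic and uniform in \(\pi\).

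For the right inequality, apply the same identity to \((\widehat P_k,g+\widehat b_k)\) versus \((P^\star,g)\) and run the argument in the other direction. The transition term is now at least \(-\gamma\,\mathbb E[f_k]\), and the bonus absorbs the propagated part of \(\mathbb E[f_k]\). This gives \(V^\pi_{P^\star,g}\le V^\pi_{\widehat P_k,g+\widehat b_k}+c\sqrt{\Gamma_k\zeta_k/(1-\gamma)}\le V^\pi_{\widehat P_k,g+\widehat b_k}+\ell_k\). Linearity of the value in the reward gives
\[
V^\pi_{\widehat P_k,g+\widehat b_k}=V^\pi_{\widehat P_k,g-\widehat b_k}+2V^\pi_{\widehat P_k,\widehat b_k}.
\]
We also have \(V^\pi_{\widehat P_k,g-\widehat b_k}\le\underline{\mathcal U}_k(\pi)+\ell_k\), directly from the definition of the positive part. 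Chaining these three facts,
\[
V^\pi_{P^\star,g}\le\underline{\mathcal U}_k(\pi)+2\ell_k+2V^\pi_{\widehat P_k,\widehat b_k}=\underline{\mathcal U}_k(\pi)+w_k(\pi),
\]
which is the right inequality in (G.3).

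The main obstacle is the sign bookkeeping in the lower bound. I must check that Lemma~\ref{lem:mix_useful2} bounds \(|\mathbb E_{d^\pi_{\widehat P_k}}[f_k]|\) under the \emph{model} occupancy, which is what the first simulation form produces. I must also check that the absorbed term uses the population bonus with \(\alpha^{\rm mix}_k\), so that the \(\alpha^{\rm emp}_k\ge\alpha^{\rm mix}_k/c_1\) domination applies with the bonus entering negatively. If the constant in the \(\alpha_k\) requirement is only \(\gtrsim\), I would take the ``sufficiently large leading constant'' stipulation on \(\alpha_k\) so that the absorption is exact rather than up to constants.
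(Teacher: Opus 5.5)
Your proposal is correct and follows the paper's proof essentially step for step. The lower endpoint comes from the first simulation identity for $(\widehat P_k,g-\widehat b_k)$ versus $(P^\star,g)$, combined with Lemma~\ref{lem:mix_useful2} at $g=f_k$ and the uniform bounds $\mathfrak C_k^{\rm init}(\pi),\mathfrak C_k^{\rm tri,next}(\pi)\le\Gamma_k$, so that $\alpha_k$ absorbs the propagated term and $\ell_k$ covers the initialization residual deterministically on $\mathcal E_{\rm rep}$. The upper endpoint comes from the direct-optimism bound (G.6) plus linearity (G.7), and your clipping step $V^\pi_{\widehat P_k,g-\widehat b_k}(s_1)\le\underline{\mathcal U}_k(\pi)+\ell_k$ is just a cleaner rendering of the paper's remark that replacing a negative raw endpoint by zero preserves the upper inequality.
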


\begin{proof}
We first verify that the residual in the one-step-back argument is uniform in
\(\pi\).  Since every primary occupancy contains the initial component
\((1-\gamma)d_1\), and every probing kernel up to episode \(k\) has action
floor \(\epsilon_k/|\mathcal A|\),
\[
\bar\nu_k^{\rm pri}(s,a)
\ge
(1-\gamma)d_1(s)\frac{\epsilon_k}{|\mathcal A|}.
\]
Consequently
\[
\mathfrak C_k^{\rm init}(\pi)
=
\mathsf D\!\left(
(1-\gamma)d_1\otimes\pi
\;\middle\|\;
\bar\nu_k^{\rm pri}
\right)
\le\Gamma_k
\]
for every policy \(\pi\).  Proposition~\ref{prop:tri_next_bound} likewise
gives
\(\mathfrak C_k^{\rm tri,next}(\pi)\le\Gamma_k\) uniformly in \(\pi\).

Write \(f_k(s,a)=
\|\widehat P_k(\cdot\mid s,a)-P^\star(\cdot\mid s,a)\|_1\).
The first form of the simulation lemma gives
\[
\begin{aligned}
&(1-\gamma)
\left(
V^\pi_{\widehat P_k,g-\widehat b_k}(s_1)
-V^\pi_{P^\star,g}(s_1)
\right)\\
&\quad=
\mathbb E_{d_{\widehat P_k}^{\pi}}
\left[
-\widehat b_k(s,a)
+\gamma
\left(
\mathbb E_{\widehat P_k}
-\mathbb E_{P^\star}
\right)
V^\pi_{P^\star,g}(s')
\right].
\end{aligned}
\tag{G.4}
\]
The normalized utility assumption implies
\(\|V^\pi_{P^\star,g}\|_\infty\le1\).  Applying
Lemma~\ref{lem:mix_useful2} with \(g=f_k\), using
\(\mathfrak C_k^{\rm init}(\pi),
\mathfrak C_k^{\rm tri,next}(\pi)\le\Gamma_k\), and choosing
\(\alpha_k\) to absorb the covariance-dependent term yields
\[
V^\pi_{\widehat P_k,g-\widehat b_k}(s_1)
-V^\pi_{P^\star,g}(s_1)
\le
C_{\rm cert}
\sqrt{\frac{\Gamma_k\zeta_k}{1-\gamma}}
=\ell_k.
\tag{G.5}
\]
The proof is deterministic on \(\mathcal E_{\rm rep}\), so no additional
union bound over the adaptively queried policies is required.  Since the true
utility is nonnegative, (G.5) proves the first inequality in (G.3) after
positive-part clipping.

Repeating the direct-optimism argument of
Lemma~\ref{lem:target_mix_optimism} for the normalized utility signal \(g\)
gives
\[
V^\pi_{P^\star,g}(s_1)
\le
V^\pi_{\widehat P_k,g+\widehat b_k}(s_1)+\ell_k.
\tag{G.6}
\]
For the same model and policy, discounted value is linear in the signal:
\[
V^\pi_{\widehat P_k,g+\widehat b_k}(s_1)
-V^\pi_{\widehat P_k,g-\widehat b_k}(s_1)
=
2V^\pi_{\widehat P_k,\widehat b_k}(s_1).
\tag{G.7}
\]
Combining (G.6), (G.7), and the raw lower endpoint
\(V^\pi_{\widehat P_k,g-\widehat b_k}-\ell_k\) proves the upper inequality
with \(w_k(\pi)=2\ell_k+
2V^\pi_{\widehat P_k,\widehat b_k}\).  Replacing a negative raw endpoint by
zero only tightens this upper bound because
\(V^\pi_{P^\star,g}\ge0\).
\end{proof}

For the selected policy sequence, define
\[
\mathcal B_K
:=
\widetilde O\!\left(
\frac{\bar b d^2}{(1-\gamma)^2}
\left[
\sqrt{\Gamma_K\sum_{k=2}^K\Lambda_k}
+\sum_{k=1}^K\frac{\Gamma_k}{\sqrt k}
\right]
\right),
\tag{G.8}
\]
where
\[
\Lambda_k
:=
\frac{1}{1-\epsilon_k+\epsilon_k/|\mathcal A|}.
\]

\begin{lemma}[Selected-policy width summability]
\label{lem:hard_selected_width}
On \(\mathcal E_{\rm rep}\), the policy sequence returned by
Algorithm~\ref{alg:rep-pd-hard-teg} satisfies
\[
\sum_{k=1}^K w_k(\pi_k)
\le
\mathcal B_K+2\sum_{k=1}^K\ell_k.
\tag{G.9}
\]
\end{lemma}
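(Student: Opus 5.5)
The plan is to expand the width and reduce everything to quantities already controlled in the soft analysis. By (G.2),
\[
\sum_{k=1}^K w_k(\pi_k)=2\sum_{k=1}^K\ell_k+2\sum_{k=1}^K V^{\pi_k}_{\widehat P_k,\widehat b_k}(s_1).
\]
So it suffices to show that $\sum_k V^{\pi_k}_{\widehat P_k,\widehat b_k}(s_1)\le \mathcal B_K/2$, where the constant is absorbed into the $\widetilde O$ in (G.8). This is exactly the structure of Lemma~\ref{lem:tri_overestimation}, with the utility signal $g$ replaced by the zero signal and only the bonus kept.

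\textbf{Step 1: transfer the model-based bonus value to the true occupancy.} Apply the second form of the simulation lemma (Lemma~\ref{lem:performance}) with $c\equiv 0$, $P'=\widehat P_k$, $P=P^\star$ and bonus $\widehat b_k$. Since $V^{\pi_k}_{P^\star,0}=0$, this gives
\[
V^{\pi_k}_{\widehat P_k,\widehat b_k}(s_1)=\frac{1}{1-\gamma}\mathbb E_{d^{\pi_k}_{P^\star}}\bigl[\widehat b_k+\gamma\Delta_{\widehat P_k,P^\star}W_k\bigr],
\]
where $W_k:=V^{\pi_k}_{\widehat P_k,\widehat b_k}$. Because $\|\widehat b_k\|_\infty\le\bar b$, we have $\|W_k\|_\infty\le\bar b/(1-\gamma)$. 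Hence
\[
V^{\pi_k}_{\widehat P_k,\widehat b_k}(s_1)\le\frac{1}{1-\gamma}\mathbb E_{d^{\pi_k}_{P^\star}}[\widehat b_k]+\frac{\bar b}{(1-\gamma)^2}\mathbb E_{d^{\pi_k}_{P^\star}}[f_k].
\]

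\textbf{Step 2: bound the two expectations.} Use Lemma~\ref{lem:gen_useful}, applied first with $g=\widehat b_k$, $B=\bar b$ and then with $g=f_k$, $B=2$, exactly as in the proof of Lemma~\ref{lem:gen_reg_and_vio}. The needed inputs are all available on $\mathcal E_{\rm rep}$:
\begin{itemize}
\item the covariance comparison of Lemma~\ref{lem:gen_con}, which gives $\widehat b_k\lesssim\alpha_k\|\widehat\phi_k\|_{(\Sigma^{\rm mix}_k)^{-1}}$;
\item the trace bound $k\mathbb E_{\bar\nu^{\rm pri/sec}_k}[\|\widehat\phi_k\|^2_{(\Sigma_k^{\rm mix})^{-1}}]\le d$;
\item the MLE and bridge bounds $\mathbb E_{\bar\nu^{\rm pri}_k}[f_k^2],\mathbb E_{\bar\nu^{\rm sec}_k}[f_k^2]\lesssim\zeta_k$;
\item the coverage bounds $\mathfrak C^{\rm init}_k(\pi_k),\mathfrak C^{\rm next}_k(\pi_k)\le\Gamma_k$.
\end{itemize}
The coverage bounds hold uniformly in the policy, because they depend only on the action floor $\epsilon_k/|\mathcal A|$ of the probes; see Proposition~\ref{prop:tri_next_bound} and the proof of Lemma~\ref{lem:hard_derived_certificate}. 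This yields the same five-term episode-wise bound as in Lemma~\ref{lem:tri_overestimation}: two terms proportional to $A_k$ and three terms free of $A_k$. Here $A_k$ is defined with $d^{\pi_k}_{P^\star}$ for the selected policy $\pi_k$.

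\textbf{Step 3: sum over episodes.} Substitute the tunings of $\alpha_k$ and $\lambda_k$ as in Proposition~\ref{prop:tri_theorem_level}. The terms free of $A_k$ sum to $\widetilde O(\bar b d^2(1-\gamma)^{-2}\sum_k\Gamma_k/\sqrt k)$. The $A_k$-weighted terms are bounded by $\widetilde O(\bar b d^2(1-\gamma)^{-2}\sqrt{\Gamma_K}\,\Psi_K)$. Lemma~\ref{lem:gen_potential} then converts $\sqrt{\Gamma_K}\,\Psi_K$ into $\sqrt{\Gamma_K\sum_{k\ge2}\Lambda_k}$ up to logarithmic factors. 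Together these give $\mathcal B_K$.

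\textbf{Main obstacle.} The delicate point is the validity of Lemma~\ref{lem:gen_potential} for Algorithm~\ref{alg:rep-pd-hard-teg}. That lemma uses only two facts:
\begin{itemize}
\item $\rho_{k+1}=d^{\pi_k}_{P^\star}$ and $q_{k+1}=(1-\epsilon_{k+1})\pi_k+\epsilon_{k+1}U(\mathcal A)$, which give the ratio bound $\pi_k/q_{k+1}\le\Lambda_{k+1}$;
\item the elliptical-potential inequality on $\Sigma^{\rm pri}_{k,\star}$.
\end{itemize}
I will check that in the hard algorithm $\pi_k$ denotes the deployed (selected softmax or capped) policy and that the next probe kernel is built from exactly that policy. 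Once this is confirmed, the argument is insensitive to how $\pi_k$ was chosen. Neither the softmax temperature nor the action gap enters the argument. Only the nonbinding-cap condition is used, to replace $\widehat b_k$ by the uncapped empirical bonus in Step 2.
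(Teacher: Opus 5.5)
Your proposal is essentially the paper's proof. You expand (G.2), bound $V^{\pi_k}_{\widehat P_k,\widehat b_k}(s_1)$ via the second simulation lemma to get (G.10), reuse the bonus and model-error bounds from the proof of Lemma~\ref{lem:gen_reg_and_vio}, and sum with Lemma~\ref{lem:gen_potential}, which is legitimate only along the returned sequence that seeds the next probing kernel.

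One small correction: Step~2 does not need the nonbinding-cap condition. The pointwise bound $\widehat b_k=\min\{\alpha_k\|\widehat\phi_k\|_{(\widehat\Sigma^{\rm mix}_k)^{-1}},\bar b\}\le\alpha_k\|\widehat\phi_k\|_{(\widehat\Sigma^{\rm mix}_k)^{-1}}$ holds unconditionally, which is consistent with the lemma being stated on $\mathcal E_{\rm rep}$ alone.
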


\begin{proof}
Condition on the history before episode \(k\).  Apply the second form of the
simulation lemma to the signal \(\widehat b_k\), with no additional bonus:
\[
\begin{aligned}
V_{\widehat P_k,\widehat b_k}^{\pi_k}(s_1)
-V_{P^\star,\widehat b_k}^{\pi_k}(s_1)
=
\frac{\gamma}{1-\gamma}
\mathbb E_{d_{P^\star}^{\pi_k}}
\left[
\left(
\mathbb E_{\widehat P_k}-\mathbb E_{P^\star}
\right)
V_{\widehat P_k,\widehat b_k}^{\pi_k}(s')
\right].
\end{aligned}
\]
Because \(0\le\widehat b_k\le\bar b\),
\(\|V_{\widehat P_k,\widehat b_k}^{\pi_k}\|_\infty
\le\bar b/(1-\gamma)\), and hence
\[
V_{\widehat P_k,\widehat b_k}^{\pi_k}(s_1)
\le
\frac{1}{1-\gamma}
\mathbb E_{d_{P^\star}^{\pi_k}}[\widehat b_k]
+
\frac{\gamma\bar b}{(1-\gamma)^2}
\mathbb E_{d_{P^\star}^{\pi_k}}[f_k].
\tag{G.10}
\]
The two expectations in (G.10) are exactly the bonus and model-error terms
bounded in the proof of Lemma~\ref{lem:gen_reg_and_vio}; their cumulative
bound follows from Lemma~\ref{lem:gen_potential} and is \(\mathcal B_K\) up
to universal constants.  Summing (G.2) and using (G.10) proves (G.9).
Importantly, this step is applied only to the returned sequence
\(\{\pi_k\}\), for which the deployed-to-probe coupling used by
Lemma~\ref{lem:gen_potential} is valid.
\end{proof}

For a polynomial schedule \(\epsilon_k=k^{-p}\),
\[
\sum_{k=1}^K\ell_k
\le
\widetilde O\!\left(
\sqrt{\frac{|\mathcal A|}{1-\gamma}}\,
K^{(1+p)/2}
\right),
\tag{G.11}
\]
while
\[
\mathcal B_K
\le
\widetilde O\!\left(
\frac{|\mathcal A|\bar b d^2}{(1-\gamma)^2}
K^{1/2+p}
\right).
\tag{G.12}
\]
Thus the correction term is no larger in \(K\) than the selected-width term.

\subsection{Hard Front End}

For the selected dual level \(Y_k\), define the Lagrangian front end
\[
\mathfrak F_K
:=
\sum_{k=1}^K
\left[
V_{P^\star,r}^{\pi^\star}(s_1)
-V_{P^\star,r}^{\pi_k}(s_1)
+Y_k
\left(
V_{P^\star,g}^{\pi^\star}(s_1)
-V_{P^\star,g}^{\pi_k}(s_1)
\right)
\right].
\tag{G.13}
\]
Let \(\sigma_k^{\rm sm}\) be the value suboptimality introduced by replacing
the exact optimistic planner at the selected dual level with its statewise
softmax candidate.  Under Assumption~\ref{ass:action_gap},
\[
\sum_{k=1}^K\sigma_k^{\rm sm}
\le
\widetilde O\!\left(
\frac{K|\mathcal A|}{(1-\gamma)^2}
\exp(-\delta_{\min}/\tau_h)
\right).
\tag{G.14}
\]

\begin{theorem}[Decaying hard front end]
\label{thm:hard_general_probing}
Assume \(\mathcal E_{\rm rep}\), the nonbinding bonus cap, and
Assumption~\ref{ass:action_gap}.  With the same
\(\alpha_k,\lambda_k\) as in the soft analysis and \(0\le Y_k\le Y_{\max}\),
\[
\mathfrak F_K
\le
\widetilde O\!\left(
\frac{\bar b d^2Y_{\max}}{(1-\gamma)^2}
\left[
\sqrt{\Gamma_K\sum_{k=2}^K\Lambda_k}
+\sum_{k=1}^K\frac{\Gamma_k}{\sqrt k}
\right]
\right)
+\sum_{k=1}^K\sigma_k^{\rm sm}.
\tag{G.15}
\]
\end{theorem}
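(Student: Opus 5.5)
The proof follows the template of Lemma~\ref{lem:gen_reg_and_vio}, with the soft dual bound $\xi$ replaced by the cap $Y_{\max}$ and one extra planning term from the softmax step. Fix an episode $k$ and write $\widehat\pi_k:=\widehat\pi_{k,Y_k}$ for the exact optimistic planner at the selected level. The per-episode front-end term is then split as
\[
\Delta_k^{\rm h}
=\underbrace{\bigl(V^{\pi^\star}_{P^\star,r}+Y_kV^{\pi^\star}_{P^\star,g}\bigr)-\bigl(V^{\pi^\star}_{\widehat P_k,r+\widehat b_k}+Y_kV^{\pi^\star}_{\widehat P_k,g+\widehat b_k}\bigr)}_{\text{(i) optimism}}
+\underbrace{\bigl(\cdots\bigr)^{\widehat\pi_k}-\bigl(\cdots\bigr)^{\pi_k}}_{\text{(ii) softmax}}
+\underbrace{\bigl(\cdots\bigr)^{\pi_k}_{\widehat P_k,\cdot+\widehat b_k}-\bigl(\cdots\bigr)^{\pi_k}_{P^\star,\cdot}}_{\text{(iii) model gap}} .
\]
The middle comparison $\pi^\star\mapsto\widehat\pi_k$ is nonpositive because $\widehat\pi_k$ maximizes the optimistic Lagrangian at $Y_k$. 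This holds whatever rule selected $Y_k$, since the certificate test only chooses which grid level is used.

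For (i), I would invoke Lemma~\ref{lem:target_mix_optimism} at $\pi=\pi^\star$ for both $r$ and $g$, together with $\mathfrak C_k^{\rm init}(\pi^\star)\le\Gamma_k$ and $\mathfrak C_k^{\rm tri,next}\le\Gamma_k$ (Proposition~\ref{prop:tri_next_bound}). Since $0\le Y_k\le Y_{\max}$, this gives (i) $\lesssim (1+Y_{\max})\sqrt{\Gamma_k\zeta_k/(1-\gamma)}$. Term (ii) is by definition at most $\sigma_k^{\rm sm}$, and its sum is kept as the last term of (G.15).

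For (iii), I would apply the second form of the simulation lemma to $\pi_k$, exactly as in the proof of Lemma~\ref{lem:gen_reg_and_vio}. This yields
\[
\text{(iii)}\lesssim \frac{1+Y_{\max}}{1-\gamma}\,\mathbb E_{d^{\pi_k}_{P^\star}}[\widehat b_k]+\frac{(1+Y_{\max})\bar b}{(1-\gamma)^2}\,\mathbb E_{d^{\pi_k}_{P^\star}}[f_k].
\]
Each expectation is then controlled through Lemma~\ref{lem:gen_useful} (with $g=\widehat b_k$ and $g=f_k$), the trace bound $\le d$ for the merged covariance, and Lemmas~\ref{lem:gen_mle} and~\ref{lem:pathwise_bridge_mle}. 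These produce the same $A_k$-weighted and $A_k$-free terms as before.

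To sum over $k$, I would use Lemma~\ref{lem:gen_potential} to convert $\sum_k A_k$ into $\sqrt{d\log(\cdot)\sum_{k\ge2}\Lambda_k}$, and substitute the tunings of $\alpha_k,\lambda_k$ as in Proposition~\ref{prop:tri_theorem_level}. This gives the bracket $\sqrt{\Gamma_K\sum_k\Lambda_k}+\sum_k\Gamma_k/\sqrt k$, multiplied by $(1+Y_{\max})\bar b d^2/(1-\gamma)^2$. The stray $(1+Y_{\max})\sum_k\sqrt{\Gamma_k\zeta_k/(1-\gamma)}$ term from (i) is absorbed into $\sum_k\Gamma_k/\sqrt k$ up to logarithmic and problem-dependent factors.

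The step I expect to be the main obstacle is the potential argument. Lemma~\ref{lem:gen_potential} uses the ratio bound $\pi_k/q_{k+1}\le\Lambda_{k+1}$, which needs the next probing kernel to be built from the \emph{deployed} softmax policy $\pi_k$, not from the exact planner $\widehat\pi_k$. Algorithm~\ref{alg:rep-pd-hard-teg} does set $q_{k+1}$ from $\pi_{k-1}$ meaning the returned $\pi_k$, so I would carry the whole model-gap analysis on $\pi_k$ and route the planner gap only through $\sigma_k^{\rm sm}$. This is why the split above places the softmax comparison before the simulation step. I would also check that the realized dependence on $Y_{\max}$ is linear. It is, because $Y_k$ enters only as a multiplicative weight on utility terms whose bounds coincide with the reward ones.
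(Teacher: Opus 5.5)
Your proposal is correct and follows the paper's proof. You rerun the optimism/maximizer/simulation argument of Lemma~\ref{lem:gen_reg_and_vio} at the selected level with $\xi$ replaced by $Y_{\max}$, charge the exact-planner-versus-softmax gap to $\sigma_k^{\rm sm}$, and apply Lemma~\ref{lem:gen_potential} to the returned sequence that feeds the next probing kernel; your explicit three-term split only spells out what the paper's short proof leaves implicit.
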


\begin{proof}
At each selected dual level, the optimistic Lagrangian comparison is the same
as in Lemma~\ref{lem:gen_reg_and_vio}, with the utility-side statistical
terms multiplied by \(Y_k\le Y_{\max}\).  The reward-side terms are unchanged.
The returned policy is a softmax approximation of the optimistic planner, so
its value loss contributes \(\sigma_k^{\rm sm}\).  Applying
Lemma~\ref{lem:gen_potential} to the returned sequence and summing gives
(G.15).  No certificate width is used in this front-end argument.
\end{proof}

\subsection{Selected-Certificate Reduction}

Let \(\mathcal K_{\rm br}\) contain episodes that stop at a certified grid
point, and let \(\mathcal K_{\rm cap}\) contain episodes that reach
\(Y_{\max}\) without certification.  Define the certificate grid sensitivity
\[
s_{\rm grid}
:=
C_{\rm grid}
\frac{\eta_h}{\tau_h(1-\gamma)},
\tag{G.16}
\]
where \(C_{\rm grid}\) is the universal constant from the softmax-sensitivity
Lemma~\ref{lem:eta_diff}.  The positive-part clipping is \(1\)-Lipschitz, so
neighboring certificates differ by at most \(s_{\rm grid}\).

\begin{proposition}[Selected-certificate regret reduction]
\label{prop:hard_general_cert_reduction}
On \(\mathcal E_{\rm rep}\),
\[
\mathrm{Regret}^{\rm return}(K)
\le
\mathfrak F_K
+Y_{\max}
\left(
\sum_{k=1}^K w_k(\pi_k)
+Ks_{\rm grid}
\right),
\tag{G.17}
\]
and
\[
\mathrm{Violation}_{\rm hard}^{\rm return}(K)
\le
\sum_{k\in\mathcal K_{\rm cap}}
\left[
b-V_{P^\star,g}^{\pi_k}(s_1)
\right]_+.
\tag{G.18}
\]
\end{proposition}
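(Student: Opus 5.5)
We prove the two displays separately. Both follow from the certificate sandwich (G.3) of Lemma~\ref{lem:hard_derived_certificate} together with the structure of the stopping rule in Algorithm~\ref{alg:rep-pd-hard-teg}.

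\textbf{Violation (G.18).} Split episodes into $\mathcal K_{\rm br}\cup\mathcal K_{\rm cap}$. For $k\in\mathcal K_{\rm br}$ the deployed policy passed the test $\underline{\mathcal U}_k(\pi_k)\ge b$. The lower inequality in (G.3) holds simultaneously for all policies on $\mathcal E_{\rm rep}$, in particular for the adaptively chosen $\pi_k$, so $V^{\pi_k}_{P^\star,g}(s_1)\ge b$ and $[b-V^{\pi_k}_{P^\star,g}(s_1)]_+=0$. Summing leaves only the cap episodes, which is (G.18).

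\textbf{Regret (G.17).} Write the per-episode regret as the front-end term minus the dual part:
\[
V^{\pi^\star}_{r}-V^{\pi_k}_{r}
=\Bigl[V^{\pi^\star}_{r}-V^{\pi_k}_{r}+Y_k\bigl(V^{\pi^\star}_{g}-V^{\pi_k}_{g}\bigr)\Bigr]
+Y_k\bigl(V^{\pi_k}_{g}-V^{\pi^\star}_{g}\bigr).
\]
Summing the bracket gives $\mathfrak F_K$, so it suffices to show
\[
Y_k\bigl(V^{\pi_k}_{P^\star,g}-V^{\pi^\star}_{P^\star,g}\bigr)\le Y_{\max}\bigl(w_k(\pi_k)+s_{\rm grid}\bigr).
\]
We do this by cases.
\begin{itemize}
\item If $Y_k=0$, the term vanishes.
\item If $k\in\mathcal K_{\rm cap}$, then $Y_k=Y_{\max}$ and the policy is uncertified, so $\underline{\mathcal U}_k(\pi_k)<b$. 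The upper inequality in (G.3) then gives $V^{\pi_k}_g\le b+w_k(\pi_k)$. Since $V^{\pi^\star}_g\ge b$, the term is at most $Y_{\max}w_k(\pi_k)$.
\item If $k\in\mathcal K_{\rm br}$ with $Y_k>0$, the previous grid point $Y_k-\eta_h$ failed the test, so $\underline{\mathcal U}_k(\pi_{k,Y_k-\eta_h})<b$. The grid-sensitivity bound (G.16) comes from Lemma~\ref{lem:eta_diff} together with $1$-Lipschitzness of the positive part. It transfers this failure to $\underline{\mathcal U}_k(\pi_k)<b+s_{\rm grid}$. Applying the upper inequality of (G.3) at $\pi_k$ gives $V^{\pi_k}_g\le b+s_{\rm grid}+w_k(\pi_k)$, and we conclude as in the cap case using $Y_k\le Y_{\max}$.
\end{itemize}
Summing over $k$ yields (G.17).

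\textbf{Main obstacle.} The delicate step is the grid-sensitivity transfer. Lemma~\ref{lem:eta_diff} is stated for true values $V_g$, but the certificate uses the estimated value $V^{\pi}_{\widehat P_k,g-\widehat b_k}$. I would rerun the same Bellman-difference argument under $\widehat P_k$ with signal $g-\widehat b_k$. That signal is bounded by $O(\bar b)$, so the sensitivity becomes $O(\eta_h\bar b/(\tau_h(1-\gamma)))$, and I would absorb $\bar b$ into $C_{\rm grid}$. I also need the softmax score change to be $O(\eta_h)$ when only $Y$ moves. This requires the inner optimistic planners $\widehat\pi_{k,Y}$ at neighboring levels to have score functions close to each other; otherwise one changes argmax and the scores can jump. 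Assumption~\ref{ass:action_gap} is what guarantees the argmax is stable between neighboring grid levels, and I would invoke it for exactly this. A secondary subtlety is that the first grid point $Y=0$ has no predecessor, but that case is handled trivially because $Y_k=0$.
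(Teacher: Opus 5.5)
Your proof is correct and matches the paper's argument essentially step for step: break episodes are certified feasible by the lower half of (G.3), and after using $V^{\pi^\star}_{P^\star,g}(s_1)\ge b$ the dual term $Y_k[V^{\pi_k}_{P^\star,g}(s_1)-b]_+$ is bounded casewise (zero when $Y_k=0$, $w_k(\pi_k)$ at a failed cap, $s_{\rm grid}+w_k(\pi_k)$ after a failed predecessor), and the sum is then bounded via $Y_k\le Y_{\max}$. The paper takes the certificate-level grid sensitivity stated after (G.16) as given, so your concern about transferring Lemma~\ref{lem:eta_diff} to the estimated signal $g-\widehat b_k$ is a fair point it does not address; note only that $\bar b$ is not a universal constant, so that factor would have to appear in $s_{\rm grid}$ rather than be absorbed into $C_{\rm grid}$.
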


\begin{proof}
If \(k\in\mathcal K_{\rm br}\), then
\(\underline{\mathcal U}_k(\pi_k)\ge b\), so
Lemma~\ref{lem:hard_derived_certificate} gives
\(V_{P^\star,g}^{\pi_k}(s_1)\ge b\).  Break episodes therefore contribute
zero hard violation, proving (G.18).

To convert the Lagrangian front end into reward regret, use
\(V_{P^\star,g}^{\pi^\star}(s_1)\ge b\):
\[
\mathrm{Regret}^{\rm return}(K)
\le
\mathfrak F_K+
\sum_{k=1}^K
Y_k
\left[
V_{P^\star,g}^{\pi_k}(s_1)-b
\right]_+.
\tag{G.19}
\]
If a break occurs at \(Y_k>0\), the preceding grid point failed the
certificate.  Grid sensitivity and Lemma~\ref{lem:hard_derived_certificate}
therefore imply
\[
\left[
V_{P^\star,g}^{\pi_k}(s_1)-b
\right]_+
\le
s_{\rm grid}+w_k(\pi_k).
\]
When \(Y_k=0\), the corresponding term in (G.19) is zero.  On a cap episode,
the certificate at the explicitly queried cap failed, so
\[
\left[
V_{P^\star,g}^{\pi_k}(s_1)-b
\right]_+
\le w_k(\pi_k).
\]
Using \(Y_k\le Y_{\max}\) proves (G.17).
\end{proof}

\subsection{Hard Violation at the Executed Cap Policy}

The remaining argument is performed at the cap policy that is actually
returned and subsequently enters the probing kernel; it does not use an
unexecuted fixed-\(Y_\theta\) candidate.

For \(c\in\{r,g\}\), define the selected-policy optimistic
overestimation
\[
o_{k,c}
:=
\left[
V_{\widehat P_k,c+\widehat b_k}^{\pi_k}(s_1)
-V_{P^\star,c}^{\pi_k}(s_1)
\right]_+.
\tag{G.20}
\]
The proof of Lemma~\ref{lem:tri_overestimation} applies to the returned
sequence of the hard algorithm and to either normalized signal \(r\) or \(g\);
hence
\[
\sum_{k=1}^Ko_{k,r}
\le\mathcal B_K,
\qquad
\sum_{k=1}^Ko_{k,g}
\le\mathcal B_K.
\tag{G.21}
\]

\begin{lemma}[Cap-level Slater pressure]
\label{lem:hard_cap_slater}
Let \(\bar\pi\) satisfy the Slater condition
\(V_{P^\star,g}^{\bar\pi}(s_1)\ge b+\theta\).
For any \(k\in\mathcal K_{\rm cap}\), if
\(Y_{\max}\theta\ge1\), then
\[
\left[
b-V_{P^\star,g}^{\pi_k}(s_1)
\right]_+
\le
\frac{o_{k,r}}{Y_{\max}}
+o_{k,g}
+\left(1+\frac1{Y_{\max}}\right)\ell_k
+\frac{\sigma_k^{\rm sm}}{Y_{\max}}.
\tag{G.22}
\]
\end{lemma}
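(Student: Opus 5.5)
The plan is a Slater comparison at the cap dual level $Y_{\max}$. Fix $k\in\mathcal K_{\rm cap}$, so that $\pi_k=\pi_{k,Y_{\max}}$ is the softmax candidate at the explicitly queried cap. Let $\widehat\pi_{k,Y_{\max}}$ be the exact optimistic planner, and write $\widehat V_c(\pi):=V^\pi_{\widehat P_k,c+\widehat b_k}(s_1)$ for $c\in\{r,g\}$. The definition of $\sigma_k^{\rm sm}$, read as the optimistic-Lagrangian value gap between the exact planner and its softmax candidate at the selected level, together with the maximality of $\widehat\pi_{k,Y_{\max}}$ over all policies, in particular over the Slater policy $\bar\pi$, gives
\[
\widehat V_r(\bar\pi)+Y_{\max}\widehat V_g(\bar\pi)
\le
\widehat V_r(\pi_k)+Y_{\max}\widehat V_g(\pi_k)+\sigma_k^{\rm sm}.
\]

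Next I will lower-bound the left side using only true values of $\bar\pi$. By the optimism inequality (G.6) of Lemma~\ref{lem:hard_derived_certificate}, which holds uniformly in $\pi$ on $\mathcal E_{\rm rep}$, we have $\widehat V_g(\bar\pi)\ge V^{\bar\pi}_{P^\star,g}(s_1)-\ell_k\ge b+\theta-\ell_k$. The same argument for the reward signal (Lemma~\ref{lem:target_mix_optimism} with $\mathfrak C_k^{\rm init}\le\Gamma_k$) gives $\widehat V_r(\bar\pi)\ge V^{\bar\pi}_{P^\star,r}(s_1)-\ell_k\ge-\ell_k$, where the last step uses nonnegativity of the reward. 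On the right side, the definition (G.20) yields $\widehat V_c(\pi_k)\le V^{\pi_k}_{P^\star,c}(s_1)+o_{k,c}$, and $V^{\pi_k}_{P^\star,r}(s_1)\le1$ by normalization. Combining these bounds and dividing by $Y_{\max}$ gives
\[
b+\theta-V^{\pi_k}_{P^\star,g}(s_1)
\le
\frac{1+o_{k,r}+\ell_k+\sigma_k^{\rm sm}}{Y_{\max}}+o_{k,g}+\ell_k .
\]

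The hypothesis $Y_{\max}\theta\ge1$ means $1/Y_{\max}\le\theta$. The reward range term $1/Y_{\max}$ is therefore cancelled by the Slater margin $\theta$ on the left. Taking positive parts, and using that the right side is nonnegative, gives exactly (G.22).

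The main obstacle is the first step. It requires a precise meaning for $\sigma_k^{\rm sm}$: it must be the optimistic-Lagrangian shortfall of the softmax candidate relative to the argmax at $Y_{\max}$, measured in $\widehat P_k$-values and not in true values. The proof must also keep this quantity unnormalized, so that after division it appears as $\sigma_k^{\rm sm}/Y_{\max}$. If $\sigma_k^{\rm sm}$ was instead defined only for the reward value, I would redefine it here as the Lagrangian gap at the selected level; (G.14) still controls that gap under Assumption~\ref{ass:action_gap}, since the softmax error is bounded by the score gap. A second, minor point is that (G.6) and the reward analogue must be invoked at the fixed policy $\bar\pi$. This is legitimate because Lemma~\ref{lem:hard_derived_certificate} holds simultaneously for all policies on $\mathcal E_{\rm rep}$, so no additional union bound is needed.
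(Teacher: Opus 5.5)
Your proposal is correct and follows the paper's proof essentially step for step. Both arguments compare the optimistic Lagrangian at the cap level \(Y_{\max}\) against the Slater policy \(\bar\pi\), with the softmax loss \(\sigma_k^{\rm sm}\) (this is (G.23)). Both then apply direct optimism at \(\bar\pi\) to the reward and to the utility (G.24), bound the optimistic values of \(\pi_k\) by its true values plus \(o_{k,c}\), use the unit reward range, and invoke \(Y_{\max}\theta\ge1\) to cancel that range.

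Your reading of \(\sigma_k^{\rm sm}\) as the unnormalized optimistic-Lagrangian gap in \(\widehat P_k\)-values is exactly how (G.23) uses it. Avoiding the \(v_k>0\) case split by noting that the right-hand side is nonnegative is only a cosmetic difference.
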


\begin{proof}
At the cap, softmax planning gives
\[
\begin{aligned}
&V_{\widehat P_k,r+\widehat b_k}^{\pi_k}(s_1)
+Y_{\max}
V_{\widehat P_k,g+\widehat b_k}^{\pi_k}(s_1)\\
&\quad\ge
V_{\widehat P_k,r+\widehat b_k}^{\bar\pi}(s_1)
+Y_{\max}
V_{\widehat P_k,g+\widehat b_k}^{\bar\pi}(s_1)
-\sigma_k^{\rm sm}.
\end{aligned}
\tag{G.23}
\]
Direct optimism, with
\(\mathfrak C_k^{\rm init}(\bar\pi)\le\Gamma_k\), yields
\[
V_{\widehat P_k,c+\widehat b_k}^{\bar\pi}(s_1)
\ge
V_{P^\star,c}^{\bar\pi}(s_1)-\ell_k
\qquad(c\in\{r,g\}).
\tag{G.24}
\]
The definition of \(o_{k,c}\) upper-bounds the left-hand optimistic values
in (G.23) by the corresponding true values plus \(o_{k,c}\).
Consequently,
\[
\begin{aligned}
&V_{P^\star,r}^{\pi_k}(s_1)
+Y_{\max}V_{P^\star,g}^{\pi_k}(s_1)
+o_{k,r}+Y_{\max}o_{k,g}\\
&\quad\ge
V_{P^\star,r}^{\bar\pi}(s_1)
+Y_{\max}V_{P^\star,g}^{\bar\pi}(s_1)
-(1+Y_{\max})\ell_k-\sigma_k^{\rm sm}.
\end{aligned}
\tag{G.25}
\]
If
\(v_k=[b-V_{P^\star,g}^{\pi_k}(s_1)]_+>0\), the Slater condition and the
normalization
\(|V_{P^\star,r}^{\pi_k}-V_{P^\star,r}^{\bar\pi}|\le1\) imply
\[
Y_{\max}(\theta+v_k)
\le
1+o_{k,r}+Y_{\max}o_{k,g}
+(1+Y_{\max})\ell_k+\sigma_k^{\rm sm}.
\]
Since \(Y_{\max}\theta\ge1\), subtracting this term and dividing by
\(Y_{\max}\) proves (G.22).  The claim is trivial when \(v_k=0\).
\end{proof}

\begin{proposition}[Decaying Slater-induced hard violation]
\label{prop:hard_general_slater_violation}
Assume the hypotheses of
Proposition~\ref{prop:hard_general_cert_reduction}.  If
\(Y_{\max}\theta\ge1\), then
\[
\mathrm{Violation}_{\rm hard}^{\rm return}(K)
\le
\left(1+\frac1{Y_{\max}}\right)
\left(
2\mathcal B_K+\sum_{k=1}^K\ell_k
+\sum_{k=1}^K\sigma_k^{\rm sm}
\right)
\tag{G.26}
\]
up to universal constants.  If \(Y_{\max}\theta<1\), the trivial bound
\(\mathrm{Violation}_{\rm hard}^{\rm return}(K)\le K
\le\theta^{-4}\) applies.
\end{proposition}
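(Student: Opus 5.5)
The plan is to sum the cap-level Slater pressure bound of Lemma~\ref{lem:hard_cap_slater} over the cap episodes, and then bound each summed term by a cumulative quantity that is already controlled. By Proposition~\ref{prop:hard_general_cert_reduction}, display (G.18), break episodes contribute zero hard violation. It therefore suffices to control
\[
\sum_{k\in\mathcal K_{\rm cap}}\bigl[b-V^{\pi_k}_{P^\star,g}(s_1)\bigr]_+ .
\]
In the case \(Y_{\max}\theta\ge1\), apply (G.22) to every \(k\in\mathcal K_{\rm cap}\) and sum. All terms on the right of (G.22) are nonnegative, so the sum over \(\mathcal K_{\rm cap}\) can be enlarged to a sum over all \(k\in[K]\). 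This gives
\[
\mathrm{Violation}^{\rm return}_{\rm hard}(K)\le \frac{1}{Y_{\max}}\sum_{k}o_{k,r}+\sum_k o_{k,g}+\Bigl(1+\frac{1}{Y_{\max}}\Bigr)\sum_k\ell_k+\frac{1}{Y_{\max}}\sum_k\sigma_k^{\rm sm}.
\]

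Next I will use (G.21), which bounds both \(\sum_k o_{k,r}\) and \(\sum_k o_{k,g}\) by \(\mathcal B_K\). This step needs a check. It relies on the overestimation Lemma~\ref{lem:tri_overestimation} applying to the returned hard sequence \(\{\pi_k\}\). That holds because the probing kernel at episode \(k+1\) is built from the \emph{returned} \(\pi_k\), whether \(\pi_k\) is a break or a cap policy. The deployed-to-probe coupling in Lemma~\ref{lem:gen_potential}, namely \(\pi_k/q_{k+1}\le\Lambda_{k+1}\), is therefore valid, and the \(A_k\)-weighted sums are controlled exactly as in the soft case.

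Combining the pieces, the reward and utility overestimation terms contribute \(\mathcal B_K/Y_{\max}+\mathcal B_K\). The correction terms contribute \((1+1/Y_{\max})\sum_k\ell_k\), and the softmax terms contribute at most \(\sum_k\sigma_k^{\rm sm}/Y_{\max}\). Each of these is dominated by \((1+1/Y_{\max})\) times the matching term in (G.26): \(\mathcal B_K/Y_{\max}+\mathcal B_K\le(1+1/Y_{\max})\,2\mathcal B_K\), and the \(\sigma\) term trivially. This yields (G.26) up to universal constants; the factor \(2\) in front of \(\mathcal B_K\) is only slack.

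For the case \(Y_{\max}\theta<1\), recall \(Y_{\max}=K^{1/4}\), so \(K<\theta^{-4}\). Each episode's violation is at most \(b\le1\), since the utility value is nonnegative and \(b\in(0,1]\). Hence \(\mathrm{Violation}_{\rm hard}^{\rm return}(K)\le K\le\theta^{-4}\).

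The step I expect to need the most care is justifying (G.21) for the hard algorithm, since it is only asserted. Specifically, the bonus and model-error bounds in Lemma~\ref{lem:gen_reg_and_vio} must use coverage coefficients evaluated at the softmax-returned \(\pi_k\) rather than at an exact planner. This is fine because Proposition~\ref{prop:tri_next_bound} and the initial-coverage bound \(\mathfrak C^{\rm init}_k\le\Gamma_k\) hold uniformly over all policies. The remaining steps are bookkeeping on top of Lemma~\ref{lem:hard_cap_slater}.
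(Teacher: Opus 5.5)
Your proposal is correct and follows the paper's proof essentially step for step: combine (G.18) with the cap-level bound (G.22), sum over $\mathcal K_{\rm cap}$, extend the nonnegative sums to all of $[K]$, and invoke (G.21), with the case $Y_{\max}\theta<1$ handled through $Y_{\max}=K^{1/4}$. Your added check that (G.21) holds for the returned softmax sequence is a sensible elaboration of a step the paper only asserts. It rests on two facts: $q_{k+1}$ is built from the returned $\pi_k$, and the coverage bounds hold uniformly in $\pi$.
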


\begin{proof}
Combine (G.18) with Lemma~\ref{lem:hard_cap_slater}, sum over cap episodes,
and use (G.21).  Extending each nonnegative sum from
\(\mathcal K_{\rm cap}\) to all episodes gives (G.26).  The second statement
follows from \(Y_{\max}=K^{1/4}\).
\end{proof}

\subsection{Polynomial Schedules}

\begin{corollary}[Polynomial hard rates]
\label{cor:hard_general_probing_poly}
Let \(\epsilon_k=k^{-p}\) with \(p\in[0,1/2)\), and choose
\[
\begin{aligned}
\alpha_k
&=
\widetilde\Theta\!\left(
\sqrt{(\Gamma_k+d^2)\gamma\log(|\mathcal M|k/\delta)}
\right)
\quad\text{with a sufficiently large leading constant},\\
\lambda_k
&=
\widetilde\Theta\!\left(
\frac{\Gamma_k+d^2}{d}
\log(|\mathcal M|k/\delta)
\right).
\end{aligned}
\]
\[
\tau_h
=
\Theta\!\left(
\frac{1}{(1-\gamma)\sqrt K\log|\mathcal A|}
\right),
\qquad
\eta_h
=
\Theta\!\left(
\frac{\tau_h(1-\gamma)}{K^{3/4}}
\right).
\]
Under the assumptions of Theorem~\ref{thm:hard_general_probing} and
Proposition~\ref{prop:hard_general_slater_violation}, with probability at
least \(1-\delta\),
\[
\mathrm{Regret}^{\rm return}(K)
\le
\widetilde O\!\left(
\left[
\frac{|\mathcal A|\bar b d^2}{(1-\gamma)^2}
+\sqrt{\frac{|\mathcal A|}{1-\gamma}}
\right]
K^{3/4+p}
\right),
\tag{G.27}
\]
and
\[
\mathrm{Violation}_{\rm hard}^{\rm return}(K)
\le
\widetilde O\!\left(
\frac1\theta
\left[
\frac{|\mathcal A|\bar b d^2}{(1-\gamma)^2}
+\sqrt{\frac{|\mathcal A|}{1-\gamma}}
\right]
K^{1/2+p}
\right)
+O(\theta^{-4}).
\tag{G.28}
\]
\end{corollary}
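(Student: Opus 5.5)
The plan is to assemble Corollary~\ref{cor:hard_general_probing_poly} from the regret reduction of Proposition~\ref{prop:hard_general_cert_reduction} and the violation bound of Proposition~\ref{prop:hard_general_slater_violation}, and then evaluate every term under \(\epsilon_k=k^{-p}\), \(Y_{\max}=K^{1/4}\) and the stated \(\tau_h,\eta_h\). First I will check that the displayed \(\alpha_k,\lambda_k\) satisfy the hypotheses used on \(\mathcal E_{\rm rep}\). Since \(\mathfrak C_k^{\rm tri,next}\le\Gamma_k\) by Proposition~\ref{prop:tri_next_bound}, the lower bound on \(\alpha_k\) in Lemma~\ref{lem:target_mix_optimism} holds once the leading constant is large. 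Moreover \(\alpha_k/\sqrt{\lambda_k}=\widetilde O(\sqrt d)\), so the cap is nonbinding for \(\bar b=\widetilde\Omega(\sqrt d)\), exactly as in Proposition~\ref{prop:tri_theorem_level}. Hence \(\mathbb P(\mathcal E_{\rm rep})\ge1-\delta\), and everything after this point is deterministic.

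Next I will evaluate the polynomial sums. With \(\Gamma_k=|\mathcal A|k^p\) and \(\Lambda_k\le\Gamma_k\),
\[
\sqrt{\Gamma_K\sum_{k\le K}\Lambda_k}\lesssim|\mathcal A|K^{1/2+p}
\quad\text{and}\quad
\sum_{k\le K}\Gamma_k/\sqrt k\lesssim|\mathcal A|K^{1/2+p}.
\]
This gives (G.12), namely \(\mathcal B_K=\widetilde O(|\mathcal A|\bar bd^2K^{1/2+p}/(1-\gamma)^2)\). Similarly (G.11) gives \(\sum_k\ell_k=\widetilde O(\sqrt{|\mathcal A|/(1-\gamma)}K^{(1+p)/2})\), and \(K^{(1+p)/2}\le K^{1/2+p}\).

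For the softmax term (G.14), I will use \(\tau_h\asymp1/((1-\gamma)\sqrt K\log|\mathcal A|)\), which makes \(\exp(-\delta_{\min}/\tau_h)\) superpolynomially small in \(K\). The sum \(\sum_k\sigma_k^{\rm sm}\) is therefore \(\widetilde O(1)\) for \(K\) beyond a problem-dependent threshold; smaller \(K\) is absorbed into constants. For the grid term,
\[
Ks_{\rm grid}=C_{\rm grid}K\eta_h/(\tau_h(1-\gamma))=O(K^{1/4}).
\]

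\textbf{Regret.} Theorem~\ref{thm:hard_general_probing} gives
\[
\mathfrak F_K\le\widetilde O\!\left(Y_{\max}|\mathcal A|\bar bd^2K^{1/2+p}/(1-\gamma)^2\right)+\widetilde O(1).
\]
Lemma~\ref{lem:hard_selected_width} bounds \(\sum_kw_k(\pi_k)\le\mathcal B_K+2\sum_k\ell_k\). Substituting both into (G.17), every term is \(Y_{\max}\) times a quantity of order \(\bigl[|\mathcal A|\bar bd^2/(1-\gamma)^2+\sqrt{|\mathcal A|/(1-\gamma)}\bigr]K^{1/2+p}\), or the lower-order \(Y_{\max}K^{1/4}=K^{1/2}\). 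With \(Y_{\max}=K^{1/4}\) this yields (G.27).

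\textbf{Violation.} I split on \(K^{1/4}\theta\ge1\). If this fails, then \(K<\theta^{-4}\) and the bound is trivial. Otherwise (G.26) gives \(\mathrm{Violation}\lesssim \mathcal B_K+\sum_k\ell_k+\widetilde O(1)\), which is already of the claimed \(K^{1/2+p}\) order without the \(1/\theta\) factor. Since \(\theta\le1\) (utility values lie in \([0,1]\) and \(b>0\)), multiplying by \(1/\theta\) only weakens the bound, so (G.28) follows.

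The step I expect to be the main obstacle is the softmax term. One has to confirm that the action-gap bound (G.14) really is negligible under the chosen \(\tau_h\), uniformly over all queried grid levels up to \(Y_{\max}\), where the score scale grows like \(Y_{\max}\bar b/(1-\gamma)\). I also need that the grid sensitivity of Lemma~\ref{lem:eta_diff}, stated for unit-scale utilities, survives the \(Y\)-dependent scores. If the scale factor leaks in, I would absorb it by noting that \(\eta_h/\tau_h\) carries a spare \(K^{-3/4}\), which dominates the \(Y_{\max}=K^{1/4}\) inflation.
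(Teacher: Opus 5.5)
Your proposal is correct and follows essentially the same route as the paper. For (G.27), the paper plugs the polynomial-schedule evaluations (G.11)--(G.12) into Theorem~\ref{thm:hard_general_probing} and Proposition~\ref{prop:hard_general_cert_reduction} with \(Y_{\max}=K^{1/4}\), where the grid term is \(Y_{\max}Ks_{\rm grid}=\widetilde O(K^{1/2})\) and the softmax residual is lower order; for (G.28), it uses Proposition~\ref{prop:hard_general_slater_violation} together with \(\theta\le1\). Your extra checks on the \(\alpha_k,\lambda_k\) hypotheses and on the scale of the softmax and grid terms go beyond what the paper writes, but they do not change the argument.
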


\begin{proof}
For polynomial schedules,
\[
\sqrt{\Gamma_K\sum_{k=2}^K\Lambda_k}
+\sum_{k=1}^K\frac{\Gamma_k}{\sqrt k}
\le
\widetilde O(|\mathcal A|K^{1/2+p}).
\]
Equations (G.11)--(G.12), Theorem~\ref{thm:hard_general_probing},
Proposition~\ref{prop:hard_general_cert_reduction}, and
\(Y_{\max}=K^{1/4}\) give (G.27); the grid contribution is
\[
Y_{\max}Ks_{\rm grid}
=
\widetilde O(K^{1/2}).
\]
Proposition~\ref{prop:hard_general_slater_violation} gives (G.28).
We display the harmless factor \(\theta^{-1}\) to match the theorem's
problem-dependent Slater convention; for normalized utility
\(\theta\le1\).  The exponential softmax residual under the stated
temperature is lower order.
\end{proof}

%% file: appendix/hard_tertiary_roadmap.tex
\section{The \texorpdfstring{\(p=0\)}{p=0} Hard Specialization}
\label{app:hard_tertiary_roadmap}

This appendix specializes the derived-certificate proof in
Appendix~\ref{app:hard_general_probing} to uniform tertiary probing.  It is
included separately to make the clean return-policy rate transparent.

When \(p=0\), \(\epsilon_k\equiv1\), and hence
\[
q_k=q'_k=q''_k=U(\mathcal A),
\qquad
\Gamma_k=|\mathcal A|,
\qquad
\Lambda_k=|\mathcal A|.
\]
The algorithmic lower correction becomes
\[
\ell_k
=
C_{\rm cert}
\sqrt{
\frac{
|\mathcal A|\ln(|\mathcal M|k/\delta)
}{
k(1-\gamma)
}
},
\tag{H.1}
\]
and Lemma~\ref{lem:hard_derived_certificate} gives
\[
\underline{\mathcal U}_k(\pi)
\le
V_{P^\star,g}^{\pi}(s_1)
\le
\underline{\mathcal U}_k(\pi)+w_k(\pi)
\]
for every queried policy.  The algorithm uses only the first inequality; the
second is summed only for the returned sequence.

\begin{lemma}[\(p=0\) selected correction and width]
\label{lem:hard_pzero_width}
On the representation-learning good event,
\[
\sum_{k=1}^K\ell_k
\le
\widetilde O\!\left(
\sqrt{\frac{|\mathcal A|K}{1-\gamma}}
\right),
\]
and
\[
\sum_{k=1}^Kw_k(\pi_k)
\le
\widetilde O\!\left(
\left[
\frac{|\mathcal A|\bar b d^2}{(1-\gamma)^2}
+\sqrt{\frac{|\mathcal A|}{1-\gamma}}
\right]
\sqrt K
\right).
\tag{H.2}
\]
\end{lemma}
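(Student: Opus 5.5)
The plan is to specialize the general decaying-schedule bounds (G.9), (G.11) and (G.12) to the constant schedule \(\epsilon_k\equiv1\), using \(\Gamma_k=\Lambda_k=|\mathcal A|\) and \(\zeta_k=\ln(|\mathcal M|k/\delta)/k\). We work on the event \(\mathcal E_{\rm rep}\) throughout, so that Lemma~\ref{lem:hard_selected_width} is available.

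\emph{Correction term.} By (H.1),
\[
\ell_k\le C_{\rm cert}\sqrt{|\mathcal A|\ln(|\mathcal M|K/\delta)/(1-\gamma)}\;k^{-1/2}.
\]
Summing and using \(\sum_{k=1}^K k^{-1/2}\le 2\sqrt K\) gives
\[
\sum_{k=1}^K\ell_k\le \widetilde O\bigl(\sqrt{|\mathcal A|K/(1-\gamma)}\bigr),
\]
where the logarithm and \(C_{\rm cert}\) are absorbed into \(\widetilde O\). This is just the \(p=0\) instance of (G.11).

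\emph{Width term.} Invoke Lemma~\ref{lem:hard_selected_width}:
\[
\sum_k w_k(\pi_k)\le\mathcal B_K+2\sum_k\ell_k.
\]
Evaluate (G.8) at \(p=0\):
\[
\sqrt{\Gamma_K\sum_{k=2}^K\Lambda_k}=|\mathcal A|\sqrt{K-1},
\qquad
\sum_{k=1}^K\Gamma_k/\sqrt k\le 2|\mathcal A|\sqrt K.
\]
Hence \(\mathcal B_K\le\widetilde O\bigl(|\mathcal A|\bar b d^2\sqrt K/(1-\gamma)^2\bigr)\), exactly as in the computation of \(\Xi_K(\Lambda)\) in Theorem~\ref{thm:uniform_repaired}. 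Adding twice the correction bound from the first step yields (H.2), with the two contributions appearing as the two summands in the bracket.

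\emph{Where care is needed.} The only substantive point is that \(\mathcal B_K\) legitimately applies to the returned hard sequence \(\{\pi_k\}\). This is what Lemma~\ref{lem:hard_selected_width} asserts. The reason it holds is that the potential argument of Lemma~\ref{lem:gen_potential} (here specialized as Proposition~\ref{prop:uniform_potential}) uses only the coupling \(\pi_k\mapsto q_{k+1}\). With \(p=0\) this coupling is even simpler, because \(q_{k+1}=U(\mathcal A)\) and the ratio \(\pi_k/q_{k+1}\le|\mathcal A|\) holds for any policy.

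We also check that the tuning of \(\alpha_k,\lambda_k\) keeps the cap nonbinding with \(\bar b=\widetilde\Omega(\sqrt d)\), as verified in Theorem~\ref{thm:uniform_repaired}. This ensures that the bonus and model-error bounds from the proof of Lemma~\ref{lem:gen_reg_and_vio} carry over unchanged.
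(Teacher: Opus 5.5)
Your proposal is correct and follows the paper's proof: sum \(\ell_k=\widetilde O(k^{-1/2})\), then specialize Lemma~\ref{lem:hard_selected_width}, i.e.\ \(\mathcal B_K+2\sum_k\ell_k\), at \(\Gamma_k=\Lambda_k=|\mathcal A|\), with your explicit arithmetic filling in what the paper leaves implicit. One precision point in your side remark: at \(p=0\) the action ratio \(\pi_k/q_{k+1}\le|\mathcal A|\) holds for every policy, so it cannot be what restricts the potential argument to the returned sequence; the restriction comes from the state coupling \(\rho_{k+1}=d^{\pi_k}_{P^\star}\) (the next roll-in uses \(\pi_k\)), and this coupling fails for the unreturned candidates \(\pi_{k,Y}\).
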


\begin{proof}
The first display follows by summing
\(\ell_k=\widetilde O(k^{-1/2})\).  The second follows from
Lemma~\ref{lem:hard_selected_width} with
\(\Gamma_k=\Lambda_k=|\mathcal A|\).
\end{proof}

\begin{proposition}[\(p=0\) certificate reduction]
\label{prop:hard_lower_cert_reduction}
Under the hypotheses of
Proposition~\ref{prop:hard_general_cert_reduction},
\[
\mathrm{Regret}^{\rm return}(K)
\le
\widetilde O\!\left(
\left[
\frac{|\mathcal A|\bar b d^2}{(1-\gamma)^2}
+\sqrt{\frac{|\mathcal A|}{1-\gamma}}
\right]
K^{3/4}
\right),
\tag{H.3}
\]
and hard violation is supported only on cap episodes.
\end{proposition}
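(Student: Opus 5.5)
The plan is to specialize the general selected-certificate reduction, Proposition~\ref{prop:hard_general_cert_reduction}, to $\epsilon_k\equiv1$ and then insert the $p=0$ bounds from Lemma~\ref{lem:hard_pzero_width}. The starting point is (G.17):
\[
\mathrm{Regret}^{\rm return}(K)\le \mathfrak F_K+Y_{\max}\Bigl(\sum_{k=1}^K w_k(\pi_k)+Ks_{\rm grid}\Bigr).
\]
The three terms on the right are handled in turn. The support statement for the hard violation is just (G.18), which holds verbatim. Its proof is that every break episode satisfies $\underline{\mathcal U}_k(\pi_k)\ge b$, so the lower inequality of Lemma~\ref{lem:hard_derived_certificate} gives $V^{\pi_k}_{P^\star,g}(s_1)\ge b$ on such episodes.

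\emph{Front end.} Apply Theorem~\ref{thm:hard_general_probing} with $\Gamma_k=\Lambda_k=|\mathcal A|$. The bracket there becomes
\[
\sqrt{|\mathcal A|\,(K-1)|\mathcal A|}+|\mathcal A|\sum_{k=1}^K k^{-1/2}\lesssim|\mathcal A|\sqrt K.
\]
Hence
\[
\mathfrak F_K\le \widetilde O\!\left(\frac{|\mathcal A|\bar b d^2 Y_{\max}\sqrt K}{(1-\gamma)^2}\right)+\sum_{k=1}^K\sigma_k^{\rm sm}.
\]
With $Y_{\max}=K^{1/4}$ the first term is $\widetilde O\bigl(|\mathcal A|\bar b d^2K^{3/4}/(1-\gamma)^2\bigr)$. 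For the softmax residual I will use (G.14) with $\tau_h=\Theta(1/((1-\gamma)\sqrt K\log|\mathcal A|))$. Then
\[
\exp(-\delta_{\min}/\tau_h)=\exp\bigl(-\Theta(\delta_{\min}(1-\gamma)\sqrt K\log|\mathcal A|)\bigr),
\]
which decays superpolynomially in $K$. So $K|\mathcal A|(1-\gamma)^{-2}\exp(-\delta_{\min}/\tau_h)$ is $O(1)$ for all $K$ beyond a problem-dependent threshold, and is absorbed into $\widetilde O(\cdot)$.

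\emph{Width and grid terms.} Multiplying (H.2) by $Y_{\max}=K^{1/4}$ gives
\[
Y_{\max}\sum_{k=1}^K w_k(\pi_k)\le \widetilde O\!\left(\left[\frac{|\mathcal A|\bar b d^2}{(1-\gamma)^2}+\sqrt{\frac{|\mathcal A|}{1-\gamma}}\right]K^{3/4}\right).
\]
The second bracketed summand comes from $2\sum_k\ell_k$. For the grid term, the choice $\eta_h=\Theta(\tau_h(1-\gamma)/K^{3/4})$ in (G.16) yields $s_{\rm grid}=\Theta(K^{-3/4})$. Therefore $Y_{\max}Ks_{\rm grid}=O(K^{1/2})$, which is lower order. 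Summing the three contributions gives (H.3).

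\emph{Main obstacle.} The delicate point is whether the $\sqrt{|\mathcal A|/(1-\gamma)}$ term is correctly scaled after multiplication by $Y_{\max}$. The concern is that $\sum_k\ell_k=\widetilde O(\sqrt{|\mathcal A|K/(1-\gamma)})$ must enter only through the selected width $w_k(\pi_k)$. It must not re-enter through the front end with an extra $Y_{\max}$ factor.

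I will check this in the proof of Theorem~\ref{thm:hard_general_probing}. There the optimism residual $\sqrt{\Gamma_k\zeta_k/(1-\gamma)}$ at $\pi^\star$ is multiplied by $(1+Y_k)\le 1+Y_{\max}$. With $\Gamma_k=|\mathcal A|$ this residual is dominated by the $\sum_k\Gamma_k/\sqrt k$ part of the bracket, so no extra power of $K$ appears. Finally, all these inequalities hold deterministically on $\mathcal E_{\rm rep}$, so no further union bound over the inner-loop policies is needed.
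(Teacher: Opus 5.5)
Your proposal is correct and follows the paper's own proof. The paper does the same steps more tersely: specialize the front end (G.15) with $\Gamma_k=\Lambda_k=|\mathcal A|$ to get $\widetilde O(|\mathcal A|\bar b d^2K^{3/4}/(1-\gamma)^2)$, insert (H.2) multiplied by $Y_{\max}=K^{1/4}$ into (G.17), treat $Y_{\max}Ks_{\rm grid}=O(K^{1/2})$ and the softmax residual as lower order, and read the cap-support claim off (G.18). Your ``main obstacle'' is harmless in any case: the $\pi^\star$-optimism residual that enters $\mathfrak F_K$ with factor $1+Y_{\max}$ is at most of the already displayed order $\sqrt{|\mathcal A|/(1-\gamma)}\,K^{3/4}$, up to logarithmic factors.
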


\begin{proof}
For \(p=0\), the hard front end in
Theorem~\ref{thm:hard_general_probing} is
\(\widetilde O(
|\mathcal A|\bar b d^2K^{3/4}/(1-\gamma)^2)\).
Apply Proposition~\ref{prop:hard_general_cert_reduction} and
Lemma~\ref{lem:hard_pzero_width}.  With
\(\eta_h=\Theta(\tau_h(1-\gamma)/K^{3/4})\), the grid term is
\[
Y_{\max}Ks_{\rm grid}
=
\widetilde O(K^{1/2}),
\]
and is lower order.
\end{proof}

\begin{theorem}[Slater-induced hard violation for \(p=0\)]
\label{thm:uniform_slater_hard_violation}
Suppose Assumptions~\ref{assum:slater}, \ref{assum:linearmdp}, and
\ref{ass:action_gap} hold,
\(|\mathcal M|<\infty,\ P^\star\in\mathcal M\), and the
representation-learning good event holds.  Then
Algorithm~\ref{alg:rep-pd-hard-teg} with \(\epsilon_k\equiv1\) satisfies
\[
\mathrm{Violation}_{\rm hard}^{\rm return}(K)
\le
\widetilde O\!\left(
\frac1\theta
\left[
\frac{|\mathcal A|\bar b d^2}{(1-\gamma)^2}
+\sqrt{\frac{|\mathcal A|}{1-\gamma}}
\right]
\sqrt K
\right)
+O(\theta^{-4}).
\tag{H.4}
\]
\end{theorem}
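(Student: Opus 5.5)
The bound will come from specializing Proposition~\ref{prop:hard_general_slater_violation} to the constant schedule \(\epsilon_k\equiv1\). In that case \(\Gamma_k=\Lambda_k=|\mathcal A|\). We work on the representation-learning good event \(\mathcal E_{\rm rep}\), which holds with probability at least \(1-\delta\).

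First, Proposition~\ref{prop:hard_general_cert_reduction} (G.18) says hard violation is supported only on cap episodes. This is because every break episode passes the lower certificate, and Lemma~\ref{lem:hard_derived_certificate} then gives \(V^{\pi_k}_{P^\star,g}(s_1)\ge b\). We then split into two cases according to \(Y_{\max}\theta\).

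\textbf{Case \(Y_{\max}\theta\ge1\).} For each cap episode we invoke Lemma~\ref{lem:hard_cap_slater}. This bounds the per-episode deficit by \(o_{k,r}/Y_{\max}+o_{k,g}+(1+1/Y_{\max})\ell_k+\sigma_k^{\rm sm}/Y_{\max}\). We then extend the nonnegative sum from \(\mathcal K_{\rm cap}\) to all episodes and bound the four pieces separately:
\begin{itemize}
\item By (G.21), \(\sum_k o_{k,c}\le\mathcal B_K\) for \(c\in\{r,g\}\). With \(\Gamma_k=\Lambda_k=|\mathcal A|\), (G.8) gives \(\mathcal B_K=\widetilde O(|\mathcal A|\bar b d^2\sqrt K/(1-\gamma)^2)\), since \(\sqrt{|\mathcal A|\cdot(K-1)|\mathcal A|}+|\mathcal A|\sum_k k^{-1/2}\lesssim|\mathcal A|\sqrt K\).
\item By Lemma~\ref{lem:hard_pzero_width}, \(\sum_k\ell_k=\widetilde O(\sqrt{|\mathcal A|K/(1-\gamma)})\).
\item By (G.14), with \(\tau_h=\Theta(1/((1-\gamma)\sqrt K\log|\mathcal A|))\) we have \(\exp(-\delta_{\min}/\tau_h)\) decaying faster than any polynomial in \(K\). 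Hence \(\sum_k\sigma_k^{\rm sm}\) is lower order (at most \(\widetilde O(1)\) for large \(K\)).
\end{itemize}
Combining these gives \(\widetilde O\big(\big[|\mathcal A|\bar b d^2/(1-\gamma)^2+\sqrt{|\mathcal A|/(1-\gamma)}\big]\sqrt K\big)\). Since \(\theta\le1\) under normalized utility, multiplying by \(1/\theta\) only loosens this, which matches the displayed convention.

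\textbf{Case \(Y_{\max}\theta<1\).} Here \(K^{1/4}<1/\theta\), so \(K<\theta^{-4}\). Each episode contributes at most \(b\le1\) to the hard violation, so the total is at most \(K\le\theta^{-4}\), which gives the \(O(\theta^{-4})\) term. Adding the two cases proves (H.4).

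\textbf{Main obstacle.} The delicate step is justifying the use of Lemma~\ref{lem:hard_cap_slater} at the cap policy. It needs direct optimism (G.24) for the Slater policy \(\bar\pi\), with \(\mathfrak C^{\rm init}_k(\bar\pi)\le|\mathcal A|\) and \(\mathfrak C^{\rm tri,next}_k(\bar\pi)\le|\mathcal A|\); these follow from Proposition~\ref{prop:uniform_tri_next}, uniformly in the policy. It also needs the overestimation sums (G.21) to hold for the hard algorithm's returned sequence, so that the deployed-to-probe potential argument of Lemma~\ref{lem:gen_potential} applies. That holds because \(q_{k+1}\) is built from the returned \(\pi_k\) (trivially so under uniform probing). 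I would check these two points explicitly. The rest is bookkeeping of the \(p=0\) constants.
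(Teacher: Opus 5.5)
Your proposal is correct and follows the paper's own proof essentially step for step. Break episodes are certified feasible by the one-sided inequality of Lemma~\ref{lem:hard_derived_certificate}; cap episodes are controlled by Lemma~\ref{lem:hard_cap_slater} when \(K^{1/4}\theta\ge1\), with the overestimation, correction, and softmax sums specialized to \(\Gamma_k=\Lambda_k=|\mathcal A|\) (exactly Proposition~\ref{prop:hard_general_slater_violation} combined with Lemma~\ref{lem:hard_pzero_width}); the complementary case \(K<\theta^{-4}\) gives the trivial \(O(\theta^{-4})\) term, and your treatment of the softmax residual as lower order and of \(1/\theta\) as a harmless loosening matches the paper's conventions.
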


\begin{proof}
Break episodes are feasible by the one-sided inequality in
Lemma~\ref{lem:hard_derived_certificate}.  At a cap episode, the returned
policy is the actually queried \(Y_{\max}\) policy, so
Lemma~\ref{lem:hard_cap_slater} applies and its overestimation terms are
summable along the returned sequence.  Proposition~\ref{prop:hard_general_slater_violation}
and Lemma~\ref{lem:hard_pzero_width} give (H.4).  If
\(K^{1/4}\theta<1\), the trivial \(K\le\theta^{-4}\) bound accounts for the
last term.
\end{proof}

%% file: content/experiment.tex
\section{Experiments}
\label{sec:exp}

\subsection{Job scheduling.}
To validate the convergence of REP-PD-TEG in CMDPs, we first evaluate it in a simulated job scheduling environment. This toy domain lets us visualize how the learned representation, optimistic bonus, and primal-dual update interact under unknown dynamics.

Following \cite{ghosh2022provably,ghosh2024towards}, we design a constrained MDP with state space $\mathcal{S}=\{0,1,\dots,9\}$, where $s$ represents the number of pending jobs. The agent chooses either to withhold jobs ($a=0$) or submit them ($a=1$) over a horizon $H=10$. The transition dynamics follow:
\begin{align*}
    s_{h + 1}=\begin{cases}
        \max\{s_h-2a_h,0\}&\text{with probability }0.8\\
        \max\{s_h-a_h,0\}&\text{with probability }0.1\\
        s_h&\text{otherwise.}
    \end{cases}
\end{align*}
We assume that at time steps from 3 to 6, the reward is $1-0.9a$, while at all other time steps the reward is $1-0.2a$. This mimics a system in which job processing becomes temporarily expensive. The utility function $g(s,a,s^\prime)=\frac{s-s^\prime}{2}$ quantifies job-processing efficiency, with per-episode constraint $\sum_{h=0}^{H-1}g(s_h,a_h,s_{h+1})\geq b=2$.

We do not hand-craft transition features in this domain. Instead, the appendix implementation learns latent transition factors $\hat\phi(s,a)$ and $\hat\mu(s')$ directly from the collected transitions using one-hot encodings of the discrete states and actions. After each episode, the newly collected transitions are appended to the dataset, the representation learner is updated on the accumulated data for multiple gradient steps, and the primal-dual policy update is then executed with the induced exploration bonus.

To demonstrate the effectiveness of our constraint-aware design, we compare against REP-UCB \cite{uehara2021representation}, an unconstrained low-rank MDP approach using similar representation learning but without dual-variable updates, and a linear method \cite{ghosh2022provably} where the transition feature is known. As shown in Figure \ref{fig:plot_results}, REP-PD-TEG has higher regret than REP-UCB on this toy instance but substantially lower cumulative constraint violation, illustrating the tradeoff induced by constraint-aware learning. Despite not knowing the transition feature, REP-PD-TEG is comparable in scale to the known-feature linear method of \citet{ghosh2022provably} on both metrics. Because this illustrative plot reports a single run without uncertainty bands, we do not infer statistical significance or an empirical asymptotic rate from it.

\begin{figure*}[t]
    \centering
    \includegraphics[width=0.8\textwidth]{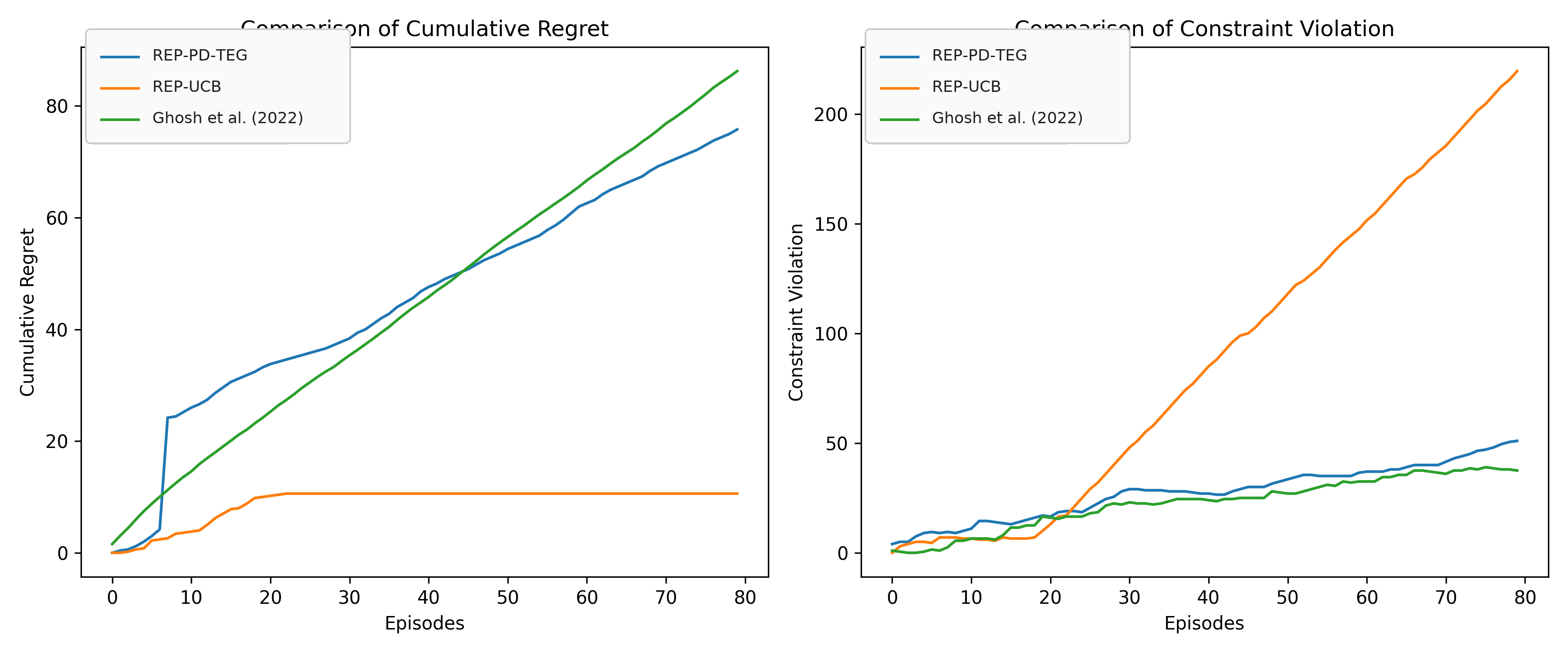}
    \caption{Illustrative single-run comparison of REP-PD-TEG, REP-UCB, and the known-feature linear method of \citet{ghosh2022provably} on the job-scheduling CMDP.}
    \label{fig:plot_results}
\end{figure*}

\subsection{Practical Neural Implementation}
\label{app:rep-pd-nn}

\paragraph{Neural Safety Gym implementation.}
To probe scalability beyond the toy domain, we instantiate a practical neural
variant, denoted \textbf{REP-PD-NN} for neural REP-PD-TEG. The theoretical
algorithm uses a per-round full-batch MLE representation-learning step, which
is difficult to scale directly to continuous-control domains. REP-PD-NN
therefore replaces this step with a mini-batch SPEDER-style stochastic
representation objective \cite{ren2022spectral}, while preserving the
primal-dual safe RL structure. The learned state-action representation is used
as the input to neural reward and cost critics, and the policy is trained with
a Lagrangian-style constrained policy update. We evaluate this variant on Safety Gym tasks
\cite{ray2019benchmarking}.

Let \(\phi_\eta(s,a)\) denote the learned state-action representation, and $J_{\mathrm{rep}}(\eta)$ denote the objective of representation learning. The actor is trained with the Lagrangian objective
\[
\mathcal J_{\pi}(\theta)
=
\mathbb E\!\left[
\alpha \log \pi_\theta(a|s)
-
\min_i Q^r_{\psi_i}(\phi_\eta(s,a))
+
\lambda \min_i Q^c_{\omega_i}(\phi_\eta(s,a))
\right],
\]
where \(\lambda\) is the Lagrange multiplier for the cost constraint.

\paragraph{Raw-state baseline.}
For the raw-state baseline, we use the same Lagrangian training skeleton
but remove the representation update. The critics and actor receive raw
state-action pairs \((s,a)\) instead of \(\phi_\eta(s,a)\). Thus, any
performance difference is attributable to the learned representation rather
than to a different safe RL backbone.

\begin{algorithm}[t]
\caption{REP-PD-NN}
\label{alg:rep-pd-nn}
\begin{algorithmic}[1]
\STATE {\bf Input:} CMDP environment \(\mathcal E\), cost budget \(d_{\mathrm{cost}}\), replay buffer \(\mathcal D\), warm-up steps \(W\), gradient steps per update \(G\), dual stepsize \(\rho\).
\STATE {\bf Parameters:} representation \(\eta\), actor \(\theta\), reward critics \(\{\psi_i\}_{i=1}^2\), cost critics \(\{\omega_i\}_{i=1}^2\), multiplier \(\lambda\).
\STATE Initialize target networks \(\bar\eta\leftarrow\eta\), \(\bar\psi_i\leftarrow\psi_i\), and \(\bar\omega_i\leftarrow\omega_i\).
\STATE Collect \(W\) transitions with a random or stochastic policy and store them in \(\mathcal D\).
\FOR{\(t=1,\ldots,T\)}
    \STATE Sample \(a_t\sim\pi_\theta(\cdot|s_t)\), execute \(a_t\) in \(\mathcal E\), and store \((s_t,a_t,r_t,c_t,s_{t+1},m_t)\) in \(\mathcal D\).
    \IF{an episode terminates with cumulative cost \(\widehat J_c\)}
        \STATE Update \(\lambda\leftarrow[\lambda+\rho(\widehat J_c-d_{\mathrm{cost}})]_+\).
    \ENDIF
    \FOR{\(g=1,\ldots,G\)}
        \STATE Sample a mini-batch \(\mathcal B\) from \(\mathcal D\).
        \STATE Update the representation network by minimizing \(\mathcal J_{\mathrm{rep}}(\eta)\) on \(\mathcal B\).
        \STATE Encode current pairs with \(\phi_\eta(s,a)\) and target pairs with \(\phi_{\bar\eta}(s',a')\).
        \STATE Update reward critics by minimizing the SAC reward Bellman loss.
        \STATE Update cost critics by minimizing the cost Bellman loss.
        \STATE Update the actor by minimizing \(\mathcal J_{\pi}(\theta)\) with the current multiplier \(\lambda\).
        \STATE Optionally update the entropy temperature \(\alpha\).
        \STATE Soft-update target networks \(\bar\eta\), \(\{\bar\psi_i\}_{i=1}^2\), and \(\{\bar\omega_i\}_{i=1}^2\).
    \ENDFOR
\ENDFOR
\STATE {\bf Return} policy \(\pi_\theta\) and representation \(\phi_\eta\).
\end{algorithmic}
\end{algorithm}

Table \ref{tab:safetygym_results} reports the completed positive results after $1$M interaction steps and $3$ random seeds, using the final deterministic evaluation checkpoint (\texttt{eval\_return} / \texttt{eval\_cost}). Across the three reported tasks, \textbf{REP-PD-NN} substantially improves final evaluation return and also reduces final evaluation cost relative to the raw baseline. These results indicate that a learned representation can improve both task performance and safety on a meaningful subset of continuous-control safe RL tasks.

\begin{table*}[t]
\centering
\small
\setlength{\tabcolsep}{4pt}
\caption{Safety Gym results for the practical neural implementation \textbf{REP-PD-NN}. Metrics are the mean and standard deviation over $3$ seeds at the final deterministic evaluation checkpoint after $1$M environment steps.}
\label{tab:safetygym_results}
\begin{tabular}{lcccc}
\toprule
\textbf{Environment} & \textbf{Raw Ret.} & \textbf{Raw Cost} & \textbf{REP-PD-NN Ret.} & \textbf{REP-PD-NN Cost} \\
\midrule
PointGoal1 & $-1.38 \pm 0.58$ & $27.85 \pm 18.99$ & $\mathbf{-0.14 \pm 0.03}$ & $\mathbf{16.45 \pm 12.69}$ \\
PointPush1 & $-1.40 \pm 0.60$ & $32.17 \pm 14.66$ & $\mathbf{-0.17 \pm 0.12}$ & $\mathbf{12.13 \pm 6.95}$ \\
PointPush2 & $-1.40 \pm 0.71$ & $13.00 \pm 8.17$ & $\mathbf{-0.03 \pm 0.01}$ & $\mathbf{10.90 \pm 3.85}$ \\
\bottomrule
\end{tabular}
\end{table*}

\subsection{Computational Resources}
\label{app:compute}

All experiments were run on a local institutional server equipped with eight
NVIDIA RTX 4090 GPUs, each with 24GB of memory, and standard x86\_64 CPUs.
The tabular job-scheduling experiment is lightweight and can be reproduced on
a CPU-only machine. Each Safety Gym training run uses one GPU; the eight-GPU
server was used only to run independent seeds and methods in parallel, and is
not required for reproducing any single run. The implementation uses Python
with PyTorch, CUDA-enabled GPU execution, MuJoCo, and Safety Gym.

Table~\ref{tab:compute_resources} summarizes the approximate compute required
to reproduce the reported experiments. For Safety Gym, each environment is
evaluated with two methods, \texttt{raw} and \texttt{REP-PD-NN}, and three
random seeds, resulting in six runs per environment and eighteen final
training runs in total.

\begin{table}[t]
\centering
\small
\setlength{\tabcolsep}{4pt}
\caption{Approximate compute required to reproduce the reported experiments.
Runtime denotes wall-clock time for one run. Total compute is over the reported
methods and seeds.}
\label{tab:compute_resources}
\begin{tabular}{lccccc}
\toprule
\textbf{Experiment} & \textbf{Worker} & \textbf{Steps} & \textbf{Runs} &
\textbf{Time/run} & \textbf{Total compute} \\
\midrule
Job scheduling & CPU ($<1$GB RAM) & -- & 1 & $<1$ min & $<1$ CPU-hour \\
Safety Gym PointGoal1 & 1 RTX 4090 (24GB) & $10^6$ & 6 & 8--20 h & 48--120 GPU-hours \\
Safety Gym PointPush1 & 1 RTX 4090 (24GB) & $10^6$ & 6 & 8--20 h & 48--120 GPU-hours \\
Safety Gym PointPush2 & 1 RTX 4090 (24GB) & $10^6$ & 6 & 8--20 h & 48--120 GPU-hours \\
\bottomrule
\end{tabular}
\end{table}

Thus, the reported Safety Gym results require approximately 144--360
GPU-hours in total. When all eight GPUs are used in parallel, the final runs
can be completed in roughly 18--45 wall-clock hours. During method
development, we also ran preliminary hyperparameter sweeps and diagnostic
experiments that are not required to reproduce the reported tables. These
preliminary runs used additional GPU time of the same order as the final
reported experiment set; failed or preliminary runs were excluded from the
reported results.